\documentclass[letterpaper, 10 pt, conference]{ieeeconf}  

\IEEEoverridecommandlockouts                              

\PassOptionsToPackage{table,RGB,dvipsnames}{xcolor}

\makeatletter
\let\NAT@parse\undefined
\makeatother

\usepackage{times}

\newif\ifarxiv
\arxivtrue

\ifarxiv
\usepackage[backend=biber,
    bibstyle=ieee,
    citestyle=numeric,
    sorting=none,
    natbib=true,
    url=true,
    maxbibnames=99
    ]{biblatex}
\else
\usepackage[backend=biber,
    style=numeric-comp,
    maxnames=1,
    minnames=1,
    sorting=none,
    natbib=true,
    url=false
    ]{biblatex}

    \DeclareNameAlias{default}{labelname}
    \renewbibmacro{in:}{}
\fi

\usepackage{multicol}
\usepackage[bookmarks=true]{hyperref}
\usepackage{cuted}
\usepackage{afterpage}

\let\labelindent\relax

\usepackage[utf8]{inputenc} \usepackage[T1]{fontenc}    \usepackage{hyperref}       \usepackage{nameref}
\usepackage{tocloft}
\usepackage{url}            \usepackage{booktabs}       \usepackage{amsfonts}       \usepackage{nicefrac}       \usepackage{microtype}      

\usepackage{xcolor}
\usepackage{xcolor-material}
\usepackage{colortbl}                   \usepackage{multicol}

\usepackage{graphicx} 
\usepackage{algorithm}
\usepackage{algpseudocode}
\usepackage{amsmath,amssymb}
\usepackage{amsthm}
\usepackage[font=small,skip=0pt]{caption}

\usepackage{dsfont}
\usepackage{soul}
\usepackage{enumitem}
\usepackage{subcaption}
\usepackage{microtype,wrapfig}
\usepackage{relsize}

\usepackage{dblfloatfix}

\usepackage{mathtools}
\usepackage{framed}
\usepackage{xspace}

\usepackage[short]{optidef}       

\usepackage{crossreftools}

\usepackage{placeins}

\usepackage{multirow}
\selectcolormodel{natural}
\usepackage{ninecolors}
\selectcolormodel{rgb}

\usepackage{etoolbox}

\usepackage{enumitem}
\usepackage{braket}                 

\usepackage{bm}

\usepackage{siunitx}
\usepackage{thm-restate}
\usepackage[framemethod=TikZ]{mdframed}
\usepackage[most]{tcolorbox}
\usepackage{thm-restate} 
\usepackage[utf8]{inputenc} \usepackage[T1]{fontenc}    \usepackage{hyperref}       \usepackage{nameref}
\usepackage{tocloft}
\usepackage{url}            \usepackage{booktabs}       \usepackage{amsfonts}       \usepackage{nicefrac}       \usepackage{microtype}      

\usepackage{xcolor}
\usepackage{xcolor-material}
\usepackage{colortbl}                   \usepackage{multicol}

\usepackage{graphicx} 
\usepackage{algorithm}
\usepackage{algpseudocode}
\usepackage{amsmath,amssymb}
\usepackage{amsthm}
\usepackage[font=small,skip=0pt]{caption}

\usepackage{dsfont}
\usepackage{soul}
\usepackage{enumitem}
\usepackage{subcaption}
\usepackage{microtype,wrapfig}
\usepackage{relsize}

\usepackage{dblfloatfix}

\usepackage{mathtools}
\usepackage{framed}
\usepackage{xspace}

\usepackage[short]{optidef}       

\usepackage{crossreftools}

\usepackage{placeins}

\usepackage{multirow}
\selectcolormodel{natural}
\usepackage{ninecolors}
\selectcolormodel{rgb}

\usepackage{etoolbox}

\usepackage{enumitem}
\usepackage{braket}                 

\usepackage{bm}

\usepackage{siunitx}
\usepackage{thm-restate}
\usepackage[framemethod=TikZ]{mdframed}
\usepackage[most]{tcolorbox}
\usepackage{thm-restate} 
\usepackage{xparse}

\usepackage{etoolbox}
\AfterEndEnvironment{tcolorbox}{\noindent\ignorespaces}

\declaretheoremstyle[bodyfont=\normalfont]{normalstyle}

\DeclareFontFamily{U}{stix2bb}{}
\DeclareFontShape{U}{stix2bb}{m}{n} {<-> stix2-mathbb}{}

\NewDocumentCommand{\ind}{}{\text{\usefont{U}{stix2bb}{m}{n}1}}

\declaretheorem[name=Theorem]{thm}
\declaretheorem[name=Lemma]{lemma}
\declaretheorem[name=Definition]{defi}

\declaretheorem[name=Counterexample,style=normalstyle]{counterexample}

\declaretheorem[name=Corollary]{coroll}
\declaretheorem[name=Remark,style=normalstyle]{rmk}
\declaretheorem[name=Note,numbered=no,style=normalstyle]{note}
\declaretheorem[name=Assumption]{assmp}

\definecolor{MaterialBlueGray10}{HTML}{CFD8DC}
\definecolor{MaterialBlueGray900}{HTML}{263238}
\definecolor{MaterialBlue10}{HTML}{E3F2FD}
\definecolor{MaterialBlue900}{HTML}{0D47A1}

\tcbset{
  ThmFrame/.style={
    enhanced,
    breakable,
    colback=MaterialBlue10!20,
    colframe=MaterialBlue900!55,
    boxrule=0pt,
    borderline west={1.3pt}{0pt}{MaterialBlue900!65},
    boxsep=7pt,
    top=1.5pt,
    arc=1pt,
    outer arc=1pt,
    bottom=3pt,
    sharp corners=all
  }
}

\tcbset{
  BaseFrame00/.style={
    enhanced,
    breakable,
    boxrule=0pt,
    boxsep=2pt,
    left=5pt,
    top=0pt,
    arc=1pt,
    outer arc=1pt,
    bottom=0pt,
    sharp corners=all,
width=1.03\linewidth,
    enlarge left by=-7pt,
    before upper={\setlength{\parindent}{10pt}}
  },
  BaseFrame0/.style={
    BaseFrame00,
    width=1.03\linewidth,
  },
  BaseFrameRight0/.style={
    BaseFrame00,
    width=1.06\linewidth,
  },
  BaseFrame/.style={
    BaseFrame0,
    before skip=\medskipamount,
    after skip=\medskipamount,
  },
  BaseFrameRight/.style={
    BaseFrameRight0,
    before skip=\medskipamount,
    after skip=\medskipamount,
  },
  BaseFrame3/.style={
    BaseFrame0,
    before skip=\smallskipamount,
    after skip=\smallskipamount,
  },
  BaseFrame4/.style={
    BaseFrame0,
    before skip=\smallskipamount,
    after skip=\medskipamount,
  },
}

\tcbset{
  ThmFrame2/.style={
    BaseFrame,
    colback=MaterialBlue50!40,
    colframe=MaterialBlue900!10,
    borderline west={2.0pt}{0pt}{MaterialBlue900!15},
  },
  ThmFrame3/.style={
    BaseFrame3,
    colback=MaterialBlue50!40,
    colframe=MaterialBlue900!10,
    borderline west={2.0pt}{0pt}{MaterialBlue900!15},
  },
DefFrame2/.style={
    BaseFrame,
    colback=MaterialBlueGray10!15,
    colframe=MaterialBlueGray900!10,
    borderline west={2.0pt}{0pt}{MaterialBlueGray900!20},
  },
  DefFrame3/.style={
    BaseFrame3,
    colback=MaterialGreen50!30,
    colframe=MaterialGreen900!10,
    borderline west={2.0pt}{0pt}{MaterialGreen900!15},
  },
AssmpFrame2/.style={
    BaseFrame,
    colback=MaterialBlueGray10!05,
    colframe=MaterialBlueGray900!10,
    borderline west={2.0pt}{0pt}{MaterialBlueGray900!05},
  },
NoteFrame2/.style={
    BaseFrame,
    colback=MaterialGreen50!30,
    colframe=MaterialGreen900!10,
    borderline west={2.0pt}{0pt}{MaterialGreen900!15},
  },
  NoteFrame3/.style={
    BaseFrame3,
    colback=MaterialYellow50!80,
    colframe=MaterialYellow900!10,
    borderline west={2.0pt}{0pt}{MaterialYellow900!30},
  },
  NoteFrame4/.style={
    BaseFrame4,
    colback=MaterialYellow50!80,
    colframe=MaterialYellow900!10,
    borderline west={2.0pt}{0pt}{MaterialYellow900!30},
  },
ProofFrame2/.style={
    BaseFrame,
    colback=MaterialBrown50!40,
    colframe=MaterialBrown900!10,
    borderline west={2.0pt}{0pt}{MaterialBrown900!25},
  },
  ProofFrameRight2/.style={
    BaseFrameRight,
    colback=MaterialBrown50!40,
    colframe=MaterialBrown900!10,
    borderline west={2.0pt}{0pt}{MaterialBrown900!25},
  },
  ProofFrame3/.style={
    BaseFrame3,
    colback=MaterialBrown50!40,
    colframe=MaterialBrown900!10,
    borderline west={2.0pt}{0pt}{MaterialBrown900!25},
  }
}

\definecolor{color_worksite}{rgb}{0.890,0.773,0.341}
\definecolor{color_gear}{rgb}{0.192,0.420,0.812}
\definecolor{color_wood}{rgb}{0.298,0.663,0.380}
\definecolor{color_saw}{rgb}{0.780,0.170,0.170}
\definecolor{color_door}{rgb}{0.050,0.050,0.050}
\definecolor{color_key}{rgb}{0.867,0.518,0.325}
\definecolor{color_safe}{rgb}{0.050,0.050,0.050}
\definecolor{color_coffee}{rgb}{0.612,0.513,0.435}
\definecolor{color_tea}{rgb}{0.545,0.443,0.702}

\protected\def\verythinspace{\ifmmode
    \mskip0.5\thinmuskip
  \else
    \ifhmode
      \kern0.08334em
    \fi
  \fi
}

\DeclareMathOperator*{\argmax}{\scalebox{0.8}{$\mathrm{arg\verythinspace max}$}}

\newcommand*{\Vopt}[1]{V^*_{[#1]}}
\newcommand*{\Qopt}[1]{Q_{[#1]}}
\newcommand*{\Vhist}[1]{\mathbf{V}^*_{[#1]}}
\newcommand*{\Qhist}[1]{\mathbf{Q}_{[#1]}}

\newcommand*{\witness}[1]{\sigma_{[#1]}}
\newcommand*{\optwitness}[1]{\sigma^*_{[#1]}}

\newcommand*{\timer}{\mathtt{t}}

\newcommand{\smax}[2][0.85]{\mathop{\scalebox{#1}{$\displaystyle \max\limits_{#2}$}}}
\newcommand{\smin}[2][0.85]{\mathop{\scalebox{#1}{$\displaystyle \min\limits_{#2}$}}}
\newcommand{\sminraise}[2][0.85]{\mathop{\raisebox{.5ex}{\scalebox{#1}{$\displaystyle \min\limits_{#2}$}}}}

\let\max\relax
\DeclareMathOperator*{\max}{\scalebox{0.8}{$\mathrm{max}$}}

\let\min\relax
\DeclareMathOperator*{\min}{\scalebox{0.8}{$\mathrm{min}$}}

\newcommand\numberthis{\addtocounter{equation}{1}\tag{\theequation}}

\newcommand\addtag[1][]{\refstepcounter{equation}\hfill(\theequation)\notblank{#1}{\label{#1}}{}}

\newcommand*{\LTLSet}{\Psi}
\newcommand{\LTLNext}{\mathsf{X}}
\newcommand{\LTLUntil}{\mathsf{U}}
\newcommand{\LTLFinally}{\mathsf{F}}

\newcommand{\LTLGlobally}{\mathsf{G}}

\newcommand{\IGUSet}{\mathcal{I}}

\newcommand{\rgu}{\mathsf{r}}

\newcommand{\IUSet}{\mathcal{J}}

\newcommand{\calX}{\mathcal{X}}
\newcommand{\calA}{\mathcal{A}}

\usepackage[defer]{deferproof}

\usepackage{cleveref}

\crefname{thm}{thm}{thms}
\Crefname{thm}{Thm.}{Thm.}

\crefname{lemma}{lem}{lem}
\Crefname{lemma}{Lem.}{Lem.}

\crefname{assmp}{asm.}{asm.}
\Crefname{assmp}{Asm.}{Asm.}

\crefname{defi}{def}{def}
\Crefname{defi}{Def.}{Def.}

\crefname{coroll}{cor}{cor}
\Crefname{coroll}{Cor.}{Cor.}

\crefname{rmk}{rmk}{rmk}
\Crefname{rmk}{Rmk.}{Rmk.}

\crefname{counterexample}{counterexample}{counterexamples}
\Crefname{counterexample}{Counterexample}{Counterexamples}

\crefname{section}{Sec.}{Secs.}
\Crefname{section}{Sec.}{Secs.}

\newif\ifcdc
\cdctrue

\title{\LARGE \bf
Value Functions for Temporal Logic: Optimal Policies and Safety Filters
}

\author{Oswin~So$^{*,1}$, William~Sharpless$^{*,2}$, Sylvia~Herbert$^2$, Chuchu~Fan$^{1}$\thanks{$^{1}$Oswin~So and Chuchu~Fan are with the Massachusetts Institute of Technology, Cambridge, MA 02139, USA.
      {\tt\small \{oswinso, chuchu\}@mit.edu}}\thanks{$^{2}$William~Sharpless and Sylvia~Herbert are with the University of California, San Diego, CA 92093, USA.
      {\tt\small \{wsharpless, sherbert\}@ucsd.edu}}}

\begin{document}
\definecolor{hlcolor}{HTML}{348ABD}

\maketitle

\thispagestyle{plain}
\pagestyle{plain}

\begin{abstract}
While Bellman equations for basic reach, avoid, and reach-avoid problems are well studied,
the relationship between value optimality and policy optimality becomes subtle in the undiscounted infinite-horizon setting, particularly for more complicated tasks.
Greedily maximizing the Q-function can produce policies that indefinitely defer task completion for reach-avoid problems, or equivalently, Until specifications, even when the value function is optimal.
Building upon recent results decomposing the value function for temporal logic (TL) into a graph of constituent value functions,
we construct non-Markovian policies based on state history that avoid this pathology and prove their optimality with respect to the quantitative robustness score for nested Until, Globally, and Globally-Until specifications. 
We further show how the Q function can serve as a safety filter for complex TL specifications, extending prior results beyond simple avoid or reach-avoid tasks.
\end{abstract}

\section{Introduction and Related Work}
The value function is central to optimal control and reinforcement learning (RL), encoding the optimal objective over action sequences. While it can be computed via dynamic programming (DP), this quickly becomes intractable in high dimensions. Instead, RL leverages the Bellman equation (BE), a recursive characterization of the value that enables approximation without grid-based DP. The associated Q-function further allows optimal policies to be obtained via single-step maximization. This framework is typically applied when objectives are defined by temporal sums of rewards (or costs), as in Markov Decision Processes (MDPs).

While this approach has been successful, the additive combination of rewards and costs in RL can be difficult; the optimal trajectory may incur significant cost so long as it obtains high reward, leading practitioners to iteratively tune hyperparameters until desirable performance is observed. To overcome this limitation, values based on temporal maxima and minima have been introduced from the field of Hamilton-Jacobi reachability (HJR) \cite{mitchell2005time,fisac2015reach}. In recent years, several works have integrated the max-min BE of HJR into RL frameworks to design safe and performant algorithms \cite{fisac2019bridging, hsu2021safety, Ganai2023, so2024solving}. 

\begin{figure}[t!]
    \definecolor{TRed}{HTML}{FF3926}
    \definecolor{TYellow}{HTML}{daa415}
    \definecolor{TBlue}{HTML}{32A7F5}
\centering
\includegraphics[width=\linewidth]{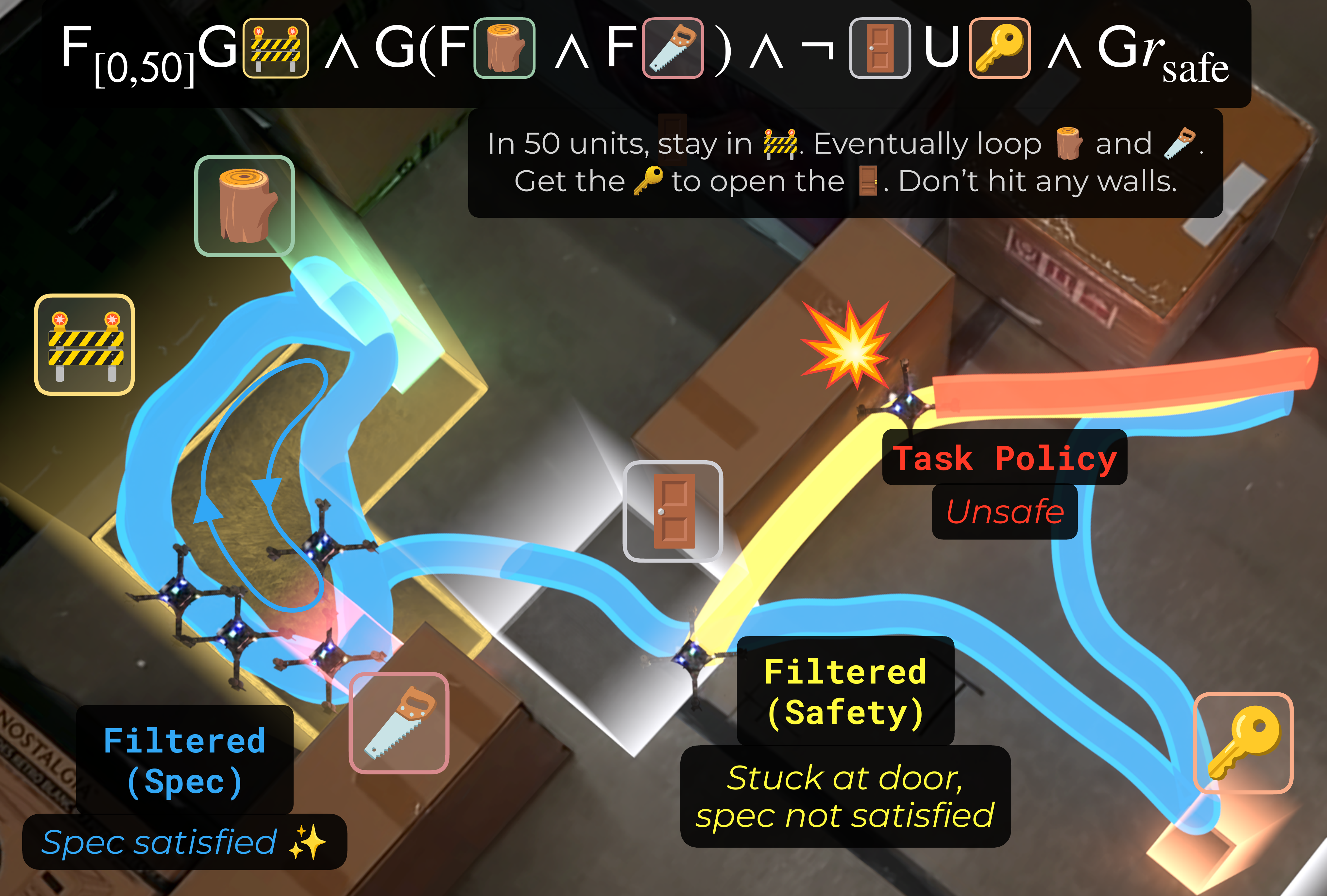}
   \vspace{-1.7ex}
   \caption{
   Hardware rollouts from a Crazyflie drone using an \textcolor{TRed}{unsafe task policy},
   the task policy \textcolor{TYellow}{filtered for safety}, and the task policy \textcolor{TBlue}{filtered for the specification}.
   \textcolor{TYellow}{Filtering for safety} maintains safety but violates the specification.
   Our proposed \textcolor{TBlue}{safety filter for TL specifications} guarantees the specification is satisfied.
}
\label{fig:hw_rollouts}
   \vspace{-2em}
\end{figure}

HJR characterizes values for basic safety and liveness tasks, called the reach, avoid and reach-avoid problems \cite{bansal2017hamilton}. 
Notably, the objectives of these values are equivalent to the quantitative semantics of basic predicates in Temporal Logic (TL) \cite{pnueli1977temporal, donze2010robust, fainekos2009robustness, LTL-and-beyond}, namely eventually ($\mathsf{F}$), always ($\mathsf{G}$), and until ($\mathsf{U}$) \cite{chen2018signal}. 
TL itself is merely a specification language but offers an attractive way to automatically translate and reduce complex task specifications into values for RL. 

Recently, it was demonstrated that HJR and the HJ-RL algorithms can be extended to broader classes of TL via algebraic decomposition of the value into a graph of values \cite{sharpless2025dual,sharpless2026bellman}. 
This was first considered in \cite{sharpless2025dual}, where the dual-objective values, called the reach-reach ($\mathsf{F} \land \mathsf{F}$) and reach-always-avoid ($\mathsf{F} \land \mathsf{G}$) problems, proved to decompose into coupled sets of BEs, and later extended to a broad class of TL specifications in \cite{sharpless2026bellman}. 
These methods scale the approximation of the value in high-dimension and model free settings to complex task specifications.

However, a novel challenge arises in these settings: simple single-step maximization of the corresponding Q function does not necessarily yield the optimal action sequence for infinite-horizon trajectories (see Ex.~\ref{ex:greedy_Q_suboptimal}), a fundamentally non-Markovian problem \cite{sharpless2025dual}. 
While decomposition allows one to solve the value for infinite trajectories, it remains to show when the optimal inputs must ``switch'' from single-step maximization of the current value to a constituent value in the graph, corresponding to the remaining unsatisfied  
and infinite-horizon specifications. 

In \cite{sharpless2025dual}, a Markovian policy is derived with an efficient state-augmentation that tracks relevant quantitative semantics and yields the optimal switching condition. However, this approach is derived in the dual-objectives and is difficult to generalize to complex formulae, particularly those that include reach-avoid-loop specifications ($\mathsf{GU}$). 
In this work, we propose a generalization of the value and Q-function to state histories to compute non-Markovian policies optimal with respect to any state history. 
This approach extends to the broad class of TL specifications for which the value may be decomposed \cite{sharpless2026bellman}. Moreover, this approach offers the novel ability to synthesize safety filters for a user-defined nominal policy that (1) guarantee specification satisfaction in a least-restrictive manner and (2) do so in a manner that is optimal given the prior sub-optimal path of the nominal policy. 

In this work, the optimality of the policy over histories and the guarantee of satisfaction are made possible by augmenting the system with a temporal state (Def.~\ref{def:timeaugsys}). For completeness, we re-derive the policies in \cite{sharpless2025dual} for the Until predicate via the novel framework and we show the approach proposed in this work extends to the larger class of formulae defined in Def.~\ref{def:solvable_fragment}. 
Ultimately, this work offers a practical approach to generate an optimal policy for the value of a broad class of TL formulae, and demonstrates the result may be used to filter a nominal policy to ensure complex task satisfaction.

\textbf{The contributions of this work are as follows:}
\begin{enumerate}[leftmargin=*]
\item We generalize the optimal value function and Q function (Def.~\ref{def:bellmanvalue}) and optimal policy (Def.~\ref{def:def_of_optimal_policy}) for state histories to handle the non-Markovian nature of the problem.
    \item For a broad class of TL formulae (Def.\ref{def:solvable_fragment}), we use this framework to construct a compositional optimal policy for the associated robustness metric (Thms.~\ref{thm:simple_until_policy_optimal}-\ref{thm:solvable_fragment}).
\item Using the Q function over (sub-optimal) history yields a least-restrictive safety filter for a nominal policy to ensure satisfaction (Thms.~\ref{thm:safety_filter},\ref{thm:safety_filter_satisfy}).
\end{enumerate}

\section{Preliminaries}
\subsection{Problem Setting and Notation} \label{sec:setup}
We consider a discrete-time system $x_{t+1} = f(x_t, a_t)$ with state $x_t \in \mathcal{X} \subseteq \mathbb{R}^n$ and action $a_t \in \mathcal{A} \subseteq \mathbb{R}^m$.
Let $x_{0:t} \in \mathcal{X}^{t+1}$ and $a_{0:t} \in \mathcal{A}^{t+1}$ denote state and action trajectories of length $t+1$, and use $x_t = x_{t:t} \in \mathcal{X}$ and $a_t = a_{t:t} \in \mathcal{A}$ for brevity.
Let $\mathcal{X}^+ \coloneqq \bigcup_{t=1}^\infty \mathcal{X}^t$ be the set of all finite-length state trajectories.
For a (non-Markovian) policy $\pi : \mathcal{X}^+ \to \mathcal{A}$, we say that the infinite trajectory $x^\pi_{0:\infty}$ is generated by $\pi$ from $x_{0:t} \in \mathcal{X}^+$ if $x^\pi_{0:t} = x_{0:t}$ and $x^\pi_{k+1} = f(x^\pi_k, \pi(x^\pi_{0:k}))$ for all $k \geq t$.
In general, we use the word ``policy'' to refer to a non-Markovian policy (dependent on history), and specify when it is Markovian
\footnote{One can always augment the state with the (infinite-dimensional) history to make the policy Markovian.
dHowever, this would be inefficient and we avoid this to distinguish the proposed Markovian approach}.

We use Linear Temporal Logic (LTL) \cite{baier2008principles} specifications augmented with quantitative semantics via the robustness score \cite{donze2010robust} from Signal Temporal Logic (STL) \cite{maler2004monitoring}.
LTL formulae comprise atomic propositions $p$ (Boolean variables that depend on the current state), logical operators ($\land$, $\lor$, $\lnot$) and temporal operators Until ($\LTLUntil$) and Next ($\LTLNext$), given by the following grammar in Backus-Naur Form:
\begin{equation}
    \phi \coloneqq \top \mid p \mid \neg \phi \mid \phi_1 \land \phi_2 \mid \phi_1 \LTLUntil \phi_2 \mid \LTLNext \phi,
\end{equation}
where $\top$ is the Boolean constant true. Other temporal operators can be derived from these, e.g., ``Finally'' $\LTLFinally$ and ``Globally'' $\LTLGlobally$.
Let $\LTLSet$ denote the set of all LTL formulae, and
let $\mathrm{Prop}$ denote the set of \emph{propositional formulae}, i.e., formulae that do not contain temporal operators.
We write propositional formulae in italics (e.g., $p$), while general formulae are in serif (e.g., $\mathsf{r}$).

We additionally associate every TL formula $\psi \in \LTLSet$ with a quantitative semantic called the \textit{robustness score} \cite{donze2010robust} $\rho_{[\psi]}: \mathcal{X}^\mathbb{N} \to \mathbb{R}$ that quantifies the satisfaction of $\psi$ by an infinite length state trajectory $x_{0:\infty} \in \mathcal{X}^\mathbb{N}$, with $\rho_{[\psi]}(x_{0:\infty}) \geq 0$ if and only if $x_{0:\infty}$ satisfies $\psi$.
Specifically, we associate every atomic proposition $p$ with a function $p : \mathcal{X} \to \mathbb{R}$ such that $p(x_0) = \rho_{[p]}(x_{0:\infty})$ for any trajectory $x_{0:\infty}$.
For brevity, \textbf{we abuse notation to refer to a formula and its robustness score interchangeably}, e.g., $\psi(x_{0:\infty}) \coloneqq \rho_{[\psi]}(x_{0:\infty})$, with the meaning clear from context.
$\rho$ is defined recursively as follows \cite{donze2010robust}:
\noindent\begin{minipage}{1.02\linewidth}
\vspace{\abovedisplayskip}
\centering
\hspace{-2.12ex}
\resizebox{1.02\linewidth}{!}{
\renewcommand{\arraystretch}{1.23}
\begin{tabular}{@{}ll@{}}
    $\rho_{[\neg \psi]}(x_{0:\infty}) \coloneqq -\psi(x_{0:\infty})$ & 
    $\rho_{[\psi_1 \land \psi_2]}(x_{0:\infty}) \coloneqq \psi_1(x_{0:\infty}) \land \psi_2(x_{0:\infty})$ \\
    $\rho_{[\LTLNext \psi]}(x_{0:\infty}) \coloneqq \psi(x_{1:\infty})$ &
    $\rho_{[\psi_1 \lor \psi_2]}(x_{0:\infty}) \coloneqq \psi_1(x_{0:\infty}) \lor \psi_2(x_{0:\infty})$ \\
    $\rho_{[\LTLGlobally \psi]}(x_{0:\infty}) \coloneqq \smin{t \geq 0} \psi(x_{t:\infty})$ &
    $\rho_{[\psi_1 \LTLUntil \psi_2]}(x_{0:\infty}) \coloneqq \smax{t \geq 0} \psi_2(x_{t:\infty}) \land \smin{0 \leq s < t} \psi_1(x_{s:\infty})$ \\
    $\rho_{[\LTLFinally \psi]}(x_{0:\infty}) \coloneqq \smax{t \geq 0} \psi(x_{t:\infty})$ &
    \hspace{18em} $\addtag[eq:robustness_def]$
\end{tabular}
}
\vspace{\belowdisplayskip}
\end{minipage}
\noindent where we have abused notation and used $\land$, $\lor$ to denote $\min$, $\max$ respectively.

\subsection{Optimal Control for Temporal Logic}

We now consider the problem of maximizing the robustness score of a given LTL formula $\psi \in \LTLSet$, i.e.,
\begin{equation} \label{eq:optimal_control_problem}
    \max_{a_{0:\infty}} \rho_{[\psi]}(x_{0:\infty}), \quad x_{t+1} = f(x_t, a_t),\; \forall t \geq 0.
\end{equation}
Following standard optimal control theory \cite{puterman2014markov}, we can define an optimal value function $V$ and Q function $Q$ for \eqref{eq:optimal_control_problem}:
\begin{tcolorbox}[DefFrame2]
    \begin{defi} \label{def:bellmanvalue}
        For a TL formula $\psi$ and $t \geq 0$, define the value function $\Vhist{\psi}: \mathcal{X}^+ \to \mathbb{R}$ (on histories) as
        \begin{equation} \label{eq:bellmanvalue}
            \Vhist{\psi}(x_{0:t}) \coloneqq \smax{a_{t:\infty}} \rho_{[\psi]}(x_{0:\infty}),
        \end{equation}
        and the Q function $\Qhist{\psi}: \mathcal{X}^+ \times \mathcal{A} \to \mathbb{R}$ (on histories) as
        \begin{equation} \label{eq:Q}
            \Qhist{\psi}(x_{0:t}, a_t) \coloneqq \smax{a_{t+1:\infty}} \rho_{[\psi]}(x_{0:\infty}),
        \end{equation}
        where $x_{k+1} = f(x_k, a_k)$ for $k \geq t$. This generalizes the conventional value function and Q function on states.
        In particular, for any $t \geq 0$, the value and Q function on states is recovered when the history has a length 1, i.e.,
        \begin{equation}
            \hspace{-.2em}
            \Vopt{\psi}(x_t) \coloneqq \Vhist{\psi}(x_{t:t}), \quad
            \Qopt{\psi}(x_t, a_t) \coloneqq \Qhist{\psi}(x_{t:t}, a_t).
        \end{equation}
    \end{defi}
\end{tcolorbox}
\begin{tcolorbox}[ThmFrame2]
\begin{lemma} \label{lem:Q_V_history}
    For any $\psi \in \LTLSet$, fix $\bar{x}_{0:k} \in \mathcal{X}^+$ for $k \geq 0$, and $\bar{a}_k \in \mathcal{A}$. Let $\bar{x}_{k+1} = f(\bar{x}_k, \bar{a}_k)$.
    Then,
    \begin{equation}
        \Qhist{\psi}(\bar{x}_{0:k}, \bar{a}_k) = \Vhist{\psi}(\bar{x}_{0:k+1}).
    \end{equation}
\end{lemma}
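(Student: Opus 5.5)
The claim is essentially definitional: both sides are the supremum of the same function over the same set of action sequences. I will therefore compare the two definitions in Def.~\ref{def:bellmanvalue} directly, with no dynamic-programming argument.

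First, unfold the left-hand side. By \eqref{eq:Q}, $\Qhist{\psi}(\bar{x}_{0:k}, \bar{a}_k)$ is the maximum over $a_{k+1:\infty}$ of $\rho_{[\psi]}(x_{0:\infty})$. Here the trajectory satisfies $x_{0:k} = \bar{x}_{0:k}$, the action at time $k$ is fixed to $\bar{a}_k$, and $x_{j+1} = f(x_j, a_j)$ for $j \geq k$. In particular $x_{k+1} = f(\bar{x}_k, \bar{a}_k) = \bar{x}_{k+1}$, so every trajectory in this family has prefix $\bar{x}_{0:k+1}$. Its tail $x_{k+2:\infty}$ is determined by $\bar{x}_{k+1}$ and $a_{k+1:\infty}$ through the dynamics.

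Second, unfold the right-hand side. By \eqref{eq:bellmanvalue} applied to the history $\bar{x}_{0:k+1}$ of length $k+2$, with $t = k+1$, $\Vhist{\psi}(\bar{x}_{0:k+1})$ is the maximum over $a_{k+1:\infty}$ of $\rho_{[\psi]}(x_{0:\infty})$. The trajectory has $x_{0:k+1} = \bar{x}_{0:k+1}$ and $x_{j+1} = f(x_j,a_j)$ for $j \geq k+1$.

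For each fixed $a_{k+1:\infty}$, the map $a_{k+1:\infty} \mapsto x_{0:\infty}$ therefore produces the identical infinite trajectory in both problems. Both problems share the prefix $\bar{x}_{0:k+1}$ and the same recursion from time $k+1$ onward, which a routine induction on $j \geq k+1$ confirms. The objectives agree pointwise on the common decision set $\mathcal{A}^{\mathbb{N}}$, so the maxima (or suprema, if one reads $\max$ as $\sup$) coincide. This gives the lemma.

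The only real subtlety is bookkeeping. One must check that Def.~\ref{def:bellmanvalue} is meant to apply with $t$ equal to the index of the last state of whatever history is supplied, so that $\Vhist{\psi}$ at a history of length $k+2$ optimizes over $a_{k+1:\infty}$ and not over $a_{k:\infty}$. I would state this reading explicitly. Beyond that there is no analytic obstacle: no property of $\rho_{[\psi]}$ is used, so the identity holds for every $\psi \in \LTLSet$.
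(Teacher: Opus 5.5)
Your proposal is correct: the paper gives no separate proof of this lemma and treats it as immediate from Def.~\ref{def:bellmanvalue}. Your argument, which reads $\Qhist{\psi}$ with $t=k$ and $\Vhist{\psi}$ with $t=k+1$ and observes that both are the same maximization over $a_{k+1:\infty}$ of the same induced trajectory, is exactly that definitional unfolding.
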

\end{tcolorbox}
To simplify the results, we assume the $\max$ in the definitions of $\Vhist{\psi}$, $\Qhist{\psi}$, and robustness scores \eqref{eq:robustness_def} are attainable.
This can be guaranteed with the following assumption:\footnote{
The assumptions can be relaxed by using $\sup$ instead of $\max$ 
in \eqref{eq:bellmanvalue} and \eqref{eq:Q},
but this yields $\epsilon$-optimal policies instead of optimal policies.
}

\begin{tcolorbox}[AssmpFrame2]
\begin{assmp} \label{assmp:max_attainable:finite}
The robustness score of all atomic propositions takes values in a finite set.
\end{assmp}
\end{tcolorbox}
This generalizes the finite state space assumption in \cite{sharpless2025dual} to additionally enable policy construction in infinite state spaces.
Crucially, the use of indicator functions for atomic propositions satisfies \Cref{assmp:max_attainable:finite}.

We next define optimality for our non-Markovian problem.
\begin{tcolorbox}[DefFrame2]
\begin{defi} \label{def:def_of_optimal_policy}
A policy $\pi : \mathcal{X}^+ \to \mathcal{A}$ is \textbf{optimal} (over histories) for LTL formula $\psi \in \LTLSet$ if, for all histories $x_{0:t} \in \mathcal{X}^+$ for $t \geq 0$, the trajectory $x^\pi_{0:\infty}$ generated by $\pi$ from $x_{0:t}$ achieves the maximal robustness score, i.e.,
   \vspace{-1.5ex}
   \begin{equation} \label{eq:optimality_eq}
      \rho_{[\psi]}(x^\pi_{0:\infty}) = \Vhist{\psi}(x_{0:t}).
   \end{equation}
\end{defi}
\end{tcolorbox}
\begin{tcolorbox}[NoteFrame2]
\begin{rmk}[Notions of Optimality]\label{rmk:optimality}
{
The notion of optimality (over histories) in \Cref{def:def_of_optimal_policy} is stronger than the conventional notion in MDPs, which only requires optimality over initial states as used in \cite{sharpless2025dual}.
The two notions of optimality coincide for Markovian problems \cite{puterman2014markov}, but can differ for non-Markovian problems.}
\end{rmk}
\end{tcolorbox}

In this work, we construct optimal policies for the following fragment of TL formulas defined recursively.
\let\origlabel\label
\begin{tcolorbox}[DefFrame2]
\let\label\origlabel
\begin{defi} \label{def:solvable_fragment}
    Define the set of solvable formulas $\mathcal{S}$ as the smallest set of TL formulas satisfying:
    \begin{enumerate}[label=\textbf{(S\arabic*)}, ref=\thedefi-(S\arabic*)]
        \item \label[defi]{def:S:Prop} \textbf{Propositional.}\quad If $p \in \mathrm{Prop}$, then $p \in \mathcal{S}$.
        \item \label[defi]{def:S:Until} \textbf{Until.}\quad If $p \in \mathrm{Prop}$ and $\varphi \in \mathcal{S}$, then $p \LTLUntil \varphi \in \mathcal{S}$.
        \item \label[defi]{def:S:Next} \textbf{Next.}\quad If $\varphi \in \mathcal{S}$, then $\LTLNext \varphi \in \mathcal{S}$.
        \item \label[defi]{def:S:Globally} \textbf{Globally.}\quad If $p \in \mathrm{Prop}$, then $\LTLGlobally p \in \mathcal{S}$.
        \item \label[defi]{def:S:GU} \textbf{Globally-Until.}\quad If $p_i, r_i \in \mathrm{Prop}$ for $i = 1, \dots, N$ with $N \geq 1$, then $\LTLGlobally\bigl( \bigwedge_{i=1}^N p_i \LTLUntil r_i \bigr) \in \mathcal{S}$.
        \item \label[defi]{def:S:Disjunction} \textbf{Disjunction.}\quad If $\varphi_1, \varphi_2 \in \mathcal{S}$, then $\varphi_1 \lor \varphi_2 \in \mathcal{S}$.
        \item \label[defi]{def:S:Conjunction} \textbf{Propositional conjunction.}\quad If $p \in \mathrm{Prop}$ and $\varphi \in \mathcal{S}$, then $p \land \varphi \in \mathcal{S}$.
    \end{enumerate}
\end{defi}
\end{tcolorbox}

We will construct an optimal policy for each item in \Cref{def:solvable_fragment}.
The case for \Cref{def:S:Prop} is straightforward, since $p \in \mathrm{Prop}$ does not depend on any actions.
In the following sections, we construct an optimal policy for remaining items in \Cref{def:solvable_fragment} before showing that our proposed policy optimally solves the entire fragment $\mathcal{S}$ (\Cref{thm:solvable_fragment}).

\section{Constructing an Optimal Policy for Until} \label{sec:until}
We first construct an optimal policy for the Until operator, $\LTLUntil$, as in \Cref{def:S:Until}. Until is a fundamental building block of LTL: every LTL formula can be rewritten using only the logical operators $\lnot$, $\lor$ and the temporal operators $\LTLUntil$ and $\LTLNext$ \cite{baier2008principles}.
Yet, constructing an optimal policy for an Until specification is not straightforward, as we show next.

In discounted MDPs, the policy that greedily maximizes the Q function is optimal \cite[Corollary 6.2.8]{puterman2014markov}.
However, as shown next, this is not true for Until specifications, as the greedy policy may indefinitely defer task completion,
an issue also identified in recent works \cite{li2025solving}.
\begin{tcolorbox}[DefFrame2]
\begin{counterexample} \label{ex:greedy_Q_suboptimal}
    Consider a finite state space $\mathcal{X} = \{ \mathsf{0}, \mathsf{1} \}$, and a finite action space $\mathcal{A} = \{ \mathsf{0}, \mathsf{1} \}$ with dynamics $f(x, a) = a$. Consider the formula $\psi = \LTLFinally \ind_\mathsf{1} \equiv \top \LTLUntil \ind_\mathsf{1}$, which requires the system to eventually reach state $\mathsf{1}$.
    Since state $\mathsf{1}$ can be reached in one step from any state, the Q function is equal to $1$ for all history-action pairs, i.e., for any $x_{0:t} \in \mathcal{X}^+$ and $a_t$,
        $\Qopt{\psi}(x_{0:t}, a_t) = 1$.
    In particular, for any history $x_{0:t}$,
    \begin{equation}
        \mathsf{0} \in \argmax_{a_t \in \mathcal{A}} \Qopt{\psi}(x_{0:t}, a_t).
    \end{equation}
    However, the policy $\pi(x_{0:k}) = \mathsf{0}$ for all $x_{0:k}$ does not satisfy the specification $\psi$, since it results in the trajectory $x_k = \mathsf{0}$ for all $k \geq 0$, which never reaches $\mathsf{1}$ and thus has robustness score $0$ despite $\Vhist{\psi}(x_{0:k}) = 1$ for all $x_{0:k}$.
\end{counterexample}
\end{tcolorbox}

To prevent indefinitely delaying score maximization,
we seek a time-optimal policy that achieves the maximal score in minimum time, which we next explore.
For the rest of \Cref{sec:until}, we consider $\psi \coloneqq q \LTLUntil \mathsf{r}$ for $q \in \mathrm{Prop}$, $\mathsf{r} \in \LTLSet$.

\subsection{Finite Horizon Until} \label{sec:until_finite_horizon}
{
Consider the finite-horizon versions of the Until operator (as in STL \cite{maler2004monitoring} or metric temporal logic \cite{alur1993real}), which does not suffer from this issue due to the finite horizon.
Choosing the shortest horizon that still recovers the maximal score forces completion as soon as possible.
}

\begin{tcolorbox}[DefFrame2]
\begin{defi}[Time Augmented System] \label{def:timeaugsys}
    Define the augmented state space $\tilde{\mathcal{X}} \coloneqq \mathcal{X} \times \mathbb{Z}$ with state
    $\tilde{x} = [x, \timer{}] \in \tilde{\mathcal{X}}$,
    where $\timer{} \in \mathbb{Z}$ is a timer that decreases every step, and the dynamics function $\tilde{f} : \tilde{\mathcal{X}} \times \mathcal{A} \to \tilde{\mathcal{X}}$ is given by
    \begin{equation}
        \tilde{f}([x, \timer{}], a) = [f(x, a), \timer{} - 1].
    \end{equation}
    For conciseness, we write the trajectory $\tilde{x}_{0:\infty}$ as $[x_{0:\infty}, \timer{}_0]$ for $\tilde{x}_0 = [x_0, \timer{}_0]$.
\end{defi}
\end{tcolorbox}
We use this timer to enforce the time limit within which a TL formula must be satisfied. The timer is initialized at $\timer{}_0$, and satisfaction must occur by $\timer{}=0$. We formalize this as $\tilde{\psi} \coloneqq q \LTLUntil (\mathsf{r} \land r_{\timer{}\geq0})$, where $r_{\timer{}\geq0}(x, \mathsf{t}) = \infty$ for $\mathsf{t} \geq 0$ and $-\infty$ otherwise.
Note that $\rho_{[\tilde{\psi}]}([x_{0:\infty}, \timer{}_0]) = -\infty$ if $\timer{}_0 < 0$.
In other words, $\mathsf{r}$ must be satisfied within the time limit specified by the timer to achieve a positive robustness score, which resembles a finite-horizon Until specification. We now make this connection precise.

\begin{tcolorbox}[ThmFrame2]
\begin{lemma} \label{lem:finite_horizon_semantics}
    $\tilde{\psi}$ has the same semantics as the timed Until operator $\LTLUntil_{[0, \timer{}_0]}$:
    \begin{equation}
        \rho_{[\tilde{\psi}]}([x_{0:\infty}, \timer{}_0])
        = \smax{0 \leq t \leq \timer{}_0} \min\{ \mathsf{r}(x_{t:\infty}), \smin{0 \leq k < t} q(x_k) \}.
    \end{equation}
    where the maximization over $t$ only considers the interval $[0, \timer{}_0]$ instead of $[0, \infty)$.
\end{lemma}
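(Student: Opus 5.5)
The plan is to unfold the Until robustness definition \eqref{eq:robustness_def} on the augmented trajectory and then show that the timer conjunct truncates the outer maximization. By definition,
\[
\rho_{[\tilde{\psi}]}([x_{0:\infty}, \timer{}_0]) = \smax{t \geq 0} \min\Bigl\{ \min\{\mathsf{r}(x_{t:\infty}),\, r_{\timer{}\geq0}(\tilde{x}_t)\},\, \smin{0 \leq k < t} q(x_k) \Bigr\}.
\]
Here I use two facts. First, $q \in \mathrm{Prop}$, so $q(\tilde{x}_{k:\infty}) = q(x_k)$. Second, $\mathsf{r}$ and $q$ do not depend on the timer component, so they are evaluated on the projected trajectory $x_{t:\infty}$. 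By \Cref{def:timeaugsys}, the timer along the trajectory is $\timer{}_t = \timer{}_0 - t$. Hence $r_{\timer{}\geq0}(\tilde{x}_t) = \infty$ exactly when $t \leq \timer{}_0$, and equals $-\infty$ otherwise.

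Next, I split the index set of the outer $\max$ into $\{0 \le t \le \timer{}_0\}$ and $\{t > \timer{}_0\}$.

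\textbf{Case $t \le \timer{}_0$.} The inner term is $\min\{\mathsf{r}(x_{t:\infty}), \infty\} = \mathsf{r}(x_{t:\infty})$, which matches the summand on the right-hand side of the claim.

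\textbf{Case $t > \timer{}_0$.} The inner term is $-\infty$, the absorbing element of $\min$, so these indices contribute $-\infty$ to the maximum.

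Therefore, whenever $\timer{}_0 \ge 0$, the set $\{0,\dots,\timer{}_0\}$ is nonempty and the maximum over $t \ge 0$ equals the maximum over $0 \le t \le \timer{}_0$. This gives the stated formula.

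The only delicate point is bookkeeping rather than mathematics. I must fix the convention for arithmetic on the extended reals, namely $\min\{a,\infty\} = a$, $\min\{a,-\infty\} = -\infty$, and $\max$ of an empty set $= -\infty$. I must also handle the degenerate case $\timer{}_0 < 0$. In that case every index falls into the second case, so the left-hand side is $-\infty$. The right-hand side is a maximum over an empty range, which is $-\infty$ under the convention. This agrees with the remark preceding the lemma.

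Finally, I would note that under \Cref{assmp:max_attainable:finite} the maxima are attained, or else that the identity holds verbatim with $\sup$ in place of $\max$. Either way, no optimization over actions is involved: the lemma is a pointwise identity for each fixed trajectory.
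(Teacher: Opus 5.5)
Your argument is correct, but it takes a different route from the paper.

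The paper proves the identity by induction on $\timer{}_0$. It uses the one-step recursive characterization of Until on the augmented trajectory,
\[
\rho_{[\tilde{\psi}]}([x_{0:\infty}, n+1]) = \mathsf{r}(x_{0:\infty}) \lor \bigl(q(x_0) \land \rho_{[\tilde{\psi}]}([x_{1:\infty}, n])\bigr),
\]
and then reindexes the maximum. Its base case $\timer{}_0 = 0$ uses the fact that the tail at timer $-1$ evaluates to $-\infty$.

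You instead unfold the closed-form max--min semantics once. You observe that the conjunct $r_{\timer{}\geq 0}(\tilde{x}_t)$ equals $+\infty$ exactly when $t \le \timer{}_0$ and $-\infty$ afterwards, so it simply truncates the index set.

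What your version buys:
\begin{itemize}
\item It is shorter and self-contained.
\item It does not need the recursive identity $q \LTLUntil \mathsf{r}' \equiv \mathsf{r}' \lor (q \land \LTLNext(q \LTLUntil \mathsf{r}'))$, which the paper takes as given.
\item It does not need the separate fact that $\tilde{\psi}$ is $-\infty$ at negative timer. The paper states that fact just before the lemma, and it is itself an instance of your direct argument.
\item It covers $\timer{}_0 < 0$ uniformly through the empty-max convention.
\end{itemize}

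What the paper's induction buys is structural rather than logical. It exhibits $\tilde{\psi}$ as satisfying the same one-step Bellman-type recursion as a finite-horizon Until, with the timer decreasing by one each step. That recursion is what the paper later exploits in the history decomposition (\Cref{lem:history_dp}) and in the finite-horizon value-iteration view of \Cref{rmk:until_finite_horizon_value_iteration}.
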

\end{tcolorbox}
\begin{tcolorbox}[AssmpFrame2]
\begin{proof}
    We prove this by induction on $\timer{}_0$. For $\timer{}_0=0$,
    \begin{align*}
        \rho_{[\tilde{\psi}]}([x_{0:\infty}, 0])
        &= \mathsf{r}(x_{0:\infty}) \lor \bigl( q(x_0) \land -\infty \bigr), \\
        &= \mathsf{r}(x_{0:\infty})
        = \smax{0 \leq t \leq 0} \min\{ \mathsf{r}(x_{t:\infty}), \smin{0 \leq k < t} q(x_k) \}.
    \end{align*}
    Now, assume the statement holds for $\timer{}_0 = n$. Then,
    \begin{align*}
        &\mathrel{\phantom{=}}\rho_{[\tilde{\psi}]}([x_{0:\infty}, n+1]) \\
        &= \mathsf{r}(x_{0:\infty}) \lor \bigl( q(x_0) \land \rho_{[\tilde{\psi}]}([x_{1:\infty}, n]) \bigr), \\
        &= \mathsf{r}(x_{0:\infty}) \lor \bigl( q(x_0) \land \smax{0 \leq t \leq n} \min\{ \mathsf{r}(x_{t+1:\infty}), \smin{1 \leq k < t+1} q(x_k) \} \bigr), \\
        &= \smax{0 \leq t \leq n+1} \min\{ \mathsf{r}(x_{t:\infty}), \smin{0 \leq k < t} q(x_k) \}.
    \end{align*}
    Therefore, by induction, it holds for all $\timer{}_0 \geq 0$.
\end{proof}
\end{tcolorbox}
Thus, for initial state $[x_0, \timer{}_0]$, we use the timed until operator $\LTLUntil_{[0, \timer{}_0]}$ as a shorthand such that $q \LTLUntil_{[0, \timer{}_0]} \mathsf{r}$ refers to the formula $q \LTLUntil (\mathsf{r} \land r_{\timer{} \geq 0})$.
Note that for any $x_{0:\infty}$ and $\timer{}_0$,
\begin{equation} \label{eq:finite_horizon_ineq}
   \rho_{[q \LTLUntil_{[0, \timer{}_0]} \mathsf{r}]}([x_{0:\infty}, \timer{}_0]) \leq \rho_{[q \LTLUntil \mathsf{r}]}(x_{0:\infty}).
\end{equation}
We next look at when the finite-horizon Until recovers the infinite-horizon Until. 
Specifically, we identify a ``witness time'' when the supremum in the definition of $\LTLUntil$ \eqref{eq:robustness_def} is attained.
\begin{tcolorbox}[ThmFrame2]
\begin{lemma} \label{lem:witness_exists}
    Under \Cref{assmp:max_attainable:finite}, for any $x_{0:\infty}$, there exists a smallest witness time $\tau = \witness{\psi}(x_{0:\infty}) \geq 0$ where
    \begin{align*}
        \rho_{[\psi]}(x_{0:\infty})
        &\coloneqq \sup_{t \geq 0} \color{hlcolor}{\min\{ \mathsf{r}(x_{t:\infty}), \smin{0 \leq k < t} q(x_k) \}} , \numberthis\label{eq:witness_tmp} \\
        &= \min\{ \mathsf{r}(x_{\tau:\infty}), \smin{0 \leq k < \tau} q(x_k) \}, \numberthis \label{eq:witness_exist:1}
    \end{align*}
    In other words, the supremum in the definition of $\LTLUntil$ is first attained at some finite time $\witness{\psi}(x_{0:\infty})$.
\end{lemma}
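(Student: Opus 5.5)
The plan is to show that the set of values taken by the highlighted expression
\[
g(t) \coloneqq \min\bigl\{ \mathsf{r}(x_{t:\infty}), \smin{0 \leq k < t} q(x_k) \bigr\}, \qquad t \geq 0,
\]
is a finite set. Once this holds, the supremum in \eqref{eq:witness_tmp} is a maximum over finitely many attained values. Some $t$ with $g(t) = \rho_{[\psi]}(x_{0:\infty})$ then exists, and the set of such $t$ is a nonempty subset of $\mathbb{N}$. By well-ordering it has a least element, and we take that least element as $\tau = \witness{\psi}(x_{0:\infty})$, which gives \eqref{eq:witness_exist:1}.

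\textbf{Step 1 (finiteness of robustness values).} The key step is an auxiliary claim proved by structural induction on $\psi \in \LTLSet$. Let $P \subset \mathbb{R}$ be the finite set from \Cref{assmp:max_attainable:finite}, and let $\bar P \coloneqq P \cup (-P) \cup \{\pm\infty\}$; the constant $\top$ may be given robustness $+\infty$. The claim is that $\rho_{[\psi]}(x_{0:\infty}) \in \bar P$ for every trajectory.

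The base cases are atomic propositions and $\top$. The inductive cases are:
\begin{itemize}
\item Negation: this maps $\bar P$ to itself, since $\bar P$ is symmetric.
\item $\land$ and $\lor$: these are $\min$ and $\max$ of two elements of $\bar P$, so they stay in $\bar P$.
\item $\LTLNext$: this evaluates the subformula on a shifted trajectory, so it stays in $\bar P$.
\item $\LTLUntil$, $\LTLFinally$, $\LTLGlobally$: these are suprema or infima of expressions built by $\min$ and $\max$ from values in $\bar P$. Such expressions lie in $\bar P$, and any supremum or infimum of a subset of a finite set is its maximum or minimum, which again lies in $\bar P$.
\end{itemize}
This last point is where finiteness is essential; with a merely bounded range the supremum could fail to be attained.

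\textbf{Step 2 (apply to $\psi = q \LTLUntil \mathsf{r}$).} By Step 1 applied to $\mathsf{r}$ and to $q \in \mathrm{Prop}$, each $\mathsf{r}(x_{t:\infty})$ and each $q(x_k)$ lies in $\bar P$. Each inner minimum $\smin{0 \leq k < t} q(x_k)$ is a minimum over a finite index set; for $t=0$ it is empty and equals $+\infty \in \bar P$. Hence $g(t) \in \bar P$ for all $t$. Therefore $\{g(t) : t \geq 0\}$ is a nonempty subset of the finite set $\bar P$, so its supremum is its maximum and is attained at some $t$. Choosing the least such $t$ completes the argument.

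The main obstacle is Step 1. One must check that the conventions for empty minima and for $\pm\infty$ keep everything inside a single fixed finite set, and that the induction covers all derived operators used in \eqref{eq:robustness_def}. The remaining steps are routine.
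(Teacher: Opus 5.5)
Your proposal is correct and follows the same route as the paper: show that the highlighted term takes values in a finite set, so its supremum over $t \geq 0$ is attained, and then take the least attaining time. The paper only asserts in one line that robustness scores lie in a finite set by \Cref{assmp:max_attainable:finite}; your structural induction over $\bar P = P \cup (-P) \cup \{\pm\infty\}$ (needed because $\mathsf{r}$ is an arbitrary formula whose score involves suprema and infima) and your explicit well-ordering step spell out what that line leaves implicit.
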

\end{tcolorbox}
\begin{tcolorbox}[AssmpFrame2]
\begin{proof}
    By \Cref{assmp:max_attainable:finite}, the robustness score takes values in a finite set.
    This is preserved under $\min$.
    Thus, the \textcolor{hlcolor}{blue term} in \eqref{eq:witness_tmp} also takes values in a finite set and attains its supremum at a finite time $\tau$. 
\end{proof}
\end{tcolorbox}
The above witness time holds for any trajectory.
We now define an optimal witness time, which is the smallest among all trajectories that achieve the optimal value.
\begin{tcolorbox}[DefFrame2]
\begin{defi} \label{def:optimal_witness}
   For $x_{0:t} \in \mathcal{X}^+$, define the optimal witness time $\optwitness{\psi} : \mathcal{X}^+ \to \mathbb{Z}_{\geq 0}$ as
   \begin{equation} \label{eq:optimal_witness}
      \optwitness{\psi}(x_{0:t}) = \min\{ \witness{\psi}(x_{0:\infty}) \mid \rho(x_{0:\infty}) = \Vhist{\psi}(x_{0:t}) \}.
   \end{equation}
   In other words, $\optwitness{\psi}(x_{0:t})$ is the smallest witness time among all trajectories starting at $x_{0:t}$ that achieve the optimal value $\Vhist{\psi}(x_{0:t})$.
\end{defi}
\end{tcolorbox}
Importantly, $\tilde{\psi}$ has the same value as $\psi$
at $\timer{}_0 = \optwitness{\psi}$.
\begin{tcolorbox}[ThmFrame2]
\begin{lemma} \label{lem:finite_infinite_horizon_Q}
   Let $\timer{}_s = \optwitness{\psi}(x_{0:s}) - s$ be the optimal witness time for $x_{0:s}$, and $\timer{}_0 = \timer{}_s + s = \optwitness{\psi}(x_{0:s})$. Then,
   \begin{equation} \label{eq:finite_infinite_horizon_Q}
      \hspace{-0.5em}
      \max_{a_t} \Qopt{\tilde{\psi}}([x_{0:s}, \timer{}_0], a_t) = \Vhist{\tilde{\psi}}([x_{0:s}, \timer{}_0]) = \Vhist{\psi}(x_{0:s}).
   \end{equation}
\end{lemma}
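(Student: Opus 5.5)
The proof splits into two equalities. For the first, $\max_{a_t} \Qopt{\tilde{\psi}}([x_{0:s}, \timer{}_0], a_t) = \Vhist{\tilde{\psi}}([x_{0:s}, \timer{}_0])$, I will read the left-hand side as the history Q function $\Qhist{\tilde\psi}$ evaluated at the last step $s$ (so $a_t$ means $a_s$). By \Cref{def:bellmanvalue},
\[
\max_{a_s}\Qhist{\tilde\psi}(\cdot,a_s)=\max_{a_s}\max_{a_{s+1:\infty}}\rho_{[\tilde\psi]}=\max_{a_{s:\infty}}\rho_{[\tilde\psi]} ,
\]
which is exactly $\Vhist{\tilde\psi}$ at the same history. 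Equivalently, one can invoke \Cref{lem:Q_V_history} and maximize over the successor. The maximum is attained by \Cref{assmp:max_attainable:finite}.

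For the second equality, $\Vhist{\tilde\psi}([x_{0:s},\timer{}_0]) = \Vhist{\psi}(x_{0:s})$, I use the augmented trajectory convention $[x_{0:\infty},\timer{}_0]$. The timer at step $s$ then equals $\timer{}_s = \timer{}_0 - s$, consistent with the statement, so $\tilde\psi$ evaluated on the full history is the timed Until $q\,\LTLUntil_{[0,\timer{}_0]}\mathsf r$ on $x_{0:\infty}$ (\Cref{lem:finite_horizon_semantics}).

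\emph{Upper bound.} For every continuation $a_{s:\infty}$, \eqref{eq:finite_horizon_ineq} gives $\rho_{[\tilde\psi]}\le\rho_{[\psi]}\le\Vhist{\psi}(x_{0:s})$. Taking the max over continuations gives $\Vhist{\tilde\psi}\le\Vhist\psi$.

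\emph{Lower bound.} By \Cref{def:optimal_witness} (the minimum exists since witness times are nonnegative integers and the set is nonempty by attainability of $\Vhist\psi$), pick a continuation $x^\star_{0:\infty}$ extending $x_{0:s}$ with
\[
\rho_{[\psi]}(x^\star_{0:\infty})=\Vhist\psi(x_{0:s}),\qquad \witness{\psi}(x^\star_{0:\infty})=\tau=\timer{}_0 .
\]
By \Cref{lem:witness_exists}, $\rho_{[\psi]}(x^\star)=\min\{\mathsf r(x^\star_{\tau:\infty}),\min_{k<\tau}q(x^\star_k)\}$. Since $\tau\le\timer{}_0$, this index appears in the finite maximum of \Cref{lem:finite_horizon_semantics}, so $\rho_{[\tilde\psi]}([x^\star_{0:\infty},\timer{}_0])\ge\rho_{[\psi]}(x^\star)=\Vhist\psi(x_{0:s})$. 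Hence $\Vhist{\tilde\psi}\ge\Vhist\psi$, and the two bounds give equality.

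The main obstacle is bookkeeping rather than analysis. One must confirm that the timer value paired with the history $x_{0:s}$ is consistent with initializing at $\timer{}_0=\optwitness{\psi}(x_{0:s})$ at time $0$. One must also note that the witness time is measured from time $0$ of the full trajectory, not from $s$; this is why $\timer{}_0$, rather than $\timer{}_s$, is the right horizon. A further subtlety is that the optimal witness may be smaller than $s$, making $\timer{}_s$ negative. The argument above still holds in that case, because \Cref{lem:finite_horizon_semantics} is stated on the full trajectory from time $0$ with $\timer{}_0\ge 0$.
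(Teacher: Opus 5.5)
Your proposal is correct and matches the paper's proof. The paper likewise takes a continuation attaining $\Vhist{\psi}(x_{0:s})$ with witness time exactly $\optwitness{\psi}(x_{0:s}) = \timer{}_0$. Since that witness lies in $[0,\timer{}_0]$, the finite-horizon robustness of this continuation already equals $\Vhist{\psi}(x_{0:s})$, which bounds $\Vhist{\psi}(x_{0:s})$ above by $\max_{a_s}\Qhist{\tilde\psi}([x_{0:s},\timer{}_0],a_s) = \Vhist{\tilde\psi}([x_{0:s},\timer{}_0])$, and \eqref{eq:finite_horizon_ineq} gives the reverse inequality. Your extra remarks on the existence of the minimum defining the optimal witness and on the case $\timer{}_s<0$ are harmless refinements that the paper leaves implicit.
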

\end{tcolorbox}
\begin{condproof}[lem:finite_infinite_horizon_Q]
   By definition of $\optwitness{\psi}(x_{0:t})$ \eqref{eq:optimal_witness}, there exists $x_{0:\infty}$ where $\optwitness{\psi}(x_{0:s}) = \witness{\psi}(x_{0:\infty})$ and $\rho(x_{0:\infty}) = \Vopt{\psi}(x_{0:s})$.
   Using \eqref{eq:witness_exist:1},
   \begin{align*}
      \Vopt{\psi}(x_{0:s})
      &= \rho(x_{0:\infty}) \\
      &= \max_{0 \leq t \leq \timer{}_0} \min\{ \rho_{[\mathsf{r}]}(x_{t:\infty}), \smin{0 \leq k < t} q(x_k) \}, \\
      &\leq \smax{a_{s:\infty}} \smax{0 \leq t \leq \timer{}_0} \min\{ \rho_{[\mathsf{r}]}(x_{t:\infty}), \smin{0 \leq k < t} q(x_k) \}, \\
      &= \max_{a_s} \Qopt{\psi_{n^*}}([x_{0:s}, \timer{}_0], a_s)
      = \Vopt{\tilde{\psi}}([x_{0:s}, \timer{}_0]).
   \end{align*}
   Also, by \eqref{eq:finite_horizon_ineq}, $\Vopt{\psi}(x_{0:s}) \geq \Vopt{\tilde{\psi}}([x_{0:s}, \timer{}_0])$.
   Thus, \eqref{eq:finite_infinite_horizon_Q} holds.
\end{condproof}

We will show how relaxing the minimum-time requirement enables construction of a \textit{Markovian} optimal policy that achieves the maximal score without requiring keeping track of the timer state $\timer{}$ (\Cref{subsec:until_markovian_policy}).

\begin{condsection}[witness_persistence]
We now give a technical lemma on when the witness time remains unchanged as the history grows.
\begin{tcolorbox}[ThmFrame2]
\begin{lemma}[Conditional witness persistence]
\label{lem:abstract_witness_persistence}
Fix a formula $\chi \in \LTLSet$ and a policy $\pi:\mathcal{X}^+ \to \mathcal{A}$.
Let $\bar{x}_{0:\infty}$ be the trajectory generated by $\pi$ from
$\bar{x}_{0:s}$, and define
$ \tau \coloneqq \optwitness{\chi}(\bar{x}_{0:s}) $.
Assume that for every $k \ge s$, whenever $\optwitness{\chi}(\bar{x}_{0:k})=\tau$,
the following hold:
\begin{enumerate}[label=(\roman*), ref=\thelemma-(\roman*)]
    \item \label[lemma]{item:cond_witness_persist_value}
    $
    \Vhist{\chi}(\bar{x}_{0:k+1})=\Vhist{\chi}(\bar{x}_{0:k});
    $
    \item \label[lemma]{item:cond_witness_persist_attain}
    there exists a continuation $\hat{x}_{0:\infty}$ of $\bar{x}_{0:k+1}$
    such that
    \[
    \rho_{[\chi]}(\hat{x}_{0:\infty})=\Vhist{\chi}(\bar{x}_{0:k+1})
    \]
    with $ \witness{\chi}(\hat{x}_{0:\infty}) \le \tau $.
\end{enumerate}
Then
\[
\optwitness{\chi}(\bar{x}_{0:k})=\tau,
\qquad \forall k \ge s.
\]
\end{lemma}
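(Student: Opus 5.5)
We argue by induction on $k \ge s$. The base case $k = s$ holds by the definition of $\tau$. For the inductive step, assume $\optwitness{\chi}(\bar{x}_{0:k}) = \tau$ and show $\optwitness{\chi}(\bar{x}_{0:k+1}) = \tau$ via two inequalities. Throughout, $\witness{\chi}$ is read as the smallest witness time, as in \Cref{lem:witness_exists} (so $\chi$ is implicitly of Until form). Optimal witness times are well defined by \Cref{assmp:max_attainable:finite}: the optimal value is attained and witness times are nonnegative integers.

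\emph{Upper bound.} Since the inductive hypothesis holds at $k$, assumption \Cref{item:cond_witness_persist_attain} gives a continuation $\hat{x}_{0:\infty}$ of $\bar{x}_{0:k+1}$ with $\rho_{[\chi]}(\hat{x}_{0:\infty}) = \Vhist{\chi}(\bar{x}_{0:k+1})$ and $\witness{\chi}(\hat{x}_{0:\infty}) \le \tau$. This $\hat{x}_{0:\infty}$ is feasible in the minimization of \Cref{def:optimal_witness} for the history $\bar{x}_{0:k+1}$. Hence $\optwitness{\chi}(\bar{x}_{0:k+1}) \le \tau$.

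\emph{Lower bound.} Let $\check{x}_{0:\infty}$ be any continuation of $\bar{x}_{0:k+1}$ that attains $\Vhist{\chi}(\bar{x}_{0:k+1})$. Then $\check{x}_{0:\infty}$ is also a continuation of the shorter history $\bar{x}_{0:k}$, since it agrees with $\bar{x}_{0:k}$ on the first $k+1$ entries and its later actions are free. By \Cref{item:cond_witness_persist_value}, its score equals $\Vhist{\chi}(\bar{x}_{0:k})$, so it is feasible in the minimization defining $\optwitness{\chi}(\bar{x}_{0:k})$. Therefore $\witness{\chi}(\check{x}_{0:\infty}) \ge \optwitness{\chi}(\bar{x}_{0:k}) = \tau$. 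Minimizing over all such $\check{x}_{0:\infty}$ gives $\optwitness{\chi}(\bar{x}_{0:k+1}) \ge \tau$.

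Combining the two bounds closes the induction. The main point needing care is the set inclusion in the lower bound. Continuations of $\bar{x}_{0:k+1}$ form a subset of continuations of $\bar{x}_{0:k}$, because $\bar{x}_{k+1} = f(\bar{x}_k, \pi(\bar{x}_{0:k}))$ is reachable. The value-preservation hypothesis \Cref{item:cond_witness_persist_value} is exactly what makes the optimal-value constraint coincide on this subset. Without it, the witness time could jump when the value drops.
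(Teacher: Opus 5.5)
Your proof is correct and matches the paper's: induction on $k \ge s$, with the upper bound $\optwitness{\chi}(\bar{x}_{0:k+1}) \le \tau$ from (ii), and the lower bound from the fact that continuations of $\bar{x}_{0:k+1}$ are continuations of $\bar{x}_{0:k}$ combined with the value equality in (i). The only difference is that the paper argues the lower bound by contradiction, whereas you minimize directly over continuations; the underlying argument is the same.
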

\end{tcolorbox}
\begin{tcolorbox}[AssmpFrame2]
\begin{proof}
We prove the claim by induction on $k \ge s$.

\textbf{Base case.}
For $k=s$, the claim holds by the definition of $\tau$.

\textbf{Induction step.}
Assume that for some $k \ge s$,
\[
\optwitness{\chi}(\bar{x}_{0:k})=\tau.
\]
We show that
\[
\optwitness{\chi}(\bar{x}_{0:k+1})=\tau.
\]

By assumption \eqref{item:cond_witness_persist_attain}, there exists a continuation
$\hat{x}_{0:\infty}$ of $\bar{x}_{0:k+1}$ such that
\[
\rho_{[\chi]}(\hat{x}_{0:\infty})=\Vhist{\chi}(\bar{x}_{0:k+1})
\]
and
\[
\witness{\chi}(\hat{x}_{0:\infty}) \le \tau.
\]
Hence, by the definition of optimal witness time,
\begin{equation} \label{eq:cond_witness_persist:leq}
    \optwitness{\chi}(\bar{x}_{0:k+1}) \le \tau.
\end{equation}

Suppose, for contradiction, that
\[
\optwitness{\chi}(\bar{x}_{0:k+1}) < \tau.
\]
Then there exists a continuation $\tilde{x}_{0:\infty}$ of $\bar{x}_{0:k+1}$ such that
\[
\rho_{[\chi]}(\tilde{x}_{0:\infty})=\Vhist{\chi}(\bar{x}_{0:k+1})
\]
and
\[
\witness{\chi}(\tilde{x}_{0:\infty})
=
\optwitness{\chi}(\bar{x}_{0:k+1})
< \tau.
\]
Since $\tilde{x}_{0:\infty}$ is also a continuation of $\bar{x}_{0:k}$, and by
assumption \Cref{item:cond_witness_persist_value},
\[
\Vhist{\chi}(\bar{x}_{0:k+1})=\Vhist{\chi}(\bar{x}_{0:k}),
\]
the same continuation $\tilde{x}_{0:\infty}$ achieves $\Vhist{\chi}(\bar{x}_{0:k})$
with witness time strictly smaller than $\tau$.
This contradicts the inductive hypothesis
\[
\optwitness{\chi}(\bar{x}_{0:k})=\tau.
\]
Therefore,
\[
\optwitness{\chi}(\bar{x}_{0:k+1}) \ge \tau.
\]
Combined with \eqref{eq:cond_witness_persist:leq}, this gives
\[
\optwitness{\chi}(\bar{x}_{0:k+1})=\tau.
\]

Thus the claim holds for $k+1$, and the result follows by induction.
\end{proof}
\end{tcolorbox}
\end{condsection}
In the next subsections, we construct an optimal policy for $\psi$.
We first consider a simplified formula by replacing $\mathsf{r} \in \LTLSet$ with $\Vopt{\mathsf{r}} \in \mathrm{Prop}$.
We then construct a policy for $\psi$ by combining the prefix of the simplified policy with a suffix that handles the remaining $\mathsf{r}$.

\subsection{Policy for Until with AP by replacing $\mathsf{r}$ with $\Vopt{\mathsf{r}}$}
{
A challenge with $\psi$ is that $\mathsf{r} \in \LTLSet$ is arbitrary.
Thus, we first consider a simple but related formula $\varphi$.
For the same $q \in \mathrm{Prop}$ and $\mathsf{r} \in \LTLSet$,
}
consider $\varphi \coloneqq q \LTLUntil \Vopt{\mathsf{r}}$ and its finite-horizon counterpart $\tilde{\varphi} \coloneqq q \LTLUntil_{[0, \timer{}_0]} \Vopt{\mathsf{r}}$, which are defined similarly to $\psi$ and $\tilde{\psi}$.
We first show the optimal value is unchanged.
\begin{tcolorbox}[ThmFrame2]
\begin{lemma} \label{lem:replace_r_with_Vr}
   For any $n \geq 0$, and any $(x_{0:t}, \timer{}_0) \in \mathcal{X}^+ \times \mathbb{Z}_{\geq 0}$,
\begin{equation} \label{eq:replace_r_with_Vr}
      \Vhist{\tilde{\psi}}([x_{0:t}, \timer{}_0]) = \Vhist{\tilde{\varphi}}([x_{0:t}, \timer{}_0])
   \end{equation}
   In particular, this implies both $\Vhist{\psi}(x_{0:t}) = \Vhist{\varphi}(x_{0:t})$ and $\optwitness{\psi}(x_{0:t}) = \optwitness{\varphi}(x_{0:t})$, for all $x_{0:t} \in \mathcal{X}^+$.
\end{lemma}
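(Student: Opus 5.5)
We prove \eqref{eq:replace_r_with_Vr} by establishing the two inequalities separately, using the explicit finite-horizon form of \Cref{lem:finite_horizon_semantics}. Under \Cref{assmp:max_attainable:finite} every maximum below is attained. Fix $(x_{0:t},\timer{}_0)$, and read $\Vopt{\mathsf{r}}$ as the proposition whose robustness at time $k$ is $\Vopt{\mathsf{r}}(x_k)$, the value on states.

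\textbf{($\le$).} Take any continuation $x_{0:\infty}$ of $x_{0:t}$. For every $k \ge 0$ we have $\mathsf{r}(x_{k:\infty}) \le \Vopt{\mathsf{r}}(x_k)$. This holds because $x_{k:\infty}$ is one admissible trajectory from $x_k$, and the robustness of $\mathsf{r}$ on $x_{k:\infty}$ depends only on that suffix. Applying this termwise inside the $\max_{0\le k\le\timer{}_0}\min\{\cdot,\smin{j<k}q(x_j)\}$ of \Cref{lem:finite_horizon_semantics} gives $\rho_{[\tilde\psi]} \le \rho_{[\tilde\varphi]}$ on every trajectory. Maximizing over $a_{t:\infty}$ then yields $\Vhist{\tilde\psi}\le\Vhist{\tilde\varphi}$.

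\textbf{($\ge$).} Let $x_{0:\infty}$ be an optimal continuation for $\tilde\varphi$, and let $k^*\le\timer{}_0$ attain the max in \Cref{lem:finite_horizon_semantics}. We split into two cases.
\begin{enumerate}[label=(\alph*)]
\item If $k^*\ge t$, we keep the prefix $x_{0:k^*}$ and then follow, from $x_{k^*}$, a state-optimal action sequence for $\mathsf{r}$. This sequence exists by attainability, and it achieves $\mathsf{r}(\hat x_{k^*:\infty})=\Vopt{\mathsf{r}}(x_{k^*})$. The new trajectory $\hat x$ shares the $q$-prefix up to $k^*$, so $\rho_{[\tilde\psi]}(\hat x)\ge \min\{\Vopt{\mathsf{r}}(x_{k^*}),\smin{j<k^*}q(x_j)\}=\Vhist{\tilde\varphi}$.
\item If $k^*<t$, the prefix is fixed by the history, so we cannot re-optimize from $x_{k^*}$ freely. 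In this case I would need $\mathsf{r}(x_{k^*:\infty})$ to reach $\Vopt{\mathsf{r}}(x_{k^*})$ given the already-fixed states $x_{k^*:t}$, which in general fails.
\end{enumerate}

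I expect case (b) to be the main obstacle. The statement as written implicitly treats $\Vopt{\mathsf{r}}$ as evaluated on what is still achievable. For past witness times the natural fix is to replace $\Vopt{\mathsf{r}}(x_k)$ by $\Vhist{\mathsf{r}}(x_{k:t})$ for $k\le t$. With that reading, case (b) follows by taking the optimal continuation for $\mathsf{r}$ from history $x_{k^*:t}$. Alternatively, one can verify the claim in the paper's primary use case $t=0$ (or $t$ equal to the current state only), where case (b) is vacuous. I would first check which interpretation the later uses require, and prove the equality under that interpretation.

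\textbf{Consequences.} For the infinite-horizon equality, note that $\rho_{[\tilde\psi]}$ is nondecreasing in $\timer{}_0$ and converges to $\rho_{[\psi]}$ by \Cref{lem:witness_exists}. Hence $\Vhist{\psi}=\sup_{\timer{}_0}\Vhist{\tilde\psi}=\sup_{\timer{}_0}\Vhist{\tilde\varphi}=\Vhist{\varphi}$; equivalently, apply \Cref{lem:finite_infinite_horizon_Q} with $\timer{}_0=\optwitness{\psi}$ and with $\timer{}_0=\optwitness{\varphi}$.

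For the witness times, observe that $\optwitness{\chi}(x_{0:t})$ is the smallest $\timer{}_0$ with $\Vhist{\tilde\chi}([x_{0:t},\timer{}_0])=\Vhist{\chi}(x_{0:t})$. A witness time $\le\timer{}_0$ for an optimal trajectory gives this equality via \Cref{lem:finite_horizon_semantics}, and conversely a maximizer of the finite-horizon problem has witness time $\le\timer{}_0$. Since both $\Vhist{\tilde\chi}$ and $\Vhist{\chi}$ coincide for $\chi\in\{\psi,\varphi\}$, the minimal such $\timer{}_0$ coincides, giving $\optwitness{\psi}=\optwitness{\varphi}$.
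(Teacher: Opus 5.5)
Your obstruction in case (b) is real, and it is a defect of the statement rather than of your argument. With $\Vopt{\mathsf r}$ the value on states, the lemma is false in general for histories of length greater than one. Take $\mathcal X=\mathcal A=\{0,1\}$, $f(x,a)=a$, $p(1)=1$, $p(0)=-1$, $q=p$ and $\mathsf r=\LTLNext p$, so that $\Vopt{\mathsf r}\equiv 1$. From the history $x_{0:1}=(0,0)$, every continuation has $\rho_{[\psi]}=-1$: the $t=0$ term is $p(x_1)=-1$, and every later term contains $q(x_0)=-1$. By contrast, $\rho_{[\varphi]}\ge\Vopt{\mathsf r}(x_0)=1$. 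The same discrepancy appears for $\tilde\psi$ and $\tilde\varphi$ at $\timer{}_0=0$.

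The paper's proof does not get around this. It is a chain of max-interchanges written for a single initial state $x$ with all of $a_{0:\infty}$ free. Its key step $\max_{a_{t:\infty}}\mathsf r(x_{t:\infty})=\Vopt{\mathsf r}(x_t)$ is valid only when every action from the witness time onward is free, which is exactly your case (a). So the paper establishes only the $t=0$ case. Your argument covers that case and slightly more: equality whenever the $\tilde\varphi$-maximizer has $k^*\ge t$. For that case, your termwise bound plus trajectory surgery is the same idea as the paper's exchange of maxima, unpacked so that it shows where the history enters.

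For the consequences, your route is more careful than the paper's. The paper infers equal optimal witness times from ``the values are the same,'' but equal infinite-horizon values do not suffice. With $q=\top$ and the same $\mathsf r$ (so $\psi=\LTLFinally\LTLNext p$), from $(0,0)$ both values equal $1$, yet $\optwitness{\psi}(x_{0:1})=1$ and $\optwitness{\varphi}(x_{0:1})=0$. Your argument is the right one wherever the finite-horizon equality holds: characterize $\optwitness{\chi}(x_{0:t})$ as the least $\timer{}_0$ with $\Vhist{\tilde\chi}([x_{0:t},\timer{}_0])=\Vhist{\chi}(x_{0:t})$, and combine this with equality at every horizon.

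As for which reading the later results need: they invoke this lemma at general histories $x_{0:s}$ under the hypothesis $\optwitness{\psi}(x_{0:s})\ge s$, e.g., in the proofs of \Cref{lem:psi_dp} and Case 1 of \Cref{thm:pi_psi_optimal}. So restricting to $t=0$ does not cover them, and that hypothesis does not exclude the second example. The condition your case (a) actually needs is $\optwitness{\varphi}(x_{0:t})\ge t$. Under it, surgery on a $\varphi$-optimal continuation with witness $\optwitness{\varphi}(x_{0:t})$ gives $\Vhist{\psi}=\Vhist{\varphi}$ and $\optwitness{\psi}\le\optwitness{\varphi}$. Your termwise bound applied to a $\psi$-optimal continuation then gives $\optwitness{\varphi}\le\optwitness{\psi}$.

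Your history-valued variant also works: using $\Vhist{\mathsf r}(x_{k:t})$ in place of $\Vopt{\mathsf r}(x_k)$ for $k\le t$ restores both identities for all histories. The price is that $\varphi$ no longer has a propositional right operand.
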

\end{tcolorbox}
\begin{condproof}[lem:replace_r_with_Vr]
   \begin{align*}
      V^{*}_{[\psi_n]}(x)
      &= \smax{a_{0:\infty}} \smax{0 \leq t \leq n} \min\{ \mathsf{r}(x_{t:\infty}), \smin{0 \leq k < t} q(x_k) \}, \\
      &= \smax{0 \leq t \leq n} \smax{a_{0:t-1}} \min\{ \smax{a_{t:\infty}} \mathsf{r}(x_{t:\infty}), \smin{0 \leq k < t} q(x_k) \}, \\
      &= \smax{0 \leq t \leq n} \smax{a_{0:t-1}} \min\{ V^{*}_{[\mathsf{r}]}(x_t), \smin{0 \leq k < t} q(x_k) \}, \\
      &= \smax{a_{0:\infty}} \smax{0 \leq t \leq n} \min\{ V^{*}_{[\mathsf{r}]}(x_t), \smin{0 \leq k < t} q(x_k) \}, \\
      &= V^{*}_{[\varphi_n]}(x).
   \end{align*}
Finally, since the values are the same, the optimal witness times must also be the same.
\end{condproof}
We now construct a policy for $\varphi$ and show that it is optimal.
\begin{tcolorbox}[DefFrame2]
\begin{defi} \label{def:until_simple_optimal_policy}
Define the Markovian policy $\pi_{[\tilde{\varphi}]} : \tilde{\mathcal{X}} \to \mathcal{A}$ as
    \begin{equation} \label{eq:until_simple_policy_finite_horz}
        \pi_{[\tilde{\varphi}]}([x, \timer{}]) \coloneqq \argmax_{a} \Qopt{\tilde{\varphi}}([x, \timer{}], a).
    \end{equation}
    Now, define the non-Markovian policy $\pi_{[\varphi]} : \mathcal{X}^+ \to \mathcal{A}$ as
    \begin{equation} \label{eq:until_simple_policy}
\pi_{[\varphi]}(x_{0:k}) \coloneqq \pi_{[\tilde{\varphi}]}([x_k, \optwitness{\varphi}(x_{0:k}) - k]).
    \end{equation} 
\end{defi}
\end{tcolorbox}
\begin{tcolorbox}[ThmFrame2]
\begin{lemma} \label{lem:history_dp}
    For any trajectory $x_{0:\infty}$ and any $\timer{}_0$, let $s \in [0, \timer{}_0]$, and denote $c_\tau \coloneqq \min_{0 \leq k < \tau} q(x_k)$. Then,
    \begin{equation} \label{eq:history_dp_rho_psi}
    \begin{split}
    &\mathrel{\phantom{=}} \rho_{[\tilde{\psi}]}([x_{0:\infty}, \timer{}_0]) \\
    &= \smax{0 \leq \tau < s}
        \min\{ \mathsf{r}(x_{\tau:\infty}), c_\tau \}
        \lor
        \min\{ c_\tau, \rho_{[\tilde{\varphi}]}([x_{s:\infty}, \timer{}_{s}]) \},
    \end{split} \raisetag{5ex}
    \end{equation}
    and
    \begin{equation} \label{eq:history_dp_rho_varphi}
    \begin{split}
    &\mathrel{\phantom{=}} \rho_{[\tilde{\varphi}]}([x_{0:\infty}, \timer{}_0]) \\
    &= \smax{0 \leq \tau < s}
        \min\{ \Vopt{\mathsf{r}}(x_{\tau}), c_\tau \}
        \lor
        \min\{ c_\tau, \rho_{[\tilde{\varphi}]}([x_{s:\infty}, \timer{}_{s}]) \}.
    \end{split} \raisetag{5ex}
    \end{equation}
    Moreover, by maximizing over $a_{s:\infty}$, we get
    \begin{equation} \label{eq:history_dp_value}
    \begin{split}
        &\mathrel{\phantom{=}}
        \Vhist{\tilde{\psi}}([x_{0:s}, \timer{}_0]) =
        \Vhist{\tilde{\varphi}}([x_{0:s}, \timer{}_0]) \\
        &= \smax{0 \leq \tau < s} \min\{ \Vopt{\mathsf{r}}(x_{\tau}), c_\tau \} \lor \min\{ c_\tau, \Vopt{\tilde{\varphi}}([x_{s}, \timer{}_{s}]) \}.
    \end{split} \raisetag{5ex}
    \end{equation}
\end{lemma}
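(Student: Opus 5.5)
The plan is a direct computation from the closed-form finite-horizon semantics of \Cref{lem:finite_horizon_semantics}, applied to $\tilde{\psi}$ and, with $\mathsf{r}$ replaced by the proposition $\Vopt{\mathsf{r}}$, to $\tilde{\varphi}$. Write $\timer{}_s = \timer{}_0 - s$, which is the timer after $s$ steps of $\tilde{f}$, so $\timer{}_s \ge 0$ since $s \le \timer{}_0$. By \Cref{lem:finite_horizon_semantics},
\[
\rho_{[\tilde{\psi}]}([x_{0:\infty}, \timer{}_0]) = \smax{0 \le t \le \timer{}_0} \min\{\mathsf{r}(x_{t:\infty}), c_t\}.
\]
I would split the index set into $\{0 \le t < s\}$ and $\{s \le t \le \timer{}_0\}$. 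The first range gives the first term of \eqref{eq:history_dp_rho_psi} verbatim.

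For the second range, I would write $c_t = \min\{c_s, \smin{s \le k < t} q(x_k)\}$ for $t \ge s$. Since $\min$ distributes over $\max$, the constant $c_s$ factors out:
\[
\smax{s \le t \le \timer{}_0} \min\{\mathsf{r}(x_{t:\infty}), c_t\} = \min\Bigl\{ c_s,\ \smax{0 \le t' \le \timer{}_s} \min\{ \mathsf{r}(x_{s+t':\infty}), \smin{0 \le k' < t'} q(x_{s+k'}) \} \Bigr\},
\]
after the reindexing $t = s + t'$ and $k = s + k'$. Applying \Cref{lem:finite_horizon_semantics} again to the shifted trajectory $x_{s:\infty}$ with initial timer $\timer{}_s$, the inner maximum is the robustness of the Until formula evaluated on $[x_{s:\infty}, \timer{}_s]$. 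So the $c_\tau$ written in the second term should be read as $c_s$, the prefix minimum up to the split point.

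For \eqref{eq:history_dp_rho_varphi}, the same computation goes through verbatim for $\tilde{\varphi}$, because $\Vopt{\mathsf{r}}(x_t)$ is just an atomic proposition evaluated at $x_t$. For \eqref{eq:history_dp_rho_psi}, the tail computed this way is literally $\rho_{[\tilde{\psi}]}([x_{s:\infty}, \timer{}_s])$. I would record it that way, and identify it with the $\tilde{\varphi}$ tail only after maximizing.

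For \eqref{eq:history_dp_value}, I would maximize \eqref{eq:history_dp_rho_varphi} over $a_{s:\infty}$. The prefix quantities $c_\tau$, $c_s$ and $\Vopt{\mathsf{r}}(x_\tau)$ for $\tau < s$ depend only on the fixed history $x_{0:s}$. The maximum therefore commutes with $\lor$ against constants and with $\min\{c_s,\cdot\}$, which are monotone. That turns the tail into $\smax{a_{s:\infty}} \rho_{[\tilde{\varphi}]}([x_{s:\infty}, \timer{}_s]) = \Vopt{\tilde{\varphi}}([x_s, \timer{}_s])$, and this gives the $\tilde{\varphi}$ half of \eqref{eq:history_dp_value}.

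For the equality $\Vhist{\tilde{\psi}} = \Vhist{\tilde{\varphi}}$, I would invoke \Cref{lem:replace_r_with_Vr} on the history $x_{0:s}$. On the tail, its proof gives $\Vopt{\tilde{\psi}}([x_s, \timer{}_s]) = \Vopt{\tilde{\varphi}}([x_s, \timer{}_s])$: the maximization over $a_{t:\infty}$ is interchanged with $\smax{t}$ and $\smin{k<t}$, and $\smax{a_{t:\infty}} \mathsf{r}(x_{t:\infty})$ collapses to $\Vopt{\mathsf{r}}(x_t)$.

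The step I expect to be the main obstacle is the prefix terms $\min\{\mathsf{r}(x_{\tau:\infty}), c_\tau\}$ for $\tau < s$ in the $\tilde{\psi}$ version. There $\mathsf{r}(x_{\tau:\infty})$ depends on actions already frozen in the history, so it is not automatically replaced by $\Vopt{\mathsf{r}}(x_\tau)$. My first attempt would be to read $\Vhist{\tilde{\psi}}$ through \Cref{lem:replace_r_with_Vr}, that is, as equal to $\Vhist{\tilde{\varphi}}$, and take the right-hand side of \eqref{eq:history_dp_value} as the $\tilde{\varphi}$ expression. Concretely, I would prove the $\tilde{\varphi}$ formula directly as above and transfer it to $\tilde{\psi}$ only through the value identity of \Cref{lem:replace_r_with_Vr}, rather than term by term from \eqref{eq:history_dp_rho_psi}.

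If that transfer fails for a history whose prefix did not realize $\Vopt{\mathsf{r}}(x_\tau)$, I would restrict the $\tilde{\psi}$ claim to the inequality $\le$. That inequality holds because $\mathsf{r}(x_{\tau:\infty}) \le \Vopt{\mathsf{r}}(x_\tau)$ and both $\min$ and $\lor$ are monotone. I would then check whether later uses of the lemma (e.g., with policy-generated histories) only need this direction.
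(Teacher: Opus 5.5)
The paper states this lemma without proof, so the comparison is with the statement itself. Your computation for the $\tilde{\varphi}$ identities is correct. You split $\smax{0\le t\le \timer{}_0}$ at $s$, pull $c_s$ out of the tail by distributivity of $\min$ over $\max$, reindex, and then maximize over $a_{s:\infty}$ with the prefix quantities frozen. That proves \eqref{eq:history_dp_rho_varphi} and the $\tilde{\varphi}$ half of \eqref{eq:history_dp_value}. You are also right that the $c_\tau$ in the second term must be read as $c_s$. And you are right that the tail in \eqref{eq:history_dp_rho_psi} is $\rho_{[\tilde{\psi}]}([x_{s:\infty},\timer{}_s])$. With $\rho_{[\tilde{\varphi}]}$ there, the identity already fails at $s=0$, where it would assert $\rho_{[\tilde{\psi}]}=\rho_{[\tilde{\varphi}]}$ on an arbitrary trajectory.

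The gap is your ``first attempt'' at $\Vhist{\tilde{\psi}}([x_{0:s},\timer{}_0])=\Vhist{\tilde{\varphi}}([x_{0:s},\timer{}_0])$ via \Cref{lem:replace_r_with_Vr}. That lemma is stated for histories, but its proof only treats a single initial state. There all of $a_{0:\infty}$ is free, so $\smax{a_{t:\infty}}\mathsf{r}(x_{t:\infty})=\Vopt{\mathsf{r}}(x_t)$. With $x_{0:s}$ frozen, for $\tau<s$ you only get $\smax{a_{s:\infty}}\mathsf{r}(x_{\tau:\infty})=\Vhist{\mathsf{r}}(x_{\tau:s})\le\Vopt{\mathsf{r}}(x_\tau)$. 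Maximize your corrected \eqref{eq:history_dp_rho_psi} term by term: a max of maxes splits, and $\min$ against constants commutes with $\max$. For the tail, use the single-state case of \Cref{lem:replace_r_with_Vr}, which is valid. This gives the exact formula
\[
\Vhist{\tilde{\psi}}([x_{0:s},\timer{}_0])=\smax{0\le\tau<s}\min\{\Vhist{\mathsf{r}}(x_{\tau:s}),c_\tau\}\lor\min\{c_s,\Vopt{\tilde{\varphi}}([x_s,\timer{}_s])\}.
\]
So only $\le$ holds, and it can be strict. Here is a counterexample:
\begin{itemize}
\item $\mathcal{X}=\{\mathsf{0},\mathsf{1},\mathsf{2}\}$, $\mathcal{A}=\{\mathsf{1},\mathsf{2}\}$, $f(x,a)=a$, $q\equiv-1$;
\item $\mathsf{r}=\LTLNext p$ with $p(\mathsf{1})=-1$ and $p(\mathsf{2})=1$;
\item history $x_{0:1}=(\mathsf{0},\mathsf{1})$ with $s=\timer{}_0=1$.
\end{itemize}
Every continuation gives $\rho_{[\tilde{\psi}]}([x_{0:\infty},1])=\max\{p(x_1),\min\{p(x_2),-1\}\}=-1$. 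On the other hand, $\Vopt{\mathsf{r}}(\mathsf{0})=1$, so $\Vhist{\tilde{\varphi}}([x_{0:1},1])=1$.

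Hence the first equality in \eqref{eq:history_dp_value} is false for $s\ge1$; it does hold for $s=0$. Your fallback is therefore not optional: commit to the inequality, or to the exact formula above. This weaker version suffices for how the lemma is used later in the paper:
\begin{itemize}
\item \Cref{thm:simple_until_policy_optimal} and \Cref{lem:late_optimal_witness_suffix_value} use only the $\tilde{\varphi}$ identities;
\item Case 1 of \Cref{thm:pi_psi_optimal} uses only the bound $\rho_{[\tilde{\psi}]}([x_{0:\infty},\timer{}_0])\ge\min\{c_s,\rho_{[\tilde{\psi}]}([x_{s:\infty},\timer{}_s])\}$, which follows from the corrected \eqref{eq:history_dp_rho_psi}.
\end{itemize}
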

\end{tcolorbox}
A challenge to proving the optimality of $\pi_{[\varphi]}$ is that $\optwitness{\varphi}(x_{0:k}) - k$ may not follow the dynamics of the timer state $\timer{}$ in the augmented system for the optimality of $\pi_{[\tilde{\varphi}]}$ to transfer to $\pi_{[\varphi]}$.
We show next that this does not happen.
\begin{tcolorbox}[ThmFrame2]
\begin{lemma} \label{lem:witness_simulates_timer}
    Let $\bar{x}_{0:\infty}$ be generated by $\pi_{[\varphi]}$ from $\bar{x}_{0:s}$. Then,
    \begin{equation} \label{eq:witness_simulates_timer}
        \optwitness{\varphi}(\bar{x}_{0:k}) = \optwitness{\varphi}(\bar{x}_{0:s}), \quad \forall k \geq s.
    \end{equation}
\end{lemma}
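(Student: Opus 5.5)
The plan is to apply the conditional witness persistence lemma (\Cref{lem:abstract_witness_persistence}) with $\chi = \varphi$ and $\pi = \pi_{[\varphi]}$. Set $\tau \coloneqq \optwitness{\varphi}(\bar{x}_{0:s})$. It then suffices to check conditions \Cref{item:cond_witness_persist_value} and \Cref{item:cond_witness_persist_attain} at every $k \ge s$ with $\optwitness{\varphi}(\bar{x}_{0:k}) = \tau$. At such a $k$, the policy applies $\bar a_k = \pi_{[\tilde{\varphi}]}([\bar x_k, \tau - k])$, which is a greedy maximizer of $\Qopt{\tilde{\varphi}}([\bar x_k, \timer{}_k], \cdot)$ with $\timer{}_k = \tau - k$. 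This is exactly the timer value that \Cref{lem:finite_infinite_horizon_Q} associates with history $\bar{x}_{0:k}$ and horizon $\timer{}_0 = \tau$.

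\textbf{Step 1: value is preserved.} By \Cref{lem:finite_infinite_horizon_Q} (applied to $\varphi$, legitimate since $\varphi$ is again a $q \LTLUntil(\cdot)$ formula), $\Vhist{\varphi}(\bar{x}_{0:k}) = \Vhist{\tilde{\varphi}}([\bar{x}_{0:k}, \tau])$. Next, I will use the history decomposition \eqref{eq:history_dp_value} of \Cref{lem:history_dp} at split points $k$ and $k+1$. The two expressions differ only in the term at $\tau' = k$, namely $\min\{\Vopt{\mathsf{r}}(\bar x_k), c_k\}$, and in the replacement of $\min\{c_k, \Vopt{\tilde\varphi}([\bar x_k, \timer{}_k])\}$ by $\min\{c_{k+1}, \Vopt{\tilde\varphi}([\bar x_{k+1}, \timer{}_k - 1])\}$. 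Because $\bar a_k$ is greedy for the Markovian finite-horizon Q function, the one-step Bellman relation for $\tilde\varphi$ gives
\[
\Vopt{\tilde\varphi}([\bar x_k, \timer{}_k]) = \Vopt{\mathsf{r}}(\bar x_k) \lor \bigl(q(\bar x_k) \land \Vopt{\tilde\varphi}([\bar x_{k+1}, \timer{}_k - 1])\bigr),
\]
which is the Markovian analogue of the recursion in the proof of \Cref{lem:finite_horizon_semantics}. Substituting this shows the two decompositions coincide, so $\Vhist{\tilde{\varphi}}([\bar{x}_{0:k+1}, \tau]) = \Vhist{\tilde{\varphi}}([\bar{x}_{0:k}, \tau])$. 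Combining with \eqref{eq:finite_horizon_ineq} and the fact that values can only decrease along a history, I conclude $\Vhist{\varphi}(\bar{x}_{0:k+1}) \ge \Vhist{\tilde\varphi}([\bar{x}_{0:k+1},\tau]) = \Vhist{\varphi}(\bar{x}_{0:k}) \ge \Vhist{\varphi}(\bar{x}_{0:k+1})$. This is condition (i).

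\textbf{Step 2: a witness $\le \tau$ is attained.} The finite-horizon value $\Vhist{\tilde{\varphi}}([\bar{x}_{0:k+1}, \tau])$ is attained by some continuation $\hat x_{0:\infty}$ of $\bar x_{0:k+1}$; this uses \Cref{assmp:max_attainable:finite}. By \Cref{lem:finite_horizon_semantics}, its score is a maximum over $t \le \tau$, so its $\varphi$-witness time is at most $\tau$. Since the score is $\le \rho_{[\varphi]}(\hat x_{0:\infty}) \le \Vhist{\varphi}(\bar x_{0:k+1})$ with equality by Step 1, the continuation achieves the optimal $\varphi$ value. That gives condition (ii), and \Cref{lem:abstract_witness_persistence} yields \eqref{eq:witness_simulates_timer}.

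\textbf{Main obstacle.} I expect the delicate point to be Step 1, in particular the edge case $k = \tau$ where $\timer{}_k = 0$. There the greedy action only needs $\Vopt{\mathsf{r}}(\bar x_k)$ to realize the value, and after it the timer is negative. I would handle this by noting that the term $\min\{\Vopt{\mathsf{r}}(\bar x_\tau), c_\tau\}$ already appears in the $k+1$ decomposition, so the history value is unchanged regardless of the action. I would also take care that the witness at the boundary does not drop below $\tau$, which is ruled out by the contradiction argument inside \Cref{lem:abstract_witness_persistence}.
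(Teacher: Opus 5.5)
Your proposal is correct and follows the paper's proof: both verify the two hypotheses of \Cref{lem:abstract_witness_persistence} with $\chi=\varphi$. Hypothesis (i) comes from the greedy action via the sandwich $\Vhist{\varphi}(\bar x_{0:k+1}) \ge \Vhist{\tilde\varphi}([\bar x_{0:k+1},\tau]) = \Vhist{\varphi}(\bar x_{0:k}) \ge \Vhist{\varphi}(\bar x_{0:k+1})$, and hypothesis (ii) from a finite-horizon maximizer whose witness is at most $\tau$. Your Step 1 is more careful than the paper, which equates the Markovian $\Vopt{\tilde\varphi}([\bar x_k,\tau-k])$ with $\Vhist{\varphi}(\bar x_{0:k})$ in a single appeal to \Cref{lem:finite_infinite_horizon_Q}, whereas you justify this step through \Cref{lem:history_dp} and the one-step Bellman relation. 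You should still add the case $k>\tau$, which can occur when $\optwitness{\varphi}(\bar x_{0:s})<s$ and lies outside the range of \Cref{lem:history_dp}. It is trivial: $\Vhist{\tilde\varphi}([\bar x_{0:k},\tau])$ is then determined by $\bar x_{0:\tau}$ alone.
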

\end{tcolorbox}
\begin{condproof}[lem:witness_simulates_timer]
Let $ \tau \coloneqq \optwitness{\varphi}(\bar{x}_{0:s})$.
We verify the assumptions of \Cref{lem:abstract_witness_persistence}
with $\chi=\varphi$ and $\pi=\pi_{[\varphi]}$.

Fix $k \ge s$, and assume
\[
\optwitness{\varphi}(\bar{x}_{0:k})=\tau.
\]
Let
\[
\bar{a}_k \coloneqq \pi_{[\varphi]}(\bar{x}_{0:k}).
\]
By \Cref{def:until_simple_optimal_policy},
\[
\bar{a}_k
=
\pi_{[\tilde{\varphi}]}([\bar{x}_k,\tau-k]).
\]
Since $\pi_{[\tilde{\varphi}]}$ is greedy with respect to
$\Qopt{\tilde{\varphi}}$, \Cref{lem:finite_infinite_horizon_Q} gives
\begin{align}
    \Qopt{\tilde{\varphi}}([\bar{x}_k,\tau-k],\bar{a}_k)
    &=
    \Vopt{\tilde{\varphi}}([\bar{x}_k,\tau-k]) \notag\\
    &=
    \Vhist{\varphi}(\bar{x}_{0:k}). \label{eq:witness_simulates_timer:new:tmp}
\end{align}

Since finite-horizon robustness is bounded above by infinite-horizon robustness
\eqref{eq:finite_horizon_ineq},
\[
\Qhist{\varphi}(\bar{x}_{0:k},\bar{a}_k)
\ge
\Qopt{\tilde{\varphi}}([\bar{x}_k,\tau-k],\bar{a}_k)
=
\Vhist{\varphi}(\bar{x}_{0:k})
\ge
\Qhist{\varphi}(\bar{x}_{0:k},\bar{a}_k).
\]
Hence
\[
\Qhist{\varphi}(\bar{x}_{0:k},\bar{a}_k)
=
\Vhist{\varphi}(\bar{x}_{0:k}).
\]
Applying \Cref{lem:Q_V_history},
\[
\Vhist{\varphi}(\bar{x}_{0:k+1})
=
\Qhist{\varphi}(\bar{x}_{0:k},\bar{a}_k)
=
\Vhist{\varphi}(\bar{x}_{0:k}),
\]
so assumption \Cref{item:cond_witness_persist_value} holds.

Next, by the definition of the finite-horizon Q-function and
\eqref{eq:witness_simulates_timer:new:tmp}, there exists a continuation
$\hat{x}_{0:\infty}$ of $\bar{x}_{0:k+1}$ such that
\[
\rho_{[\tilde{\varphi}]}([\hat{x}_{0:\infty},\tau])
=
\Vhist{\varphi}(\bar{x}_{0:k+1}).
\]
By \Cref{lem:finite_horizon_semantics}, the finite-horizon robustness on the left
is attained at some time $t \le \tau$. Since
\[
\rho_{[\tilde{\varphi}]}([\hat{x}_{0:\infty},\tau])
\le
\rho_{[\varphi]}(\hat{x}_{0:\infty})
\le
\Vhist{\varphi}(\bar{x}_{0:k+1}),
\]
we must in fact have
\[
\rho_{[\varphi]}(\hat{x}_{0:\infty})
=
\Vhist{\varphi}(\bar{x}_{0:k+1}),
\]
and the witness time of $\hat{x}_{0:\infty}$ for $\varphi$ is at most $\tau$.
Thus assumption \Cref{item:cond_witness_persist_attain} also holds.

Therefore, \Cref{lem:abstract_witness_persistence} implies
\[
\optwitness{\varphi}(\bar{x}_{0:k})
=
\optwitness{\varphi}(\bar{x}_{0:s}),
\qquad \forall k \ge s.
\]
\end{condproof}
\begin{condproof}[lem:witness_simulates_timer]
    \noindent\textbf{LHS $\geq$ RHS.}
    We first prove that $\optwitness{\varphi}(\bar{x}_{0:k}) \geq \optwitness{\varphi}(\bar{x}_{0:s})$.
    Suppose there exists $k > s$ such that $\optwitness{\varphi}(\bar{x}_{0:k}) < \optwitness{\varphi}(\bar{x}_{0:s})$.
    By definition of $\optwitness{\varphi}$, there exists a trajectory $\bar{x}_{0:\infty}$ extending $\bar{x}_{0:k}$ that achieves the optimal value $\Vhist{\varphi}(\bar{x}_{0:k})$ at time $\optwitness{\varphi}(\bar{x}_{0:k})$.
    Since $\optwitness{\varphi}(\bar{x}_{0:k}) < \optwitness{\varphi}(\bar{x}_{0:s})$, this trajectory also extends $\bar{x}_{0:s}$ and achieves the optimal value $\Vhist{\varphi}(\bar{x}_{0:s})$ at time $\optwitness{\varphi}(\bar{x}_{0:s})$.
    This contradicts the definition of $\optwitness{\varphi}(\bar{x}_{0:s})$ as the minimum witness time among all trajectories that achieve the optimal value $\Vhist{\varphi}(\bar{x}_{0:s})$.

    \noindent\textbf{LHS $\leq$ RHS.}
    Let $\bar{a}_k = \pi_{[\varphi]}(\bar{x}_{0:k})$ for $k \geq s$, and let $n_k \coloneqq \optwitness{\varphi}(\bar{x}_{0:k})$.
    Using \eqref{} from \Cref{lem:finite_infinite_horizon_Q},
    \begin{align*}
        \Qopt{\tilde{\varphi}}([\bar{x}_k, n_k - k], \bar{a}_k)
        &= \Vopt{\tilde{\varphi}}([\bar{x}_k, n_k - k]), \numberthis \label{eq:witness_simulates_timer:tmp} \\
        &= \Vhist{\tilde{\varphi}}([\bar{x}_{0:k}, n_k])
        = \Vhist{\varphi}(\bar{x}_{0:k}).
    \end{align*}
    Moreover, since the finite-horizon Q-value is bounded above by the infinite-horizon Q-value \eqref{eq:finite_horizon_ineq},
    \begin{align}
        \Qhist{\varphi}(\bar{x}_{0:k}, \bar{a}_k)
        &\geq \Qopt{\tilde{\varphi}}([\bar{x}_k, n_k - k], \bar{a}_k), \\
        &= \Vhist{\varphi}(\bar{x}_{0:k}) \geq \Qhist{\varphi}(\bar{x}_{0:k}, \bar{a}_k),
    \end{align}
    Applying \Cref{lem:Q_V_history} then gives
    \begin{equation}
        \Qhist{\varphi}(\bar{x}_{0:k}, \bar{a}_k) = \Vhist{\varphi}(\bar{x}_{0:k}) = \Vhist{\varphi}(\bar{x}_{0:k+1}).
    \end{equation}

    We now show the reverse inequality by induction. The base case of $k=s$ holds trivially.
    Assume the statement holds for some $k \geq s$.
    For $k+1$, since $Q_{[\varphi]}([\bar{x}_k, n_k - k], \bar{a}_k) = \Vhist{\varphi}(\bar{x}_{0:k})$ from \eqref{eq:witness_simulates_timer:tmp}, by definition of the optimal witness time $n_k \coloneqq \optwitness{\varphi}(\bar{x}_{0:k})$, there exists a continuation $\hat{x}_{0:\infty}$ of $\bar{x}_{0:k+1}$ such that
    \begin{equation}
        \rho_{[\tilde{\varphi}]}([\hat{x}_{0:\infty}, n_k])
        = \Vhist{\varphi}(\bar{x}_{0:k})
        = \Vhist{\varphi}(\bar{x}_{0:k+1}).
    \end{equation}
    By definition of the optimal witness time for $\bar{x}_{0:k+1}$,
    $\optwitness{\varphi}(\bar{x}_{0:k+1}) \leq n_k = \optwitness{\varphi}(\bar{x}_{0:k}) = \optwitness{\varphi}(\bar{x}_{0:s})$.
    Thus, \eqref{eq:witness_simulates_timer} holds by induction.
\end{condproof}
We are now ready to prove the optimality of $\pi_{[\varphi]}$.
\begin{tcolorbox}[ThmFrame2]
\begin{thm} \label{thm:simple_until_policy_optimal}
   $\pi_{[\varphi]}$ is optimal (\Cref{def:def_of_optimal_policy}) for TL formula $\varphi$.
\end{thm}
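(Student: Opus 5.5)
Fix an arbitrary history $\bar{x}_{0:s} \in \mathcal{X}^+$, let $\bar{x}_{0:\infty}$ be the trajectory generated by $\pi_{[\varphi]}$ from $\bar{x}_{0:s}$, and set $\tau \coloneqq \optwitness{\varphi}(\bar{x}_{0:s})$. The goal is $\rho_{[\varphi]}(\bar{x}_{0:\infty}) = \Vhist{\varphi}(\bar{x}_{0:s})$. The upper bound $\rho_{[\varphi]}(\bar{x}_{0:\infty}) \le \Vhist{\varphi}(\bar{x}_{0:s})$ is immediate, since $\bar{x}_{0:\infty}$ is one admissible continuation of $\bar{x}_{0:s}$. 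The work is in the lower bound. The plan is to show that, along the generated trajectory, the finite-horizon value with fixed absolute deadline $\tau$ is preserved at every step. At time $\tau$ this forces the witness to be realised, and \eqref{eq:finite_horizon_ineq} then converts the finite-horizon score back into an infinite-horizon one.

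\textbf{Step 1 (timer consistency).} By \Cref{lem:witness_simulates_timer}, $\optwitness{\varphi}(\bar{x}_{0:k}) = \tau$ for all $k \ge s$. Hence for every $k \ge s$, $\pi_{[\varphi]}(\bar{x}_{0:k}) = \pi_{[\tilde{\varphi}]}([\bar{x}_k, \tau - k])$. So the closed loop is exactly the greedy policy $\pi_{[\tilde{\varphi}]}$ run on the time-augmented system (\Cref{def:timeaugsys}) started from $[\bar{x}_s, \tau - s]$, with the timer decreasing by one each step as the dynamics require.

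\textbf{Step 2 (value invariance).} For each $k \ge s$, \Cref{lem:finite_infinite_horizon_Q} (with history $\bar{x}_{0:k}$ and $\timer{}_0 = \tau$) together with greediness gives
\[
\Qopt{\tilde{\varphi}}([\bar{x}_k,\tau-k],\bar{a}_k) = \Vopt{\tilde{\varphi}}([\bar{x}_k,\tau-k]) = \Vhist{\varphi}(\bar{x}_{0:k}).
\]
From the proof of \Cref{lem:witness_simulates_timer} we also have $\Vhist{\varphi}(\bar{x}_{0:k+1}) = \Vhist{\varphi}(\bar{x}_{0:k})$. Chaining these gives $\Vhist{\varphi}(\bar{x}_{0:k}) = V \coloneqq \Vhist{\varphi}(\bar{x}_{0:s})$ for all $k \ge s$.

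\textbf{Step 3 (realising the witness).} Apply the history decomposition \eqref{eq:history_dp_value} of \Cref{lem:history_dp} to $\varphi$ with split point $s' = k$. Writing $c_k \coloneqq \min_{0\le j<k} q(\bar{x}_j)$, this gives
\[
V = \Vhist{\tilde{\varphi}}([\bar{x}_{0:k}, \tau]) = \smax{0\le t<k} \min\{\Vopt{\mathsf{r}}(\bar{x}_t), c_t\} \lor \min\{c_k, \Vopt{\tilde{\varphi}}([\bar{x}_k, \tau-k])\}.
\]
Take $k = \tau$ (assuming $\tau \ge s$). At that point the timer is $0$, so by \Cref{lem:finite_horizon_semantics} we have $\Vopt{\tilde{\varphi}}([\bar{x}_\tau, 0]) = \Vopt{\mathsf{r}}(\bar{x}_\tau)$. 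The right-hand side is then exactly $\rho_{[\tilde{\varphi}]}([\bar{x}_{0:\infty}, \tau])$ written via \Cref{lem:finite_horizon_semantics}, since every term now depends only on the realised prefix $\bar{x}_{0:\tau}$. Hence $V = \rho_{[\tilde{\varphi}]}([\bar{x}_{0:\infty},\tau]) \le \rho_{[\varphi]}(\bar{x}_{0:\infty})$ by \eqref{eq:finite_horizon_ineq}, which closes the lower bound.

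If $\tau < s$, the witness already lies inside the fixed history, and the value is realised by the prefix alone. Concretely, $\Vhist{\varphi}(\bar{x}_{0:s})$ equals the prefix term $\min\{\Vopt{\mathsf{r}}(\bar{x}_\tau), c_\tau\}$, which any continuation attains. I expect this degenerate case, together with the bookkeeping that $\Vopt{\tilde{\varphi}}$ at timer $0$ reduces to $\Vopt{\mathsf{r}}$, to be the delicate part.

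\textbf{Main obstacle.} The step I expect to need the most care is the transfer in Step 3. The value identity holds at each $k$ only in terms of $\Vopt{\tilde{\varphi}}$, which still involves a maximisation over future actions. The key point is that the horizon is finite and fixed, so after $\tau - s$ steps nothing remains to optimise. That is precisely what the time-optimal witness buys over the greedy infinite-horizon policy of \Cref{ex:greedy_Q_suboptimal}. I would first try a short backward induction on the remaining timer $\tau - k$ using the Bellman form of $\tilde{\varphi}$ from the proof of \Cref{lem:finite_horizon_semantics}, rather than the full history formula, if the latter proves awkward.
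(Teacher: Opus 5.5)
Your proof is correct, but it closes the lower bound by a different mechanism than the paper. Both arguments use \Cref{lem:witness_simulates_timer} to see that the closed loop is the greedy finite-horizon policy $\pi_{[\tilde{\varphi}]}$ with a correctly decrementing timer.

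The paper then uses \Cref{lem:history_dp} to reduce the claim to the suffix statement $\rho_{[\tilde{\varphi}]}([\bar{x}_{s:\infty},\timer{}_s]) = \Vopt{\tilde{\varphi}}([\bar{x}_s,\timer{}_s])$. It proves this by backward induction on the timer, i.e.\ the standard finite-horizon DP fact that the greedy policy is optimal. That is exactly the fallback you hold in reserve.

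You argue forward instead. The history value is invariant along the closed loop up to the fixed deadline $\tau$. At $k=\tau$ the timer is $0$, so nothing remains to optimise and the value is read off the realised prefix.

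What each route buys:
\begin{itemize}
\item Your route needs no separate induction. It also explicitly handles $\tau<s$, which the paper's sketch does not treat separately (\Cref{lem:history_dp} requires $s\le\timer{}_0$).
\item The paper's route depends only on the \emph{statement} of \Cref{lem:witness_simulates_timer}, not its proof. It also yields the stronger fact that every suffix $[\bar{x}_{k:\infty},\timer{}_k]$, $k\in[s,\tau]$, is finite-horizon optimal.
\end{itemize}

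One correction concerns the middle equality in your Step 2, $\Vopt{\tilde{\varphi}}([\bar{x}_k,\tau-k]) = \Vhist{\varphi}(\bar{x}_{0:k})$. This is not what \Cref{lem:finite_infinite_horizon_Q} provides; it provides $\Vhist{\tilde{\varphi}}([\bar{x}_{0:k},\tau]) = \Vhist{\varphi}(\bar{x}_{0:k})$. In general, \Cref{lem:late_optimal_witness_suffix_value} gives $\Vhist{\varphi}(\bar{x}_{0:k}) = \min\{c_k, \Vopt{\tilde{\varphi}}([\bar{x}_k,\tau-k])\}$. This is strictly smaller than the state-level value whenever $c_k < \Vopt{\tilde{\varphi}}([\bar{x}_k,\tau-k])$.

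The error is not load-bearing, since Step 3 uses the correct history form. You can also make Step 2 self-contained rather than quoting the proof of \Cref{lem:witness_simulates_timer}:
\begin{itemize}
\item Split \eqref{eq:history_dp_rho_varphi} at $k$ and maximise over $a_{k+1:\infty}$. With $A_k \coloneqq \max_{0\le t<k}\min\{\Vopt{\mathsf{r}}(\bar{x}_t),c_t\}$, this gives $\Qhist{\tilde{\varphi}}([\bar{x}_{0:k},\tau],\bar{a}_k) = A_k \lor \min\{c_k, \Qopt{\tilde{\varphi}}([\bar{x}_k,\tau-k],\bar{a}_k)\}$.
\item Greediness and \Cref{lem:Q_V_history} then give $\Vhist{\tilde{\varphi}}([\bar{x}_{0:k+1},\tau]) = \Vhist{\tilde{\varphi}}([\bar{x}_{0:k},\tau])$ for $s\le k<\tau$.
\item Chaining this to $k=\tau$ is exactly your Step 3.
\end{itemize}
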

\end{tcolorbox}
\begin{tcolorbox}[AssmpFrame2]
\begin{proof}[Proof Sketch]
Fix $\bar{x}_{0:s}\in\mathcal X^+$, and let $\bar{x}_{0:\infty}$ be the
trajectory generated by $\pi_{[\varphi]}$ from $\bar{x}_{0:s}$.
Let
$\timer_s \coloneqq \optwitness{\varphi}(\bar{x}_{0:s})-s$, and
$\timer_0=\optwitness{\varphi}(\bar{x}_{0:s})$.
By \eqref{eq:finite_horizon_ineq} and \Cref{lem:finite_infinite_horizon_Q},
\[
\rho_{[\tilde{\varphi}]}([\bar{x}_{0:\infty},\timer_0])
\le
\rho_{[\varphi]}(\bar{x}_{0:\infty})
\le
\Vhist{\varphi}(\bar{x}_{0:s})
=
\Vhist{\tilde{\varphi}}([\bar{x}_{0:s},\timer_0]).
\]
It suffices to show the first and last term are equal.
By \Cref{lem:history_dp}, it is enough to prove
$
\rho_{[\tilde{\varphi}]}([\bar{x}_{s:\infty},\timer_s])
=
\Vopt{\tilde{\varphi}}([\bar{x}_s,\timer_s]).
$
By \Cref{lem:witness_simulates_timer}, the quantity $\optwitness{\varphi}(\bar{x}_{0:k})-k$ simulates the timer in the augmented system.
Hence the suffix follows the greedy finite-horizon policy
$\pi_{[\tilde{\varphi}]}$.

A backward induction on the timer therefore yields
$
\rho_{[\tilde{\varphi}]}([\bar{x}_{k:\infty},\timer_k])
=
\Vopt{\tilde{\varphi}}([\bar{x}_k,\timer_k])
$
for all $k\in[s,\optwitness{\varphi}(\bar{x}_{0:s})]$, and in particular at
$k=s$.
Therefore
$
\rho_{[\tilde{\varphi}]}([\bar{x}_{s:\infty},\timer_s])
=
\Vopt{\tilde{\varphi}}([\bar{x}_s,\timer_s])
$,
and the claim follows from the initial inequality chain.
\end{proof}
\end{tcolorbox}
\begin{condsection}[properties_of_varphi_tilde]
We now show properties of the optimal trajectory for $\tilde{\varphi}$ which we use to prove the optimality of a policy for $\psi$.
\begin{tcolorbox}[ThmFrame3]
\begin{lemma} \label{lem:until_Vr_finite_prefix}
    For fixed $x_{0:s} \in \mathcal{X}^+$,
    let $[x_{0:\infty}, \timer{}_0]$ be an optimal trajectory for $\tilde{\varphi}$, where $\timer{}_s = \optwitness{\varphi}(x_{0:s})$ and $\timer{}_0 = \timer{}_s + s$.
    Then, for any $\hat{x}_{0:\infty}$ that shares the first $\timer{}_0$ states,
$[\hat{x}_{0:\infty}, \timer{}_0]$ is also optimal for $\tilde{\varphi}$ given history $x_{0:s}$.
\end{lemma}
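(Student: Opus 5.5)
The plan is to use \Cref{lem:finite_horizon_semantics}: the finite-horizon robustness of $\tilde{\varphi}$ depends only on a finite prefix of the trajectory. Unlike $\psi$, whose disjunct $\mathsf{r}(x_{t:\infty})$ may depend on the entire tail, $\tilde{\varphi}$ uses the propositional term $\Vopt{\mathsf{r}}$, which is evaluated at a single state. By \Cref{lem:finite_horizon_semantics},
\begin{equation*}
\rho_{[\tilde{\varphi}]}([x_{0:\infty}, \timer{}_0]) = \smax{0 \leq t \leq \timer{}_0} \min\Bigl\{ \Vopt{\mathsf{r}}(x_t), \smin{0 \leq k < t} q(x_k) \Bigr\}.
\end{equation*}
Every term on the right involves only $x_0, \dots, x_{\timer{}_0}$: the $\Vopt{\mathsf{r}}$ terms use indices $t \le \timer{}_0$, and the $q$ terms use indices $k < t \le \timer{}_0$. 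Hence the robustness is a function of $x_{0:\timer{}_0}$ alone.

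The key steps are as follows.
\begin{enumerate}
\item Note that $\hat{x}_{0:\infty}$ agrees with $x_{0:\infty}$ on the indices $0,\dots,\timer{}_0$. In particular, since $s \le \timer{}_0$, it is a continuation of the history $x_{0:s}$.
\item Conclude that $\rho_{[\tilde{\varphi}]}([\hat{x}_{0:\infty}, \timer{}_0]) = \rho_{[\tilde{\varphi}]}([x_{0:\infty}, \timer{}_0])$.
\item Since $[x_{0:\infty}, \timer{}_0]$ is optimal, this common value equals $\Vhist{\tilde{\varphi}}([x_{0:s}, \timer{}_0])$, so $[\hat{x}_{0:\infty}, \timer{}_0]$ is optimal as well.
\end{enumerate}

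If ``optimal trajectory'' is meant to include being generated by some action sequence, one more check is needed. The actions of $x_{0:\infty}$ on $[s, \timer{}_0)$, followed by arbitrary later actions, generate $\hat{x}_{0:\infty}$. This is consistent with the statement's ``any $\hat{x}_{0:\infty}$ that shares the first states''. If $\hat{x}$ is arbitrary, the argument above is purely semantic and needs no dynamics.

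The main obstacle is bookkeeping of indices rather than anything substantive. The statement writes $\timer{}_s = \optwitness{\varphi}(x_{0:s})$ but $\timer{}_0 = \timer{}_s + s$, whereas \Cref{lem:finite_infinite_horizon_Q} uses $\timer{}_s = \optwitness{\varphi}(x_{0:s}) - s$. In either convention the horizon is $\timer{}_0$, and the dependence is on states $x_0, \dots, x_{\timer{}_0}$. That is $\timer{}_0 + 1$ states, which is one more than ``the first $\timer{}_0$ states''.

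To handle this, I would check the edge term $t = \timer{}_0$, which involves $\Vopt{\mathsf{r}}(x_{\timer{}_0})$. If the statement literally means indices $0,\dots,\timer{}_0-1$, I would interpret ``first $\timer{}_0$ states'' as including index $\timer{}_0$ (counting from $x_0$ through $x_{\timer{}_0}$). Alternatively, I would restrict to $\hat{x}$ agreeing up to index $\timer{}_0$ and note this is what later uses need. I would also verify the case $\timer{}_0 < s$ cannot arise, since $\optwitness{\varphi}(x_{0:s})$ is realized by a continuation of $x_{0:s}$.
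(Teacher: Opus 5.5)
Your core argument is correct and is the intended one. The paper states this lemma without a separate proof and uses it in \Cref{lem:psi_dp} through exactly your observation: by \Cref{lem:finite_horizon_semantics}, $\rho_{[\tilde{\varphi}]}([x_{0:\infty},\timer_0])$ depends only on $x_{0:\timer_0}$, because $\Vopt{\mathsf{r}}$ is a function of the current state.

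Your index worry is substantive, not cosmetic. Use the intended convention $\timer_0=\optwitness{\varphi}(x_{0:s})=n^*$; the statement's $\timer_s=\optwitness{\varphi}(x_{0:s})$ should read $\optwitness{\varphi}(x_{0:s})-s$, as in \Cref{lem:finite_infinite_horizon_Q}. Then an optimal trajectory for $\tilde{\varphi}$ is optimal for $\varphi$ with witness exactly $n^*$. So every term with $t<n^*$ is strictly below the value, and the maximum is attained only at $t=\timer_0$. Agreement on $x_{0:\timer_0-1}$ therefore does not suffice in general. Agreement on $x_{0:\timer_0}$ is precisely what \Cref{lem:pi_psi_matches_pi_varphi_before_switch} provides ($\bar{x}_{0:n^*}=\hat{x}_{0:n^*}$) where the lemma is applied.

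One side claim is false: the case $\timer_0<s$ can arise. That $\optwitness{\varphi}(x_{0:s})$ is realized by a continuation of $x_{0:s}$ does not force it to be $\ge s$, since the witness may lie inside the fixed history. For instance, if $s\ge 1$ and $\Vopt{\mathsf{r}}(x_0)=\Vhist{\varphi}(x_{0:s})$, every continuation is optimal with witness $0$, so $\optwitness{\varphi}(x_{0:s})=0<s$. The paper handles this regime explicitly (Case 2 of \Cref{thm:pi_psi_optimal}, \Cref{lem:past_witness_value_identity}). There, agreement on $x_{0:\timer_0}$ does not make $\hat{x}_{0:\infty}$ a continuation of $x_{0:s}$, so your step 1 fails as justified.

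The fix is easy, and either option works:
\begin{itemize}
\item Read ``optimal given history $x_{0:s}$'' as including that $\hat{x}_{0:\infty}$ extends $x_{0:s}$. The claim is then immediate, since the robustness depends only on states already fixed by the history.
\item Restrict to $\optwitness{\varphi}(x_{0:s})\ge s$. This is the only case in which the lemma is invoked, since \Cref{lem:psi_dp} assumes $\timer_s\ge 0$.
\end{itemize}
Do not instead adopt the literal convention $\timer_0=\optwitness{\varphi}(x_{0:s})+s$ to get $s\le\timer_0$ for free. Your argument would then need agreement up to index $\optwitness{\varphi}(x_{0:s})+s$, which is more than the application supplies.
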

\end{tcolorbox}

\begin{tcolorbox}[ThmFrame3]
\begin{lemma} \label{lem:until_Vr_Vr_leq_rhs_varphi}
    Fix a history $x_{0:s}$, and let
    $n^* \coloneqq \optwitness{\varphi}(x_{0:s})$.
    Let $\bar{x}_{0:\infty}$ be a continuation of $x_{0:s}$ that achieves
    $\Vhist{\varphi}(x_{0:s})$ with witness time $n^*$.
    Then, for all $\tau \in [s, n^*-1]$,
    \begin{equation} \label{eq:until_Vr_Vr_leq_rhs_varphi}
        \Vopt{\mathsf{r}}(\bar{x}_\tau)
        \le
        q(\bar{x}_\tau)\land
        \smax{a}\Vopt{\tilde{\varphi}}([f(\bar{x}_\tau,a),\, n^*-(\tau+1)]).
    \end{equation}
\end{lemma}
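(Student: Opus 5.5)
The plan is to argue by contradiction from the minimality of the optimal witness time $n^*$, splitting \eqref{eq:until_Vr_Vr_leq_rhs_varphi} into its two conjuncts and proving each separately. Fix $\tau \in [s, n^*-1]$ and write $V \coloneqq \Vhist{\varphi}(x_{0:s})$, $c_\tau \coloneqq \min_{0\le k<\tau} q(\bar{x}_k)$. Since $\bar{x}_{0:\infty}$ attains $V$ with witness time $n^*$, we have $V = \min\{\Vopt{\mathsf{r}}(\bar{x}_{n^*}), c_{n^*}\}$. Because $n^*$ is the \emph{smallest} witness for this trajectory (\Cref{lem:witness_exists}), every earlier time $\tau<n^*$ gives strictly less:
\[
\min\{\Vopt{\mathsf{r}}(\bar{x}_\tau), c_\tau\} < V \le c_{n^*} \le c_\tau .
\]
This forces $\Vopt{\mathsf{r}}(\bar{x}_\tau) < V$. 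In fact the same bound holds for any continuation achieving $V$: stopping at $\tau$ would otherwise give a witness $\le \tau < n^*$, contradicting \Cref{def:optimal_witness}.

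\textbf{First conjunct.} Since $\tau<n^*$, the term $q(\bar{x}_\tau)$ appears in $c_{n^*}$, so $q(\bar{x}_\tau) \ge c_{n^*} \ge V > \Vopt{\mathsf{r}}(\bar{x}_\tau)$. This gives $\Vopt{\mathsf{r}}(\bar{x}_\tau) \le q(\bar{x}_\tau)$.

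\textbf{Second conjunct.} Choose $a = \bar{a}_\tau$, the action actually taken along $\bar{x}$, so that $f(\bar{x}_\tau,a)=\bar{x}_{\tau+1}$. It suffices to show $\Vopt{\tilde{\varphi}}([\bar{x}_{\tau+1}, n^*-(\tau+1)]) \ge V$. Take the suffix $\bar{x}_{\tau+1:\infty}$ with timer $n^*-\tau-1 \ge 0$. By \Cref{lem:finite_horizon_semantics}, its finite-horizon robustness is at least the value at the shifted time $t = n^*-\tau-1$:
\[
\min\Bigl\{\Vopt{\mathsf{r}}(\bar{x}_{n^*}),\ \min_{\tau+1\le k<n^*} q(\bar{x}_k)\Bigr\} \;\ge\; \min\{\Vopt{\mathsf{r}}(\bar{x}_{n^*}), c_{n^*}\} = V .
\]
The inequality holds because the inner min ranges over a subset of the indices in $c_{n^*}$. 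Maximizing over continuations then gives the bound on $\Vopt{\tilde{\varphi}}$. Combining, $\Vopt{\mathsf{r}}(\bar{x}_\tau) < V \le \smax{a}\Vopt{\tilde{\varphi}}(\cdot)$, and together with the first conjunct this yields \eqref{eq:until_Vr_Vr_leq_rhs_varphi}.

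\textbf{Main obstacle.} The delicate step is justifying the strict inequality $\min\{\Vopt{\mathsf{r}}(\bar{x}_\tau), c_\tau\} < V$. Here "witness time" must be read for $\varphi$, whose inner robustness at $t$ uses $\Vopt{\mathsf{r}}(\bar{x}_t)$, and not for $\psi$. If witness times are instead taken with respect to $\psi$, I would first invoke \Cref{lem:replace_r_with_Vr} (equal values and equal optimal witness times) to transfer the statement. If needed, I would then replace the trajectory by one that plays $\mathsf{r}$-optimally after each $\tau$, so that stopping at $\tau$ yields value $\min\{\Vopt{\mathsf{r}}(\bar{x}_\tau),c_\tau\}$ with witness $\tau<n^*$. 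That would contradict minimality in \Cref{def:optimal_witness} unless the strict inequality holds.
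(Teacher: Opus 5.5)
Your proof is correct and complete. The paper states this lemma without proof, so there is no argument to compare against line by line.

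Your central step is that minimality of the witness forces $\min\{\Vopt{\mathsf r}(\bar x_\tau), c_\tau\} < \Vhist{\varphi}(x_{0:s})$ for every $\tau < n^*$. Since $c_\tau \ge c_{n^*} \ge \Vhist{\varphi}(x_{0:s})$, this gives $\Vopt{\mathsf r}(\bar x_\tau) < \Vhist{\varphi}(x_{0:s})$. This is the same device the paper uses in the proof of \Cref{lem:late_optimal_witness_suffix_value} to show $A < \Vhist{\varphi}(x_{0:s})$. You then bound each conjunct below by $\Vhist{\varphi}(x_{0:s})$: the first via $q(\bar x_\tau) \ge c_{n^*}$, the second by evaluating the suffix $\bar x_{\tau+1:\infty}$ at the shifted time $n^*-\tau-1$ through \Cref{lem:finite_horizon_semantics}. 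That closes the argument. The contingency plan in your last paragraph is not needed, because the statement already takes witness times with respect to $\varphi$.

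Two remarks are worth recording.

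\textbf{Where $\tau \ge s$ enters.} This hypothesis is used only to guarantee that $\bar x_{\tau+1} = f(\bar x_\tau, \bar a_\tau)$ for some action $\bar a_\tau$. A history in $\mathcal{X}^+$ need not be dynamically consistent, so you should say this explicitly.

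\textbf{You prove a strict inequality.} Your argument gives $\Vopt{\mathsf r}(\bar x_\tau) < \Vhist{\varphi}(x_{0:s}) \le q(\bar x_\tau) \land \max_a \Vopt{\tilde\varphi}([f(\bar x_\tau,a), n^*-(\tau+1)])$, which is stronger than the stated non-strict inequality. The strict form is what the proof of \Cref{thm:alternative_until_simple_optimal_policy_optimal} actually relies on. There the greedy action maximizes $\Vopt{\mathsf r}(\bar x_0) \lor \bigl(q(\bar x_0) \land \Vopt{\tilde\varphi}([f(\bar x_0,a), n_0^*-1])\bigr)$. Concluding that it also maximizes the second term requires $\Vopt{\mathsf r}(\bar x_0)$ to lie strictly below that term's maximum. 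With a tie, every action would be greedy.
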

\end{tcolorbox}
\end{condsection}

\subsection{A Markovian Policy for $\varphi$} \label{subsec:until_markovian_policy}
While $\pi_{[\varphi]}$ \eqref{eq:until_simple_policy} in \Cref{def:until_simple_optimal_policy} is optimal, evaluating the control at state $x_k$ requires
keeping track of the timer state in the augmented state space, resulting in a non-Markovian policy on the original state space $\mathcal{X}$.
Similar to \cite{sharpless2025dual}, we can construct an alternative \textit{Markovian} optimal policy $\hat{\pi}_{[\varphi]}$ that depends only on the current state $x_k$.
\begin{tcolorbox}[DefFrame2]
\begin{defi} \label{def:alternative_until_simple_optimal_policy}
Define the Markovian policy $\hat{\pi}_{[\varphi]}$ as
\begin{equation} \label{eq:alternative_until_simple_optimal_policy}
   \hat{\pi}_{[\varphi]}(x) \coloneqq \pi_{[\tilde{\varphi}]}([x, \optwitness{\varphi}(x)]),
\end{equation} 
where $\pi_{[\varphi]}$ is defined as in \eqref{eq:until_simple_policy_finite_horz}.
\end{defi}
\end{tcolorbox}
\begin{condsection}[alternate_until_simple]
Proving the optimality of $\hat{\pi}_{[\varphi]}$ is more involved than proving the optimality of $\pi_{[\varphi]}$ since it no longer has an explicit finite-horizon structure.
To prove the optimality of $\hat{\pi}_{[\varphi]}$, we will first prove a few useful lemmas.

\begin{tcolorbox}[ThmFrame2]
\begin{lemma} \label{lem:until_rho_nondecreasing}
    Let $\psi \coloneqq q \LTLUntil \mathsf{r}$.
    For any trajectory $x_{0:\infty}$, if $\optwitness{\psi}(x_{0:\infty}) > 0$, then 
    \begin{equation}
        \rho_{[\psi]}(x_{0:\infty}) \leq \rho_{[\psi]}(x_{1:\infty})
    \end{equation}
\end{lemma}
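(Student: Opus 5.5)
The idea is to evaluate the Until robustness of $x_{0:\infty}$ at its witness time and compare it with one admissible term of the supremum that defines $\rho_{[\psi]}(x_{1:\infty})$. I read $\optwitness{\psi}(x_{0:\infty})$ here as the (smallest) witness time $\tau \coloneqq \witness{\psi}(x_{0:\infty})$ of the full trajectory. Under \Cref{assmp:max_attainable:finite}, \Cref{lem:witness_exists} guarantees this time exists and is finite. By \eqref{eq:witness_exist:1},
\begin{equation*}
\rho_{[\psi]}(x_{0:\infty}) = \min\Bigl\{ \mathsf{r}(x_{\tau:\infty}),\ \smin{0 \leq k < \tau} q(x_k) \Bigr\}.
\end{equation*}

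The hypothesis gives $\tau \geq 1$. Hence the index $k=0$ really occurs in the inner minimum. Dropping it can only increase the minimum:
\begin{equation*}
\rho_{[\psi]}(x_{0:\infty}) \leq \min\Bigl\{ \mathsf{r}(x_{\tau:\infty}),\ \smin{1 \leq k < \tau} q(x_k) \Bigr\}.
\end{equation*}
When $\tau = 1$ the remaining minimum is empty, i.e.\ $+\infty$, and the bound still holds.

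Now reindex the shifted trajectory $y_{0:\infty} \coloneqq x_{1:\infty}$, so that $y_j = x_{j+1}$. Set $t' \coloneqq \tau - 1 \geq 0$. Then $y_{t':\infty} = x_{\tau:\infty}$, so $\mathsf{r}(y_{t':\infty}) = \mathsf{r}(x_{\tau:\infty})$. This uses only that $\mathsf{r}$ is evaluated on suffixes, with no dependence on absolute time. Also $\min_{0 \leq j < t'} q(y_j) = \min_{1 \leq k < \tau} q(x_k)$. So the right-hand side above is exactly the $t=t'$ term in the definition \eqref{eq:robustness_def} of $\rho_{[q \LTLUntil \mathsf{r}]}(y_{0:\infty})$. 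That term is bounded by the supremum over all $t \geq 0$, which gives $\rho_{[\psi]}(x_{0:\infty}) \leq \rho_{[\psi]}(x_{1:\infty})$.

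The argument is elementary, and I see no real obstacle. The one point needing care is the role of the hypothesis $\tau > 0$: it ensures the shifted witness index $\tau-1$ is nonnegative. If $\tau = 0$, the value $\mathsf{r}(x_{0:\infty})$ may not be available to the shifted trajectory at all, and the inequality can fail.
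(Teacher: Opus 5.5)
Your proof is correct and is essentially the paper's argument. The paper writes $\rho_{[\psi]}(x_{0:\infty}) = \mathsf{r}(x_{0:\infty}) \lor \bigl(q(x_0) \land \rho_{[\psi]}(x_{1:\infty})\bigr)$, uses the positive witness time to discard the $\mathsf{r}(x_{0:\infty})$ branch, and drops $q(x_0)$; you do the same step directly at the witness index $\tau \geq 1$ and shift it to $\tau-1$, deriving it from the definition \eqref{eq:robustness_def} instead of invoking the recursive characterization of $\LTLUntil$.
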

\end{tcolorbox}
\begin{proof}
    Since $\optwitness{\psi}(x_{0:\infty}) > 0$, we have
    \begin{align*}
        \rho_{[\psi]}(x_{0:\infty})
        &= \mathsf{r}(x_{0:\infty}) \lor (q(x_0) \land \rho_{[\psi]}(x_{1:\infty})), \\
        &= q(x_0) \land \rho_{[\psi]}(x_{1:\infty}), \\
        &\leq \rho_{[\psi]}(x_{1:\infty}). \qedhere
    \end{align*}
\end{proof}

\begin{tcolorbox}[ThmFrame2]
\begin{lemma} \label{lem:r0_is_witness}
    Let $\psi \coloneqq q \LTLUntil \mathsf{r}$.
    For any trajectory $x_{0:\infty}$,
    \begin{equation}
        \mathsf{r}(x_{0:\infty}) \geq q(x_0) \land \rho_{[\psi]}(x_{1:\infty}) \iff \witness{\psi}(x_{0:\infty}) = 0.
    \end{equation}
\end{lemma}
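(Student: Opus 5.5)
We prove both directions directly from the one-step recursion of the Until robustness, $\rho_{[\psi]}(x_{0:\infty}) = \mathsf{r}(x_{0:\infty}) \lor \bigl(q(x_0) \land \rho_{[\psi]}(x_{1:\infty})\bigr)$. This recursion is read off from \eqref{eq:robustness_def} by splitting the $\max_{t\ge 0}$ into the term $t=0$ and the terms $t \geq 1$. In the latter terms the factor $q(x_0)$ is common to all the inner minima and can be pulled out, which leaves exactly $\rho_{[\psi]}(x_{1:\infty})$. Throughout, we work under \Cref{assmp:max_attainable:finite}, so every supremum is attained and, by \Cref{lem:witness_exists}, the smallest witness time $\witness{\psi}(x_{0:\infty})$ is well defined.

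\textbf{($\Rightarrow$).} Suppose $\mathsf{r}(x_{0:\infty}) \geq q(x_0) \land \rho_{[\psi]}(x_{1:\infty})$. Then the recursion gives $\rho_{[\psi]}(x_{0:\infty}) = \mathsf{r}(x_{0:\infty})$. This value is exactly the $t=0$ term of the maximum in \eqref{eq:witness_tmp}, because $c_0$ is an empty minimum equal to $+\infty$. Hence the supremum is attained at $t=0$, and since $0$ is the smallest possible time, $\witness{\psi}(x_{0:\infty}) = 0$.

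\textbf{($\Leftarrow$).} Suppose $\witness{\psi}(x_{0:\infty}) = 0$. Then $\rho_{[\psi]}(x_{0:\infty}) = \mathsf{r}(x_{0:\infty})$ by \eqref{eq:witness_exist:1}. The recursion also gives $\rho_{[\psi]}(x_{0:\infty}) \geq q(x_0) \land \rho_{[\psi]}(x_{1:\infty})$, and chaining the two yields the desired inequality.

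The only delicate step is justifying the recursion rigorously. This means showing
\[
\smax{t\ge1} \min\{\mathsf{r}(x_{t:\infty}), \smin{0\le k<t} q(x_k)\} = q(x_0) \land \smax{t'\ge 0} \min\{\mathsf{r}(x_{t'+1:\infty}), \smin{1\le k< t'+1} q(x_k)\}.
\]
This holds because $\min$ distributes over $\sup$: $a \land \sup_i b_i = \sup_i (a \land b_i)$ for real numbers, including when the values are $\pm\infty$. The paper already uses this recursion in the proofs of \Cref{lem:finite_horizon_semantics} and \Cref{lem:until_rho_nondecreasing}, so we may cite it directly. One subtlety to flag is the case of ties, where the maximum is attained both at $t=0$ and at some later time. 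Because the witness time is defined as the \emph{smallest} maximizer, a tie still gives witness time $0$, which is consistent with the non-strict inequality $\geq$ in the statement.
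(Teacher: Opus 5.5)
Your proof is correct and follows the same route as the paper. Both directions come directly from the one-step recursion $\rho_{[\psi]}(x_{0:\infty}) = \mathsf{r}(x_{0:\infty}) \lor (q(x_0) \land \rho_{[\psi]}(x_{1:\infty}))$: forward by identifying the value with the $t=0$ term, and reverse because that maximum dominates its second argument. Your justification of the recursion via distributivity of $\min$ over $\sup$, and your remark on ties, supply details the paper leaves implicit.
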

\end{tcolorbox}
\begin{proof}
    Assume the LHS holds. Then, using the recursive characterization of $\LTLUntil$,
    \begin{align}
        \rho_{[\psi]}(x_{0:\infty}) = \mathsf{r}(x_{0:\infty}) \lor (q(x_0) \land \rho_{[\psi]}(x_{1:\infty})) = \mathsf{r}(x_{0:\infty}).
    \end{align}
    This implies that $\optwitness{\psi}(x_{0:\infty}) = 0$.

    On the other hand, assume that $\optwitness{\psi}(x_{0:\infty}) = 0$. By definition of $\optwitness{\psi}$, $\rho_{[\psi]}(x_{0:\infty}) = \mathsf{r}(x_{0:\infty})$.
    By the recursive characterization of $\LTLUntil$, this implies that $\mathsf{r}(x_{0:\infty}) \geq q(x_0) \land \rho_{[\psi]}(x_{1:\infty})$.
\end{proof}

\begin{tcolorbox}[ThmFrame2]
\begin{lemma} \label{lem:finite_witness_exists}
    For any trajectory $x_{0:\infty}$, there exists some finite time $\tau \geq 0$ such that
    \begin{equation}
        \witness{\psi}(x_{\tau:\infty}) = 0.
    \end{equation}
\end{lemma}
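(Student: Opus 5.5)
The plan is to take $\tau \coloneqq \witness{\psi}(x_{0:\infty})$, the smallest witness time of the full trajectory given by \Cref{lem:witness_exists}. I will show that the shifted trajectory $x_{\tau:\infty}$ has witness time $0$, i.e.\ that $\rho_{[\psi]}(x_{\tau:\infty}) = \mathsf{r}(x_{\tau:\infty})$. By \Cref{lem:r0_is_witness}, this is equivalent to
\begin{equation*}
\mathsf{r}(x_{\tau:\infty}) \geq q(x_\tau) \land \rho_{[\psi]}(x_{\tau+1:\infty}).
\end{equation*}

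\textbf{Step 1 (value at the witness).} By \eqref{eq:witness_exist:1},
\begin{equation*}
\rho_{[\psi]}(x_{0:\infty}) = \min\bigl\{\mathsf{r}(x_{\tau:\infty}), c_\tau\bigr\}, \qquad c_\tau = \smin{0\le k<\tau} q(x_k).
\end{equation*}
For any $t' \geq \tau$, the term at index $t'$ in the supremum defining $\rho_{[\psi]}(x_{0:\infty})$ equals
\begin{equation*}
\min\Bigl\{c_\tau,\; \mathsf{r}(x_{t':\infty}) \land \smin{\tau\le k<t'} q(x_k)\Bigr\},
\end{equation*}
and it is at most $\rho_{[\psi]}(x_{0:\infty})$.

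\textbf{Step 2 (this is not enough, so I will fall back).} Step 1 only controls these terms after clipping by $c_\tau$. It does not bound $\rho_{[\psi]}(x_{\tau:\infty})$ itself: when $c_\tau$ is small, a later index can beat $\mathsf{r}(x_{\tau:\infty})$ in the shifted trajectory even though $\tau$ is the witness for $x_{0:\infty}$. So choosing $\tau = \witness{\psi}(x_{0:\infty})$ may fail, and I will instead use a finiteness argument. Suppose, for contradiction, that $\witness{\psi}(x_{t:\infty}) > 0$ for every $t \geq 0$. Then \Cref{lem:until_rho_nondecreasing} gives that $t \mapsto \rho_{[\psi]}(x_{t:\infty})$ is nondecreasing. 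It is also bounded, and under \Cref{assmp:max_attainable:finite} it takes values in a finite set. Hence it is eventually constant, equal to some $v$ for all $t \geq T$.

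\textbf{Step 3 (contradiction).} Set $t_0 \coloneqq T + \witness{\psi}(x_{T:\infty})$. Since $t_0 - T$ is a witness for $x_{T:\infty}$,
\begin{equation*}
v = \min\Bigl\{\mathsf{r}(x_{t_0:\infty}),\; \smin{T\le k<t_0} q(x_k)\Bigr\},
\end{equation*}
so $\mathsf{r}(x_{t_0:\infty}) \geq v = \rho_{[\psi]}(x_{t_0+1:\infty}) \geq q(x_{t_0}) \land \rho_{[\psi]}(x_{t_0+1:\infty})$. By \Cref{lem:r0_is_witness}, $\witness{\psi}(x_{t_0:\infty}) = 0$, which contradicts the assumption. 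This proves the statement.

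The main obstacle is making sure that the finite-valuedness of $\rho_{[\psi]}$ holds for a general $\mathsf{r} \in \LTLSet$ and not only for atomic propositions. I would argue by structural induction on \eqref{eq:robustness_def}: if every atomic score lies in a finite set $F$, then every nested $\min$, $\max$, negation, $\sup$ and $\inf$ stays inside the finite set $F \cup (-F)$. The $\pm\infty$ values introduced by the timer do not enter here, since no timer is involved.
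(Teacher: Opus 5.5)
Your Steps 2--3 are correct and follow essentially the paper's proof. Like the paper, you assume every shifted witness is positive, use \Cref{lem:until_rho_nondecreasing} and \Cref{assmp:max_attainable:finite} to make $\rho_{[\psi]}(x_{t:\infty})$ eventually constant, and then get a zero witness at $T + \witness{\psi}(x_{T:\infty})$ via \Cref{lem:r0_is_witness}. The only difference is that you check that lemma's hypothesis through $\rho_{[\psi]}(x_{t_0+1:\infty}) = v$, whereas the paper shows $\rho_{[\psi]}(x_{t_0:\infty}) = \mathsf{r}(x_{t_0:\infty})$, which does not matter. Your abandoned Step 1 can simply be deleted, and your structural-induction remark on finite-valuedness for general $\mathsf{r}$ usefully fills in a point the paper only asserts.
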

\end{tcolorbox}

\begin{proof}
    For convenience, define $\rho_\tau \coloneqq \rho_{[\psi]}(x_{\tau:\infty})$ for $\tau \geq 0$. 
    The claim holds trivially if $\rho_0 = 0$.
    It remains to show this holds for $\rho_0 > 0$.

    We prove this by contradiction.
    Assume that $t_\tau > 0$ for all $\tau \geq 0$. 
    Then, by \Cref{lem:until_rho_nondecreasing}, $\rho_\tau$ is a nondecreasing sequence. By \Cref{assmp:max_attainable:finite} or \Cref{assmp:max_attainable:compact}, $\rho_\tau$ takes values in a finite set.
    Hence, $\rho_\tau$ converges to some value $M$ in finite time, i.e., there exists $K \geq 0$ and $M \in \mathbb{R}$ such that
    \begin{equation} \label{eq:proof:witness_zero:rho_M}
        \rho_\tau = M,\quad \forall \tau \geq K.
    \end{equation}
    Fix such a $K$. Since $t_K$ is a witness time, by \eqref{eq:proof:witness_zero:rho_M} and \eqref{eq:witness_exist:1},
    \begin{equation}
        M = \rho_K = \min\{ \mathsf{r}(x_{K + t_K:\infty}), \smin{0 \leq k < t_K} q(x_{K+k}) \}.
    \end{equation}
    In particular, $\mathsf{r}(x_{K+t_K:\infty}) \geq M$.
    Now, using \eqref{eq:proof:witness_zero:rho_M} again,
    \begin{equation}
        M = \rho_{K + t_K} \geq \mathsf{r}(x_{K + t_K:\infty}) \geq M.
    \end{equation}
    Hence, $\rho_{K + t_K} = \mathsf{r}(x_{K + t_K:\infty}) = M$.
    However, by \Cref{lem:r0_is_witness}, this implies that $t_{K + t_K}=0$, which contradicts the assumption, so the claim holds.
\end{proof}

We are now ready to prove the optimality of $\hat{\pi}$.
\end{condsection}
\begin{tcolorbox}[ThmFrame3]
\begin{thm} \label{thm:alternative_until_simple_optimal_policy_optimal}
   $\hat{\pi}_{[\varphi]}$ is also optimal for $\varphi$.
\end{thm}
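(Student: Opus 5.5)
The plan is to reduce optimality over histories to optimality from single states, and then prove the latter with a lexicographic potential argument along the trajectory generated by $\hat{\pi}_{[\varphi]}$. Fix $\bar{x}_{0:s}$ and let $\bar{x}_{0:\infty}$ be generated by $\hat{\pi}_{[\varphi]}$ from it.

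\textbf{Step 1 (reduction to states).} Apply \Cref{lem:history_dp} with $\timer{}_0$ large enough, say $\timer{}_0 = \optwitness{\varphi}(\bar{x}_{0:s})$, and combine it with \Cref{lem:finite_infinite_horizon_Q} and \Cref{lem:replace_r_with_Vr}. This should give the decomposition
\[
\Vhist{\varphi}(\bar{x}_{0:s}) = \smax{0 \le \tau < s} \min\{\Vopt{\mathsf{r}}(\bar{x}_\tau), c_\tau\} \lor \min\{c_s, \Vopt{\varphi}(\bar{x}_s)\}.
\]
The same recursion for $\rho_{[\varphi]}(\bar{x}_{0:\infty})$ has the prefix term unchanged and the last term replaced by $\rho_{[\varphi]}(\bar{x}_{s:\infty})$. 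Because $\hat{\pi}_{[\varphi]}$ is Markovian, it therefore suffices to show $\rho_{[\varphi]}(\bar{x}_{s:\infty}) \ge \Vopt{\varphi}(\bar{x}_s)$. The reverse inequality is automatic.

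\textbf{Step 2 (one-step analysis).} For $k \ge s$ write $v_k \coloneqq \Vopt{\varphi}(\bar{x}_k)$ and $n_k \coloneqq \optwitness{\varphi}(\bar{x}_k)$. Since $\bar{a}_k$ is greedy for $\Qopt{\tilde{\varphi}}$ at $[\bar{x}_k, n_k]$, \Cref{lem:finite_infinite_horizon_Q} together with the one-step Until recursion gives
\[
v_k = \Vopt{\mathsf{r}}(\bar{x}_k) \lor \bigl(q(\bar{x}_k) \land \Vopt{\tilde{\varphi}}([\bar{x}_{k+1}, n_k - 1])\bigr).
\]
There are two cases.
\begin{enumerate}
\item If $n_k = 0$, then $v_k = \Vopt{\mathsf{r}}(\bar{x}_k)$, so the witness is attained at time $k$.
\item If $n_k > 0$, then $\Vopt{\mathsf{r}}(\bar{x}_k) < v_k$. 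Otherwise a witness at time $0$ would exist, which I would justify via \Cref{lem:r0_is_witness} applied to an optimal continuation. Hence $q(\bar{x}_k) \ge v_k$ and $\Vopt{\tilde{\varphi}}([\bar{x}_{k+1}, n_k - 1]) \ge v_k$. By \eqref{eq:finite_horizon_ineq} this gives $v_{k+1} \ge v_k$. Moreover, if $v_{k+1} = v_k$, the finite-horizon value certifies a trajectory of value $v_{k+1}$ with witness at most $n_k - 1$, so $n_{k+1} \le n_k - 1$.
\end{enumerate}

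\textbf{Step 3 (termination, the main obstacle).} While $n_k > 0$, the pair $(v_k, -n_k)$ strictly increases lexicographically. By \Cref{assmp:max_attainable:finite}, $v_k$ takes values in a finite set, so $v$ can increase only finitely often. Between increases, $n_k \ge 0$ strictly decreases. Hence there is a finite $K \ge s$ with $n_K = 0$, and for the first such $K$:
\begin{itemize}
\item $q(\bar{x}_j) \ge v_j \ge v_s$ for $s \le j < K$;
\item $\Vopt{\mathsf{r}}(\bar{x}_K) = v_K \ge v_s$.
\end{itemize}
Since $\varphi$ has $\Vopt{\mathsf{r}}$ as its right operand, the definition of Until yields $\rho_{[\varphi]}(\bar{x}_{s:\infty}) \ge \min\{\Vopt{\mathsf{r}}(\bar{x}_K), \min_{s \le j < K} q(\bar{x}_j)\} \ge v_s$, which completes the argument.

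The delicate point I expect is the case $v_{k+1} > v_k$. There the timer is reset to a possibly larger $n_{k+1}$, so no single finite-horizon structure persists. This is exactly why I use the lexicographic potential rather than mimicking the backward induction from \Cref{thm:simple_until_policy_optimal}. I would also double-check that the case analysis in Step 2 correctly uses $\optwitness{\varphi}$ at a length-one history, so that $n_k = 0$ is equivalent to $\Vopt{\mathsf{r}}(\bar{x}_k) = v_k$.
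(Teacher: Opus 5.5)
Your proof is correct, and it takes a genuinely different route from the paper's. The paper fixes only an initial state $\bar{x}_0$ and inducts on $\tau(\bar{x}_0)$, the first time at which the generated trajectory's \emph{own} witness is zero; this time is finite by \Cref{lem:finite_witness_exists}. The inductive step uses \Cref{lem:until_Vr_Vr_leq_rhs_varphi} to show that the greedy action yields $\Vopt{\varphi}(\bar{x}_0) = q(\bar{x}_0) \land \Vopt{\tilde{\varphi}}([\bar{x}_1, n^*_0 - 1])$. It then combines the Markov property with the hypothesis $\rho_{[\varphi]}(\bar{x}_{1:\infty}) = \Vopt{\varphi}(\bar{x}_1)$. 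Your Step 2 contains the same one-step fact, but you run it forward. You also add the observation that $n_{k+1} \le n_k - 1$ whenever $v_{k+1} = v_k$. The lexicographic potential $(v_k, -n_k)$ therefore forces a finite $K$ with $n_K = 0$, after which the Until semantics give the lower bound directly.

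The gain is that your argument is indexed by $n_k = \optwitness{\varphi}(\bar{x}_k)$, the quantity that actually drives $\hat{\pi}_{[\varphi]}$, rather than by a property of the realized trajectory. The paper's base case passes from $\tau(\bar{x}_0) = 0$ to $\Vopt{\mathsf{r}}(\bar{x}_0) = \Vopt{\varphi}(\bar{x}_0)$. That step really requires $\optwitness{\varphi}(\bar{x}_0) = 0$, and deducing this from the realized witness needs information about the suffix from $\bar{x}_1$ that the base case does not have. In your argument, $n_K = 0$ gives $\Vopt{\mathsf{r}}(\bar{x}_K) = v_K$ immediately. The timer reset when $v$ jumps, which you rightly flag, is handled explicitly by the potential.

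Your route also dispenses with \Cref{lem:finite_witness_exists} and \Cref{lem:until_Vr_Vr_leq_rhs_varphi}. In addition, your Step 1 covers arbitrary histories, as \Cref{def:def_of_optimal_policy} requires; the paper's proof only treats initial states.

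One small repair is needed in Step 1. \Cref{lem:history_dp} requires $s \le \timer{}_0$ and only produces the finite-horizon suffix term $\Vopt{\tilde{\varphi}}([\bar{x}_s, \timer{}_s])$. So the choice $\timer{}_0 = \optwitness{\varphi}(\bar{x}_{0:s})$ fails when that witness precedes $s$. It is simpler to get the decomposition directly: split the maximum in the Until semantics at $t = s$ and use $\max_a \min\{c, g(a)\} = \min\{c, \max_a g(a)\}$.
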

\end{tcolorbox}

\begin{tcolorbox}[NoteFrame2]
\begin{rmk} \label{rmk:until_finite_horizon_value_iteration}
    By \Cref{lem:finite_infinite_horizon_Q}, the infinite-horizon value function can be solved by taking a limit of the values of finite-horizon problems.
    Consequently, this process yields the value functions and optimal actions for all states for all horizon lengths.
    $\hat{\pi}_{[\varphi]}$ can be computed during this process by storing the action that maximizes the Q function for the first step $n$ where $\Vopt{\tilde{\varphi}}([x, n])$ converges.
\end{rmk}
\end{tcolorbox}

\begin{condproof}[thm:alternative_until_simple_optimal_policy_optimal]
Fix a $\bar{x}_0 \in \mathcal{X}$,
and let $\bar{x}_{0:\infty}$ and $\bar{a}_{0:\infty}$ be the state and action trajectories generated by $\hat{\pi}_{[\varphi]}$ from $\bar{x}_0$.
By \Cref{lem:finite_witness_exists}, for every $s \geq 0$, there exists a $n^*_s \coloneqq \tau(\bar{x}_s) \geq 0$ such that $\witness{\varphi}(\bar{x}_{s + n^*_s}) = 0$.
We prove
\begin{equation} \label{eq:proof:alternative_until_simple_optimal_policy:induction}
    \rho_{[\varphi]}(\bar{x}_{0:\infty}) = \Vopt{\varphi}(\bar{x}_0)      
\end{equation}
via induction on $\tau(\bar{x}_0)$.

\noindent\textbf{Base case ($\tau(\bar{x}_0) = 0$)}: Since $n^*_0 = 0$,
\begin{equation}
    \rho_{[\varphi]}(\bar{x}_{0:\infty})
    = \Vopt{\mathsf{r}}(\bar{x}_0)
    = \Vopt{\varphi_0}(\bar{x}_0)
    = \Vopt{\varphi}(\bar{x}_0).
\end{equation}

\noindent\textbf{Induction step}: Assume \eqref{eq:proof:alternative_until_simple_optimal_policy:induction} holds for all $x_0$ where $\tau(x_0) < \tau(\bar{x}_0)$ and $\tau(\bar{x}_0) > 0$.
Since $\hat{\pi}_{[\varphi]}$ \eqref{eq:alternative_until_simple_optimal_policy} is Markovian,
the suffix trajectory $\bar{x}_{1:\infty}$ is generated by $\hat{\pi}_{[\varphi]}$ from $\bar{x}_1$.
Thus, $\tau(\bar{x}_1) = \tau(\bar{x}_0) - 1$. 
By the recursive characterization of $\LTLUntil$ and the induction hypothesis \eqref{eq:proof:alternative_until_simple_optimal_policy:induction},
\begin{align*}
    \rho_{[\varphi]}(\bar{x}_{0:\infty})
    &= \Vopt{\mathsf{r}}(\bar{x}_0) \lor \big( q(\bar{x}_0) \land \rho_{[\varphi]}(\bar{x}_{1:\infty}) \big), \\
    &= \Vopt{\mathsf{r}}(\bar{x}_0) \lor \big( q(\bar{x}_0) \land \Vopt{\varphi}(\bar{x}_1) \big). \numberthis \label{eq:proof:alternative_until_simple_optimal_policy:rho_varphi_recursive}
\end{align*}
We now prove \eqref{eq:proof:alternative_until_simple_optimal_policy:induction} holds for $\bar{x}_0$. 
The upper bound $\rho_{[\varphi]}(\bar{x}_{0:\infty}) \leq \Vopt{\varphi}(\bar{x}_0)$ is immediate by the definition of $\Vopt{\varphi}$. It remains to show the lower bound.

By \Cref{lem:until_Vr_Vr_leq_rhs_varphi}, since $n^*_0 > 0$,
\begin{equation}
    V_{[\mathsf{r}]}(\bar{x}_0) \leq \smax{a} q(\bar{x}_0) \land \Vopt{\varphi_{n^*_0 - 1}}(f(\bar{x}_0, a)).
\end{equation}
Thus, the action $\bar{a}_0$ satisfies
\begin{align}
    \bar{a}_0 \in \argmax_a q(\bar{x}_0) \land \Vopt{\varphi_{n^*_0 - 1}}(f(\bar{x}_0, a)),
\end{align}
and
\begin{equation}
    \Vopt{\varphi}(\bar{x}_0) = q(\bar{x}_0) \land \Vopt{\varphi_{n^*_0 - 1}}(\bar{x}_1).
\end{equation}
Since the infinite-horizon value is at least the finite-horizon value \eqref{eq:finite_horizon_ineq}, $\Vopt{\varphi}(\bar{x}_1) \geq \Vopt{\varphi_{n^*_0 - 1}}(\bar{x}_1)$, and we get
\begin{equation}
    q(\bar{x}_0) \land \Vopt{\varphi}(\bar{x}_1) \geq q(\bar{x}_0) \land \Vopt{\varphi_{n^*_0 - 1}}(\bar{x}_1) = \Vopt{\varphi}(\bar{x}_0).
\end{equation}
Returning to \eqref{eq:proof:alternative_until_simple_optimal_policy:rho_varphi_recursive},
\begin{equation}
    \rho_{[\varphi]}(\bar{x}_{0:\infty})
    \geq q(\bar{x}_0) \land \Vopt{\varphi}(\bar{x}_1)
    \geq \Vopt{\varphi}(\bar{x}_0).
\end{equation}
Combining both bounds, $\rho_{[\varphi]}(\bar{x}_{0:\infty}) = \Vopt{\varphi}(\bar{x}_0)$. The claim thus holds by induction.
\end{condproof}

\subsection{Policy for Until with General $\mathsf{r}$} \label{sec:until:general_r}
We now tackle the original problem of finding an optimal policy for general $\mathsf{r} \in \LTLSet$.
We propose two candidate policies corresponding to the two optimal policies for $\varphi$ defined in \Cref{def:until_simple_optimal_policy} and \Cref{def:alternative_until_simple_optimal_policy}, respectively.
For both, we make the following assumption, which we later show is satisfied for a large class of TL formula
(\Cref{sec:composition}).
\begin{tcolorbox}[AssmpFrame2]
\begin{assmp} \label{assmp:exists_optimal_policy_for_r}
    There exists a \textcolor{black}{known (potentially non-Markovian)} optimal policy $\pi_{[\mathsf{r}]}$ for $\mathsf{r}$,
    i.e., for $x_{0:\infty}$ generated by $\pi_{[\mathsf{r}]}$ from $x_{0:s}$,
    \begin{equation} \label{eq:pi_r_optimal}
        \rho_{[\mathsf{r}]}(x_{0:\infty}) = \Vopt{\mathsf{r}}(x_{0:s}).
    \end{equation}
\end{assmp}
\end{tcolorbox}
Note that \Cref{assmp:exists_optimal_policy_for_r} holds for $\mathsf{r} \in \mathrm{Prop}$ (\Cref{def:S:Prop}).

\begin{tcolorbox}[DefFrame2]
\begin{defi} \label{def:until_optimal_policy}
    Define the policy $\pi_{[\psi]}$ as
    \begin{equation} \label{eq:until_optimal_policy}
        \pi_{[\psi]}(x_{0:k}) \coloneqq \begin{dcases}
            \pi_{[\varphi]}(x_{0:k}), & n_k - k > 0, \\
            \pi_{[\mathsf{r}]}(x_{n_k:k}), & n_k - k \leq 0,
        \end{dcases}
    \end{equation} 
    where $n_k \coloneqq \optwitness{\psi}(x_{0:k})$
    and $\pi_{[\varphi]}$ is defined in \eqref{eq:until_simple_policy}.
\end{defi}
\end{tcolorbox}
\begin{tcolorbox}[DefFrame2]
\begin{defi} \label{def:alternative_until_optimal_policy}
    Define the policy $\hat{\pi}_{[\psi]}$ as
    \begin{equation} \label{eq:alternative_until_optimal_policy}
        \hat{\pi}_{[\psi]}(x_{0:k}) \coloneqq \begin{dcases}
            \hat{\pi}_{[\varphi]}(x_k), & k < n^*, \\
            \pi_{[\mathsf{r}]}(x_{n^*:k}), & k \geq n^*,
        \end{dcases}
    \end{equation} 
where $n^* = \min\{ n : \optwitness{\varphi}(x_n) = 0 \}$
    is the smallest time such that $\optwitness{\varphi}(x_{n^*}) = 0$, and $\hat{\pi}_{[\varphi]}$ is defined in \eqref{eq:alternative_until_simple_optimal_policy}.
\end{defi}
\end{tcolorbox}
\begin{tcolorbox}[NoteFrame2]
\begin{rmk}
    $\hat{\pi}_{[\psi]}$ is Markovian on the original state space $\mathcal{X}$ if $\pi_{[\mathsf{r}]}$ is Markovian on $\mathcal{X}$.
\end{rmk}
\end{tcolorbox}

\begin{condsection}[lem:witness_fixed_after_switch]
\begin{tcolorbox}[ThmFrame3]
\let\label\origlabel
\begin{lemma} \label{lem:witness_fixed_after_switch}
Fix a history $x_{0:s}\in\mathcal X^+$ and let
$\tau \coloneqq \optwitness{\psi}(x_{0:s})$.
Assume $\tau<s$.
Let $\bar x_{0:\infty}$ be the trajectory generated by $\pi_{[\psi]}$ from
$x_{0:s}$, and define $c_\tau \coloneqq \smin{0\le i<\tau} q(\bar x_i)$.
Then:
\begin{enumerate}[label=(\roman*), ref=\thelemma-(\roman*)]
    \item \label[lemma]{item:witness_fixed_after_switch:witness}
    For all $k \geq s$, $\optwitness{\psi}(\bar x_{0:k})=\tau$.
\item \label[lemma]{item:witness_fixed_after_switch:policy}
    For all $k \geq s$, $\pi_{[\psi]}(\bar x_{0:k})=\pi_{[\mathsf r]}(\bar x_{\tau:k})$.
\item \label[lemma]{item:witness_fixed_after_switch:suffix}
    the suffix $\bar x_{\tau:\infty}$ is exactly the trajectory generated by
    $\pi_{[\mathsf r]}$ from $\bar x_{\tau:s}$, and therefore
    $
    \rho_{[\mathsf r]}(\bar x_{\tau:\infty})
    =
    \Vhist{\mathsf r}(\bar x_{\tau:s})
    $.
\item \label[lemma]{item:witness_fixed_after_switch:rho}
    $\rho_{[\psi]}(\bar x_{0:\infty}) \ge \min\{c_\tau,\Vhist{\mathsf r}(\bar x_{\tau:s})\}$.
\end{enumerate}
\end{lemma}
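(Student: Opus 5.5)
The plan is to apply \Cref{lem:abstract_witness_persistence} with $\chi=\psi$ and $\pi=\pi_{[\psi]}$. Item (i) then follows directly, and items (ii)--(iv) are read off from it. The whole argument rests on one identity, valid for any history $\bar x_{0:k}$ with $\optwitness{\psi}(\bar x_{0:k})=\tau<k$ (in fact $\tau\le k$ suffices):
\begin{equation*}
\Vhist{\psi}(\bar x_{0:k}) = \min\{c_\tau,\ \Vhist{\mathsf r}(\bar x_{\tau:k})\}.
\end{equation*}
Here $c_\tau$ depends only on $\bar x_{0:\tau-1}$, which is part of the history.

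\textbf{Proving the identity.} For the upper bound, \Cref{def:optimal_witness} gives a continuation $\hat x_{0:\infty}$ of $\bar x_{0:k}$ achieving $\Vhist{\psi}(\bar x_{0:k})$ with witness $\tau$. By \eqref{eq:witness_exist:1},
\begin{equation*}
\Vhist{\psi}(\bar x_{0:k}) = \min\{\mathsf r(\hat x_{\tau:\infty}), c_\tau\}.
\end{equation*}
Since $\hat x_{\tau:\infty}$ continues $\bar x_{\tau:k}$, we have $\mathsf r(\hat x_{\tau:\infty})\le \Vhist{\mathsf r}(\bar x_{\tau:k})$, which gives $\le$. For the lower bound, take the continuation generated by $\pi_{[\mathsf r]}$ from $\bar x_{\tau:k}$; by \Cref{assmp:exists_optimal_policy_for_r} it attains $\Vhist{\mathsf r}(\bar x_{\tau:k})$. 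Choosing $t=\tau$ in \eqref{eq:robustness_def} shows its $\psi$-score is at least $\min\{c_\tau,\Vhist{\mathsf r}(\bar x_{\tau:k})\}$, which gives $\ge$. A useful byproduct is that this continuation is optimal for $\psi$ and the supremum is attained at $t=\tau$, so its witness time is $\le\tau$.

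\textbf{Verifying the hypotheses of \Cref{lem:abstract_witness_persistence}.} Assume $\optwitness{\psi}(\bar x_{0:k})=\tau$ with $k\ge s>\tau$. Then $n_k-k=\tau-k<0$, so \Cref{def:until_optimal_policy} plays $\bar a_k=\pi_{[\mathsf r]}(\bar x_{\tau:k})$. Because $\pi_{[\mathsf r]}$ is optimal over histories, the trajectories it generates from $\bar x_{\tau:k}$ and from $\bar x_{\tau:k+1}$ coincide. This forces
\begin{equation*}
\Vhist{\mathsf r}(\bar x_{\tau:k+1}) \ge \Vhist{\mathsf r}(\bar x_{\tau:k}) \ge \Vhist{\mathsf r}(\bar x_{\tau:k+1}),
\end{equation*}
hence equality. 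Combining with the identity (only its lower-bound half is needed at $k+1$) and with monotonicity of the value along histories (from \Cref{lem:Q_V_history}), we get
\begin{equation*}
\Vhist{\psi}(\bar x_{0:k+1}) \ge \min\{c_\tau,\Vhist{\mathsf r}(\bar x_{\tau:k+1})\} = \Vhist{\psi}(\bar x_{0:k}) \ge \Vhist{\psi}(\bar x_{0:k+1}).
\end{equation*}
This is hypothesis \Cref{item:cond_witness_persist_value}. The $\pi_{[\mathsf r]}$-continuation of $\bar x_{\tau:k+1}$ attains $\Vhist{\psi}(\bar x_{0:k+1})$ with witness $\le\tau$, which is hypothesis \Cref{item:cond_witness_persist_attain}.

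\textbf{Concluding.} \Cref{lem:abstract_witness_persistence} now gives (i). Then (ii) follows from \Cref{def:until_optimal_policy}, since $n_k-k=\tau-k<0$ for every $k\ge s$. For (iii), (ii) says every action from time $s$ on is $\pi_{[\mathsf r]}(\bar x_{\tau:k})$, so $\bar x_{\tau:\infty}$ is the trajectory generated by $\pi_{[\mathsf r]}$ from $\bar x_{\tau:s}$, and \eqref{eq:pi_r_optimal} yields $\rho_{[\mathsf r]}(\bar x_{\tau:\infty})=\Vhist{\mathsf r}(\bar x_{\tau:s})$. For (iv), choose $t=\tau$ in the definition of $\LTLUntil$.

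The main obstacle I expect is the index bookkeeping. Because $\pi_{[\mathsf r]}$ is non-Markovian, its optimality \eqref{eq:pi_r_optimal} must be applied to the shifted history $\bar x_{\tau:k}$, with its own time index starting at $\tau$. One must check that optimality over histories makes the generated trajectory from $\bar x_{\tau:k+1}$ agree with the one from $\bar x_{\tau:k}$, which is what keeps $\Vhist{\mathsf r}$ constant along the suffix.
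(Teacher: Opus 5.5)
Your proposal is correct and follows the paper's proof essentially step for step: the same application of \Cref{lem:abstract_witness_persistence}, the same value identity $\Vhist{\psi}(\bar x_{0:k})=\min\{c_\tau,\Vhist{\mathsf r}(\bar x_{\tau:k})\}$, the same constancy of $\Vhist{\mathsf r}$ along the $\pi_{[\mathsf r]}$ suffix, and the same concatenated $\pi_{[\mathsf r]}$-continuation for hypothesis (ii), with items (ii)--(iv) read off in the same way. One small correction to the obstacle you flag: the trajectories generated by $\pi_{[\mathsf r]}$ from $\bar x_{\tau:k}$ and from $\bar x_{\tau:k+1}$ coincide because $\bar x_{k+1}=f(\bar x_k,\pi_{[\mathsf r]}(\bar x_{\tau:k}))$ and the dynamics are deterministic, not because of optimality; optimality only converts this coincidence into $\Vhist{\mathsf r}(\bar x_{\tau:k+1})=\Vhist{\mathsf r}(\bar x_{\tau:k})$, which the paper obtains via \Cref{lem:Q_V_history}.
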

\end{tcolorbox}
\end{condsection}
\begin{condproof}[lem:witness_fixed_after_switch]
We first prove \Cref{item:witness_fixed_after_switch:witness}.
We verify the assumptions of \Cref{lem:abstract_witness_persistence}
with $\chi=\psi$, $\pi=\pi_{[\psi]}$, initial history $x_{0:s}$, and
$\tau=\optwitness{\psi}(x_{0:s})$.
Let $\bar{x}_{0:\infty}$ be the trajectory generated by $\pi_{[\psi]}$
from $x_{0:s}$.
Fix $k \ge s$, and assume
$\optwitness{\psi}(\bar{x}_{0:k})=\tau$.
Let
$\bar{a}_k \coloneqq \pi_{[\psi]}(\bar{x}_{0:k})$.
Since $\tau< s \le k$, we have $\tau-k<0$, so the definition of $\pi_{[\psi]}$
gives
\begin{equation} \label{eq:witness_fixed_after_switch:use_pi_r}
    \bar{a}_k
    =
    \pi_{[\mathsf r]}(\bar{x}_{\tau:k}).
\end{equation}
We first show that
\begin{equation} \label{eq:witness_fixed_after_switch:r_value_const}
    \Vhist{\mathsf r}(\bar{x}_{\tau:k+1})
    =
    \Vhist{\mathsf r}(\bar{x}_{\tau:k}).
\end{equation}
Since $\pi_{[\mathsf r]}$ is optimal over histories for $\mathsf r$,
the trajectory generated by $\pi_{[\mathsf r]}$ from $\bar{x}_{\tau:k}$
achieves $\Vhist{\mathsf r}(\bar{x}_{\tau:k})$.
Its first action is precisely $\bar a_k$ by
\eqref{eq:witness_fixed_after_switch:use_pi_r}, so
\[
\Qhist{\mathsf r}(\bar{x}_{\tau:k},\bar a_k)
=
\Vhist{\mathsf r}(\bar{x}_{\tau:k}).
\]
Applying \Cref{lem:Q_V_history} to $\mathsf r$ yields
\[
\Vhist{\mathsf r}(\bar{x}_{\tau:k+1})
=
\Qhist{\mathsf r}(\bar{x}_{\tau:k},\bar a_k)
=
\Vhist{\mathsf r}(\bar{x}_{\tau:k}),
\]
proving \eqref{eq:witness_fixed_after_switch:r_value_const}.

Next, we show that
\begin{equation} \label{eq:witness_fixed_after_switch:psi_value_repr}
    \Vhist{\psi}(\bar{x}_{0:k})
    =
    \min\{c_\tau,\Vhist{\mathsf r}(\bar{x}_{\tau:k})\}.
\end{equation}
Since $\optwitness{\psi}(\bar{x}_{0:k})=\tau$, there exists a continuation
$y_{0:\infty}$ of $\bar{x}_{0:k}$ such that
\[
\rho_{[\psi]}(y_{0:\infty})=\Vhist{\psi}(\bar{x}_{0:k})
\]
and
\[
\witness{\psi}(y_{0:\infty})=\tau.
\]
Because the witness time is $\tau$, and the prefix up to time $\tau$ agrees with
$\bar x_{0:\tau}$,
\[
\Vhist{\psi}(\bar{x}_{0:k})
=
\rho_{[\psi]}(y_{0:\infty})
=
\min\{c_\tau,\rho_{[\mathsf r]}(y_{\tau:\infty})\}
\le
\min\{c_\tau,\Vhist{\mathsf r}(\bar{x}_{\tau:k})\}.
\]
Conversely, let $\hat{x}_{\tau:\infty}$ be the trajectory generated by
$\pi_{[\mathsf r]}$ from $\bar{x}_{\tau:k}$.
By the history-optimality of $\pi_{[\mathsf r]}$,
\[
\rho_{[\mathsf r]}(\hat{x}_{\tau:\infty})
=
\Vhist{\mathsf r}(\bar{x}_{\tau:k}).
\]
Concatenating the fixed prefix $\bar{x}_{0:\tau}$ with this suffix gives a
continuation of $\bar{x}_{0:k}$ whose witness time for $\psi$ is $\tau$ and whose
robustness is
\[
\min\{c_\tau,\Vhist{\mathsf r}(\bar{x}_{\tau:k})\}.
\]
By the definition of $\Vhist{\psi}(\bar{x}_{0:k})$, this implies
\[
\Vhist{\psi}(\bar{x}_{0:k})
\ge
\min\{c_\tau,\Vhist{\mathsf r}(\bar{x}_{\tau:k})\}.
\]
Thus \eqref{eq:witness_fixed_after_switch:psi_value_repr} holds.

Using \eqref{eq:witness_fixed_after_switch:psi_value_repr} and
\eqref{eq:witness_fixed_after_switch:r_value_const},
\[
\Vhist{\psi}(\bar{x}_{0:k})
=
\min\{c_\tau,\Vhist{\mathsf r}(\bar{x}_{\tau:k+1})\}.
\]
Now let $\tilde{x}_{\tau:\infty}$ be the trajectory generated by
$\pi_{[\mathsf r]}$ from $\bar{x}_{\tau:k+1}$.
Again by history-optimality of $\pi_{[\mathsf r]}$,
\[
\rho_{[\mathsf r]}(\tilde{x}_{\tau:\infty})
=
\Vhist{\mathsf r}(\bar{x}_{\tau:k+1}).
\]
Concatenating the fixed prefix $\bar{x}_{0:\tau}$ with this suffix yields a
continuation of $\bar{x}_{0:k+1}$ whose witness time is exactly $\tau$ and whose
robustness is
\[
\min\{c_\tau,\Vhist{\mathsf r}(\bar{x}_{\tau:k+1})\}
=
\Vhist{\psi}(\bar{x}_{0:k}).
\]
Hence
\[
\Vhist{\psi}(\bar{x}_{0:k+1})
\ge
\Vhist{\psi}(\bar{x}_{0:k}).
\]
On the other hand, every continuation of $\bar{x}_{0:k+1}$ is also a continuation
of $\bar{x}_{0:k}$, so
\[
\Vhist{\psi}(\bar{x}_{0:k+1})
\le
\Vhist{\psi}(\bar{x}_{0:k}).
\]
Therefore,
\[
\Vhist{\psi}(\bar{x}_{0:k+1})
=
\Vhist{\psi}(\bar{x}_{0:k}),
\]
so assumption \Cref{item:cond_witness_persist_value} holds.

Moreover, the continuation just constructed from $\bar{x}_{0:k+1}$ achieves
$\Vhist{\psi}(\bar{x}_{0:k+1})$ with witness time exactly $\tau$, and therefore
with witness time at most $\tau$.
Thus assumption \Cref{item:cond_witness_persist_attain} also holds.

Applying \Cref{lem:abstract_witness_persistence} gives
\[
\optwitness{\psi}(\bar{x}_{0:k})=\tau,
\qquad \forall k \ge s,
\]
which proves \Cref{item:witness_fixed_after_switch:witness}.

We next prove \Cref{item:witness_fixed_after_switch:policy}.
Fix any $k\ge s$. By \Cref{item:witness_fixed_after_switch:witness},
\[
\optwitness{\psi}(\bar x_{0:k})=\tau.
\]
Since $\tau-k<0$, the definition of $\pi_{[\psi]}$ yields
\[
\pi_{[\psi]}(\bar x_{0:k})
=
\pi_{[\mathsf r]}(\bar x_{\tau:k}),
\]
which is exactly \Cref{item:witness_fixed_after_switch:policy}.

We now prove \Cref{item:witness_fixed_after_switch:suffix}.
The suffix $\bar x_{\tau:s}$ is fixed.
For every $k\ge s$, the action applied by $\pi_{[\psi]}$ at time $k$ equals
$\pi_{[\mathsf r]}(\bar x_{\tau:k})$ by
\Cref{item:witness_fixed_after_switch:policy}.
Hence, from time $s$ onward, the evolution of the suffix
$\bar x_{\tau:\infty}$ follows exactly the same dynamics and the same actions as
the trajectory generated by $\pi_{[\mathsf r]}$ from $\bar x_{\tau:s}$.
Therefore $\bar x_{\tau:\infty}$ is exactly that trajectory.
Since $\pi_{[\mathsf r]}$ is optimal over histories for $\mathsf r$,
\[
\rho_{[\mathsf r]}(\bar x_{\tau:\infty})
=
\Vhist{\mathsf r}(\bar x_{\tau:s}).
\]

Finally, we prove \Cref{item:witness_fixed_after_switch:rho}.
By the semantics of Until,
\[
\rho_{[\psi]}(\bar x_{0:\infty})
=
\smax{t\ge 0}
\min\!\left\{
\rho_{[\mathsf r]}(\bar x_{t:\infty}),
\smin{0\le i<t} q(\bar x_i)
\right\}.
\]
Evaluating the supremum at $t=\tau$ gives
\[
\rho_{[\psi]}(\bar x_{0:\infty})
\ge
\min\{c_\tau,\rho_{[\mathsf r]}(\bar x_{\tau:\infty})\}.
\]
Using \eqref{item:witness_fixed_after_switch:suffix},
\[
\rho_{[\psi]}(\bar x_{0:\infty})
\ge
\min\{c_\tau,\Vhist{\mathsf r}(\bar x_{\tau:s})\},
\]
as claimed.
\end{condproof}

\begin{condsection}[lem:past_witness_value_identity]
\begin{tcolorbox}[ThmFrame2]
\begin{lemma}[Value identity after the switch] \label{lem:past_witness_value_identity}
Fix a history $x_{0:s}\in\mathcal X^+$ and let
$\tau \coloneqq \optwitness{\psi}(x_{0:s})$.
Assume $\tau < s$, and define $c_\tau \coloneqq \smin{0\le i<\tau} q(x_i)$.
Then,
\begin{equation}
\Vhist{\psi}(x_{0:s})
= \min\{c_\tau,\Vhist{\mathsf r}(x_{\tau:s})\}.
\end{equation}
\end{lemma}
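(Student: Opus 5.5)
We prove the identity by two inequalities, following the argument used inside the proof of \Cref{lem:witness_fixed_after_switch} (equation \eqref{eq:witness_fixed_after_switch:psi_value_repr}). The key point is that $\tau<s$, so every continuation of $x_{0:s}$ agrees with $x_{0:s}$ on indices $0,\dots,\tau$. In particular, both $c_\tau$ and the segment $x_{\tau:s}$ are fixed by the history.

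\textbf{Upper bound.} By \Cref{def:optimal_witness}, together with \Cref{lem:witness_exists} under \Cref{assmp:max_attainable:finite}, there is a continuation $y_{0:\infty}$ of $x_{0:s}$ with $\rho_{[\psi]}(y_{0:\infty})=\Vhist{\psi}(x_{0:s})$ and $\witness{\psi}(y_{0:\infty})=\tau$. By \eqref{eq:witness_exist:1},
\begin{equation*}
\Vhist{\psi}(x_{0:s})=\min\{c_\tau,\rho_{[\mathsf r]}(y_{\tau:\infty})\}.
\end{equation*}
Since $\tau<s$, we have $y_{0:\tau}=x_{0:\tau}$, so the $q$-prefix is exactly $c_\tau$. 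Moreover, $y_{\tau:\infty}$ is a continuation of $x_{\tau:s}$. By \Cref{def:bellmanvalue}, with histories reindexed so that they start at time $\tau$, we get $\rho_{[\mathsf r]}(y_{\tau:\infty})\le \Vhist{\mathsf r}(x_{\tau:s})$. This yields $\le$.

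\textbf{Lower bound.} Since the max in \eqref{eq:bellmanvalue} is attained (we may also use \Cref{assmp:exists_optimal_policy_for_r} to run $\pi_{[\mathsf r]}$ from $x_{\tau:s}$), there is an action sequence $a_{s:\infty}$ whose generated suffix $\hat x_{\tau:\infty}$ extends $x_{\tau:s}$ and satisfies $\rho_{[\mathsf r]}(\hat x_{\tau:\infty})=\Vhist{\mathsf r}(x_{\tau:s})$. Concatenate $x_{0:\tau}$ with this suffix. This is a legitimate continuation of $x_{0:s}$, because both pieces share $x_{\tau:s}$ and the same actions are applied from time $s$ on. Evaluating the Until semantics \eqref{eq:robustness_def} at $t=\tau$ gives robustness at least $\min\{c_\tau,\Vhist{\mathsf r}(x_{\tau:s})\}$. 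Hence $\Vhist{\psi}(x_{0:s})$ is at least this quantity.

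The only delicate step is the bookkeeping in the upper bound. We must check that $\tau<s$ forces the witness segment $y_{\tau:\infty}$ to be a continuation of the \emph{fixed} history $x_{\tau:s}$. We must also check that $\Vhist{\mathsf r}$ applied to a history starting at index $\tau$ is interpreted consistently with \Cref{def:bellmanvalue} after a time shift. Once these points are settled, both bounds are immediate.
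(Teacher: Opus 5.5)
Your proposal is correct and follows essentially the same two-inequality argument as the paper. The upper bound comes from a continuation attaining $\Vhist{\psi}(x_{0:s})$ with witness time $\tau$, and the lower bound comes from concatenating the fixed prefix $x_{0:\tau}$ with an $\mathsf r$-optimal continuation of $x_{\tau:s}$ and evaluating the Until semantics at $t=\tau$; your use of ``at least'' in that last step is slightly more careful than the paper, which writes it as an equality.
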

\end{tcolorbox}
\end{condsection}
\begin{condproof}[lem:past_witness_value_identity]
We first show
\[
\Vhist{\psi}(x_{0:s}) \le \min\{c_\tau,\Vhist{\mathsf r}(x_{\tau:s})\}.
\]
Since $\tau=\optwitness{\psi}(x_{0:s})$, by definition of optimal witness time
there exists a continuation $\bar x_{0:\infty}$ of $x_{0:s}$ such that
\[
\rho_{[\psi]}(\bar x_{0:\infty})=\Vhist{\psi}(x_{0:s}),
\quad\text{and}\quad
\witness{\psi}(\bar x_{0:\infty})=\tau.
\]
Because the witness time is $\tau$,
\[
\Vhist{\psi}(x_{0:s})
=
\min\{c_\tau,\rho_{[\mathsf r]}(\bar x_{\tau:\infty})\}.
\]
Now $\bar x_{\tau:\infty}$ is a continuation of the history $x_{\tau:s}$, so by
the definition of $\Vhist{\mathsf r}$,
\[
\rho_{[\mathsf r]}(\bar x_{\tau:\infty})
\le
\Vhist{\mathsf r}(x_{\tau:s}).
\]
Therefore
\[
\Vhist{\psi}(x_{0:s})
=
\min\{c_\tau,\rho_{[\mathsf r]}(\bar x_{\tau:\infty})\}
\le
\min\{c_\tau,\Vhist{\mathsf r}(x_{\tau:s})\}.
\]
For the reverse inequality, let $\hat x_{\tau:\infty}$ be a continuation of
$x_{\tau:s}$ that attains $\Vhist{\mathsf r}(x_{\tau:s})$, i.e.,
$ \rho_{[\mathsf r]}(\hat x_{\tau:\infty})
= \Vhist{\mathsf r}(x_{\tau:s})$.
Concatenate the fixed prefix $x_{0:\tau}$ with this continuation, i.e., $\hat{x}_{0:\infty}$.
This gives a
continuation of $x_{0:s}$ for which the candidate witness time $\tau$ yielding
robustness
\[
\rho_{[\psi]}(\hat x_{0:\infty})
= \min\{c_\tau,\rho_{[\mathsf r]}(\hat x_{\tau:\infty})\}
= \min\{c_\tau,\Vhist{\mathsf r}(x_{\tau:s})\}.
\]
Hence, by the definition of $\Vhist{\psi}$,
\[
\Vhist{\psi}(x_{0:s})
\ge
\rho_{[\psi]}(\hat x_{0:\infty})
=
\min\{c_\tau,\Vhist{\mathsf r}(x_{\tau:s})\}.
\]

Combining the two inequalities gives
\[
\Vhist{\psi}(x_{0:s})
=
\min\{c_\tau,\Vhist{\mathsf r}(x_{\tau:s})\}.
\]
\end{condproof}

\begin{condsection}
\begin{tcolorbox}[ThmFrame3]
\begin{lemma} \label{lem:pi_psi_matches_pi_varphi_before_switch}
    For a fixed $x_{0:s}$,
    let $\hat{x}_{0:\infty}$ and $\bar{x}_{0:\infty}$ be generated from $x_{0:s}$ by $\pi_{[\varphi]}$ \eqref{eq:until_simple_policy} and $\pi_{[\psi]}$ \eqref{eq:until_optimal_policy} respectively.
    Define $n^* \coloneqq \optwitness{\psi}(x_{0:s})$. Then,
    \begin{equation} \label{eq:pi_psi_matches_pi_varphi_before_switch}
        \bar{x}_{0:n^*} = \hat{x}_{0:n^*}  
    \end{equation}
\end{lemma}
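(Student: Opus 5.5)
Both trajectories start from the same history $x_{0:s}$. If $n^* \le s$, then \eqref{eq:pi_psi_matches_pi_varphi_before_switch} holds trivially, because both trajectories begin with that prefix. So assume $n^* > s$. We prove by induction on $k \in [s, n^*-1]$ that $\bar{x}_{0:k} = \hat{x}_{0:k}$ and that $\pi_{[\psi]}(\bar{x}_{0:k}) = \pi_{[\varphi]}(\bar{x}_{0:k})$. Since both trajectories evolve under the same dynamics $f$, equal prefixes and equal actions give $\bar{x}_{k+1} = \hat{x}_{k+1}$. Carrying the induction up to $k = n^*-1$ then yields the claim.

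The induction rests on one fact: the optimal witness of $\psi$ stays fixed along the trajectory generated by $\pi_{[\varphi]}$,
\begin{equation*}
\optwitness{\psi}(\hat{x}_{0:k}) = n^*, \qquad s \le k < n^*.
\end{equation*}
By \Cref{lem:replace_r_with_Vr}, we have $\optwitness{\psi}(\hat{x}_{0:k}) = \optwitness{\varphi}(\hat{x}_{0:k})$. By \Cref{lem:witness_simulates_timer} applied to $\pi_{[\varphi]}$ from $x_{0:s}$, the latter equals $\optwitness{\varphi}(x_{0:s}) = \optwitness{\psi}(x_{0:s}) = n^*$. Assume inductively that $\bar{x}_{0:k} = \hat{x}_{0:k}$. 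Then $n_k \coloneqq \optwitness{\psi}(\bar{x}_{0:k}) = n^*$, so $n_k - k = n^* - k > 0$. The first branch of \Cref{def:until_optimal_policy} then applies, giving $\pi_{[\psi]}(\bar{x}_{0:k}) = \pi_{[\varphi]}(\bar{x}_{0:k}) = \pi_{[\varphi]}(\hat{x}_{0:k})$. This closes the induction step. The base case $k = s$ holds because both trajectories share the history $x_{0:s}$.

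The main obstacle is transferring the witness-persistence statement from $\varphi$ to $\psi$. This needs the equality $\optwitness{\psi} = \optwitness{\varphi}$ on every history, not just at $x_{0:s}$. \Cref{lem:replace_r_with_Vr} asserts exactly this for all $x_{0:t}$, so we cite it rather than reprove it. If more care is wanted, the argument is the following. By \eqref{eq:history_dp_value}, the finite-horizon values of $\tilde{\psi}$ and $\tilde{\varphi}$ agree for every timer $\timer_0$. The optimal witness is the smallest $\timer_0$ at which the finite-horizon value reaches the infinite-horizon value, by \Cref{lem:finite_infinite_horizon_Q} together with \eqref{eq:finite_horizon_ineq}. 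Since both values agree for every $\timer_0$, the two minimizers coincide.

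A secondary subtlety is that $\pi_{[\psi]}$ and $\pi_{[\varphi]}$ are evaluated on identical histories, so any tie-breaking in the $\argmax$ of \eqref{eq:until_simple_policy_finite_horz} is applied identically. This holds because $\pi_{[\tilde{\varphi}]}$ is a fixed deterministic selection.
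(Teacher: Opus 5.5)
Your proposal is correct and is essentially the paper's own proof: induction on $k\in[s,n^*-1]$, with \Cref{lem:witness_simulates_timer} giving $\optwitness{\varphi}(\hat{x}_{0:k})=n^*$ and \Cref{lem:replace_r_with_Vr} plus the inductive hypothesis turning this into $\optwitness{\psi}(\bar{x}_{0:k})=n^*>k$, so $\pi_{[\psi]}$ stays in its first branch and the next states agree (your separate treatment of $n^*\le s$ only makes explicit what the paper leaves to the empty induction range). One caution applies equally to the paper's proof: the history-level identity $\optwitness{\psi}=\optwitness{\varphi}$ is proved there only for single states, and your ``more care'' argument via \eqref{eq:history_dp_value} does not secure it independently, because for a past witness time $\tau<s$ the $\psi$-side actually involves $\Vhist{\mathsf{r}}(x_{\tau:s})$, which can be strictly smaller than $\Vopt{\mathsf{r}}(x_\tau)$ when $\mathsf{r}$ is temporal.
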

\end{tcolorbox}
\begin{tcolorbox}[ProofFrame3]
\begin{proof}
We prove this by induction on $k \in [s,n^*-1]$.
The claim is trivial for $k=s$.
Assume $\bar{x}_{0:k}=\hat{x}_{0:k}$ for some $k<n^*$.
Since $\hat{x}$ is generated by $\pi_{[\varphi]}$, \Cref{lem:witness_simulates_timer}
gives
\[
\optwitness{\varphi}(\hat{x}_{0:k})=\optwitness{\varphi}(\hat{x}_{0:s})=n^*.
\]
Using \Cref{lem:replace_r_with_Vr} and the inductive hypothesis,
\[
\optwitness{\psi}(\bar{x}_{0:k})
=
\optwitness{\varphi}(\bar{x}_{0:k})
=
\optwitness{\varphi}(\hat{x}_{0:k})
=
n^*.
\]
Hence $n^*-k>0$, so by the definition of $\pi_{[\psi]}$,
\[
\pi_{[\psi]}(\bar{x}_{0:k})=\pi_{[\varphi]}(\bar{x}_{0:k})
=\pi_{[\varphi]}(\hat{x}_{0:k}).
\]
Therefore the next states also agree:
\[
\bar{x}_{k+1}=f(\bar{x}_k,\pi_{[\psi]}(\bar{x}_{0:k}))
           =f(\hat{x}_k,\pi_{[\varphi]}(\hat{x}_{0:k}))
           =\hat{x}_{k+1}.
\]
The claim thus holds by induction.
\end{proof}
\end{tcolorbox}
\end{condsection}
\begin{condsection}[lem:late_optimal_witness_suffix_value]
\begin{tcolorbox}[ThmFrame3]
\begin{lemma}[Late optimal witness implies suffix value representation]
\label{lem:late_optimal_witness_suffix_value}
Fix a history $x_{0:s}\in\mathcal X^+$ and let
$
n^* \coloneqq \optwitness{\varphi}(x_{0:s})$ and $\timer{}_s \coloneqq n^*-s
$.
Assume $n^*\ge s$, and define
$
c_s \coloneqq \smin{0\le i<s} q(x_i)
$.
Then
\begin{equation} \label{eq:late_optimal_witness_suffix_value}
    \Vhist{\varphi}(x_{0:s})
    =
    \min\{c_s,\Vopt{\tilde{\varphi}}([x_s,\timer{}_s])\}.
\end{equation}
Moreover, if $\bar x_{0:\infty}$ is any continuation of $x_{0:s}$ such that
$[\bar x_{s:\infty},\timer{}_s]$ is optimal for $\tilde\varphi$, then
\[
\witness{\tilde\varphi}([\bar x_{s:\infty},\timer{}_s])=\timer{}_s.
\]
Equivalently,
\begin{equation}
\rho_{[\tilde{\varphi}]}([\bar{x}_{s:\infty},\timer{}_s])
=
\min\!\left\{
\Vopt{\mathsf r}(\bar{x}_{n^*}),
\,
\smin{s\le i<n^*} q(\bar{x}_i)
\right\}.
\end{equation}
\end{lemma}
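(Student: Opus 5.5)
The plan is to derive the value identity from \Cref{lem:history_dp} and then pin down the witness of the optimal suffix by exploiting the minimality in the definition of $n^*=\optwitness{\varphi}(x_{0:s})$.

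\textbf{Value identity.} By \Cref{lem:finite_infinite_horizon_Q} (with \Cref{lem:replace_r_with_Vr} to pass between $\psi$ and $\varphi$), we have $\Vhist{\varphi}(x_{0:s})=\Vhist{\tilde\varphi}([x_{0:s},n^*])$. Applying \eqref{eq:history_dp_value} with $\timer{}_0=n^*$ gives
\[
\Vhist{\varphi}(x_{0:s})=\smax{0\le\tau<s}\min\{\Vopt{\mathsf r}(x_\tau),c_\tau\}\lor\min\{c_s,\Vopt{\tilde\varphi}([x_s,\timer{}_s])\}.
\]
(I read the last term with $c_s$, matching the derivation from \eqref{eq:history_dp_rho_varphi}.) It remains to show that no prefix term $\tau<s$ strictly exceeds the last term.

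Suppose some $\tau<s$ had $\min\{\Vopt{\mathsf r}(x_\tau),c_\tau\}$ equal to the whole value. Then any continuation attaining $\Vopt{\mathsf r}(x_\tau)$ from $x_\tau$ would achieve $\Vhist{\varphi}(x_{0:s})$ with witness at most $\tau<s\le n^*$. This contradicts the minimality of $n^*$ in \Cref{def:optimal_witness}. Strictly speaking, a continuation of $x_{0:s}$ must keep $x_{\tau:s}$ fixed. For $\varphi$, whose predicate is $\Vopt{\mathsf r}(x_\tau)$, the score only uses $x_\tau$, so the fixed prefix is harmless. Hence every prefix term is strictly smaller than the value, the maximum is attained by the final term, and \eqref{eq:late_optimal_witness_suffix_value} follows.

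\textbf{Witness of an optimal suffix.} Let $\bar x$ be a continuation of $x_{0:s}$ with $[\bar x_{s:\infty},\timer{}_s]$ optimal for $\tilde\varphi$. By \Cref{lem:finite_horizon_semantics}, its witness $w$ lies in $[0,\timer{}_s]$. Splicing, $\rho_{[\tilde\varphi]}([\bar x_{0:\infty},n^*])\ge\min\{c_s,\Vopt{\tilde\varphi}([x_s,\timer{}_s])\}=\Vhist{\varphi}(x_{0:s})$, via \eqref{eq:history_dp_rho_varphi}, so $\bar x$ is optimal for $\varphi$.

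Now suppose $w<\timer{}_s$. Then the full trajectory attains its value at time $s+w<n^*$, or, if the maximum in \eqref{eq:history_dp_rho_varphi} comes from a prefix term, at some time $<s$. Either way this is an optimal continuation with witness $<n^*$, again contradicting \Cref{def:optimal_witness}. So $w=\timer{}_s$, and by \Cref{lem:finite_horizon_semantics} and \eqref{eq:witness_exist:1} the robustness of the suffix equals $\min\{\Vopt{\mathsf r}(\bar x_{n^*}),\smin{s\le i<n^*}q(\bar x_i)\}$.

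\textbf{Main obstacle.} The delicate point is tie-handling: $\witness{}$ is the \emph{first} time the supremum is attained. I must check that the robustness of the full $\varphi$-trajectory equals the value of the candidate time $s+w$ and is not strictly larger at some other time. This holds because the robustness is bounded above by $\Vhist{\varphi}(x_{0:s})$, so any time achieving the value is a witness candidate, and the first such time is then $\le s+w$.
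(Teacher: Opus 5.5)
Your proposal is correct and follows essentially the same route as the paper. You get the value identity from \Cref{lem:finite_infinite_horizon_Q}, \Cref{lem:replace_r_with_Vr} and \eqref{eq:history_dp_value}, rule out the prefix terms by the minimality of $n^*$, and show that an optimal $\tilde\varphi$-suffix with witness $w<\timer{}_s$ would give an optimal $\varphi$-continuation with witness at most $s+w<n^*$. You are somewhat more explicit than the paper about the fixed prefix and about first-attainment ties, but the argument is the same.
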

\end{tcolorbox}
\end{condsection}
\begin{condproof}[lem:late_optimal_witness_suffix_value]
Let $ w \coloneqq \witness{\tilde{\varphi}}([\bar{x}_{s:\infty},\timer{}_s])$.
By the finite-horizon semantics of $\tilde{\varphi}$, any witness time lies in
$[0,\timer{}_s]$, so
$ w \le \timer{}_s$.
Let $c_\tau \coloneqq \smin{0\le i<\tau} q(\bar{x}_i)$ for $\tau \ge 0$.
Consider the first term in \eqref{eq:history_dp_value} which we call $A$, i.e.,
\begin{equation}
A \coloneqq \smax{0\le \tau < s}\min\{\Vopt{\mathsf r}(\bar{x}_\tau),c_\tau\} \leq \Vhist{\varphi}(x_{0:s}).
\end{equation}
We claim that the inequality is strict, i.e.,
\begin{equation} \label{eq:tilde_varphi_terminal_witness:A_strict}
A < \Vhist{\varphi}(x_{0:s}).
\end{equation}
We prove this by contradiction. Suppose there exists $\tau^* \in [0, s-1]$ such that
$\min\{\Vopt{\mathsf r}(\bar{x}_{\tau^*}),c_{\tau^*}\}
= \Vhist{\varphi}(x_{0:s})$.
This implies $\tau^* < s \leq n^*$ is a witness time for $\varphi$ from $x_{0:s}$, contradicting the minimality of $n^*=\optwitness{\varphi}(x_{0:s})$. Thus, \eqref{eq:tilde_varphi_terminal_witness:A_strict} holds.

Next, by \Cref{lem:finite_infinite_horizon_Q,lem:replace_r_with_Vr},
\begin{equation} \label{eq:tilde_varphi_terminal_witness:history_eq}
    \Vhist{\varphi}(x_{0:s})
    =
    \Vhist{\tilde{\varphi}}([x_{0:s},n^*]).
\end{equation}
Applying \eqref{eq:history_dp_value} with $\timer{}_0=n^*$ and using
\eqref{eq:tilde_varphi_terminal_witness:A_strict}, we obtain
\begin{align}
    \Vhist{\varphi}(x_{0:s})
    &= A \lor \min\{c_s,\Vopt{\tilde{\varphi}}([\bar{x}_s,\timer{}_s])\} \notag\\
    &= \min\{c_s,\Vopt{\tilde{\varphi}}([\bar{x}_s,\timer{}_s])\}. \label{eq:tilde_varphi_terminal_witness:history_suffix}
\end{align}
Since $[\bar{x}_{s:\infty},\timer{}_s]$ is optimal for $\tilde{\varphi}$,
\eqref{eq:tilde_varphi_terminal_witness:history_suffix} becomes
\begin{equation} \label{eq:tilde_varphi_terminal_witness:history_suffix_rho}
    \Vhist{\varphi}(x_{0:s})
    =
    \min\{c_s,\rho_{[\tilde{\varphi}]}([\bar{x}_{s:\infty},\timer{}_s])\}.
\end{equation}
Since $w$ is the witness time of
$[\bar{x}_{s:\infty},\timer{}_s]$ for $\tilde{\varphi}$,
\[
\rho_{[\tilde{\varphi}]}([\bar{x}_{s:\infty},\timer{}_s])
=
\min\!\left\{
\Vopt{\mathsf r}(\bar{x}_{s+w}),
\,
\smin{s\le i<s+w} q(\bar{x}_i)
\right\}.
\]
Substituting this into
\eqref{eq:tilde_varphi_terminal_witness:history_suffix_rho} gives
\[
\Vhist{\varphi}(x_{0:s})
=
\min\!\left\{
\Vopt{\mathsf r}(\bar{x}_{s+w}),
\,
\smin{0\le i<s+w} q(\bar{x}_i)
\right\}.
\]
Thus the continuation $\bar{x}_{0:\infty}$ attains the optimal value
$\Vhist{\varphi}(x_{0:s})$ with witness time $s+w$.
Suppose $w<\timer{}_s=n^*-s$, so that $s+w<n^*$.
This contradicts the definition of $n^*=\optwitness{\varphi}(x_{0:s})$, and $w \geq \timer{}_s$.
Having shown the reverse direction already, we must have $w = \timer{}_s$.

The equivalent representation of
$\rho_{[\tilde{\varphi}]}([\bar{x}_{s:\infty},\timer{}_s])$
follows immediately from the finite-horizon semantics of $\tilde{\varphi}$.
\end{condproof}
\begin{condsection}[lem:psi_dp]
\begin{tcolorbox}[ThmFrame3]
\begin{lemma} \label{lem:psi_dp}
    Let $\timer{}_s \coloneqq \optwitness{\psi}(x_{0:s}) - s$, and assume $\timer{}_s \geq 0$.
    Let $\bar{x}_{0:\infty}$ be the trajectory generated by $\pi_{[\psi]}$ from $x_{0:s}$.
    Then,
    \begin{equation}
        \rho_{[\tilde{\psi}]}([\bar{x}_{s:\infty}, \timer{}_s])
        = \Vopt{\tilde{\psi}}([\bar{x}_s, \timer{}_s])
    \end{equation}
\end{lemma}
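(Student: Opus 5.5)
Let $n^* \coloneqq \optwitness{\psi}(x_{0:s})$, so $\timer{}_s = n^* - s \geq 0$. The plan is to split the trajectory at $n^*$ into a prefix generated by $\pi_{[\varphi]}$ and a suffix generated by $\pi_{[\mathsf r]}$. The prefix is time-optimal for $\tilde{\varphi}$. The suffix then upgrades $\Vopt{\mathsf r}(\bar x_{n^*})$ to an actual robustness $\rho_{[\mathsf r]}(\bar x_{n^*:\infty})$ that attains it.

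\textbf{Step 1 (prefix agrees with $\pi_{[\varphi]}$).} By \Cref{lem:pi_psi_matches_pi_varphi_before_switch}, $\bar x_{0:n^*} = \hat x_{0:n^*}$, where $\hat x$ is generated by $\pi_{[\varphi]}$ from $x_{0:s}$. The proof of \Cref{thm:simple_until_policy_optimal} (via \Cref{lem:witness_simulates_timer}, the timer simulation, and backward induction) gives
\[
\rho_{[\tilde\varphi]}([\hat x_{s:\infty},\timer{}_s]) = \Vopt{\tilde\varphi}([\hat x_s,\timer{}_s]),
\]
so $[\hat x_{s:\infty},\timer{}_s]$ is optimal for $\tilde\varphi$. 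By \Cref{lem:replace_r_with_Vr}, $n^* = \optwitness{\varphi}(x_{0:s})$. \Cref{lem:late_optimal_witness_suffix_value} then says the witness of this optimal trajectory is exactly $\timer{}_s$:
\[
\Vopt{\tilde\varphi}([\bar x_s,\timer{}_s]) = \min\Bigl\{\Vopt{\mathsf r}(\bar x_{n^*}),\ \smin{s\le i<n^*} q(\bar x_i)\Bigr\}.
\]
Here $\tilde\varphi$ depends only on states up to $n^*$, and those states coincide for $\hat x$ and $\bar x$.

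\textbf{Step 2 (suffix is optimal for $\mathsf r$).} At time $k = n^*$ (and later), I need $\pi_{[\psi]}$ to apply $\pi_{[\mathsf r]}(\bar x_{n^*:k})$. First I show $\optwitness{\psi}(\bar x_{0:n^*}) = n^*$. This follows from \Cref{lem:witness_simulates_timer} applied to $\hat x$ at $k = n^*$, together with \Cref{lem:replace_r_with_Vr} and Step 1. Then $n_k - k = 0 \le 0$ at $k=n^*$, so the policy switches. For $k > n^*$, \Cref{lem:witness_fixed_after_switch} applied with history $\bar x_{0:k}$ and $\tau = n^* < k$ keeps the witness fixed at $n^*$ and shows that $\bar x_{n^*:\infty}$ is generated by $\pi_{[\mathsf r]}$. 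One wrinkle: the edge case $\tau = s$ of that lemma needs $\tau < s$, so I start it from history $\bar x_{0:n^*+1}$, after checking the witness is still $n^*$ there. For that check I reuse the value-preservation argument from its proof. \Cref{assmp:exists_optimal_policy_for_r} then gives $\rho_{[\mathsf r]}(\bar x_{n^*:\infty}) = \Vopt{\mathsf r}(\bar x_{n^*})$, where $\pi_{[\mathsf r]}$ starts from the length-one history $\bar x_{n^*}$.

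\textbf{Step 3 (combine).} By \Cref{lem:finite_horizon_semantics}, choosing $t = \timer{}_s$ in the max gives
\[
\rho_{[\tilde\psi]}([\bar x_{s:\infty},\timer{}_s]) \ge \min\Bigl\{\rho_{[\mathsf r]}(\bar x_{n^*:\infty}),\ \smin{s\le i<n^*} q(\bar x_i)\Bigr\},
\]
and by Steps 1 and 2 the right-hand side equals $\Vopt{\tilde\varphi}([\bar x_s,\timer{}_s])$. \Cref{lem:replace_r_with_Vr}, with a length-one history, gives $\Vopt{\tilde\varphi}([\bar x_s,\timer{}_s]) = \Vopt{\tilde\psi}([\bar x_s,\timer{}_s])$. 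The upper bound $\rho_{[\tilde\psi]} \le \Vopt{\tilde\psi}$ is immediate from the definition, so equality holds.

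The main obstacle is Step 2, namely establishing rigorously that the switch happens exactly at $n^*$ and is never undone. I would handle it by first proving $\optwitness{\psi}(\bar x_{0:n^*}) = n^*$ through \Cref{lem:abstract_witness_persistence}, verifying its hypotheses on $[s, n^*]$ as in \Cref{lem:witness_simulates_timer}, and then handing off to \Cref{lem:witness_fixed_after_switch}.
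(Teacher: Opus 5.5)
Your proposal is correct and follows essentially the same route as the paper's proof. Both arguments use the same steps: agreement of the prefix with $\pi_{[\varphi]}$ and optimality of $[\hat x_{s:\infty},\timer{}_s]$ for $\tilde\varphi$, then the exact-witness representation from \Cref{lem:late_optimal_witness_suffix_value}, then the switch to $\pi_{[\mathsf r]}$ at $n^*$ via \Cref{lem:witness_simulates_timer,lem:replace_r_with_Vr,lem:witness_fixed_after_switch}, and finally evaluating $\tilde\psi$ at $t=\timer{}_s$. The only differences are minor and in your favor: you explicitly handle the $\tau=s$ boundary case of \Cref{lem:witness_fixed_after_switch}, whose hypothesis $\tau<s$ the paper silently violates when it applies the lemma at time $n^*$, and you take the upper bound directly from the definition of $\Vopt{\tilde\psi}$ rather than through the termwise comparison $\rho_{[\tilde\psi]}\le\rho_{[\tilde\varphi]}$.
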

\end{tcolorbox}
\end{condsection}
\begin{condproof}
Let
$ n^* \coloneqq \optwitness{\psi}(x_{0:s})$ and
$ \timer{}_s \coloneqq n^*-s \geq 0$.
By \Cref{lem:replace_r_with_Vr},
$ n^*=\optwitness{\varphi}(x_{0:s}) $.
Let $\hat{x}_{0:\infty}$ and $\bar{x}_{0:\infty}$ be the trajectories
generated from $x_{0:s}$ by $\pi_{[\varphi]}$ and $\pi_{[\psi]}$,
respectively.
By \Cref{lem:pi_psi_matches_pi_varphi_before_switch},
\begin{equation} \label{eq:psi_dp:same_prefix}
    \bar{x}_{0:n^*}=\hat{x}_{0:n^*}.
\end{equation}

As established in the proof of \Cref{thm:simple_until_policy_optimal},
the trajectory $[\hat{x}_{s:\infty},\timer{}_s]$ is optimal for
$\tilde{\varphi}$ from $x_{0:s}$.
Since $\bar{x}$ and $\hat{x}$ agree on the first $n^*$ states,
\Cref{lem:until_Vr_finite_prefix} implies that
$[\bar{x}_{s:\infty},\timer{}_s]$ is also optimal for $\tilde{\varphi}$.
Hence
\begin{equation} \label{eq:psi_dp:tilde_varphi_opt}
    \rho_{[\tilde{\varphi}]}([\bar{x}_{s:\infty},\timer{}_s])
    =
    \Vopt{\tilde{\varphi}}([\bar{x}_s,\timer{}_s]).
\end{equation}
By \Cref{lem:late_optimal_witness_suffix_value}, if we define
$
c^* \coloneqq \smin{s\le i<n^*} q(\bar{x}_i)
$,
then
\begin{equation} \label{eq:psi_dp:tilde_varphi_repr}
    \rho_{[\tilde{\varphi}]}([\bar{x}_{s:\infty},\timer{}_s])
    =
    \min\{c^*,\Vopt{\mathsf r}(\bar{x}_{n^*})\}.
\end{equation}
Next, since $\hat{x}$ is generated by $\pi_{[\varphi]}$,
\Cref{lem:witness_simulates_timer} gives
\[
\optwitness{\varphi}(\hat{x}_{0:n^*})=n^*.
\]
Using \Cref{lem:replace_r_with_Vr}, we obtain
$ \optwitness{\psi}(\bar{x}_{0:n^*})=n^* $.
Applying \Cref{lem:witness_fixed_after_switch} at time $n^*$ yields
\[
\optwitness{\psi}(\bar{x}_{0:k})=n^*,
\qquad \forall k\ge n^*.
\]
Therefore, by the definition of $\pi_{[\psi]}$,
$
\pi_{[\psi]}(\bar{x}_{0:k})
=
\pi_{[\mathsf r]}(\bar{x}_{n^*:k})
$
for all $k \geq n^*$.
Thus, the suffix $\bar{x}_{n^*:\infty}$ is exactly the trajectory generated by
$\pi_{[\mathsf r]}$ from $\bar{x}_{n^*}$.
By the optimality of $\pi_{[\mathsf r]}$,
$
\rho_{[\mathsf r]}(\bar{x}_{n^*:\infty})
= \Vopt{\mathsf r}(\bar{x}_{n^*})
$.
Now evaluate $\tilde{\psi}$ at the candidate witness time
$\timer{}_s=n^*-s$.
Using \Cref{lem:finite_horizon_semantics} (with $t = n^* \leq \timer{}_s$) and
\eqref{eq:psi_dp:tilde_varphi_repr},
\begin{align}
    \rho_{[\tilde{\psi}]}([\bar{x}_{s:\infty},\timer{}_s])
&\ge
    \min\{c^*,\rho_{[\mathsf r]}(\bar{x}_{n^*:\infty})\} \notag\\
    &=
    \min\{c^*,\Vopt{\mathsf r}(\bar{x}_{n^*})\}
    =
    \rho_{[\tilde{\varphi}]}([\bar{x}_{s:\infty},\timer{}_s]).
    \label{eq:psi_dp:lower}
    \raisetag{6ex}
\end{align}
Now, since
$
\rho_{[\mathsf r]}(\bar{x}_{s+t:\infty})
\le
\Vopt{\mathsf r}(\bar{x}_{s+t})
$ for any $t \in [0, \timer{}_s]$,
each term of $\rho_{[\tilde{\psi}]}$ is bounded above by the corresponding term in $\rho_{[\tilde{\varphi}]}$.
We thus have
\begin{equation} \label{eq:psi_dp:upper}
    \rho_{[\tilde{\psi}]}([\bar{x}_{s:\infty},\timer{}_s])
    \le
    \rho_{[\tilde{\varphi}]}([\bar{x}_{s:\infty},\timer{}_s]).
\end{equation}
Combining \eqref{eq:psi_dp:lower} and \eqref{eq:psi_dp:upper},
\[
\rho_{[\tilde{\psi}]}([\bar{x}_{s:\infty},\timer{}_s])
=
\rho_{[\tilde{\varphi}]}([\bar{x}_{s:\infty},\timer{}_s]).
\]
Using \eqref{eq:psi_dp:tilde_varphi_opt} and \Cref{lem:replace_r_with_Vr},
we conclude that
\[
\rho_{[\tilde{\psi}]}([\bar{x}_{s:\infty},\timer{}_s])
=
\Vopt{\tilde{\varphi}}([\bar{x}_s,\timer{}_s])
=
\Vopt{\tilde{\psi}}([\bar{x}_s,\timer{}_s]).
\]
This proves the claim.
\end{condproof}
\begin{tcolorbox}[ThmFrame3]
\begin{thm} \label{thm:pi_psi_optimal}
    $\pi_{[\psi]}$ is an optimal policy for $\psi$.
\end{thm}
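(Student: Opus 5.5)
We fix an arbitrary history $x_{0:s}\in\mathcal X^+$, let $\bar x_{0:\infty}$ be the trajectory generated by $\pi_{[\psi]}$ from $x_{0:s}$, and set $n^*\coloneqq\optwitness{\psi}(x_{0:s})$. The upper bound $\rho_{[\psi]}(\bar x_{0:\infty})\le\Vhist{\psi}(x_{0:s})$ holds by definition of $\Vhist{\psi}$, so only the lower bound needs proof. We split on the sign of $\timer{}_s\coloneqq n^*-s$, since the two branches of \eqref{eq:until_optimal_policy} correspond to whether the optimal witness time is in the past or the future.

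\textbf{Case $n^*<s$ (witness already in the past).} Here \Cref{lem:witness_fixed_after_switch} applies directly. Item \Cref{item:witness_fixed_after_switch:rho} gives
\[
\rho_{[\psi]}(\bar x_{0:\infty})\ge\min\{c_{n^*},\Vhist{\mathsf r}(\bar x_{n^*:s})\},
\]
and \Cref{lem:past_witness_value_identity} identifies the right-hand side with $\Vhist{\psi}(x_{0:s})$. This finishes the case.

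\textbf{Case $n^*\ge s$.} We work with the finite-horizon surrogate $\tilde\psi$ using timer $\timer{}_0=n^*$. The chain will be
\[
\rho_{[\psi]}(\bar x_{0:\infty})\ \ge\ \rho_{[\tilde\psi]}([\bar x_{0:\infty},n^*])\ =\ \Vhist{\tilde\psi}([x_{0:s},n^*])\ =\ \Vhist{\psi}(x_{0:s}).
\]
\begin{itemize}
\item The first inequality is \eqref{eq:finite_horizon_ineq}.
\item The last equality is \Cref{lem:finite_infinite_horizon_Q}.
\item For the middle equality, apply the history decomposition \eqref{eq:history_dp_rho_psi} of \Cref{lem:history_dp} at split point $s$. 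It writes $\rho_{[\tilde\psi]}([\bar x_{0:\infty},n^*])$ as a max over past terms $\min\{\mathsf r(\bar x_{\tau:\infty}),c_\tau\}$ for $\tau<s$, together with $\min\{c_s,\rho_{[\tilde\varphi]}([\bar x_{s:\infty},\timer{}_s])\}$.
\end{itemize}

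There is a notational mismatch to handle in \Cref{lem:history_dp}. Its suffix term is stated with $\tilde\varphi$, but for $\tilde\psi$ the natural suffix is $\rho_{[\tilde\psi]}([\bar x_{s:\infty},\timer{}_s])$, and I would use that version. By \Cref{lem:psi_dp} this suffix equals $\Vopt{\tilde\psi}([\bar x_s,\timer{}_s])$.

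The past terms involve $\mathsf r(\bar x_{\tau:\infty})$ rather than $\Vopt{\mathsf r}(\bar x_\tau)$, and they may be smaller. This does not matter. \Cref{lem:late_optimal_witness_suffix_value} (with \Cref{lem:replace_r_with_Vr}) shows that $\Vhist{\psi}(x_{0:s})=\min\{c_s,\Vopt{\tilde\varphi}([x_s,\timer{}_s])\}$, and that value is already attained by the suffix term alone. So
\[
\rho_{[\tilde\psi]}([\bar x_{0:\infty},n^*])\ \ge\ \min\{c_s,\Vopt{\tilde\psi}([\bar x_s,\timer{}_s])\}\ =\ \Vhist{\psi}(x_{0:s}),
\]
where the last step uses $\Vopt{\tilde\psi}=\Vopt{\tilde\varphi}$ from \Cref{lem:replace_r_with_Vr}.

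\textbf{Main obstacle.} The delicate step is justifying the suffix form of \Cref{lem:history_dp} for $\tilde\psi$ with the true $\mathsf r$ suffix, and confirming that $\min\{c_s,\cdot\}$ combines correctly with \Cref{lem:psi_dp}. To settle it, I would redo the one-line split of the max in \Cref{lem:finite_horizon_semantics} at $t=s$, using the timer shift $\timer{}_s=\timer{}_0-s$. If that proves awkward, the fallback is to bound $\rho_{[\psi]}(\bar x_{0:\infty})$ from below directly by evaluating the Until supremum at $t=n^*$. From \Cref{lem:pi_psi_matches_pi_varphi_before_switch} and \Cref{lem:witness_fixed_after_switch}, the suffix $\bar x_{n^*:\infty}$ follows $\pi_{[\mathsf r]}$, so $\rho_{[\mathsf r]}(\bar x_{n^*:\infty})=\Vopt{\mathsf r}(\bar x_{n^*})$. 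Combining this with the representation of $\rho_{[\tilde\varphi]}$ in \Cref{lem:late_optimal_witness_suffix_value} gives
\[
\rho_{[\psi]}(\bar x_{0:\infty})\ \ge\ \min\Bigl\{c_{n^*},\Vopt{\mathsf r}(\bar x_{n^*})\Bigr\}\ =\ \Vhist{\psi}(x_{0:s}).
\]
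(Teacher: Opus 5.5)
Your proposal is correct and follows essentially the same route as the paper. The case $n^*<s$ goes through item (iv) of \Cref{lem:witness_fixed_after_switch} and \Cref{lem:past_witness_value_identity}, and the case $n^*\ge s$ goes through \eqref{eq:finite_horizon_ineq}, \Cref{lem:finite_infinite_horizon_Q}, the split of \Cref{lem:history_dp} at time $s$ with suffix $\min\{c_s,\rho_{[\tilde\psi]}([\bar x_{s:\infty},\timer{}_s])\}$ (exactly the form the paper's proof actually uses, which your re-derivation from \Cref{lem:finite_horizon_semantics} justifies), \Cref{lem:psi_dp}, and \Cref{lem:late_optimal_witness_suffix_value} with \Cref{lem:replace_r_with_Vr}.
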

\end{tcolorbox}
\begin{tcolorbox}[AssmpFrame2]
\begin{proof}[Proof Sketch]
Fix a history $x_{0:s}$ and let
$n^* \coloneqq \optwitness{\psi}(x_{0:s})$.
The proof splits into two cases.

\noindent\textbf{($\bm{n^* \ge s}$):} The optimal witness has not yet occurred.
From time $s$ onward, the problem is equivalent to a finite-horizon Until
problem with remaining horizon $n^*-s$.
Since $\pi_{[\psi]}$ matches $\pi_{[\varphi]}$ until the witness is reached,
we use the optimality of $\pi_{[\varphi]}$ (\Cref{thm:simple_until_policy_optimal}) and the equivalence in value of $\psi$ and $\varphi$ (\Cref{lem:replace_r_with_Vr}) to conclude that $\pi_{[\psi]}$ also attains the optimal value for this finite-horizon problem.
When the witness time $n^*$ is reached, the policy switches to the suffix controller $\pi_{[\mathsf r]}$ which is optimal for $\mathsf r$.
After showing that $n_k - k$ correctly simulates the timer state, we conclude that the suffix after time $n^*$ is exactly the trajectory generated by $\pi_{[\mathsf r]}$ from $x_{n^*}$, and therefore attains the optimal $\mathsf r$-value.
Finally, the full trajectory attains the optimal value $\Vhist{\psi}(x_{0:s})$ by the history-level decomposition of Until.
\smallskip

\noindent\textbf{($\bm{n^* < s}$):} The optimal witness already lies in the fixed prefix.
Thus the Until objective has already switched to the suffix problem for
$\mathsf r$ at time $n^*$.
By construction, $\pi_{[\psi]}$ follows the optimal policy for $\mathsf r$ from
then on, so the resulting suffix achieves the optimal $\mathsf r$-value.
Combining this with the fixed prefix contribution
$\min_{0\le i<n^*} q(x_i)$ shows again that the full trajectory attains
$\Vhist{\psi}(x_{0:s})$.

Combining both results yields the desired result.
\end{proof}
\end{tcolorbox}
\begin{condsection}[thm:pi_psi_optimal]
\begin{tcolorbox}[AssmpFrame2]
\begin{proof}
Fix a history $\bar{x}_{0:s}\in\mathcal X^+$ with $s\ge 0$, and let
$\bar{x}_{0:\infty}$ be the trajectory generated by $\pi_{[\psi]}$ from
$\bar{x}_{0:s}$.
Define
$
n^* \coloneqq \optwitness{\psi}(\bar{x}_{0:s})$,
$\timer{}_s \coloneqq n^*-s$,
and $\timer{}_0 \coloneqq n^*$.
We will prove that
\begin{equation} \label{eq:thm_pi_psi_optimal:main_claim}
    \rho_{[\psi]}(\bar{x}_{0:\infty})=\Vhist{\psi}(\bar{x}_{0:s}).
\end{equation}
We consider two cases.

\medskip
\noindent\textbf{Case 1: $n^*\ge s$ (i.e., $\timer{}_s\ge 0$).}
By \Cref{lem:finite_infinite_horizon_Q} and \eqref{eq:finite_horizon_ineq},
\begin{equation} \label{eq:thm_pi_psi_optimal:basic_chain_short}
    \rho_{[\tilde{\psi}]}([\bar{x}_{0:\infty},\timer{}_0])
    \le
    \rho_{[\psi]}(\bar{x}_{0:\infty})
    \le
    \Vhist{\psi}(\bar{x}_{0:s}).
\end{equation}
Let $ c_s \coloneqq \smin{0\le i<s} q(\bar{x}_i) $.
By \Cref{lem:replace_r_with_Vr} and
\Cref{lem:late_optimal_witness_suffix_value},
\begin{equation} \label{eq:thm_pi_psi_optimal:case1_value_repr_short}
    \Vhist{\psi}(\bar{x}_{0:s})
    =
    \min\{c_s,\Vopt{\tilde{\psi}}([\bar{x}_s,\timer{}_s])\}.
\end{equation}
Applying \Cref{lem:history_dp} to the actual trajectory $\bar{x}_{0:\infty}$ gives
\[
\rho_{[\tilde{\psi}]}([\bar{x}_{0:\infty},\timer{}_0])
\ge
\min\{c_s,\rho_{[\tilde{\psi}]}([\bar{x}_{s:\infty},\timer{}_s])\}.
\]
By \Cref{lem:psi_dp},
$\rho_{[\tilde{\psi}]}([\bar{x}_{s:\infty},\timer{}_s])
= \Vopt{\tilde{\psi}}([\bar{x}_s,\timer{}_s])$.
Hence
\begin{equation*}
\rho_{[\tilde{\psi}]}([\bar{x}_{0:\infty},\timer{}_0])
\ge
\min\{c_s,\Vopt{\tilde{\psi}}([\bar{x}_s,\timer{}_s])\}
\stackrel{\smash{\eqref{eq:thm_pi_psi_optimal:case1_value_repr_short}}}{=}
\Vhist{\psi}(\bar{x}_{0:s}).
\end{equation*}
Combined with \eqref{eq:thm_pi_psi_optimal:basic_chain_short}, this yields \eqref{eq:thm_pi_psi_optimal:main_claim}.

\medskip
\noindent\textbf{Case 2: $n^*<s$ (i.e., $\timer{}_s<0$).}
Let
$\tau \coloneqq n^*=\optwitness{\psi}(\bar{x}_{0:s})$ and
$c_\tau \coloneqq \smin{0\le i<\tau} q(\bar{x}_i)$.
By \Cref{lem:past_witness_value_identity},
\begin{equation} \label{eq:thm_pi_psi_optimal:case2_value}
    \Vhist{\psi}(\bar{x}_{0:s})
    =
    \min\{c_\tau,\Vhist{\mathsf r}(\bar{x}_{\tau:s})\}.
\end{equation}
Moreover, by \Cref{item:witness_fixed_after_switch:suffix},
the suffix $\bar{x}_{\tau:\infty}$ is exactly the trajectory generated by
$\pi_{[\mathsf r]}$ from $\bar{x}_{\tau:s}$, and
$
\rho_{[\mathsf r]}(\bar{x}_{\tau:\infty})
= \Vhist{\mathsf r}(\bar{x}_{\tau:s}).
$
Furthermore, by \Cref{item:witness_fixed_after_switch:rho},
\begin{equation} \label{eq:thm_pi_psi_optimal:case2_lower}
    \rho_{[\psi]}(\bar{x}_{0:\infty})
    \ge
    \min\{c_\tau,\Vhist{\mathsf r}(\bar{x}_{\tau:s})\}.
\end{equation}
We get
$\rho_{[\psi]}(\bar{x}_{0:\infty})
\ge \Vhist{\psi}(\bar{x}_{0:s})$
by combining \eqref{eq:thm_pi_psi_optimal:case2_value} and
\eqref{eq:thm_pi_psi_optimal:case2_lower}.
By definition of $\Vhist{\psi}$,
$\rho_{[\psi]}(\bar{x}_{0:\infty})
\le \Vhist{\psi}(\bar{x}_{0:s})$.
Thus, \eqref{eq:thm_pi_psi_optimal:main_claim} holds.

In both cases, the trajectory generated by $\pi_{[\psi]}$ from the arbitrary
history $\bar{x}_{0:s}$ achieves the value $\Vhist{\psi}(\bar{x}_{0:s})$.
Therefore $\pi_{[\psi]}$ is optimal for $\psi$ in the sense of
\Cref{def:def_of_optimal_policy}.
\end{proof}
\end{tcolorbox}
\end{condsection}

\ifcdc
We omit the proof for $\hat{\pi}_{[\psi]}$ but it follows similarly.
\fi
\begin{tcolorbox}[ThmFrame2]
\begin{thm} \label{thm:alternative_pi_psi_optimal}
    $\hat{\pi}_{[\psi]}$ is an optimal policy for $\psi$.
\end{thm}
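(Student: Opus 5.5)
We follow the template of \Cref{thm:pi_psi_optimal}, but with $\hat{\pi}_{[\varphi]}$ in place of $\pi_{[\varphi]}$ and the switching time $n^*=\min\{n:\optwitness{\varphi}(x_n)=0\}$ in place of the running witness $n_k$.

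The key observation concerns the prefix. Before the switch, $\hat{\pi}_{[\psi]}$ coincides with the Markovian policy $\hat{\pi}_{[\varphi]}$, so the prefix $\bar{x}_{0:n^*}$ is exactly the prefix of the trajectory generated by $\hat{\pi}_{[\varphi]}$. This is the analogue of \Cref{lem:pi_psi_matches_pi_varphi_before_switch}, and it holds by a trivial induction because the first branch of \eqref{eq:alternative_until_optimal_policy} ignores history. By \Cref{lem:finite_witness_exists}, $n^*$ is finite. Moreover, in the proof of \Cref{thm:alternative_until_simple_optimal_policy_optimal}, the $\hat{\pi}_{[\varphi]}$-trajectory $\hat{x}$ from a state $x$ satisfies $\rho_{[\varphi]}(\hat{x}_{0:\infty})=\Vopt{\varphi}(x)$. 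Along it, the recursion
\[
\Vopt{\varphi}(\hat{x}_k)=q(\hat{x}_k)\land\Vopt{\varphi}(\hat{x}_{k+1})
\]
holds while $\optwitness{\varphi}(\hat{x}_k)>0$, while $\Vopt{\varphi}(\hat{x}_{n^*})=\Vopt{\mathsf{r}}(\hat{x}_{n^*})$ at the switch. Chaining these gives
\[
\Vopt{\varphi}(x_0)=\min\Bigl\{\smin{0\le i<n^*}q(\bar{x}_i),\,\Vopt{\mathsf{r}}(\bar{x}_{n^*})\Bigr\}.
\]
That is, $n^*$ is a witness time for $\varphi$ achieving the optimal value along the actual trajectory.

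For the suffix, we show that for $k\ge n^*$ the policy applies $\pi_{[\mathsf r]}(\bar{x}_{n^*:k})$, so $\bar{x}_{n^*:\infty}$ is generated by $\pi_{[\mathsf r]}$ from $\bar{x}_{n^*}$. We must first check that $n^*$, computed from the realized history, is stable as $k$ grows. It is, because $n^*$ is a minimum over past indices, so once attained it never changes. By \Cref{assmp:exists_optimal_policy_for_r}, $\rho_{[\mathsf r]}(\bar{x}_{n^*:\infty})=\Vopt{\mathsf r}(\bar{x}_{n^*})$. Evaluating the Until semantics \eqref{eq:robustness_def} at $t=n^*$ then gives
\[
\rho_{[\psi]}(\bar{x}_{0:\infty})\ge\min\Bigl\{\smin{i<n^*}q(\bar{x}_i),\,\Vopt{\mathsf r}(\bar{x}_{n^*})\Bigr\}=\Vopt{\varphi}(\bar{x}_0)=\Vopt{\psi}(\bar{x}_0),
\]
where the last equality uses \Cref{lem:replace_r_with_Vr}. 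The upper bound $\rho_{[\psi]}\le\Vopt{\psi}$ is immediate from the definition, so equality holds.

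The main obstacle is the choice of optimality notion. \Cref{def:def_of_optimal_policy} demands optimality from every history $x_{0:s}$, not just from initial states, whereas $\hat{\pi}_{[\varphi]}$ is only optimal from states (\Cref{thm:alternative_until_simple_optimal_policy_optimal}). For an arbitrary suboptimal history, restarting $\hat{\pi}_{[\varphi]}$ from $x_s$ need not attain $\Vhist{\psi}(x_{0:s})$. The reason is that this value may be achieved by a witness already in the past, or may be capped by $\min_{i<s}q(x_i)$ in a way that favors a different continuation. I would therefore first try to prove the theorem in the sense of \Cref{rmk:optimality}, that is, optimality from initial states, which is the notion used for $\hat{\pi}_{[\varphi]}$. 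I would then attempt the history case by splitting on whether $\min\{c_\tau,\Vopt{\mathsf r}(x_\tau)\}$ already equals $\Vhist{\psi}(x_{0:s})$ for some $\tau<s$, using \Cref{lem:history_dp}. In that subcase, we would need $\hat{\pi}_{[\psi]}$ to have switched at that $\tau$, and this is where the argument may genuinely require restricting the claim.
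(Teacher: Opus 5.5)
Your argument is correct and is essentially the paper's proof. The paper inducts on the number of steps until the switch, using the same greedy one-step identity (via \Cref{lem:until_Vr_Vr_leq_rhs_varphi} and \eqref{eq:finite_horizon_ineq}) and the optimality of $\pi_{[\mathsf r]}$ once the switch occurs; you unroll that induction into the chain $\Vopt{\varphi}(\bar{x}_0)=\min\{\min_{i<n^*}q(\bar{x}_i),\,\Vopt{\mathsf r}(\bar{x}_{n^*})\}$ and then evaluate the Until at $t=n^*$.

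The paper's proof also establishes $\rho_{[\psi]}(\bar{x}_{0:\infty})=\Vopt{\psi}(\bar{x}_0)$ only from initial states, so your caveat about the history notion in \Cref{def:def_of_optimal_policy} applies equally to the paper, and it is well founded. From a general history, the first index with $\optwitness{\varphi}(x_n)=0$ can differ from $\optwitness{\psi}(x_{0:s})$, because a past witness can still have an $\mathsf r$-term that depends on the future. In that case an arbitrary history-optimal $\pi_{[\mathsf r]}$ need not attain $\Vhist{\psi}(x_{0:s})$.
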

\end{tcolorbox}
\begin{condproof}[thm:alternative_pi_psi_optimal]
The proof is similar to the proof of \Cref{thm:alternative_until_simple_optimal_policy_optimal}, except we now prove
\begin{equation} \label{eq:proof:alternative_pi_psi_optimal:induction}
    \rho_{[\psi]}(\bar{x}_{0:\infty}) = \Vopt{\psi}(\bar{x}_0)      
\end{equation}
via induction on $\tau(\bar{x}_0)$.

\noindent\textbf{Base case ($\tau(\bar{x}_0) = 0$)}: $\hat{\pi}_{[\psi]}$ follows $\pi_{[\mathsf{r}]}$ from $\bar{x}_0$, and by the optimality of $\pi_{[\mathsf{r}]}$,
\begin{equation}
    \rho_{[\psi]}(\bar{x}_{0:\infty})
    = \rho_{[\mathsf{r}]}(\bar{x}_{0:\infty})
    \stackrel{\mathclap{\eqref{eq:pi_r_optimal}}}{=} \Vopt{\mathsf{r}}(\bar{x}_0)
    = \Vopt{\psi_0}(\bar{x}_0)
    = \Vopt{\psi}(\bar{x}_0).
\end{equation}

\noindent\textbf{Induction step}: Assume \eqref{eq:proof:alternative_pi_psi_optimal:induction} holds for all $x_0$ where $\tau(x_0) < \tau(\bar{x}_0)$ and $\tau(\bar{x}_0) > 0$.
Since $\hat{\pi}_{[\psi]}$ follows $\hat{\pi}_{[\varphi]}$ (which is Markovian) for $k < n^* = \tau(\bar{x}_0)$, and follows $\pi_{[\mathsf{r}]}$ for $k \geq n^*$, the suffix trajectory $\bar{x}_{1:\infty}$ is generated by $\hat{\pi}_{[\psi]}$ from $\bar{x}_1$ with $\tau(\bar{x}_1) = \tau(\bar{x}_0) - 1$. By the induction hypothesis,
$\rho_{[\psi]}(\bar{x}_{1:\infty}) = \Vopt{\psi}(\bar{x}_1)$.
By the recursive characterization of $\mathsf{U}$ and the induction hypothesis,
\begin{align*}
    \rho_{[\psi]}(\bar{x}_{0:\infty})
    &= \mathsf{r}(\bar{x}_{0:\infty}) \lor \bigl( q(\bar{x}_0) \land \rho_{[\psi]}(\bar{x}_{1:\infty}) \bigr), \\
    &= \mathsf{r}(\bar{x}_{0:\infty}) \lor \bigl( q(\bar{x}_0) \land \Vopt{\psi}(\bar{x}_1) \bigr).
\end{align*}
The upper bound $\rho_{[\psi]}(\bar{x}_{0:\infty}) \leq \Vopt{\psi}(\bar{x}_0)$ is immediate. It remains to show the lower bound.

Since $\tau(\bar{x}_0) > 0$, we have $n^*_0 = \optwitness{\varphi}(\bar{x}_0) > 0$. By Lemma \ref{lem:until_Vr_Vr_leq_rhs_varphi} and Lemma \ref{lem:replace_r_with_Vr},
\begin{equation}
   \Vopt{\mathsf{r}}(\bar{x}_0) \leq q(\bar{x}_0) \land \max_a \Vopt{\varphi_{n^*_0 - 1}}(f(\bar{x}_0, a)), 
\end{equation}
so the action $\bar{a}_0 = \hat{\pi}_{[\varphi]}(\bar{x}_0)$ satisfies
\begin{equation}
   \Vopt{\psi}(\bar{x}_0) = \Vopt{\varphi}(\bar{x}_0) = q(\bar{x}_0) \land \Vopt{\varphi_{n^*_0 - 1}}(\bar{x}_1).
\end{equation}
Since $\Vopt{\psi}(\bar{x}_1) = \Vopt{\varphi}(\bar{x}_1) \geq \Vopt{\varphi_{n^*_0 - 1}}(\bar{x}_1)$ by \eqref{eq:finite_horizon_ineq} and Lemma \ref{lem:replace_r_with_Vr},
\begin{equation}
   q(\bar{x}_0) \land \Vopt{\psi}(\bar{x}_1) \geq q(\bar{x}_0) \land \Vopt{\varphi_{n^*_0 - 1}}(\bar{x}_1) = \Vopt{\psi}(\bar{x}_0). 
\end{equation}
Therefore,
\begin{equation}
   \rho_{[\psi]}(\bar{x}_{0:\infty}) \geq q(\bar{x}_0) \land \Vopt{\psi}(\bar{x}_1) \geq \Vopt{\psi}(\bar{x}_0). 
\end{equation}
Combining both bounds, $\rho_{[\psi]}(\bar{x}_{0:\infty}) = \Vopt{\psi}(\bar{x}_0)$. The claim thus holds by induction.
\end{condproof}
\begin{tcolorbox}[NoteFrame2]
\begin{rmk}[Computing the policy without witness times] \label{rmk:implicit_witness_time}
    The optimal witness time $n^*=\optwitness{\psi}(x_0)$ (\Cref{def:optimal_witness}) need not be computed explicitly, since it is defined as the first time $\tau$ such that
    $
    \min\{ \Vopt{\mathsf{r}}(x_{\tau}), \sminraise{0 \leq k < \tau} q(x_k) \}
    \geq \min\{ \Vopt{\psi}(x_{\tau+1}), \sminraise{0 \leq k < \tau + 1} q(x_k) \}
    $ by \eqref{eq:witness_exist:1}.
    Thus, it suffices to track $\min_{0 \leq k < \tau} q(x_k)$ and check the above condition at each time step $\tau$.
    Similarly,
$\optwitness{\varphi}(x_0) = 0$
    if and only if
    $\Vopt{\mathsf{r}}(x_0) \geq \smax{a} q(x_0) \land \Vopt{\varphi}(f(x_0, a))$.
    Thus, it suffices to check the above condition at each time to determine when to ``switch'' to $\pi_{[\mathsf{r}]}$.
\end{rmk}
\end{tcolorbox}

We compare $\pi_{[\psi]}$ \eqref{eq:until_optimal_policy} and $\hat{\pi}_{[\psi]}$ \eqref{eq:alternative_until_optimal_policy} in \Cref{fig:compare_pi_pihat}.
$\pi_{[\psi]}$ requires tracking either the timer $\timer{}$ or the cumulative minimum $\min_{0 \leq k < \tau} q(x_k)$ per \Cref{rmk:implicit_witness_time}, but switches to $\pi_{{[\mathsf{r}]}}$ in the fewest steps.
In contrast, $\hat{\pi}_{[\psi]}$ only requires storing a single boolean to track whether $\optwitness{\varphi}(x_k) = 0$ has been reached, but potentially takes more steps to switch to $\pi_{[\mathsf{r}]}$.

\begin{figure}[t]
    \definecolor{FigRed}{HTML}{e24b34}
    \definecolor{FigBlue}{HTML}{338abd}

    \vspace{-4ex}
    \centering
\includegraphics[width=0.7\linewidth]{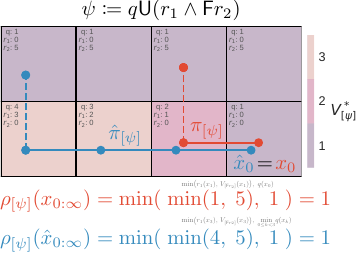}
\vspace{6pt}
    \caption{\textbf{Comparing $\textcolor{FigRed}{\pi_{[\psi]}}$ and $\textcolor{FigBlue}{\hat{\pi}_{[\psi]}}$}.
    Both $\textcolor{FigRed}{\pi_{[\psi]}}$ and $\textcolor{FigBlue}{\hat{\pi}_{[\psi]}}$ achieve the optimal robustness at $x_0 = \hat{x}_0$.
    However, $\textcolor{FigRed}{\pi_{[\psi]}}$ achieves this in 2 steps, while
    $\textcolor{FigBlue}{\hat{\pi}_{[\psi]}}$ reaches a higher $r_1(\hat{x}_3) = 3$, albeit taking 4 steps. 
}
   \vspace{-1.9em}
   \label{fig:compare_pi_pihat}
\end{figure}

\section{Policies for Next, Disjunction and Conjunction}
Next, we construct optimal policies for the $\LTLNext$, $\lor$ and $\land$ operators to tackle \Cref{def:S:Next,def:S:Disjunction,def:S:Conjunction}.
Fortunately, these operators do not suffer from the deferral issue of $\LTLUntil$ as in \Cref{ex:greedy_Q_suboptimal} and thus have more straightforward optimal policies when the operands are propositional.
However, in the general case where the operands are TL formulas, the optimal policies can be more complex.
To handle this, we use the same technique as in \Cref{sec:until:general_r} and assume the existence of optimal policies for the operand formulas (see \Cref{assmp:exists_optimal_policy_for_r}).

\begin{tcolorbox}[DefFrame2]
\begin{defi} \label{def:policy_next}
    Let $\pi_{[\mathsf{r}]}$ be an optimal policy for $\mathsf{r}$. 
    Then, define $\pi_{[\LTLNext \mathsf{r}]}$ as
    \begin{equation}
        \pi_{[\LTLNext \mathsf{r}]}(x_{0:k}) \coloneqq \begin{dcases}
            \argmax_{a \in \calA} \Vopt{\mathsf{r}}(f(x_0, a)), & k = 0, \\
            \pi_{[\mathsf{r}]}(x_{1:k}), & k > 0.
        \end{dcases}
    \end{equation}
\end{defi}
\end{tcolorbox}
\begin{tcolorbox}[DefFrame2]
\begin{defi} \label{def:disjunction_policy}
    For set $I$, let $\pi_{[\mathsf{r}_i]}$ be an optimal policy for $\mathsf{r}_i$ for each $i \in I$.
    Define the policy $\pi_{[\bigvee_{i \in I} \mathsf{r}_i]}$ as
    \begin{equation} \label{eq:disjunction_policy}
        \pi_{[\bigvee_{i \in I} \mathsf{r}_i]}(x_{0:k}) \coloneqq
        \pi_{[\mathsf{r}_{i^*}]}(x_{0:k}),
    \end{equation}
    where $i^* \in \argmax_{i \in I} \Vopt{\mathsf{r}_i}(x_0)$ is any 
maximizer.
\end{defi}
\end{tcolorbox}
\begin{tcolorbox}[DefFrame2]
\begin{defi} \label{def:conjunction_policy}
    Let $\pi_{[\mathsf{r}]}$ be an optimal policy for $\mathsf{r}$. Define the policy $\pi_{[q \land \mathsf{r}]}$ as
    \begin{equation} \label{eq:conjunction_policy}
        \pi_{[q \land \mathsf{r}]} = \pi_{[\mathsf{r}]}.
    \end{equation}
\end{defi}
\end{tcolorbox}
\begin{tcolorbox}[ThmFrame2]
\begin{thm} \label{thm:next_disjunction_conjunction_optimal_policies}
    The policies defined in \Cref{def:policy_next,def:disjunction_policy,def:conjunction_policy} are optimal for $\LTLNext \mathsf{r}$, $\bigvee_{i \in I} \mathsf{r}_i$ and $q \land \mathsf{r}$, respectively.
\end{thm}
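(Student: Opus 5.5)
The plan is to prove each of the three claims directly from \Cref{def:def_of_optimal_policy}. Fix an arbitrary history $x_{0:t}\in\mathcal X^+$ and let $\bar x_{0:\infty}$ be the trajectory generated by the policy from $x_{0:t}$. The upper bound $\rho(\bar x_{0:\infty})\le\Vhist{\cdot}(x_{0:t})$ is immediate from \Cref{def:bellmanvalue}, so in each case only the lower bound needs work. The common tool is a ``history-level decomposition'' of the value of the compound formula into values of its operands, followed by the optimality of $\pi_{[\mathsf r]}$ (or $\pi_{[\mathsf r_i]}$) from \Cref{assmp:exists_optimal_policy_for_r}.

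\textbf{Next.} By \eqref{eq:robustness_def}, $\rho_{[\LTLNext\mathsf r]}(x_{0:\infty})=\mathsf r(x_{1:\infty})$. Hence for $t\ge1$ the maximization over $a_{t:\infty}$ gives $\Vhist{\LTLNext\mathsf r}(x_{0:t})=\Vhist{\mathsf r}(x_{1:t})$. For $t\ge1$ the policy only ever uses its second branch, so $\bar x_{1:\infty}$ is exactly the trajectory generated by $\pi_{[\mathsf r]}$ from $x_{1:t}$. Optimality of $\pi_{[\mathsf r]}$ then gives the claim. For $t=0$, write $\Vhist{\LTLNext\mathsf r}(x_0)=\max_a\Vopt{\mathsf r}(f(x_0,a))$. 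The first action $\bar a_0$ attains this maximum, so $\bar x_1$ satisfies $\Vopt{\mathsf r}(\bar x_1)=\Vhist{\LTLNext\mathsf r}(x_0)$. From then on the policy is $\pi_{[\mathsf r]}(\bar x_{1:k})$, which is optimal from the history $\bar x_{1:1}$, and this reduces to the $t\ge1$ case.

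\textbf{Conjunction.} The term $q(x_0)$ depends only on the fixed initial state. Pulling it out of the maximum gives $\Vhist{q\land\mathsf r}(x_{0:t})=\min\{q(x_0),\Vhist{\mathsf r}(x_{0:t})\}$. Since $\pi_{[q\land\mathsf r]}=\pi_{[\mathsf r]}$ attains $\Vhist{\mathsf r}(x_{0:t})$ from $x_{0:t}$, the trajectory attains the minimum above.

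\textbf{Disjunction.} Exchanging the two maxima gives $\Vhist{\bigvee_i\mathsf r_i}(x_{0:t})=\max_i\Vhist{\mathsf r_i}(x_{0:t})$. For $t=0$, taking $i^*\in\argmax_i\Vopt{\mathsf r_i}(x_0)$ and following the optimal $\pi_{[\mathsf r_{i^*}]}$ yields $\rho\ge\mathsf r_{i^*}(\bar x_{0:\infty})=\max_i\Vopt{\mathsf r_i}(x_0)$.

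The main obstacle is the case of a history with $t>0$ that was \emph{not} generated by the policy. There an index chosen from $x_0$ alone need not maximize $\Vhist{\mathsf r_i}(x_{0:t})$. I would handle this by reading the selection in \eqref{eq:disjunction_policy} as made with respect to the current history, i.e.\ $i^*\in\argmax_i\Vhist{\mathsf r_i}(x_{0:k})$ (which coincides with $\Vopt{\mathsf r_i}(x_0)$ at $k=0$), with a fixed tie-breaking rule. I would then show the index is persistent along generated trajectories. Optimality of $\pi_{[\mathsf r_{i^*}]}$ together with \Cref{lem:Q_V_history} gives $\Vhist{\mathsf r_{i^*}}(\bar x_{0:k+1})=\Vhist{\mathsf r_{i^*}}(\bar x_{0:k})$. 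Every other $\Vhist{\mathsf r_i}$ is nonincreasing in the history, because continuations of $\bar x_{0:k+1}$ are continuations of $\bar x_{0:k}$. So $i^*$ remains a maximizer, the same index is retained, and the generated trajectory is exactly the one produced by $\pi_{[\mathsf r_{i^*}]}$ from $x_{0:t}$. It therefore attains $\Vhist{\mathsf r_{i^*}}(x_{0:t})=\Vhist{\bigvee_i\mathsf r_i}(x_{0:t})$.
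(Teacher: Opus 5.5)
Your arguments for $\LTLNext\mathsf{r}$ and $q\land\mathsf{r}$ are correct. For $t\ge 1$ you use $\Vhist{\LTLNext\mathsf{r}}(x_{0:t})=\Vhist{\mathsf{r}}(x_{1:t})$, the greedy first step handles $t=0$, and $q(x_0)$ factors out of the maximum in the conjunction.

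For the disjunction, what you call the main obstacle is actually a counterexample. With $i^*$ selected from $\Vopt{\mathsf{r}_i}(x_0)$ as in \eqref{eq:disjunction_policy}, the policy is not optimal in the sense of \Cref{def:def_of_optimal_policy}, so this part of the statement is false as written. Take states $\mathsf{s},\mathsf{a},\mathsf{b},\mathsf{c}$. From $\mathsf{s}$ one can move to any of $\mathsf{a},\mathsf{b},\mathsf{c}$; from $\mathsf{c}$ one can move to $\mathsf{b}$ or stay; $\mathsf{a}$ and $\mathsf{b}$ are absorbing. Let $\mathsf{r}_i=\LTLFinally p_i$ with $p_1=2$ at $\mathsf{a}$, $p_2=1$ at $\mathsf{b}$, and both equal to $-1$ elsewhere. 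Then $\Vopt{\mathsf{r}_1}(\mathsf{s})=2>1=\Vopt{\mathsf{r}_2}(\mathsf{s})$, so the policy delegates to $\pi_{[\mathsf{r}_1]}$ on every history starting at $\mathsf{s}$. From the history $(\mathsf{s},\mathsf{c})$, every continuation gives $\mathsf{r}_1$ the score $-1$. Hence a history-optimal $\pi_{[\mathsf{r}_1]}$ may stay at $\mathsf{c}$ forever, e.g.\ one that moves to $\mathsf{a}$ from $\mathsf{s}$ and stays put everywhere else. The generated trajectory has $\rho_{[\mathsf{r}_1\lor\mathsf{r}_2]}=-1$, while $\Vhist{\mathsf{r}_1\lor\mathsf{r}_2}(\mathsf{s},\mathsf{c})=1$.

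The literal policy is optimal only in the conventional sense of \Cref{rmk:optimality}: from single-state histories and along the histories it generates from them. So present your selection $i^*\in\argmax_i\Vhist{\mathsf{r}_i}(x_{0:k})$ as a necessary correction of the definition, not as a reading of it.

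With that correction your proof is sound, with one precision needed. The tie-breaking should be a fixed priority order on $I$, e.g.\ the smallest index. Your argument shows that along the generated trajectory the maximal value is preserved, and the set of maximizers can only shrink while still containing $i^*$. Under a priority order, that is exactly what keeps the selected index fixed. If ``any maximizer'' is re-chosen afresh at each step, the index can alternate between two $\LTLFinally$ goals and defer both indefinitely, as in \Cref{ex:greedy_Q_suboptimal}.

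The correction is harmless from single-state histories: by your persistence argument, it reproduces the trajectory of the original policy under the same tie-breaking. It is, however, needed downstream. Case 2 of the proof of \Cref{thm:pi_psi_optimal} runs $\pi_{[\mathsf{r}]}$ from the arbitrary history $\bar{x}_{\tau:s}$. When $\mathsf{r}$ is a disjunction, the history-level optimality you establish is exactly what that step requires.
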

\end{tcolorbox}

\section{Policy for Globally}
Finally, we construct an optimal policy for the $\LTLGlobally$ operator, another important operator in TL specifications due to its ability to express safety and invariance properties \cite{baier2008principles}.

\subsection{Globally with Propositional Operand}
When the operand is propositional (\Cref{def:S:Globally}), the greedy Markovian policy is optimal.
\begin{tcolorbox}[DefFrame2]
\begin{defi} \label{def:greedy_Gq}
    Define the Markovian policy $\pi_{[\LTLGlobally q]} : \calX \to \calA$ as
    \begin{equation} \label{eq:greedy_Gq}
        \pi_{[\LTLGlobally q]}(x) \coloneqq \argmax_{a \in \calA}\, \Qopt{\LTLGlobally q}(x, a).
    \end{equation}
\end{defi}
\end{tcolorbox}
\begin{condsection}[Gq_optimal_lemma]
We first establish that the value function for $\LTLGlobally q$ satisfies a Bellman-type recursion.

\begin{lemma} \label{lem:Gq_bellman_recursion}
    For all $x \in \calX$,
    \begin{equation} \label{eq:Gq_bellman_recursion}
        \Vopt{\LTLGlobally q]}(x) = \max_{a \in \calA}\, \min\bigl( q(x),\; \Vopt{\LTLGlobally q]}(f(x, a)) \bigr).
    \end{equation}
\end{lemma}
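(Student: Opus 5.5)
I will prove the Bellman recursion for $\Vopt{\LTLGlobally q}$ by establishing two inequalities directly from the definition. The tools are the robustness semantics \eqref{eq:robustness_def}, in particular the one-step unrolling
\[
\rho_{[\LTLGlobally q]}(x_{0:\infty}) = \min\bigl(q(x_0), \rho_{[\LTLGlobally q]}(x_{1:\infty})\bigr),
\]
which follows because $\min_{t\ge 0} q(x_t) = q(x_0) \land \min_{t \ge 1} q(x_t)$. I will also use that the maxima are attained, by \Cref{assmp:max_attainable:finite}.

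\textbf{Upper bound ($\le$).} Fix $x_0 = x$. Take an optimal action sequence $a_{0:\infty}$ attaining $\Vopt{\LTLGlobally q}(x)$, and let $x_1 = f(x, a_0)$. The suffix $x_{1:\infty}$ is some trajectory from $x_1$, so $\rho_{[\LTLGlobally q]}(x_{1:\infty}) \le \Vopt{\LTLGlobally q}(x_1)$. By the unrolling,
\[
\Vopt{\LTLGlobally q}(x) = q(x) \land \rho_{[\LTLGlobally q]}(x_{1:\infty}) \le q(x) \land \Vopt{\LTLGlobally q}(f(x,a_0)).
\]
The right-hand side is at most the maximum over $a$.

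\textbf{Lower bound ($\ge$).} Let $a^*$ attain the maximum on the right-hand side; the action set is effectively finite-valued here, since values lie in a finite set. Then take a sequence $a_{1:\infty}$ attaining $\Vopt{\LTLGlobally q}(f(x,a^*))$ from $x_1 = f(x,a^*)$. Because $f$ is time-invariant and $q$ depends only on the current state, the robustness of a suffix depends only on its own states, so this is legitimate. Concatenating gives a trajectory from $x$ with robustness $q(x) \land \Vopt{\LTLGlobally q}(f(x,a^*))$. This is at most $\Vopt{\LTLGlobally q}(x)$ by definition \eqref{eq:bellmanvalue}.

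Combining the two bounds gives \eqref{eq:Gq_bellman_recursion}. An alternative route to the same identity is \Cref{lem:Q_V_history} applied to $\Qhist{\LTLGlobally q}(x_{0:0},a)$ together with the observation that, for $\LTLGlobally q$, $\Vhist{\LTLGlobally q}(x_{0:1}) = q(x_0)\land \Vopt{\LTLGlobally q}(x_1)$.

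The only subtle point is attainment of the maximum over $a$ on the right-hand side, and of the inner maximum over action sequences. By \Cref{assmp:max_attainable:finite}, all values lie in a finite set, so the suprema are attained. If one prefers not to rely on this, the identity still holds with $\sup$, by taking $\epsilon$-optimal sequences and letting $\epsilon \to 0$.
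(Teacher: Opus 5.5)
Your proof is correct and takes essentially the paper's approach. Both arguments unroll $\rho_{[\LTLGlobally q]}(x_{0:\infty}) = q(x_0)\land\rho_{[\LTLGlobally q]}(x_{1:\infty})$, split the action sequence into $a_0$ and $a_{1:\infty}$, and identify the inner maximum with $\Vopt{\LTLGlobally q}(f(x,a_0))$. The paper writes this as a single chain of equalities, pulling $q(x_0)$ out of $\max_{a_{1:\infty}}$, while you justify that same exchange explicitly through the two inequalities.
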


\begin{condproof}[lem:Gq_bellman_recursion]
    By definition,
    \begin{align*}
        \Vopt{\LTLGlobally q]}(x)
        &= \max_{a_{0:\infty}} \inf_{k \geq 0}\, q(x_k), \\
        &= \max_{a_0} \max_{a_{1:\infty}} \min\bigl( q(x_0),\; \inf_{k \geq 1}\, q(x_k) \bigr), \\
        &= \max_{a_0}\, \min\bigl( q(x_0),\; \max_{a_{1:\infty}} \inf_{k \geq 1}\, q(x_k) \bigr), \\
        &= \max_{a_0}\, \min\bigl( q(x),\; \Vopt{\LTLGlobally q]}(f(x, a_0)) \bigr). \qedhere
    \end{align*}
\end{condproof}

\begin{lemma} \label{lem:Gq_value_nondecreasing}
    Let $\bar{x}_{0:\infty}$ be the trajectory generated by $\pi_{[\LTLGlobally q]}$ from $\bar{x}_0$. Then:
    \begin{enumerate}
        \item[(i)] For all $k \geq 0$,\quad $\Vopt{\LTLGlobally q]}(\bar{x}_k) = \min\bigl( q(\bar{x}_k),\; \Vopt{\LTLGlobally q]}(\bar{x}_{k+1}) \bigr)$.
        \item[(ii)] The sequence $\bigl( \Vopt{\LTLGlobally q]}(\bar{x}_k) \bigr)_{k \geq 0}$ is non-decreasing.
        \item[(iii)] For all $k \geq 0$,\quad $q(\bar{x}_k) \geq \Vopt{\LTLGlobally q]}(\bar{x}_0)$.
    \end{enumerate}
\end{lemma}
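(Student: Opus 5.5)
The plan is to derive all three items from the Bellman recursion of \Cref{lem:Gq_bellman_recursion}, together with the greedy definition of $\pi_{[\LTLGlobally q]}$ in \eqref{eq:greedy_Gq}. I will first identify the Q function on states explicitly:
\begin{equation*}
\Qopt{\LTLGlobally q}(x,a)=\min\bigl(q(x),\Vopt{\LTLGlobally q}(f(x,a))\bigr).
\end{equation*}
This follows from \Cref{def:bellmanvalue} by the same exchange of $\max$ and $\min$ used in the proof of \Cref{lem:Gq_bellman_recursion}, with $a_0=a$ fixed. Since $q(x_0)$ does not depend on $a_{1:\infty}$, it can be pulled out of the inner maximization. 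Alternatively, it follows from \Cref{lem:Q_V_history} together with the Markovian structure of $\LTLGlobally q$ on the state space.

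For item (i), fix $k$ and set $\bar a_k=\pi_{[\LTLGlobally q]}(\bar x_k)$. This action maximizes $\Qopt{\LTLGlobally q}(\bar x_k,\cdot)$, and the maximum is attainable under \Cref{assmp:max_attainable:finite}. Hence
\begin{equation*}
\min\bigl(q(\bar x_k),\Vopt{\LTLGlobally q}(\bar x_{k+1})\bigr)=\max_a \Qopt{\LTLGlobally q}(\bar x_k,a)=\Vopt{\LTLGlobally q}(\bar x_k),
\end{equation*}
where the last equality is \Cref{lem:Gq_bellman_recursion}. The only care needed is to use the Q-function identity above rather than a history-dependent version. Because $\LTLGlobally q$ looks only at future states, the value depends only on the current state, so the Markovian greedy policy is well defined.

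Item (ii) follows immediately from (i), since $\Vopt{\LTLGlobally q}(\bar x_k)=\min(\cdot,\Vopt{\LTLGlobally q}(\bar x_{k+1}))\le \Vopt{\LTLGlobally q}(\bar x_{k+1})$.

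For item (iii), item (i) gives $q(\bar x_k)\ge \Vopt{\LTLGlobally q}(\bar x_k)$. Combining this with the monotonicity in (ii) yields $\Vopt{\LTLGlobally q}(\bar x_k)\ge\Vopt{\LTLGlobally q}(\bar x_0)$, which proves (iii).

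I expect the main obstacle to be the Q-function identity and the attainability of the greedy maximizer. Everything after that is direct. This lemma then feeds the optimality proof: by (iii), $\inf_k q(\bar x_k)\ge \Vopt{\LTLGlobally q}(\bar x_0)$, and the reverse inequality holds by definition of the value.
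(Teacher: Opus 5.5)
Your proposal is correct and follows the paper's proof essentially step for step. Item (i) comes from the greedy action attaining the maximum in the Bellman recursion of \Cref{lem:Gq_bellman_recursion}, and (ii) and (iii) then follow immediately as consequences of (i). Your explicit identity $\Qopt{\LTLGlobally q}(x,a)=\min\bigl(q(x),\Vopt{\LTLGlobally q}(f(x,a))\bigr)$ and the attainability remark under \Cref{assmp:max_attainable:finite} make precise a step the paper uses implicitly when it writes the greedy action directly as the maximizer of that expression.
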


\begin{condproof}[lem:Gq_value_nondecreasing]
    \textbf{(i)}: Since
\begin{equation}
    \bar{a}_k = \pi_{[\LTLGlobally q]}(\bar{x}_k) = \argmax_{a} \min( q(\bar{x}_k), \Vopt{\LTLGlobally q]}(f(\bar{x}_k, a)) ),
    \end{equation}
    the greedy action attains the maximum in \eqref{eq:Gq_bellman_recursion}.

    \textbf{(ii)}: From (i), $\Vopt{\LTLGlobally q]}(\bar{x}_k) \leq \Vopt{\LTLGlobally q]}(\bar{x}_{k+1})$ since $\Vopt{\LTLGlobally q]}(\bar{x}_k)$ is the minimum of two quantities, one of which is $\Vopt{\LTLGlobally q]}(\bar{x}_{k+1})$.

    \textbf{(iii)}: From (i), $q(\bar{x}_k) \geq \Vopt{\LTLGlobally q]}(\bar{x}_k)$. Combining with (ii), $q(\bar{x}_k) \geq \Vopt{\LTLGlobally q]}(\bar{x}_k) \geq \Vopt{\LTLGlobally q]}(\bar{x}_0)$.
\end{condproof}
\end{condsection}
\begin{tcolorbox}[ThmFrame2]
\begin{thm} \label{thm:Gq_greedy_optimal}
    $\pi_{[\LTLGlobally q]}$ is an optimal policy for $\LTLGlobally q$.
\end{thm}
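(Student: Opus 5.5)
The plan is to show that, for an arbitrary history $\bar{x}_{0:s}$, the trajectory generated by $\pi_{[\LTLGlobally q]}$ attains $\Vhist{\LTLGlobally q}(\bar{x}_{0:s})$. The policy is Markovian, so the suffix $\bar{x}_{s:\infty}$ is exactly the trajectory generated by $\pi_{[\LTLGlobally q]}$ from $\bar{x}_s$. The argument therefore has two parts: handle the single-state case, then transfer the result to histories.

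\textbf{Step 1 (initial states).} Fix $\bar{x}_0$ and let $\bar{x}_{0:\infty}$ be generated by the greedy policy. By \Cref{lem:Gq_value_nondecreasing}(iii), $q(\bar{x}_k) \geq \Vopt{\LTLGlobally q}(\bar{x}_0)$ for every $k$. Hence
\[
\rho_{[\LTLGlobally q]}(\bar{x}_{0:\infty}) = \inf_{k\ge 0} q(\bar{x}_k) \geq \Vopt{\LTLGlobally q}(\bar{x}_0).
\]
The reverse inequality holds by the definition of $\Vopt{\LTLGlobally q}$ as a maximum over action sequences. This gives equality from any initial state. One detail needs checking: greedy maximization of $\Qopt{\LTLGlobally q}(x,a)=\min(q(x),\Vopt{\LTLGlobally q}(f(x,a)))$ is exactly the argmax used in \Cref{lem:Gq_value_nondecreasing}(i). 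This follows from \Cref{lem:Q_V_history} applied to length-one histories, together with \Cref{lem:Gq_bellman_recursion}.

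\textbf{Step 2 (histories).} For a general history $\bar{x}_{0:s}$, write $c_s \coloneqq \min_{0\le k<s} q(\bar{x}_k)$. Splitting the infimum at time $s$ gives, for every continuation,
\[
\rho_{[\LTLGlobally q]}(x_{0:\infty}) = \min\{c_s, \rho_{[\LTLGlobally q]}(x_{s:\infty})\}.
\]
The prefix is fixed, so maximizing over $a_{s:\infty}$ yields $\Vhist{\LTLGlobally q}(\bar{x}_{0:s}) = \min\{c_s, \Vopt{\LTLGlobally q}(\bar{x}_s)\}$. By Step 1 applied from $\bar{x}_s$, the greedy suffix attains $\rho_{[\LTLGlobally q]}(\bar{x}_{s:\infty}) = \Vopt{\LTLGlobally q}(\bar{x}_s)$. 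Substituting into the split shows the generated trajectory achieves $\Vhist{\LTLGlobally q}(\bar{x}_{0:s})$, which is the optimality required by \Cref{def:def_of_optimal_policy}.

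\textbf{Expected obstacle.} The only delicate points are the existence of the maxima and the validity of interchanging $\max$ and $\min$ in the history decomposition. Both follow from \Cref{assmp:max_attainable:finite}, since values lie in a finite set and so the suprema are attained. Unlike Until, there is no deferral issue here: \Cref{lem:Gq_value_nondecreasing}(iii) bounds every $q(\bar{x}_k)$ uniformly, so no witness-time argument is needed.
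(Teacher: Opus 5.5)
Your proof is correct, and Step 1 is exactly the paper's argument (upper bound from the definition of $\Vopt{\LTLGlobally q}$, lower bound from \Cref{lem:Gq_value_nondecreasing}(iii)); however, the paper stops there and only treats trajectories generated from an initial state $\bar{x}_0$. Your Step 2 splits $\rho_{[\LTLGlobally q]}(x_{0:\infty})=\min\{c_s,\rho_{[\LTLGlobally q]}(x_{s:\infty})\}$ and uses that the greedy policy is Markovian, which supplies the extension to arbitrary histories that \Cref{def:def_of_optimal_policy} actually requires; the same split with $s=1$, together with \Cref{lem:Q_V_history}, is also what properly justifies $\Qopt{\LTLGlobally q}(x,a)=\min\{q(x),\Vopt{\LTLGlobally q}(f(x,a))\}$ (\Cref{lem:Gq_bellman_recursion} alone only gives the maximized form).
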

\end{tcolorbox}
\begin{condproof}[thm:Gq_greedy_optimal]
    The upper bound $\rho_{[\LTLGlobally q]}(\bar{x}_{0:\infty}) \leq \Vopt{\LTLGlobally q]}(\bar{x}_0)$ is immediate from the definition of $\Vopt{\LTLGlobally q]}$.
    For the lower bound, by \Cref{lem:Gq_value_nondecreasing}(iii), $q(\bar{x}_k) \geq \Vopt{\LTLGlobally q]}(\bar{x}_0)$ for all $k \geq 0$.
    Thus,
    \begin{equation}
        \rho_{[\LTLGlobally q]}(\bar{x}_{0:\infty})
        = \inf_{k \geq 0}\, q(\bar{x}_k)
        \geq \Vopt{\LTLGlobally q]}(\bar{x}_0).
    \end{equation}
    Combining both bounds yields $\rho_{[\LTLGlobally q]}(\bar{x}_{0:\infty}) = \Vopt{\LTLGlobally q]}(\bar{x}_0)$.
\end{condproof}

\subsection{Globally with Conjunctions of Until operands}
Finally, we tackle the case of $\LTLGlobally \LTLUntil$ (\Cref{def:S:GU}).
When the operand contains the $\LTLUntil$ operator, this encodes a Büchi acceptance criterion that requires infinitely many visits to a set of states \cite{vardi1986automata}, which requires a more complex policy to solve the deferral issue as in \Cref{ex:greedy_Q_suboptimal}.
Using the recursive relationships for $\LTLGlobally$, we have that
\begin{equation}
    \LTLGlobally( q \LTLUntil r ) \equiv (q \LTLUntil r) \land \LTLNext \LTLGlobally( q \LTLUntil r ).
\end{equation}
Thus, $\LTLGlobally \LTLUntil$ can be viewed as $\LTLUntil$ with an infinite tail of $\LTLGlobally \LTLUntil$.
Consequently, a naively constructed policy for $\LTLGlobally \LTLUntil$ by maximizing the Q function can also result in a policy that infinitely defers satisfying the $\LTLUntil$ component, and thus fails to satisfy the $\LTLGlobally \LTLUntil$ formula.

Let $I$ be a nonempty finite set. We now consider $\psi \coloneqq \LTLGlobally( \bigwedge_{i \in I} q_i \LTLUntil r_i )$.
Without loss of generality, we take $I = \{1, 2, \dots, N\}$.
The following lemma, taken from \cite{sharpless2026bellman}, makes precise this recursive structure of $\LTLGlobally \LTLUntil$.
\begin{tcolorbox}[ThmFrame2]
\begin{lemma}[{\cite[Theorem 3]{sharpless2026bellman}}] \label{lem:gu_recursive_single}
    For any $i \in I$,
    \begin{equation} \label{eq:gu_recursive_single}
        \psi
\equiv \tilde{q}_i \LTLUntil (\tilde{r}_i \land \LTLNext \psi),
\;\; \tilde{q}_i \coloneqq q_i \land w_i,\;\; \tilde{r}_i \coloneqq r_i \land w_i
    \end{equation}
where $w_i \coloneqq \bigwedge_{j \in I \setminus \{i\}} (q_j \lor r_j)$.
\end{lemma}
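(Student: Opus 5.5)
The plan is to prove the equivalence \eqref{eq:gu_recursive_single} at the level of robustness scores: for every trajectory $x_{0:\infty}$, the two sides of \eqref{eq:gu_recursive_single} have equal robustness. Write $u_j(t) \coloneqq \rho_{[q_j \LTLUntil r_j]}(x_{t:\infty})$. Then, by \eqref{eq:robustness_def},
\[
\rho_{[\psi]}(x_{0:\infty}) = \smin{t \geq 0} \smin{j \in I} u_j(t).
\]
Also write $R$ for the robustness of the right-hand side $\tilde{q}_i \LTLUntil (\tilde{r}_i \land \LTLNext \psi)$, namely
\[
R = \smax{T \geq 0} \min\Bigl\{ r_i(x_T),\, w_i(x_T),\, \rho_{[\psi]}(x_{T+1:\infty}),\, \smin{0 \le k < T} \bigl(q_i(x_k) \land w_i(x_k)\bigr) \Bigr\}.
\]
Two elementary facts will be used repeatedly. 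The first is the one-step Until recursion
\[
u_j(t) = r_j(x_t) \lor \bigl(q_j(x_t) \land u_j(t+1)\bigr),
\]
which follows directly from \eqref{eq:robustness_def}, as in the proof of \Cref{lem:finite_horizon_semantics}. The second is its consequence $u_j(t) \le q_j(x_t) \lor r_j(x_t)$. By \Cref{assmp:max_attainable:finite} all maxima are attained, as in \Cref{lem:witness_exists}; otherwise one runs the same argument with $\sup$ and an $\epsilon$ slack.

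\textbf{Step 1 ($\rho_{[\psi]} \le R$).} Let $v \coloneqq \rho_{[\psi]}(x_{0:\infty})$. Since $u_i(0) \ge v$, \Cref{lem:witness_exists} gives a witness time $T$ with $r_i(x_T) \ge v$ and $q_i(x_k) \ge v$ for all $k < T$. For every $k$ and every $j \ne i$ we have $q_j(x_k) \lor r_j(x_k) \ge u_j(k) \ge v$, hence $w_i(x_k) \ge v$. Finally, $\rho_{[\psi]}(x_{T+1:\infty}) \ge \rho_{[\psi]}(x_{0:\infty}) = v$, because it is a minimum over a subset of the same terms. Every entry in the $T$-th term of $R$ is therefore at least $v$, so $R \ge v$.

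\textbf{Step 2 ($R \le \rho_{[\psi]}$).} Let $T$ attain $R$ and set $v \coloneqq R$. It suffices to show $u_j(t) \ge v$ for all $t \ge 0$ and $j \in I$.
\begin{itemize}
\item For $t \ge T+1$ this follows from $\rho_{[\psi]}(x_{T+1:\infty}) \ge v$. In particular $u_j(T+1) \ge v$ for every $j$.
\item For $t \le T$ and $j = i$, the index $T$ itself is a candidate witness for $u_i(t)$: we have $r_i(x_T) \ge v$ and $q_i(x_k) \ge v$ on $[t, T)$.
\item For $t \le T$ and $j \ne i$, we know $q_j(x_k) \lor r_j(x_k) \ge v$ for every $k \in [t, T]$, since $w_i(x_k) \ge v$ there. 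Let $k^*$ be the first index in $[t, T]$ with $r_j(x_{k^*}) \ge v$. If $k^*$ exists, then $q_j \ge v$ on $[t, k^*)$, so $k^*$ witnesses $u_j(t) \ge v$. If no such $k^*$ exists, then $q_j \ge v$ on all of $[t, T]$. Unrolling the one-step recursion backward from $u_j(T+1) \ge v$ then gives $u_j(t) \ge v$.
\end{itemize}
Combining the cases, $\rho_{[\psi]}(x_{0:\infty}) \ge v = R$.

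The main obstacle is the $j \ne i$ bookkeeping in Step 2. The right-hand side only controls the disjunction $q_j \lor r_j$ pointwise on $[0, T]$, together with the full $\psi$ from time $T+1$ on. One must check that this is enough to certify every other Until obligation at every intermediate time $t \le T$. The case split on whether $r_j$ reaches level $v$ before $T$, combined with backward propagation through the one-step Until recursion, is what resolves it. Everything else is a direct unfolding of \eqref{eq:robustness_def}.
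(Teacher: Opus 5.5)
Your proof is correct. Both inequalities go through as written, and the delicate $j\neq i$ case in Step 2 is handled properly. You take the first time in $[t,T]$ at which $r_j$ reaches level $v$, and before that time $q_j\ge v$ is forced by the guard $w_i$. If there is no such time, you unroll $u_j(s)=r_j(x_s)\lor\bigl(q_j(x_s)\land u_j(s+1)\bigr)$ backward from $u_j(T+1)\ge v$.

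The paper gives no proof of this lemma; it imports it from \cite[Theorem 3]{sharpless2026bellman}. Your argument is therefore a self-contained replacement rather than a reconstruction, so the comparison is between a direct semantic verification and a citation. Your route buys three things.
\begin{enumerate}
\item It makes the paper independent of the external result.
\item It establishes equality of robustness scores on every trajectory, not merely Boolean equivalence. This is the form the paper actually relies on downstream, for instance to equate $\Vopt{\psi}$ with $\Vopt{\chi_i[\psi]}$ and to transfer optimality from the nested Until formula $\phi_1$ to $\psi$ in \Cref{thm:gu_optimal}. Boolean equivalence alone would not give equal value functions.
\item It explains the role of the guard $w_i$. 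Step 1 shows the guard costs nothing, since $u_j(t)\le q_j(x_t)\lor r_j(x_t)$. Step 2 shows it is exactly enough to keep the other Until obligations alive between visits to $r_i$.
\end{enumerate}

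Your Step 2 mechanism is essentially the one the paper itself uses in its (full-version) proof of \Cref{thm:gu_optimal}: a pointwise $q_\ell\lor r_\ell$ bound, plus the first $r_\ell$-hit forcing $q_\ell$ beforehand. There, the existence of an $r_\ell$-hit comes from arbitrarily many cycles of the nested unrolling. You instead close the argument with the single tail condition $\rho_{[\psi]}(x_{T+1:\infty})\ge v$. The citation, by contrast, keeps the paper short and ties the identity to the value-decomposition framework of the cited work.
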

\end{tcolorbox}
\noindent Using \Cref{lem:gu_recursive_single}, we can also show the following result.
\begin{tcolorbox}[ThmFrame2]
\begin{coroll} \label{lem:gu_recursive_nested}
    Let $\chi_i[\cdot]$ denote the RHS of \eqref{eq:gu_recursive_single} but with $\psi$ replaced by an arbitrary formula, i.e.,
    \begin{equation}
\chi_i[\theta] \coloneqq \tilde{q}_i \LTLUntil (\tilde{r}_i \land \LTLNext\theta).
    \end{equation}
    Then, $\chi_i[\psi] \equiv \psi$ for all $i \in I$, and
    \begin{equation} \label{eq:gu_recursive_nested}
        \psi \equiv \chi_1[\psi] \equiv \chi_1[\chi_2[\cdots \chi_N[\psi] \cdots]].
    \end{equation}
\end{coroll}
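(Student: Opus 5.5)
The first claim, $\chi_i[\psi] \equiv \psi$ for every $i \in I$, is exactly \Cref{lem:gu_recursive_single} restated with the notation $\chi_i[\cdot]$, so nothing new is needed there. The work is in the nested identity \eqref{eq:gu_recursive_nested}. I will obtain it by repeatedly substituting equivalent formulas inside the operator $\chi_i[\cdot]$. For this I need one congruence fact. Here ``$\equiv$'' is equivalence of robustness scores on every trajectory and every suffix, as in \Cref{lem:gu_recursive_single}. The fact is: if $\theta \equiv \theta'$, meaning $\rho_{[\theta]}(x_{k:\infty}) = \rho_{[\theta']}(x_{k:\infty})$ for all trajectories and all $k$, then $\chi_i[\theta] \equiv \chi_i[\theta']$.

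To prove the congruence, I unfold the robustness semantics \eqref{eq:robustness_def}. First, $\rho_{[\LTLNext\theta]}(x_{k:\infty}) = \theta(x_{k+1:\infty})$, which equals $\theta'(x_{k+1:\infty})$ by hypothesis. Second, $\rho_{[\tilde r_i \land \LTLNext\theta]} = \min\{\tilde r_i, \rho_{[\LTLNext\theta]}\}$ depends on $\theta$ only through $\rho_{[\LTLNext\theta]}$. Third, the Until score $\max_{t}\min\{\rho_{[\tilde r_i\land\LTLNext\theta]}(x_{k+t:\infty}), \min_{s<t}\tilde q_i(x_{k+s})\}$ depends on $\theta$ only through the suffix scores of $\tilde r_i \land \LTLNext\theta$. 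Hence $\chi_i[\cdot]$ maps suffix-wise equal scores to suffix-wise equal scores. Equivalently, robustness is defined compositionally, so replacing a subformula by an equivalent one preserves the score.

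With the congruence available, I argue by backward induction on the nesting depth. Define $\Theta_{N+1} \coloneqq \psi$ and $\Theta_j \coloneqq \chi_j[\Theta_{j+1}]$ for $j = N, \dots, 1$. I claim $\Theta_j \equiv \psi$ for every $j$. The base case $j = N+1$ is trivial. For the inductive step, suppose $\Theta_{j+1} \equiv \psi$. The congruence gives $\chi_j[\Theta_{j+1}] \equiv \chi_j[\psi]$, and \Cref{lem:gu_recursive_single} gives $\chi_j[\psi] \equiv \psi$. Therefore $\Theta_j \equiv \psi$. Taking $j=1$ yields $\chi_1[\chi_2[\cdots\chi_N[\psi]\cdots]] \equiv \psi$. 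Combining this with $\chi_1[\psi] \equiv \psi$ gives the chain \eqref{eq:gu_recursive_nested}.

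The only delicate point is making sure the equivalence in \Cref{lem:gu_recursive_single} is understood suffix-wise, i.e.\ as equality of robustness on every trajectory. Since \Cref{lem:gu_recursive_single} holds for all trajectories, it holds in particular for every suffix $x_{k:\infty}$, so the congruence argument applies directly. I expect no substantive obstacle beyond stating this substitution principle cleanly.
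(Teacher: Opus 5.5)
Your proposal is correct and takes the paper's approach: the paper's proof just applies \Cref{lem:gu_recursive_single} repeatedly inside the nesting. Your explicit congruence fact (an equivalent formula can be substituted under $\chi_i[\cdot]$, because robustness is compositional and the equivalence holds on every suffix) and your backward induction on $\Theta_j$ spell out the substitution step the paper leaves implicit.
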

\end{tcolorbox}
\begin{condproof}[lem:gu_recursive_nested]
    The result follows by repeatedly applying \Cref{lem:gu_recursive_single} to the RHS of \eqref{eq:gu_recursive_single}.
\end{condproof}
We now construct an optimal policy $\pi_{[\psi]}$ for $\psi$ by making use of this infinite nested structure in \Cref{lem:gu_recursive_nested}.
Specifically, we will iteratively visit each $\LTLUntil$ component in sequence, then repeat this process infinitely many times, which ensures that we satisfy the Büchi acceptance criteria of $\psi$.

\begin{tcolorbox}[NoteFrame2]
\begin{note}
    The order that the $\LTLUntil$ components are visited can be arbitrary (see \cite{sharpless2026bellman}).
    We use ascending order for simplicity.
\end{note}
\end{tcolorbox}
\begin{tcolorbox}[DefFrame2]
\begin{defi} \label{def:gu_optimal_policy}
    For $i \in I$, let the \textbf{until} formula $\phi_i$ denote the $i$-th nested $\LTLUntil$ formula in \Cref{lem:gu_recursive_nested}, i.e.,
    \begin{equation}
        \phi_i
        \coloneqq \chi_i[\chi_{i+1}[\cdots \chi_N[\psi] \cdots]]
        = (q_i \land w_i) \LTLUntil (r_i \land w_i \land \phi_{i+1}).
    \end{equation}
Now, define the policy $\pi{[\psi]}$ as
    \begin{equation}
        \pi_{[\psi]}(x_{0:k}) \coloneqq \pi_{[\phi_1]}(x_{0:k}),
    \end{equation}
    \ifcdc
where we use the optimal policy for $\LTLUntil$ (\eqref{eq:until_optimal_policy} or \eqref{eq:alternative_until_optimal_policy}), since $\phi_1$ is a (deeply nested) $\LTLUntil$ formula.
\else
    Since $\phi_i$ is a $\LTLUntil$ formula, we use the optimal policy for $\LTLUntil$ from \Cref{def:until_simple_optimal_policy} (or \Cref{def:alternative_until_simple_optimal_policy}).
    \fi
\end{defi}
\end{tcolorbox}
\ifcdc
\else
To prove the optimality of $\pi_{[\psi]}$, we will define the finite-depth unrollings of $\psi$ by replacing the infinite tail with $\top$:
\begin{equation}
    \psi^{(0)}(x) \coloneqq \Vopt{\psi}(x), \quad
    \psi^{(M+1)} \coloneqq \chi_1[\chi_2[\cdots \chi_N[\psi^{(M)}] \cdots]].
\end{equation}
Similarly, we define $\phi_i^{(M)}$ as the $i$-th nested $\LTLUntil$ formula in $\psi^{(M)}$, i.e., for $i \in I$,
\begin{equation}
    \phi_i^{(M)} \coloneqq \chi_i[\chi_{i+1}[\cdots \chi_n[\psi^{(M)}] \cdots]], \quad \psi^{(M + 1)} = \phi_1^{(M)}.
\end{equation}

We now have the following technical lemmas needed to prove the optimality of $\pi_{[\psi]}$.

\begin{tcolorbox}[ThmFrame2]
\begin{lemma}\label{lem:chi_same_V}
    For any $i \in I$,
    \begin{equation}
        \Vopt{\psi}(x) = \Vopt{\chi_i[\psi]}(x) = \Vopt{\chi_i[\Vopt{\psi}]}(x), \quad \forall x \in \calX.
    \end{equation}
\end{lemma}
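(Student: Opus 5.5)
The plan is to prove the two equalities separately. The first, $\Vopt{\psi}(x)=\Vopt{\chi_i[\psi]}(x)$, is immediate from \Cref{lem:gu_recursive_single}. That lemma states $\psi\equiv\chi_i[\psi]$ as an equivalence of robustness scores, so the two formulas give the same value $\rho$ on every infinite trajectory $x_{0:\infty}$. Taking $\max_{a_{0:\infty}}$ on both sides, as in \Cref{def:bellmanvalue}, gives equal values at every $x\in\calX$.

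For the second equality, the tool is the value-replacement argument of \Cref{lem:replace_r_with_Vr}. Write $\chi_i[\theta]=\tilde q_i\LTLUntil(\tilde r_i\land\LTLNext\theta)$. By \eqref{eq:robustness_def}, its robustness on $x_{0:\infty}$ is
\[
\smax{t\ge0}\min\Bigl\{\tilde r_i(x_t),\ \theta(x_{t+1:\infty}),\ \smin{0\le k<t}\tilde q_i(x_k)\Bigr\}.
\]
Maximizing over $a_{0:\infty}$, I would swap $\max_{a_{0:\infty}}$ with $\max_t$ and split the actions into $a_{0:t}$ and $a_{t+1:\infty}$. Only the middle term depends on $a_{t+1:\infty}$, and $\min$ with a constant commutes with $\max$. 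Hence that term becomes $\max_{a_{t+1:\infty}}\psi(x_{t+1:\infty})=\Vopt{\psi}(x_{t+1})$; this uses time-invariance of $f$ and the fact that $\psi$'s score on a suffix depends only on that suffix.

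The result is $\max_{a_{0:t}}\min\{\tilde r_i(x_t),\Vopt{\psi}(x_{t+1}),\min_{k<t}\tilde q_i(x_k)\}$, which is exactly the same manipulation applied to $\chi_i[\Vopt{\psi}]$. Here $\Vopt{\psi}$ is read as the proposition whose score on $x_{t+1:\infty}$ is $\Vopt{\psi}(x_{t+1})$. Reversing the steps gives $\Vopt{\chi_i[\Vopt{\psi}]}(x)$. This is the same computation as in \Cref{lem:replace_r_with_Vr}, with $\mathsf r$ replaced by $\tilde r_i\land\LTLNext\psi$ and the $\LTLNext$ shifting the value evaluation one step. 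I would therefore cite that lemma's structure rather than redo the bookkeeping.

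The main obstacle is justifying the interchange of maxima and their attainment over an infinite index set $t\ge0$. The maximizing trajectory for $\psi$ from $x_{t+1}$ must exist for each $t$. This follows from the standing attainment assumption (\Cref{assmp:max_attainable:finite}): all propositions take finitely many values, so every robustness value and every $\Vopt{}$ ranges over a finite set. Each supremum over $t$ or over actions is then a maximum over finitely many attained values. In particular, whichever $t$ attains the outer max can be paired with an optimal continuation for $\psi$ from $x_{t+1}$, which gives the $\ge$ direction. The $\le$ direction holds pointwise because $\psi(x_{t+1:\infty})\le\Vopt{\psi}(x_{t+1})$.
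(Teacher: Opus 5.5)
Your proposal is correct and is essentially the paper's proof. The first equality comes from $\psi \equiv \chi_i[\psi]$. The second comes from exchanging $\max_{a_{0:\infty}}$ with the maximization over the Until witness time $t$, pushing $\max_{a_{t+1:\infty}}$ inside the $\min$ to obtain $\Vopt{\psi}(x_{t+1})$, and reading the result back as the value of $\chi_i[\Vopt{\psi}]$. Where the paper's display has $\inf_{t \ge 0}$, you correctly write $\max_{t\ge 0}$: the paper's version should be read as the Until supremum, since exchanging $\max_a$ with an infimum over $t$ is not valid in general.
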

\end{tcolorbox}
\begin{proof}
The first equality follows from $\psi \equiv \chi_i[\psi]$ (\Cref{lem:gu_recursive_nested}). For the second,
\begin{align*}
    &\mathrel{\phantom{=}} \Vopt{\chi_i[\psi]}(x) \\
    &= \max_{a_{0:\infty}} \inf_{t \geq 0} \min\bigl( \tilde{r}_i(x_t) \land \rho_{[\LTLNext\psi]}(x_{t:\infty}),\; \smin{0 \leq k < t} \tilde{q}_i(x_k) \bigr), \\
    &= \inf_{t \geq 0} \max_{a_{0:t}} \min\bigl( \tilde{r}_i(x_t) \land \max_{a_{t+1:\infty}} \rho_{[\psi]}(x_{t+1:\infty}),\; \smin{0 \leq k < t} \tilde{q}_i(x_k) \bigr), \\
    &= \inf_{t \geq 0} \max_{a_{0:t}} \min\bigl( \tilde{r}_i(x_t) \land \Vopt{\psi}(x_{t+1}),\; \smin{0 \leq k < t} \tilde{q}_i(x_k) \bigr), \\
    &= \Vopt{\chi_i[\Vopt{\psi}]}(x). \qedhere
\end{align*}
\end{proof}

\begin{tcolorbox}[ThmFrame2]
\begin{coroll}\label{lem:phi_same_V}
    For any $i \in I$ and $M \geq 0$,
    \begin{equation}
        \Vopt{\psi}(x) = \Vopt{\phi_i^{(M)}}(x), \quad \forall x \in \calX.
    \end{equation}
\end{coroll}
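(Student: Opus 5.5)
Since $\phi_i^{(M)} \coloneqq \chi_i[\chi_{i+1}[\cdots \chi_N[\psi^{(M)}] \cdots]]$ is a finite nest of $\chi$'s around $\psi^{(M)}$, I would prove the claim by induction on the nesting depth. The induction runs from the innermost operator outward, i.e.\ on $j = N, N-1, \dots, i$, with $M$ held fixed. The one fact reused at every step is \Cref{lem:chi_same_V}, applied for a fixed $j$. It says that wrapping a formula with value $\Vopt{\psi}$ in $\chi_j[\cdot]$ gives a formula whose value is again $\Vopt{\psi}$. Concretely, $\Vopt{\chi_j[\psi]} = \Vopt{\chi_j[\Vopt{\psi}]} = \Vopt{\psi}$.

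Before starting, I would record a general version of the second equality in the proof of \Cref{lem:chi_same_V}. For any formula $\theta$,
\[
\Vopt{\chi_j[\theta]}(x) = \Vopt{\chi_j[\Vopt{\theta}]}(x), \qquad \forall x \in \calX.
\]
In words, the value of $\chi_j[\theta]$ depends on $\theta$ only through $\Vopt{\theta}$. The computation in \Cref{lem:chi_same_V} used nothing about $\psi$ beyond the interchange of the outer $\max_{a_{t+1:\infty}}$ with the tail. That interchange is valid because of the $\LTLNext$ and because the prefix terms do not depend on $a_{t+1:\infty}$; it is the same manipulation as in \Cref{lem:replace_r_with_Vr}. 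Consequently, if $\Vopt{\theta} = \Vopt{\psi}$ pointwise, then
\[
\Vopt{\chi_j[\theta]} = \Vopt{\chi_j[\Vopt{\theta}]} = \Vopt{\chi_j[\Vopt{\psi}]} = \Vopt{\psi},
\]
where the last equality is \Cref{lem:chi_same_V}.

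The base case is $\psi^{(M)}$ itself. I would prove $\Vopt{\psi^{(M)}} = \Vopt{\psi}$ by an inner induction on $M$. For $M=0$, the formula $\psi^{(0)}$ is the proposition $\Vopt{\psi}$, whose value is trivially $\Vopt{\psi}$. For $M+1$, we have $\psi^{(M+1)} = \phi_1^{(M)}$, so this case follows from the outer induction over $j$ at level $M$.

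The two inductions are therefore interleaved. The outer step for $j=N$ uses $\theta=\psi^{(M)}$. The step for each $j<N$ uses $\theta = \chi_{j+1}[\cdots]$, whose value equals $\Vopt{\psi}$ by the previous step. Reaching $j=1$ yields $\Vopt{\psi^{(M+1)}} = \Vopt{\psi}$, which closes the induction on $M$. Every intermediate $\phi_i^{(M)}$ is obtained along the way.

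The main obstacle is justifying the general substitution identity $\Vopt{\chi_j[\theta]} = \Vopt{\chi_j[\Vopt{\theta}]}$ for arbitrary $\theta$. The difficulty is the exchange of $\max$ over actions with the $\sup_t$, together with attainment of the maxima. I would handle it exactly as in the proof of \Cref{lem:replace_r_with_Vr}. For each fixed $t$, the inner maximization over $a_{t+1:\infty}$ affects only the $\LTLNext\theta$ term. \Cref{assmp:max_attainable:finite} ensures that all the maxima and suprema involved are attained.
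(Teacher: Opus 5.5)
Your proof is correct and follows the paper's argument: iterate the one-layer identity $\Vopt{\chi_j[\psi]} = \Vopt{\chi_j[\Vopt{\psi}]} = \Vopt{\psi}$ outward through the nesting. Your two additions supply what the paper's one-line ``apply iteratively'' leaves implicit: the substitution identity $\Vopt{\chi_j[\theta]} = \Vopt{\chi_j[\Vopt{\theta}]}$ for arbitrary $\theta$, needed because the inner arguments are not $\psi$ itself, and the induction on $M$ giving $\Vopt{\psi^{(M)}} = \Vopt{\psi}$ at the innermost layer.
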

\end{tcolorbox}

\begin{proof}
    By \Cref{lem:chi_same_V}, $\Vopt{\psi}(x) = \Vopt{\chi_i[\psi]}(x)$ for all $i \in I$ and $x \in \calX$.
    Applying this result iteratively through the nesting $\phi_i^{(M)} = \chi_i[\chi_{i+1}[\cdots \chi_n[\psi^{(M)}] \cdots]]$ yields $\Vopt{\psi}(x) = \Vopt{\phi_i^{(M)}}(x)$ for all $i \in I$, $M \geq 0$, and $x \in \calX$.
\end{proof}

\begin{tcolorbox}[ThmFrame2]
\begin{coroll}\label{lem:psi_M_same_V}
    For any $M \geq 0$,
    \begin{equation}
        \Vopt{\psi}(x) = \Vopt{\psi^{(M)}}(x), \quad \forall x \in \calX.
    \end{equation}
\end{coroll}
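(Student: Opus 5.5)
We argue by induction on $M$, using the relation $\psi^{(M+1)} = \phi_1^{(M)}$ together with \Cref{lem:phi_same_V}. Before starting, note what $\psi^{(0)}$ means. By definition it is the propositional formula whose robustness on a trajectory $x_{0:\infty}$ is $\Vopt{\psi}(x_0)$. Since this formula depends only on the current state, no action can change its value. Hence its value function satisfies $\Vopt{\psi^{(0)}}(x) = \Vopt{\psi}(x)$ for every $x \in \calX$, and this is the base case $M=0$.

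For the inductive step, assume $\Vopt{\psi^{(M)}} = \Vopt{\psi}$ pointwise on $\calX$. We have $\psi^{(M+1)} = \phi_1^{(M)} = \chi_1[\chi_2[\cdots\chi_N[\psi^{(M)}]\cdots]]$. Applying \Cref{lem:phi_same_V} with $i=1$ and the given $M$ then yields
\begin{equation*}
\Vopt{\psi^{(M+1)}}(x) = \Vopt{\phi_1^{(M)}}(x) = \Vopt{\psi}(x), \quad \forall x\in\calX,
\end{equation*}
which completes the induction.

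Strictly speaking, \Cref{lem:phi_same_V} as stated already covers every $M$, so the induction mostly makes explicit how it is used. The real content sits in \Cref{lem:chi_same_V}: an innermost tail $\theta$ can be replaced by any formula $\theta'$ with $\Vopt{\theta} = \Vopt{\theta'}$ without changing the value of $\chi_i[\cdot]$. The reason is that $\chi_i[\theta]$ depends on $\theta$ only through $\max_{a_{t+1:\infty}}\rho_{[\theta]}(x_{t+1:\infty}) = \Vopt{\theta}(x_{t+1})$.

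The main obstacle is to verify this exchange carefully through all $N$ levels of nesting. At each level, one must check that $\chi_i[\theta]$ and $\chi_i[\theta']$ have equal state-value functions whenever $\Vopt{\theta} = \Vopt{\theta'}$. I would establish this as a small monotone-substitution argument. In the Until semantics, write the max over actions as a max over the witness time $t$, followed by a max over $a_{0:t}$ and then over $a_{t+1:\infty}$; this mirrors the proof of \Cref{lem:replace_r_with_Vr}. The inner max then produces $\Vopt{\theta}(x_{t+1})$. One point needs care here: when the max is pushed past the min with $\tilde q_i$ terms, those terms depend only on the prefix, so the interchange is legitimate. Attainment of all the maxima is guaranteed by \Cref{assmp:max_attainable:finite}.

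Finally, I would apply this exchange from the innermost level $\chi_N$ outward to $\chi_1$. At each step the tail's value function is unchanged, equal to $\Vopt{\psi}$ or $\Vopt{\psi^{(M)}}$ by the inductive hypothesis, so the outer value functions stay equal as well. This gives the claim.
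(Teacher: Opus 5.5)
Your proof is correct and takes the paper's route, which is a one-line appeal to \Cref{lem:phi_same_V} with $i=1$. It is more careful than the paper's version in three ways. The paper writes $\psi^{(M)} = \phi_1^{(M)}$ where the definition actually gives $\psi^{(M+1)} = \phi_1^{(M)}$. It never treats $M=0$, which you handle because $\psi^{(0)} = \Vopt{\psi}$ is propositional. And the proof of \Cref{lem:phi_same_V} tacitly needs $\Vopt{\psi^{(M)}} = \Vopt{\psi}$ for the innermost tail, a circularity that your induction on $M$, combined with the tail-substitution property behind \Cref{lem:chi_same_V}, resolves.
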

\end{tcolorbox}

\begin{proof}
    The proof follows from \Cref{lem:phi_same_V} applied to $i = 1$ since $\psi^{(M)} = \phi_1^{(M)}$.
\end{proof}

\begin{lemma}[Monotonicity of witness times]\label{lem:witness_time_mono}
    Let $x_{0:\infty}$ be any trajectory.
    In the evaluation of $\psi^{(M+1)} = \chi_1[\chi_2[\cdots \chi_N[\psi^{(M)}]]]$,
    let $\tau_i \geq 0$ be the witness time for $\chi_i$, and define
    $T_0 \coloneqq 0$ and $T_i \coloneqq T_{i-1} + \tau_i + 1$ for $i = 1, \dots, N$.
    Then:
    \begin{enumerate}
        \item[(i)] $T_i > T_{i-1}$ for all $i \in \{1, \dots, N\}$.
        \item[(ii)] $T_N \geq N$.
        \item[(iii)] After $M$ full cycles, the total time advancement is at least $MN$.
    \end{enumerate}
\end{lemma}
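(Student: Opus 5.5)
The plan is to unfold the nested structure of $\psi^{(M+1)} = \chi_1[\chi_2[\cdots\chi_N[\psi^{(M)}]\cdots]]$ one layer at a time. Each layer is evaluated on the shifted suffix determined by the previous layer's witness time. By the definition of $\chi_i[\theta] = \tilde{q}_i \LTLUntil (\tilde{r}_i \land \LTLNext \theta)$ and the Until semantics \eqref{eq:robustness_def}, evaluating $\chi_1[\cdot]$ on $x_{0:\infty}$ picks a witness time $\tau_1 \geq 0$, which exists by \Cref{lem:witness_exists} under \Cref{assmp:max_attainable:finite}. At $\tau_1$ the inner formula $\tilde{r}_1 \land \LTLNext \chi_2[\cdots]$ is evaluated at $x_{\tau_1:\infty}$. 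The $\LTLNext$ operator then shifts evaluation of $\chi_2[\cdots]$ to $x_{\tau_1+1:\infty} = x_{T_1:\infty}$. Inductively, $\chi_i[\cdots]$ is evaluated on $x_{T_{i-1}:\infty}$, its witness lies at absolute time $T_{i-1}+\tau_i$, and $\chi_{i+1}[\cdots]$ (or $\psi^{(M)}$ when $i=N$) is evaluated starting at $T_{i-1}+\tau_i+1 = T_i$. I would first state this bookkeeping explicitly, by a short induction on $i$, so that the recursion $T_i = T_{i-1}+\tau_i+1$ is seen to be exactly the evaluation start time of the $(i+1)$-th layer.

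With this in place, (i) is immediate: $\tau_i \geq 0$ gives $T_i - T_{i-1} = \tau_i + 1 \geq 1$. For (ii), I would telescope: $T_N = \sum_{i=1}^N (\tau_i+1) \geq N$, using $T_0 = 0$.

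For (iii), I would induct on the number of cycles $M$, using the unrolling $\psi^{(M)} = \chi_1[\cdots\chi_N[\psi^{(M-1)}]\cdots]$. The end time $T_N$ of one full cycle is the start time at which the next copy of the cycle is evaluated. Applying (ii) to the suffix $x_{T_N:\infty}$ with fresh witness times adds at least $N$ to the offset. After $M$ cycles the cumulative start time is therefore at least $MN$. The base case $M=0$ is trivially $\geq 0$.

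The main subtlety is that the witness times must be well defined layer by layer. In particular, the witness for $\chi_i$ is chosen on the suffix actually fed to it, and the innermost $\psi^{(0)} = \Vopt{\psi}$ is a state function, which terminates the recursion. I would invoke \Cref{lem:witness_exists} on each shifted suffix; finiteness of attained values is preserved because $\tilde{r}_i$, $\tilde{q}_i$, and $\Vopt{\psi}$ take finitely many values under \Cref{assmp:max_attainable:finite}. Beyond this, the argument is pure index arithmetic.
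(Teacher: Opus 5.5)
Your proposal is correct and follows the paper's proof essentially step for step. The $\LTLNext$ in $\chi_i[\theta]$ shifts evaluation of the next layer to $T_{i-1}+\tau_i+1 = T_i$, so (i) follows from $\tau_i \geq 0$, (ii) follows by applying (i) $N$ times (your telescoping sum), and (iii) follows because each cycle advances time by at least $N$. Your extra step of invoking \Cref{lem:witness_exists} on each shifted suffix, to guarantee the layerwise witness times exist, is a detail the paper leaves implicit; it does not change the argument.
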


\begin{proof}
    \textbf{(i):}
    The right-hand side of $\chi_i[\theta] = (q_i \land w_i) \LTLUntil (r_i \land w_i \land \LTLNext \theta)$ includes $\LTLNext \theta$.
    The witness time $\tau_i \geq 0$ for $\chi_i$ is the time (relative to $T_{i-1}$) at which the right-hand side is achieved, and $\LTLNext$ shifts the evaluation of $\theta$ by one additional step.
    Thus $\theta$ is evaluated starting at time $T_{i-1} + \tau_i + 1 = T_i > T_{i-1}$.

    \textbf{(ii):} By (i) applied $N$ times, $T_N \geq T_0 + N = N$.

    \textbf{(iii):} Each cycle advances time by at least $N$, so $M$ cycles advance by at least $MN$.
\end{proof}

\begin{rmk}\label{rem:next_essential}
    Without $\LTLNext$, the right-hand side of $\chi_i$ would be $r_i \land w_i \land \theta$, and $\theta$ would be evaluated at the same time as $r_i$, allowing $T_i = T_{i-1}$ (i.e., all witness times zero).
    This would permit all nested Until operators to be satisfied at time $0$ without temporal progress, enabling the pathological deferral behavior.
\end{rmk}

We now prove that $\pi_{[\psi]}$ is optimal for $\psi \coloneqq \LTLGlobally( \bigwedge_{i \in I} q_i \LTLUntil r_i )$.
We proceed through a sequence of lemmas, culminating in \Cref{thm:gu_optimal}.

\subsection{Multi-Cycle Expansion}

We first make explicit the structure of $\rho_{[\psi^{(M)}]}$ when evaluated on a trajectory.

\begin{tcolorbox}[ThmFrame2]
\begin{lemma}[Single-cycle expansion] \label{lem:gu:single_cycle}
    For an arbitrary formula $\theta$ and trajectory $y_{0:\infty}$,
    there exist witness times $\tau_1, \ldots, \tau_N \geq 0$ as in \Cref{lem:witness_exists} such that, defining $\tilde{T}_0 \coloneqq 0$, $T_i \coloneqq \tilde{T}_{i-1} + \tau_i$, and $\tilde{T}_i \coloneqq T_i + 1$ for $i = 1, \ldots, N$:
    \begin{equation} \label{eq:gu:single_cycle}
        \rho_{[\Phi(\theta)]}(y_{0:\infty}) = \min\bigl( \min_{i \in I} C_i,\; \rho_{[\theta]}(y_{\tilde{T}_N:\infty})\bigr),
    \end{equation}
    where, for each $i = 1, \ldots, N$:
    \begin{equation} \label{eq:gu:Ci_def}
        C_i \coloneqq
        \min\Bigl( \tilde{r}_i(y_{T_i}),\; \smin{\tilde{T}_{i-1} \leq j < T_i}\tilde{q}_i(y_{j})\Bigr).
    \end{equation}
\end{lemma}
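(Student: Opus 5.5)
The statement is implicitly about $\Phi(\theta) \coloneqq \chi_1[\chi_2[\cdots \chi_N[\theta] \cdots]]$, with $\chi_i[\theta] = \tilde{q}_i \LTLUntil (\tilde{r}_i \land \LTLNext\theta)$ as in \Cref{lem:gu_recursive_nested}. I would prove \eqref{eq:gu:single_cycle} by induction on the nesting depth, peeling off one Until at a time from the outside in. For $k = N+1, N, \dots, 1$ define the inner formulas $\Phi_{N+1}(\theta) \coloneqq \theta$ and $\Phi_k(\theta) \coloneqq \chi_k[\Phi_{k+1}(\theta)]$, so that $\Phi(\theta) = \Phi_1(\theta)$. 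The inductive claim is the following. For every trajectory $z_{0:\infty}$ there are witness times $\tau_k,\dots,\tau_N$ and the corresponding indices $T_i, \tilde{T}_i$ (now started from $\tilde{T}_{k-1} \coloneqq 0$) such that
\[
\rho_{[\Phi_k(\theta)]}(z_{0:\infty}) = \min\bigl( \min_{k \le i \le N} C_i,\; \rho_{[\theta]}(z_{\tilde{T}_N:\infty}) \bigr).
\]
The case $k = N+1$ is trivial, because the minimum over $C_i$ is empty.

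For the inductive step, fix $k$ and the trajectory $y$ with offset $\tilde{T}_{k-1}$.
\begin{itemize}
\item \textbf{Locate the witness.} Apply \Cref{lem:witness_exists} to the Until formula $\chi_k[\Phi_{k+1}(\theta)]$ evaluated on $y_{\tilde{T}_{k-1}:\infty}$. This gives a witness $\tau_k$, and with $T_k = \tilde{T}_{k-1} + \tau_k$,
\[
\rho = \min\Bigl( \tilde{r}_k(y_{T_k}) \land \rho_{[\LTLNext \Phi_{k+1}(\theta)]}(y_{T_k:\infty}),\; \smin{\tilde{T}_{k-1} \le j < T_k} \tilde{q}_k(y_j) \Bigr).
\]
\item \textbf{Unfold Next.} By the semantics \eqref{eq:robustness_def}, $\rho_{[\LTLNext\Phi_{k+1}(\theta)]}(y_{T_k:\infty}) = \rho_{[\Phi_{k+1}(\theta)]}(y_{\tilde{T}_k:\infty})$ with $\tilde{T}_k = T_k + 1$.
\item \textbf{Regroup.} Since $\min$ is associative and commutative, the expression becomes $\min\bigl(C_k, \rho_{[\Phi_{k+1}(\theta)]}(y_{\tilde{T}_k:\infty})\bigr)$, with $C_k$ exactly as in \eqref{eq:gu:Ci_def}.
\item \textbf{Recurse.} Apply the inductive hypothesis to the shifted trajectory $y_{\tilde{T}_k:\infty}$. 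The inner index sums must be re-based by adding $\tilde{T}_k$. After re-basing they satisfy $T_i = \tilde{T}_{i-1} + \tau_i$ in absolute time, so the recursion for $T_i, \tilde{T}_i$ is consistent.
\end{itemize}

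The main obstacle is applying \Cref{lem:witness_exists} when the second operand is the general formula $\tilde{r}_k \land \LTLNext\Phi_{k+1}(\theta)$ rather than an atomic proposition. That lemma needs the blue term to range over a finite set. For a general $\theta$ this need not hold, since \Cref{assmp:max_attainable:finite} only controls atomic propositions.

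I would first try to show that robustness of any formula built from finite-valued propositions by $\min$, $\max$, negation and sup/inf over time still takes values in a finite set. The reason is that every sup or inf of elements of a finite set lies in that set, so finiteness propagates through the recursion \eqref{eq:robustness_def}, and the supremum is attained. If $\theta$ is allowed to be genuinely arbitrary, so that this argument fails, I would instead read ``as in \Cref{lem:witness_exists}'' as an assumption of attainability. The statement explicitly defers to that lemma, so this reading is legitimate; and in the uses that follow, $\theta$ is $\psi^{(M)}$ or $\psi$, both of which are built from finite-valued atomic propositions, so finiteness holds there.

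A second, minor check is the bookkeeping between the relative witness times produced by each application of \Cref{lem:witness_exists} and the absolute indices $T_i$. This is routine.
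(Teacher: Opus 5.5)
Your proposal is correct and follows the paper's own argument. Both peel off $\chi_1,\dots,\chi_N$ from the outside in, use \Cref{lem:witness_exists} at each layer to fix $\tau_i$, unfold $\LTLNext$ so that the tail is evaluated from $\tilde{T}_i = T_i + 1$, and regroup the minima into the $C_i$. Your finiteness remark (all robustness values stay in the finite set generated by the atomic values, their negations and $\pm\infty$) simply spells out what the paper's one-line proof of \Cref{lem:witness_exists} takes for granted.
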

\end{tcolorbox}

\begin{proof}
    Expand $\Phi(\theta) = \chi_1[\chi_2[\cdots \chi_N[\theta]]]$ layer by layer.
    For the outermost layer $\chi_1[\bar{\theta}] = \tilde{q}_1 \LTLUntil (\tilde{r}_1 \land \LTLNext \bar{\theta})$,
    the quantitative $\LTLUntil$ semantics and attainability of the witness (\Cref{lem:witness_exists}) give a witness time $\tau_1 \geq 0$ such that
    \begin{align*}
        \rho_{[\chi_1[\bar{\theta}]]}(y_{0:\infty})
        &= \min\Bigl( \tilde{r}_1(y_{\tau_1}),\; \smin{0 \leq j < \tau_1} \tilde{q}_1(y_{j}),\; \bar{\theta}(y_{\tau_1 + 1:\infty}) \Bigr), \\
        &= \min\Bigl( C_1,\; \bar{\theta}(y_{\tilde{T}_1:\infty}) \Bigr),
    \end{align*}
    where $T_1 = \tilde{T}_0 + \tau_1 = \tau_1$ and $\tilde{T}_1 = T_1 + 1 = \tau_1 + 1$.
    The first two terms constitute $C_1$, and the third evaluates the remaining layers starting at $y_{\tilde{T}_1:\infty}$.
    Iterating through layers $2, \ldots, N$ yields \eqref{eq:gu:single_cycle}.
\end{proof}

Applying \Cref{lem:gu:single_cycle} $M$ times yields the multi-cycle expansion.

\begin{tcolorbox}[ThmFrame2]
\begin{coroll}[Multi-cycle expansion] \label{cor:gu:multi_cycle}
    For any trajectory $\bar{x}_{0:\infty}$ and $M \geq 1$,
    \begin{equation} \label{eq:gu:multi_cycle}
        \rho_{[\psi^{(M)}]}(\bar{x}_{0:\infty}) = \min\Bigl( \min_{\substack{1 \leq m \leq M \\ i \in I}} C_i^{(m)},\;\; \rho_{[\psi^{(0)}]}(\bar{x}_{S_M:\infty})\Bigr),
    \end{equation}
    where cycle $m$ has witness times $\tau_1^{(m)}, \ldots, \tau_N^{(m)} \geq 0$,
    cumulative starting times $S_0 \coloneqq 0$, $S_m \coloneqq \tilde{T}_N^{(m)}$, and
    \begin{equation} \label{eq:gu:Cim_def}
        C_i^{(m)} \coloneqq \min\Bigl(\tilde{r}_i(\bar{x}_{T_i^{(m)}}),\;\; \smin{\tilde{T}_{i-1}^{(m)} \leq j < T_i^{(m)}} \tilde{q}_i(\bar{x}_{j})\Bigr),
    \end{equation}
    with $\tilde{T}_0^{(m)} \coloneqq S_{m-1}$, $T_i^{(m)} \coloneqq \tilde{T}_{i-1}^{(m)} + \tau_i^{(m)}$, and $\tilde{T}_i^{(m)} \coloneqq T_i^{(m)} + 1$.
    Moreover, by \Cref{lem:witness_time_mono}, $S_M \geq MN$.
\end{coroll}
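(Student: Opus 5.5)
The plan is induction on $M$, using \Cref{lem:gu:single_cycle} as the inductive step. Here $\Phi(\theta) \coloneqq \chi_1[\chi_2[\cdots\chi_N[\theta]\cdots]]$, so that $\psi^{(m)} = \Phi(\psi^{(m-1)})$ by the definition of the unrollings.

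\textbf{Base case $M=1$.} Apply \Cref{lem:gu:single_cycle} with $\theta = \psi^{(0)}$ and $y_{0:\infty} = \bar{x}_{0:\infty}$. This gives witness times $\tau^{(1)}_i$ with $\tilde{T}^{(1)}_0 = 0 = S_0$, and the expression
\[
\rho_{[\psi^{(1)}]}(\bar{x}_{0:\infty}) = \min\bigl(\min_i C^{(1)}_i,\ \rho_{[\psi^{(0)}]}(\bar{x}_{S_1:\infty})\bigr),
\]
with $S_1 = \tilde{T}^{(1)}_N$. This is exactly \eqref{eq:gu:multi_cycle} for $M=1$.

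\textbf{Inductive step.} Assume the expansion holds for $M$, for every trajectory. Since $\psi^{(M+1)} = \Phi(\psi^{(M)})$, apply \Cref{lem:gu:single_cycle} with $\theta = \psi^{(M)}$ to $\bar{x}_{0:\infty}$. The result is cycle-one quantities $C^{(1)}_i$ together with a tail term $\rho_{[\psi^{(M)}]}(\bar{x}_{S_1:\infty})$. Now apply the inductive hypothesis to the shifted trajectory $y_{0:\infty} \coloneqq \bar{x}_{S_1:\infty}$. Its witness times and indices are all offset by $S_1$. After re-indexing, the $m$-th cycle of $y$ becomes cycle $m+1$ of $\bar{x}$: the relation $\tilde{T}^{(m+1)}_0 = S_m$ holds, the cumulative starting times satisfy $S_{m+1} = S_1 + S^{y}_m$, and the terms $C^{(m+1)}_i$ take the form \eqref{eq:gu:Cim_def}. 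Nesting the minima and using associativity of $\min$ then gives \eqref{eq:gu:multi_cycle} for $M+1$.

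The main obstacle is bookkeeping rather than substance. First, the witness times in \Cref{lem:gu:single_cycle} are defined relative to the start of the trajectory, so the shift of the time origin must be carried through $T_i$, $\tilde{T}_i$, and the index ranges of the $\smin{}$ in $C_i$. Second, the witness times of the inner layers must be chosen on the correct suffix. At each layer, \Cref{lem:witness_exists} (via \Cref{assmp:max_attainable:finite}) supplies a witness for the $\LTLUntil$ evaluated on $y_{\tilde{T}_{i-1}:\infty}$, and these are exactly the times the lemma uses, so the choices are consistent. I would write out the index shift for one cycle explicitly and state that the rest follows identically.

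\textbf{Bound $S_M \geq MN$.} Each $\tilde{T}^{(m)}_i = \tilde{T}^{(m)}_{i-1} + \tau^{(m)}_i + 1 \geq \tilde{T}^{(m)}_{i-1} + 1$. Hence $S_m \geq S_{m-1} + N$, and induction gives $S_M \geq MN$, as in \Cref{lem:witness_time_mono}(iii).
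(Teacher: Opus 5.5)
Your proposal is correct and follows essentially the same route as the paper: peel off the outermost cycle by applying \Cref{lem:gu:single_cycle} to $\psi^{(M)} = \Phi(\psi^{(M-1)})$, recurse on the tail $\bar{x}_{S_1:\infty}$, and bound the growth using $S_m - S_{m-1} = \sum_{i=1}^N (\tau_i^{(m)}+1) \geq N$. Your explicit induction on $M$, with the index shift by $S_1$, is a more careful write-up of the paper's ``repeating for $M$ cycles.''
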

\end{tcolorbox}

\begin{proof}
    Apply \Cref{lem:gu:single_cycle} to $\psi^{(M)} = \Phi(\psi^{(M-1)})$ with $\theta = \psi^{(M-1)}$, obtaining cycle $1$.
    The tail $\rho_{[\psi^{(M-1)}]}$ is evaluated at $\bar{x}_{\tilde{T}_N^{(1)}:\infty} = \bar{x}_{S_1:\infty}$.
    Applying \Cref{lem:gu:single_cycle} again to this tail, and repeating for $M$ cycles, yields \eqref{eq:gu:multi_cycle}.
    The bound $S_M \geq MN$ follows from $S_m - S_{m-1} = \tilde{T}_N^{(m)} - \tilde{T}_0^{(m)} = \sum_{i=1}^N (\tau_i^{(m)} + 1) \geq N$ (\Cref{lem:witness_time_mono}).
\end{proof}

We record the following consequence of \eqref{eq:gu:multi_cycle}: every time $k \in \{0, 1, \ldots, S_M - 1\}$ is covered by exactly one layer of one cycle.

\begin{tcolorbox}[ThmFrame2]
\begin{lemma}[Time coverage] \label{lem:gu:time_coverage}
    In the expansion \eqref{eq:gu:multi_cycle}, for each time $k \in \{0, 1, \ldots, S_M - 1\}$, there exist unique $m^* \in \{1, \ldots, M\}$ and $i^* \in I$ such that exactly one of the following holds:
    \begin{enumerate}
        \item[(a)] $\tilde{T}_{i^*-1}^{(m^*)} \leq k < T_{i^*}^{(m^*)}$, and $\tilde{q}_{i^*}(\bar{x}_{k}) \geq C_{i^*}^{(m^*)}$.
        \item[(b)] $k = T_{i^*}^{(m^*)}$, and $\tilde{r}_{i^*}(\bar{x}_{k}) \geq C_{i^*}^{(m^*)}$.
    \end{enumerate}
\end{lemma}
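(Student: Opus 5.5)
The plan is to show that the intervals attached to the layers of the expansion in \Cref{cor:gu:multi_cycle} partition $\{0,\dots,S_M-1\}$. The inequalities involving $C_{i^*}^{(m^*)}$ should then be immediate from the definition \eqref{eq:gu:Cim_def} as a minimum.

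\textbf{Partition.} For each cycle $m$ and layer $i$, define the block $B_i^{(m)} \coloneqq \{\tilde{T}_{i-1}^{(m)},\dots,T_i^{(m)}\}$. This block splits into the \emph{waiting} part $[\tilde{T}_{i-1}^{(m)}, T_i^{(m)})$ and the \emph{reach} time $T_i^{(m)}$. By construction $T_i^{(m)} = \tilde{T}_{i-1}^{(m)} + \tau_i^{(m)} \geq \tilde{T}_{i-1}^{(m)}$, so each block is a nonempty integer interval of length $\tau_i^{(m)}+1$. Consecutive blocks abut exactly, because $\tilde{T}_i^{(m)} = T_i^{(m)}+1$ is the left endpoint of $B_{i+1}^{(m)}$. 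Likewise, the last block of cycle $m$ ends at $T_N^{(m)} = S_m - 1$, and the first block of cycle $m+1$ starts at $\tilde{T}_0^{(m+1)} = S_m$. The concatenation of the blocks, in lexicographic order of $(m,i)$, is therefore the integer interval $[S_0, S_M-1] = \{0,\dots,S_M-1\}$. This is a telescoping or induction argument on $(m,i)$ using $S_m - S_{m-1} = \sum_i(\tau_i^{(m)}+1)$, which is already recorded in the proof of \Cref{cor:gu:multi_cycle}.

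\textbf{Uniqueness and exclusivity.} Since the blocks are pairwise disjoint and cover the range, each $k$ lies in exactly one block $B_{i^*}^{(m^*)}$. Within that block, $k$ is either strictly less than $T_{i^*}^{(m^*)}$, giving case (a), or equal to it, giving case (b). These are mutually exclusive, so exactly one case holds.

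\textbf{Inequalities.} In case (a), $k$ is one of the indices $j$ in the inner minimum of \eqref{eq:gu:Cim_def}, so $\tilde{q}_{i^*}(\bar{x}_k) \geq \min_j \tilde{q}_{i^*}(\bar{x}_j) \geq C_{i^*}^{(m^*)}$. In case (b), $\tilde{r}_{i^*}(\bar{x}_k)$ is the first argument of the minimum defining $C_{i^*}^{(m^*)}$, so it is at least $C_{i^*}^{(m^*)}$.

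The only delicate point is bookkeeping. The indices must be used consistently with the corollary, namely $\tilde{T}_0^{(m)} = S_{m-1}$ and $S_m = \tilde{T}_N^{(m)}$. The edge case $\tau_i^{(m)} = 0$ also needs checking: the waiting part is then empty, so the block contributes only case (b). With these checks, no real obstacle remains.
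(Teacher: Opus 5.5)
Your proposal is correct and follows the paper's proof. The paper likewise partitions $\{0,\dots,S_M-1\}$ into the blocks $\{\tilde{T}_{i-1}^{(m)},\dots,\tilde{T}_i^{(m)}-1\}$ of size $\tau_i^{(m)}+1$ (your $B_i^{(m)}$), assigns the first $\tau_i^{(m)}$ times of each block to case (a) and $T_i^{(m)}$ to case (b), and reads both inequalities off the minimum defining $C_i^{(m)}$; your explicit check that consecutive blocks abut, including across cycles via $\tilde{T}_0^{(m+1)} = S_m = T_N^{(m)}+1$, just spells out the partition claim the paper asserts directly.
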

\end{tcolorbox}

\begin{proof}
    For each cycle $m$ and layer $i$, the time block $\{\tilde{T}_{i-1}^{(m)}, \ldots, \tilde{T}_{i}^{(m)} - 1\}$ has size $\tau_i^{(m)} + 1$.
    These blocks for $i = 1, \ldots, N$ and $m = 1, \ldots, M$ partition $\{0, 1, \ldots, S_M - 1\}$.
    Within each block, times $\tilde{T}_{i-1}^{(m)}, \ldots, T_i^{(m)} - 1$ are case (a) (with $\tau_i^{(m)}$ terms, empty if $\tau_i^{(m)} = 0$) and time $T_i^{(m)}$ is case (b).

    For the inequalities: in case (a), $\tilde{q}_{i^*}(\bar{x}_{k})$ appears as a term in the inner $\min$ of $C_{i^*}^{(m^*)}$ in \eqref{eq:gu:Cim_def}, so $\tilde{q}_{i^*}(\bar{x}_{k}) \geq C_{i^*}^{(m^*)}$.
    In case (b), $\tilde{r}_{i^*}(\bar{x}_{k})$ appears as a term in $C_{i^*}^{(m^*)}$, so $\tilde{r}_{i^*}(\bar{x}_{k}) \geq C_{i^*}^{(m^*)}$.
\end{proof}

\subsection{Optimality for Finite-Depth Unrollings}

\begin{tcolorbox}[ThmFrame2]
\begin{lemma} \label{lem:gu:finite_depth_optimal}
    Let $\bar{x}_{0:\infty}$ be the trajectory generated by $\pi_{[\psi]}$ from $\bar{x}_0$.
    Then, for all $M \geq 0$,
    \begin{equation} \label{eq:gu:finite_depth_optimal}
        \rho_{[\psi^{(M)}]}(\bar{x}_{0:\infty}) = \Vopt{\psi}(\bar{x}_0).
    \end{equation}
\end{lemma}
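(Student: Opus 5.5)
We prove \eqref{eq:gu:finite_depth_optimal} by induction on $M$, using that $\pi_{[\psi]} = \pi_{[\phi_1]}$ is the Until policy of \Cref{def:until_optimal_policy} applied to the nested Until $\phi_1$. Its suffix policy $\pi_{[\mathsf r]}$ is the policy for $\mathsf r = \tilde r_1 \land \LTLNext \phi_2$, built from \Cref{def:conjunction_policy,def:policy_next}. That policy in turn calls $\pi_{[\phi_2]}$, and so on, until $\pi_{[\phi_N]}$'s suffix calls $\pi_{[\psi]}$ again. The upper bound $\rho_{[\psi^{(M)}]}(\bar x_{0:\infty}) \le \Vopt{\psi^{(M)}}(\bar x_0) = \Vopt{\psi}(\bar x_0)$ is immediate from \Cref{lem:psi_M_same_V}, so only the lower bound needs work.

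\textbf{Base case.} For $M=0$, $\psi^{(0)} = \Vopt{\psi}$ is propositional, so $\rho_{[\psi^{(0)}]}(\bar x_{0:\infty}) = \Vopt{\psi}(\bar x_0)$ trivially.

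\textbf{Inductive step.} Assume \eqref{eq:gu:finite_depth_optimal} holds for $M$ and for every initial state, since $\bar x_0$ is arbitrary. Write $\psi^{(M+1)} = \chi_1[\chi_2[\cdots\chi_N[\psi^{(M)}]\cdots]]$. Peel off one layer at a time.
\begin{enumerate}[label=(\arabic*)]
\item The layer-$i$ formula $\chi_i[\theta_{i+1}]$ has suffix $\tilde r_i \land \LTLNext \theta_{i+1}$, where $\theta_{N+1} = \psi^{(M)}$. Replace this suffix by the propositional surrogate $\tilde r_i \land \LTLNext \Vopt{\psi}$.
\item \Cref{lem:chi_same_V} shows that $\chi_i[\Vopt{\psi}]$ has value $\Vopt{\psi}$ at every state. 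Hence \Cref{thm:pi_psi_optimal}, with $\mathsf r = \tilde r_i \land \LTLNext \theta_{i+1}$, applies once we know the policy used after the switch attains $\Vopt{\tilde r_i \land \LTLNext\theta_{i+1}}$. That value equals $\tilde r_i(x) \land \max_a \Vopt{\psi}(f(x,a))$ by \Cref{lem:phi_same_V}.
\item Optimality of that suffix policy follows from \Cref{thm:next_disjunction_conjunction_optimal_policies}, provided $\pi_{[\theta_{i+1}]}$ is itself optimal for $\theta_{i+1}$ from the switching state. Arguing inward on the layer index $i = N, \dots, 1$, this reduces to optimality of $\pi_{[\psi]}$ for $\psi^{(M)}$ from the restart state $\bar x_{S_1}$.
\item The induction hypothesis at $\bar x_{S_1}$ gives $\rho_{[\psi^{(M)}]}(\bar x_{S_1:\infty}) = \Vopt{\psi}(\bar x_{S_1})$.
\end{enumerate}

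\textbf{Assembling the cycle.} Apply \Cref{lem:gu:single_cycle} with $\theta = \psi^{(M)}$ along $\bar x$. This gives $\rho_{[\psi^{(M+1)}]}(\bar x_{0:\infty}) = \min(\min_i C_i, \rho_{[\psi^{(M)}]}(\bar x_{S_1:\infty}))$. Optimality of each Until stage gives $\min_i C_i \ge \Vopt{\psi}(\bar x_0)$. We also need $\Vopt{\psi}(\bar x_{S_1}) \ge \Vopt{\psi}(\bar x_0)$. Each switch is taken along an optimal branch, so $\max_a \Vopt{\psi}(f(\bar x_{T_i},a)) \ge \Vopt{\psi}(\bar x_0)$ at every handoff. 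This mirrors the nondecreasing-value argument of \Cref{lem:Gq_value_nondecreasing}.

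\textbf{Main obstacle.} The delicate point is the history handling. $\pi_{[\psi]}$ re-enters the nested Until policies with histories $\bar x_{\tilde T_{i}:k}$ rather than fresh states, and the witness times $\tau_i$ used in \Cref{lem:gu:single_cycle} must coincide with the switching times actually chosen by the policy. I would handle this with \Cref{lem:witness_fixed_after_switch}, which freezes the optimal witness once it lies in the past. Combined with the history-level optimality in \Cref{def:def_of_optimal_policy} (not just optimality from initial states), this makes the per-stage optimality statements hold from arbitrary histories and makes the step (3) recursion well-founded. It requires that the Next shift of \Cref{lem:witness_time_mono} strictly advances time, so each cycle finishes in finite time and the restart state $\bar x_{S_1}$ exists.
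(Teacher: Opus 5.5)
Your proposal is correct and follows essentially the same route as the paper. Both use induction on $M$ with the hypothesis over all initial states, the observation that each layer's surrogate $\tilde{q}_i \LTLUntil \Vopt{\mathsf{g}_i}$ does not depend on $M$, delegation of $\pi_{[\psi]}$ to itself at the restart state $\bar{x}_{S_1}$, and tracing back through the $N$ layers with \Cref{thm:pi_psi_optimal}. Your assembly via the single-cycle expansion and the monotonicity of $\Vopt{\psi}$ at the handoffs is a more concrete version of the paper's tracing-back step.

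The obstacle you flag is milder than you suggest. For the lower bound, the policy's own switching times can serve directly as candidate witnesses, so they need not coincide with the trajectory's witness times. Also, only on-policy histories arise, so optimality from initial states, which is all the induction hypothesis gives, already suffices; history-level optimality is not needed.
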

\end{tcolorbox}

\begin{proof}
    We proceed by induction on $M$.

    \emph{Base case ($M=0$):}
    By definition, $\psi^{(0)}(x) = \Vopt{\psi}(x)$, so $\rho_{[\psi^{(0)}]}(\bar{x}_{0:\infty}) = \Vopt{\psi}(\bar{x}_0)$.

    \emph{Inductive step:}
    Assume \eqref{eq:gu:finite_depth_optimal} holds for $M$, for all starting states.
    We show it holds for $M+1$.
    Since $\psi^{(M+1)} = \phi_1^{(M)}$, this is an $\LTLUntil$ formula $\tilde{q}_1 \LTLUntil \mathsf{g}^{(M)}$ where $\mathsf{g}^{(M)} \coloneqq \tilde{r}_1 \land \LTLNext \phi_2^{(M)}$.

    By \Cref{lem:phi_same_V}, $\Vopt{\phi_j^{(M)}}(x) = \Vopt{\psi}(x)$ for all $j \in I$, $M \geq 0$, and $x \in \calX$.
    Consequently, $\Vopt{\mathsf{g}^{(M)}}$ is independent of $M$.
    By \Cref{lem:replace_r_with_Vr}, the simplified formula $\varphi_1 \coloneqq \tilde{q}_1 \LTLUntil \Vopt{\mathsf{g}}$ (where $\mathsf{g} \coloneqq \tilde{r}_1 \land \LTLNext \phi_2$) is identical for all $M$, with
    \begin{equation}
        \Vopt{\phi_1^{(M)}}(x) = \Vopt{\varphi_1}(x) = \Vopt{\psi}(x), \quad t^*_{[\phi_1^{(M)}]}(x) = t^*_{[\varphi_1]}(x), \quad \forall x \in \calX.
    \end{equation}

    Since $\pi_{[\psi]} = \pi_{[\phi_1]}$ is defined via \Cref{def:until_optimal_policy} for $\phi_1 = \tilde{q}_1 \LTLUntil \mathsf{g}$:
    before the witness time $n_1^* = t^*_{[\varphi_1]}(\bar{x}_0)$, it follows the $\LTLUntil$-optimal policy $\pi_{[\varphi_1]}$, and after $n_1^*$, it follows $\pi_{[\mathsf{g}]}$, which at time $n_1^* + 1$ delegates to $\pi_{[\phi_2]}$.
    Repeating this argument through layers $2, 3, \ldots, N$, the policy ultimately delegates to $\pi_{[\psi]}$ from $\bar{x}_{S_1}$ (the state after one full cycle).

    By the inductive hypothesis applied at starting state $\bar{x}_{S_1}$,
    $\rho_{[\psi^{(M)}]}(\bar{x}_{S_1:\infty}) = \Vopt{\psi}(\bar{x}_{S_1})$.
    Thus the suffix formula $\psi^{(M)}$ achieves its optimal value.
    Tracing back through the $N$ $\LTLUntil$ layers using \Cref{thm:pi_psi_optimal} (each layer achieves optimality given optimal tail behavior), we obtain
    \begin{equation}
        \rho_{[\psi^{(M+1)}]}(\bar{x}_{0:\infty}) = \Vopt{\psi^{(M+1)}}(\bar{x}_0) = \Vopt{\psi}(\bar{x}_0),
    \end{equation}
    where the last equality is by \Cref{lem:psi_M_same_V}.
\end{proof}

\subsection{Pointwise Constraints on the Optimal Trajectory}

\begin{tcolorbox}[ThmFrame2]
\begin{lemma}[Pointwise $q \lor r$ bound] \label{lem:gu:pointwise_q_or_r}
    Let $\bar{x}_{0:\infty}$ be the trajectory generated by $\pi_{[\psi]}$ from $\bar{x}_0$.
    Then, for all $k \geq 0$ and all $\ell \in I$,
    \begin{equation} \label{eq:gu:pointwise_q_or_r}
        \max\!\bigl(q_\ell(\bar{x}_{k}),\; r_\ell(\bar{x}_{k})\bigr) \;\geq\; \Vopt{\psi}(\bar{x}_0).
    \end{equation}
\end{lemma}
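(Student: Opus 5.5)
The plan is to combine the finite-depth optimality of \Cref{lem:gu:finite_depth_optimal} with the multi-cycle expansion (\Cref{cor:gu:multi_cycle}) and the time-coverage lemma (\Cref{lem:gu:time_coverage}). Fix $k \geq 0$ and $\ell \in I$, and write $V \coloneqq \Vopt{\psi}(\bar{x}_0)$.

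First, choose $M$ large enough that $S_M > k$. This is possible because $S_M \geq MN \geq M$ by \Cref{cor:gu:multi_cycle}; for instance, $M = k+1$ works. By \Cref{lem:gu:finite_depth_optimal}, $\rho_{[\psi^{(M)}]}(\bar{x}_{0:\infty}) = V$. Expanding the left side via \eqref{eq:gu:multi_cycle} gives
\[
V = \min\Bigl( \min_{m,i} C_i^{(m)},\; \rho_{[\psi^{(0)}]}(\bar{x}_{S_M:\infty})\Bigr).
\]
Hence $C_i^{(m)} \geq V$ for every $1 \leq m \leq M$ and every $i \in I$.

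Next, apply \Cref{lem:gu:time_coverage} to the time $k < S_M$. It yields a pair $(m^*, i^*)$ such that either $\tilde{q}_{i^*}(\bar{x}_k) \geq C_{i^*}^{(m^*)} \geq V$ (case (a)) or $\tilde{r}_{i^*}(\bar{x}_k) \geq C_{i^*}^{(m^*)} \geq V$ (case (b)). Now unpack the definitions $\tilde{q}_{i^*} = q_{i^*} \land w_{i^*}$ and $\tilde{r}_{i^*} = r_{i^*} \land w_{i^*}$, where $w_{i^*} = \bigwedge_{j \neq i^*}(q_j \lor r_j)$. There are two subcases.
\begin{itemize}
\item If $\ell \neq i^*$, then in either case $w_{i^*}(\bar{x}_k) \geq V$. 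This gives $q_\ell(\bar{x}_k) \lor r_\ell(\bar{x}_k) \geq V$ directly.
\item If $\ell = i^*$, then case (a) gives $q_\ell(\bar{x}_k) \geq V$ and case (b) gives $r_\ell(\bar{x}_k) \geq V$. Either way, \eqref{eq:gu:pointwise_q_or_r} follows.
\end{itemize}

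The main obstacle is making the first step rigorous. \Cref{lem:gu:finite_depth_optimal} says the \emph{value} of $\psi^{(M)}$ along the policy trajectory equals $V$. We then need the particular witness times in the expansion to realize this value, so that each $C_i^{(m)}$ is genuinely bounded below by $V$. Since \Cref{cor:gu:multi_cycle} asserts that the expansion is an equality for witness times chosen via \Cref{lem:witness_exists}, the minimum of the $C_i^{(m)}$ and the tail is exactly $V$, and the bound $C_i^{(m)} \geq V$ is immediate. The only remaining care is that the witness times, and hence $S_M$, may depend on $M$. This is harmless, because we fix $M$ first and then use the bound $S_M \geq MN$ only to guarantee that $k$ falls inside the covered range.
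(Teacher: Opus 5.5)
Your proposal is correct and follows the paper's proof essentially step for step. You pick $M$ with $S_M > k$; the paper takes $M > k/N$, and your $M = k+1$ works equally well since $N \geq 1$. You then combine finite-depth optimality with the multi-cycle expansion to get $C_i^{(m)} \geq \Vopt{\psi}(\bar{x}_0)$ for every cycle and layer, and finish by locating $k$ via the time-coverage lemma and unpacking $\tilde{q}_{i^*}$ or $\tilde{r}_{i^*}$ in the subcases $\ell = i^*$ and $\ell \neq i^*$.
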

\end{tcolorbox}

\begin{proof}
    Fix $k \geq 0$ and $\ell \in I$.
    Choose $M > k / N$.
    Then $S_M \geq MN > k$, so $k \in \{0, 1, \ldots, S_M - 1\}$.

    By \Cref{lem:gu:finite_depth_optimal}, $\rho_{[\psi^{(M)}]}(\bar{x}_{0:\infty}) = \Vopt{\psi}(\bar{x}_0)$.
    By \eqref{eq:gu:multi_cycle}, this equals a $\min$ of the terms $C_i^{(m)}$ and $\rho_{[\psi^{(0)}]}(\bar{x}_{S_M:\infty})$.
    Since the $\min$ equals $\Vopt{\psi}(\bar{x}_0)$, every individual term satisfies
    \begin{equation} \label{eq:gu:all_terms_bound}
        C_i^{(m)} \geq \Vopt{\psi}(\bar{x}_0), \quad \forall\, m \in \{1, \ldots, M\},\; i \in I.
    \end{equation}

    By \Cref{lem:gu:time_coverage}, time $k$ belongs to layer $i^*$ of cycle $m^*$.
    Combined with \eqref{eq:gu:all_terms_bound}, exactly one of the following holds:

    \emph{Case (a): $\tilde{T}_{i^*-1}^{(m^*)} \leq k < T_{i^*}^{(m^*)}$.}
    Then $\tilde{q}_{i^*}(\bar{x}_{k}) \geq C_{i^*}^{(m^*)} \geq \Vopt{\psi}(\bar{x}_0)$.
    Expanding $\tilde{q}_{i^*} = q_{i^*} \land \bigwedge_{j \neq i^*}(q_j \lor r_j)$:
    for $\ell = i^*$, we have $q_\ell(\bar{x}_{k}) \geq \tilde{q}_{i^*}(\bar{x}_{k}) \geq \Vopt{\psi}(\bar{x}_0)$;
    for $\ell \neq i^*$, we have $\max(q_\ell(\bar{x}_{k}), r_\ell(\bar{x}_{k})) = (q_\ell \lor r_\ell)(\bar{x}_{k}) \geq \tilde{q}_{i^*}(\bar{x}_{k}) \geq \Vopt{\psi}(\bar{x}_0)$.

    \emph{Case (b): $k = T_{i^*}^{(m^*)}$.}
    Then $\tilde{r}_{i^*}(\bar{x}_{k}) \geq C_{i^*}^{(m^*)} \geq \Vopt{\psi}(\bar{x}_0)$.
    Expanding $\tilde{r}_{i^*} = r_{i^*} \land \bigwedge_{j \neq i^*}(q_j \lor r_j)$:
    for $\ell = i^*$, we have $r_\ell(\bar{x}_{k}) \geq \tilde{r}_{i^*}(\bar{x}_{k}) \geq \Vopt{\psi}(\bar{x}_0)$;
    for $\ell \neq i^*$, we have $\max(q_\ell(\bar{x}_{k}), r_\ell(\bar{x}_{k})) \geq \tilde{r}_{i^*}(\bar{x}_{k}) \geq \Vopt{\psi}(\bar{x}_0)$.

    In all cases, \eqref{eq:gu:pointwise_q_or_r} holds.
\end{proof}

\begin{tcolorbox}[ThmFrame2]
\begin{lemma}[Infinitely many $r_\ell$-witnesses] \label{lem:gu:r_witness_exists}
    Let $\bar{x}_{0:\infty}$ be the trajectory generated by $\pi_{[\psi]}$ from $\bar{x}_0$.
    Then, for every $\ell \in I$ and every $k \geq 0$, there exists $j \geq k$ such that
    \begin{equation} \label{eq:gu:r_witness}
        r_\ell(\bar{x}_{j}) \geq \Vopt{\psi}(\bar{x}_0).
    \end{equation}
\end{lemma}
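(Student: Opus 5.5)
The plan is to reuse the machinery behind \Cref{lem:gu:pointwise_q_or_r}: the multi-cycle expansion (\Cref{cor:gu:multi_cycle}) together with the finite-depth optimality (\Cref{lem:gu:finite_depth_optimal}). Fix $\ell \in I$ and $k \geq 0$. Choose $M$ with $MN > k + N$, say $M \coloneqq \lceil k/N \rceil + 2$. By \Cref{lem:gu:finite_depth_optimal}, $\rho_{[\psi^{(M)}]}(\bar{x}_{0:\infty}) = \Vopt{\psi}(\bar{x}_0)$. Expanding with \eqref{eq:gu:multi_cycle} exhibits this value as a minimum over the terms $C_i^{(m)}$ and a tail term. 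Hence every $C_i^{(m)}$ with $1 \leq m \leq M$ and $i \in I$ satisfies $C_i^{(m)} \geq \Vopt{\psi}(\bar{x}_0)$, exactly as in \eqref{eq:gu:all_terms_bound}.

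Next, I will locate a cycle that starts at or after $k$. The cycle start times $S_m$ strictly increase with increments of at least $N \geq 1$ (\Cref{lem:witness_time_mono}). So there is a smallest $m^* \leq M$ with $S_{m^*-1} \geq k$. This is guaranteed because $S_{M-1} \geq (M-1)N > k$ by the choice of $M$. Within cycle $m^*$, layer $\ell$ has its $\LTLUntil$ witness at time $j \coloneqq T_\ell^{(m^*)} = \tilde{T}_{\ell-1}^{(m^*)} + \tau_\ell^{(m^*)} \geq \tilde{T}_0^{(m^*)} = S_{m^*-1} \geq k$. By the definition \eqref{eq:gu:Cim_def}, $\tilde{r}_\ell(\bar{x}_j) \geq C_\ell^{(m^*)} \geq \Vopt{\psi}(\bar{x}_0)$. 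Since $\tilde{r}_\ell = r_\ell \land w_\ell$ and the robustness of a conjunction is a $\min$, we get $r_\ell(\bar{x}_j) \geq \tilde{r}_\ell(\bar{x}_j) \geq \Vopt{\psi}(\bar{x}_0)$, which gives \eqref{eq:gu:r_witness}. The same argument applies to every $\ell$, because each cycle visits all $N$ layers in order.

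The main obstacle is a consistency issue between the expansion and the policy-generated trajectory. The witness times $\tau_i^{(m)}$ in \Cref{lem:gu:single_cycle} are defined from the trajectory's own $\LTLUntil$ semantics, not from the policy's switching times. So I must make sure that the cycle structure used here is the one for which every $C_i^{(m)}$ is bounded below. This holds automatically: \eqref{eq:gu:multi_cycle} is an identity for whatever witness times attain the suprema, and the min-equality forces each term to be at least the value, regardless of which witnesses were chosen.

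A further subtlety is that the tail term $\rho_{[\psi^{(0)}]}(\bar{x}_{S_M:\infty}) = \Vopt{\psi}(\bar{x}_{S_M})$ does not need to be controlled. We only need the cycle-$m^*$ terms, and $m^* \leq M$ keeps those terms strictly inside the expansion. Finally, I would check the edge case where some $\tau_i^{(m)} = 0$. It is harmless, since $T_\ell^{(m^*)} \geq S_{m^*-1}$ still holds and case (b) of \Cref{lem:gu:time_coverage} then applies directly at that time.
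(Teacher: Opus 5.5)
Your proof is correct and follows essentially the same route as the paper. You fix a depth $M$, use finite-depth optimality to force every cycle term $C_i^{(m)} \geq \Vopt{\psi}(\bar{x}_0)$, pick a cycle $m^*$ starting at or after $k$ via $S_{m^*-1} \geq (m^*-1)N$, and take $j = T_\ell^{(m^*)}$ together with $r_\ell \geq \tilde{r}_\ell$. The only difference is cosmetic: the paper chooses $m^*$ with $(m^*-1)N \geq k$ and then any $M \geq m^*$, whereas you fix $M$ first and take the smallest $m^*$ with $S_{m^*-1} \geq k$.
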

\end{tcolorbox}

\begin{proof}
    Fix $\ell \in I$ and $k \geq 0$.
    Choose $m^*$ such that $(m^* - 1)N \geq k$ and set $M \geq m^*$.
    In the expansion \eqref{eq:gu:multi_cycle}, the witness time of layer $\ell$ in cycle $m^*$ occurs at absolute time
    \begin{equation}
        T_\ell^{(m^*)} = \tilde{T}_{\ell-1}^{(m^*)} + \tau_\ell^{(m^*)} \;\geq\; \tilde{T}_0^{(m^*)} = S_{m^*-1} \;\geq\; (m^* - 1)N \;\geq\; k.
    \end{equation}
    By \Cref{lem:gu:finite_depth_optimal}, $\rho_{[\psi^{(M)}]}(\bar{x}_{0:\infty}) = \Vopt{\psi}(\bar{x}_0)$, so every term in $\rho_{[\psi^{(M)}]}(\bar{x}_{0:\infty})$ \eqref{eq:gu:multi_cycle} is $\geq \Vopt{\psi}(\bar{x}_0)$.
    In particular, $C_\ell^{(m^*)} \geq \Vopt{\psi}(\bar{x}_0)$.
    By \eqref{eq:gu:Cim_def}, $C_\ell^{(m^*)}$ is a $\min$ involving $\tilde{r}_\ell(\bar{x}_{T_\ell^{(m^*)}})$, so
    \begin{equation}
        \tilde{r}_\ell(\bar{x}_{T_\ell^{(m^*)}}) \geq C_\ell^{(m^*)} \geq \Vopt{\psi}(\bar{x}_0).
    \end{equation}
    Since $\tilde{r}_\ell = r_\ell \land w_\ell$ implies $r_\ell \geq \tilde{r}_\ell$,
    \begin{equation}
        r_\ell(\bar{x}_{T_\ell^{(m^*)}}) \geq \tilde{r}_\ell(\bar{x}_{T_\ell^{(m^*)}}) \geq \Vopt{\psi}(\bar{x}_0).
    \end{equation}
    Setting $j \coloneqq T_\ell^{(m^*)} \geq k$ completes the proof.
\end{proof}

\subsection{From Pointwise Bounds to Until Robustness}

\begin{tcolorbox}[ThmFrame2]
\begin{lemma}[Until lower bound] \label{lem:gu:until_lower_bound}
    Let $\bar{x}_{0:\infty}$ be the trajectory generated by $\pi_{[\psi]}$ from $\bar{x}_0$.
    Then, for all $k \geq 0$ and all $\ell \in I$,
    \begin{equation} \label{eq:gu:until_lower_bound}
        \rho_{[q_\ell \LTLUntil r_\ell]}(\bar{x}_{k:\infty}) \;\geq\; \Vopt{\psi}(\bar{x}_0).
    \end{equation}
\end{lemma}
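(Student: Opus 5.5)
We combine \Cref{lem:gu:pointwise_q_or_r} and \Cref{lem:gu:r_witness_exists} directly through the Until semantics \eqref{eq:robustness_def}. Fix $k \geq 0$ and $\ell \in I$, and write $v \coloneqq \Vopt{\psi}(\bar{x}_0)$. By \Cref{lem:gu:r_witness_exists} applied with this $k$, the set $\{ j \geq k : r_\ell(\bar{x}_j) \geq v \}$ is nonempty. Let $j^*$ be its smallest element. We will use $t = j^* - k$ as the candidate witness time in the definition of $\rho_{[q_\ell \LTLUntil r_\ell]}(\bar{x}_{k:\infty})$. Since the robustness is a supremum over $t$, it is bounded below by
\begin{equation*}
\min\Bigl\{ r_\ell(\bar{x}_{j^*}),\; \smin{k \leq i < j^*} q_\ell(\bar{x}_i) \Bigr\}.
\end{equation*}

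Next we bound each term of this minimum. The first term satisfies $r_\ell(\bar{x}_{j^*}) \geq v$ by the choice of $j^*$. For each $i$ with $k \leq i < j^*$, minimality of $j^*$ gives $r_\ell(\bar{x}_i) < v$. \Cref{lem:gu:pointwise_q_or_r} gives $\max(q_\ell(\bar{x}_i), r_\ell(\bar{x}_i)) \geq v$, so $q_\ell(\bar{x}_i) \geq v$. The minimum over this finite range is therefore at least $v$. If $j^* = k$, the range is empty, its min is $+\infty$, and the bound holds trivially. Combining the two terms gives \eqref{eq:gu:until_lower_bound}.

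The only delicate step is choosing the \emph{first} $r_\ell$-witness after $k$ rather than an arbitrary one. Only the first one forces every intermediate state to fall on the $q_\ell$ side of the disjunction in \Cref{lem:gu:pointwise_q_or_r}. The real work has already been done in \Cref{lem:gu:pointwise_q_or_r} and \Cref{lem:gu:r_witness_exists}, which rest on \Cref{lem:gu:finite_depth_optimal}. Note that propositions are evaluated on single states, so $r_\ell(\bar{x}_{j:\infty}) = r_\ell(\bar{x}_j)$, and no attainability issue arises here.
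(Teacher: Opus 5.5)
Your proposal is correct and is essentially the paper's own proof. Define $j^*$ as the first time $j \geq k$ with $r_\ell(\bar{x}_j) \geq \Vopt{\psi}(\bar{x}_0)$, which exists by \Cref{lem:gu:r_witness_exists}; use its minimality together with \Cref{lem:gu:pointwise_q_or_r} to force $q_\ell(\bar{x}_i) \geq \Vopt{\psi}(\bar{x}_0)$ on $k \leq i < j^*$; then plug $t = j^* - k$ into the Until semantics.
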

\end{tcolorbox}

\begin{proof}
    Fix $k \geq 0$ and $\ell \in I$.
    By \Cref{lem:gu:r_witness_exists}, define
    \begin{equation}
        j^* \;\coloneqq\; \min\{j \geq k : r_\ell(\bar{x}_{j}) \geq \Vopt{\psi}(\bar{x}_0)\}.
    \end{equation}
    This minimum exists and is finite by \Cref{lem:gu:r_witness_exists}.

    For every $s$ with $k \leq s < j^*$:
    by minimality of $j^*$, $r_\ell(\bar{x}_{s}) < \Vopt{\psi}(\bar{x}_0)$.
    By \Cref{lem:gu:pointwise_q_or_r}, $\max(q_\ell(\bar{x}_{s}), r_\ell(\bar{x}_{s})) \geq \Vopt{\psi}(\bar{x}_0)$.
    Since $r_\ell(\bar{x}_{s}) < \Vopt{\psi}(\bar{x}_0)$,
    \begin{equation} \label{eq:gu:q_forced}
        q_\ell(\bar{x}_{s}) \geq \Vopt{\psi}(\bar{x}_0), \quad \forall s \in \{k, k+1, \ldots, j^* - 1\}.
    \end{equation}
    Using $j^* - k$ as a witness in the $\LTLUntil$ semantics evaluated at $\bar{x}_{k:\infty}$:
    \begin{align*}
        \rho_{[q_\ell \LTLUntil r_\ell]}(\bar{x}_{k:\infty})
        &\geq \min\Bigl(r_\ell(\bar{x}_{j^*}),\;\; \smin{k \leq s < j^*} q_\ell(\bar{x}_{s})\Bigr) \nonumber \\
        &\geq \min\bigl(\Vopt{\psi}(\bar{x}_0),\; \Vopt{\psi}(\bar{x}_0)\bigr)
        = \Vopt{\psi}(\bar{x}_0),
    \end{align*}
    where we used the definition of $j^*$ and \eqref{eq:gu:q_forced}.
\end{proof}

\subsection{Main Result}
\fi
\begin{tcolorbox}[ThmFrame2]
\begin{thm} \label{thm:gu_optimal}
    $\pi_{[\psi]}$ from \Cref{def:gu_optimal_policy} is an optimal policy.
\end{thm}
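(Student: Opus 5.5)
The plan is to assemble the lemmas of the preceding subsections. Fix $\bar{x}_0$ and let $\bar{x}_{0:\infty}$ be the trajectory generated by $\pi_{[\psi]}$. The upper bound $\rho_{[\psi]}(\bar{x}_{0:\infty}) \le \Vopt{\psi}(\bar{x}_0)$ holds by definition of the value function, so only the matching lower bound needs work.

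For the lower bound we unfold the semantics:
\begin{equation*}
\rho_{[\psi]}(\bar{x}_{0:\infty}) = \smin{k \ge 0} \smin{\ell \in I} \rho_{[q_\ell \LTLUntil r_\ell]}(\bar{x}_{k:\infty}).
\end{equation*}
\Cref{lem:gu:until_lower_bound} states that every term on the right is at least $\Vopt{\psi}(\bar{x}_0)$. Taking the infimum over $k$ and $\ell$ then gives $\rho_{[\psi]}(\bar{x}_{0:\infty}) \ge \Vopt{\psi}(\bar{x}_0)$. Combined with the upper bound, this yields equality for every initial state.

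The main obstacle is that \Cref{def:def_of_optimal_policy} requires optimality from every history $x_{0:t}$, not only from initial states. Three points need care here.

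First, if $\Vhist{\psi}(x_{0:t}) < \Vopt{\psi}(x_t)$, the prefix already caps the value. In that case I would show that $\Vhist{\psi}(x_{0:t})$ equals the min of $\Vopt{\psi}(x_t)$ and the past-dependent terms $\rho_{[q_\ell\LTLUntil r_\ell]}$ of the pending Untils. This is proved by the same witness decomposition as \Cref{lem:past_witness_value_identity}, applied to the nested form $\phi_1$.

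Second, one must check that $\pi_{[\phi_1]}$, run from the history, attains this value. Since $\pi_{[\phi_1]}$ is the Until policy of \Cref{def:until_optimal_policy}, history optimality of each layer follows from \Cref{thm:pi_psi_optimal}, applied recursively through the nesting as in the inductive step of \Cref{lem:gu:finite_depth_optimal}. The pointwise-bound argument of \Cref{lem:gu:pointwise_q_or_r,lem:gu:r_witness_exists,lem:gu:until_lower_bound} then goes through unchanged, with $\Vopt{\psi}(\bar{x}_0)$ replaced by $\Vhist{\psi}(x_{0:t})$ and time shifted to start at $t$.

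Third, for $\Vhist{\psi}(x_{0:t}) \ge \Vopt{\psi}(x_t)$ I expect the infinite-horizon step to be the delicate one. The finite-depth optimality only controls $\psi^{(M)}$, so passing to $\psi$ genuinely needs the time-coverage and $S_M \ge MN$ arguments of \Cref{lem:gu:time_coverage,cor:gu:multi_cycle}. Those arguments must be rechecked in the history setting: every time $k \ge t$ has to be covered by some cycle, which again holds because each cycle advances time by at least $N$.
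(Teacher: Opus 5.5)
Your argument for a trajectory started from a single state $\bar x_0$ is exactly the paper's proof. It takes the upper bound by definition, writes $\rho_{[\psi]}(\bar x_{0:\infty})=\inf_{k}\min_{\ell}\rho_{[q_\ell\LTLUntil r_\ell]}(\bar x_{k:\infty})$, and applies the Until lower-bound lemma. The paper stops there, so it only establishes optimality from initial states. Your observation that \Cref{def:def_of_optimal_policy} quantifies over all histories therefore points to something the paper's proof does not cover.

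Your sketch for histories, however, does not close that gap.

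\textbf{Circularity.} Your second point is circular. Applying \Cref{thm:pi_psi_optimal} to $\phi_1=\tilde q_1\LTLUntil(\tilde r_1\land\LTLNext\phi_2)$ requires a history-optimal policy for $\tilde r_1\land\LTLNext\phi_2$. That in turn requires one for $\phi_2,\dots,\phi_N$ and finally for $\LTLNext\psi$, which is exactly the history-optimality of $\pi_{[\psi]}$ you are trying to prove.

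\textbf{The finite-depth induction does not transfer.} In the paper, the loop is broken by induction on the depth $M$ of the unrollings. That induction identifies the optimal witness times of $\phi_1$ with those of the unrollings $\phi_1^{(M)}$ only at single states. From a history, the values of the unrollings genuinely depend on $M$: $\Vhist{\psi^{(0)}}(y_{0:s})=\Vopt{\psi}(y_0)$, whereas $\Vhist{\psi}(y_{0:s})$ can be strictly smaller. Likewise, \Cref{lem:replace_r_with_Vr} is only proved for a single state. From $x_{0:t}$, a past witness $j<t$ contributes $\Vhist{\mathsf r}(x_{j:t})$ to $q\LTLUntil\mathsf r$ but $\Vopt{\mathsf r}(x_j)$ to $q\LTLUntil\Vopt{\mathsf r}$.

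\textbf{Your first point is imprecise.} The pending terms $\rho_{[q_\ell\LTLUntil r_\ell]}(x_{k:\infty})$, $k<t$, depend on the future, so they are not prefix quantities. What does hold is the following. Define
$A_{k,\ell}\coloneqq\max_{k\le j<t}\min\{r_\ell(x_j),\min_{k\le i<j}q_\ell(x_i)\}$, $B_{k,\ell}\coloneqq\min_{k\le i<t}q_\ell(x_i)$, and $C\coloneqq\min_{k<t,\,\ell\in I}\max\{A_{k,\ell},B_{k,\ell}\}$.
Then $\rho_{[q_\ell\LTLUntil r_\ell]}(x_{k:\infty})=\max\{A_{k,\ell},\min\{B_{k,\ell},\rho_{[q_\ell\LTLUntil r_\ell]}(x_{t:\infty})\}\}$. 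Since $\rho_{[q_\ell\LTLUntil r_\ell]}(x_{t:\infty})\ge\rho_{[\psi]}(x_{t:\infty})$, distributivity gives
\[
\rho_{[\psi]}(x_{0:\infty})=\min\{C,\rho_{[\psi]}(x_{t:\infty})\},\qquad \Vhist{\psi}(x_{0:t})=\min\{C,\Vopt{\psi}(x_t)\}.
\]
This also makes your case split unnecessary.

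\textbf{The missing step.} What you actually need to show is that $\pi_{[\psi]}$ produces a suffix with $\rho_{[\psi]}(x_{t:\infty})\ge\min\{C,\Vopt{\psi}(x_t)\}$. Its witness times $\optwitness{\phi_1}(x_{0:k})$ and layer delegations are computed from the whole history, including the given prefix. For a Markovian policy this would be immediate, but $\pi_{[\psi]}$ is not Markovian, and neither your sketch nor the paper supplies the argument.
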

\end{tcolorbox}

\begin{condproof}[thm:gu_optimal]
    Fix $\bar{x}_0 \in \calX$ and let $\bar{x}_{0:\infty}$ be the trajectory generated by $\pi_{[\psi]}$ from $\bar{x}_0$.
    The upper bound $\rho_{[\psi]}(\bar{x}_{0:\infty}) \leq \Vopt{\psi}(\bar{x}_0)$ is immediate from the definition of $\Vopt{\psi}$.
    For the lower bound:
    \begin{equation}
        \rho_{[\psi]}(\bar{x}_{0:\infty})
        = \inf_{k \geq 0}\; \min_{\ell \in I}\; \rho_{[q_\ell \LTLUntil r_\ell]}(\bar{x}_{k:\infty}),
    \end{equation}
    \Cref{lem:gu:until_lower_bound} gives $\rho_{[q_\ell \LTLUntil r_\ell]}(\bar{x}_{k:\infty}) \geq \Vopt{\psi}(\bar{x}_0)$ for all $k \geq 0$ and $\ell \in I$.
    Thus,
    \begin{equation}
        \rho_{[\psi]}(\bar{x}_{0:\infty})
        = \inf_{k \geq 0}\; \min_{\ell \in I}\; \rho_{[q_\ell \LTLUntil r_\ell]}(\bar{x}_{k:\infty})
        \geq \Vopt{\psi}(\bar{x}_0).
    \end{equation}
    Combining both bounds, $\rho_{[\psi]}(\bar{x}_{0:\infty}) = \Vopt{\psi}(\bar{x}_0)$.
\end{condproof}

\section{Optimal Policies for Compositions} \label{sec:composition}
In the previous sections, we have established optimal policies for $\LTLUntil$
(\Cref{thm:pi_psi_optimal,thm:alternative_pi_psi_optimal}), 
$\LTLNext$, disjunctions and conjunctions with a single temporal operand (\Cref{thm:next_disjunction_conjunction_optimal_policies}),
for $\LTLGlobally$ (\Cref{thm:Gq_greedy_optimal}), and  $\LTLGlobally \LTLUntil$ (\Cref{thm:gu_optimal}).
These correspond exactly to the items in the definition of the fragment $\mathcal{S}$ in \Cref{def:solvable_fragment}.
Consequently, combining these results enables us to construct optimal policies for any formula in $\mathcal{S}$ via structural induction:
\begin{tcolorbox}[ThmFrame2]
\begin{thm}[Optimal policies for $\mathcal{S}$] \label{thm:solvable_fragment}
    For $\varphi \in \mathcal{S}$, the policy $\pi_{[\varphi]}$ constructed with \Cref{thm:pi_psi_optimal,thm:alternative_pi_psi_optimal,thm:next_disjunction_conjunction_optimal_policies,thm:Gq_greedy_optimal,thm:gu_optimal} is optimal (\Cref{def:def_of_optimal_policy}) and satisfies \eqref{eq:optimality_eq}.
\end{thm}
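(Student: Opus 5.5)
We prove the statement by structural induction on the derivation of $\varphi \in \mathcal{S}$ in \Cref{def:solvable_fragment}. We use the induction hypothesis that every strict subformula $\mathsf{r} \in \mathcal{S}$ appearing in the derivation comes with a known policy $\pi_{[\mathsf{r}]}$ that is optimal over histories (\Cref{def:def_of_optimal_policy}), i.e., \Cref{assmp:exists_optimal_policy_for_r} holds for it. Since $\mathcal{S}$ is the smallest set closed under (S1)--(S7), every $\varphi \in \mathcal{S}$ has a finite derivation tree. Induction on its height is therefore well founded, and each case is discharged by exactly one of the cited theorems.

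\textbf{Base cases.} For (S1), $p \in \mathrm{Prop}$ satisfies $\rho_{[p]}(x_{0:\infty}) = p(x_0)$, which is independent of the actions. Any policy, for instance a constant one, therefore attains $\Vhist{p}(x_{0:t}) = p(x_0)$ from every history, and optimality over histories is immediate. For (S4) we invoke \Cref{thm:Gq_greedy_optimal}, and for (S5) we invoke \Cref{thm:gu_optimal}. Neither of these requires an inductive hypothesis, since their operands are propositional. The first step I would take is to check that these two theorems, which are stated for trajectories started from a single state $\bar{x}_0$, give optimality over \emph{histories} as required by \Cref{def:def_of_optimal_policy}. For $\LTLGlobally q$ and $\LTLGlobally\LTLUntil$, the past prefix enters $\Vhist{\cdot}(x_{0:t})$ only through already-fixed terms, namely the running minimum of $q$ and, for GU, the pending Until obligations. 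Where the paper's proofs start at $\bar{x}_0$, I would argue that the same arguments run verbatim from an arbitrary history $x_{0:t}$. In the GU case, \Cref{def:gu_optimal_policy} uses the history-optimal Until policy of \Cref{thm:pi_psi_optimal}, and I would use this fact to support the claim.

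\textbf{Inductive cases.} For (S2), $p \LTLUntil \varphi'$ with $\varphi' \in \mathcal{S}$: the induction hypothesis supplies an optimal $\pi_{[\varphi']}$, so \Cref{assmp:exists_optimal_policy_for_r} holds with $\mathsf{r} = \varphi'$. \Cref{thm:pi_psi_optimal} then shows that $\pi_{[\psi]}$ from \Cref{def:until_optimal_policy} is optimal over histories. \Cref{thm:alternative_pi_psi_optimal} gives the Markovian alternative when only initial-state optimality is needed. For (S3), (S6) and (S7), the induction hypothesis supplies optimal policies for the operands, and \Cref{thm:next_disjunction_conjunction_optimal_policies} gives optimality of $\pi_{[\LTLNext\varphi']}$, $\pi_{[\varphi_1\lor\varphi_2]}$ and $\pi_{[p\land\varphi']}$. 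In each case the constructed policy is itself optimal over histories, so it can serve as the operand policy one level up, and the induction closes. Because \Cref{assmp:max_attainable:finite} concerns only atomic propositions, it is preserved at every level, so all the max-attainability facts used by the cited theorems (e.g.\ \Cref{lem:witness_exists}) remain available.

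\textbf{Main obstacle.} The delicate point is the interface between levels. Several cited results are phrased with state-based values $\Vopt{\cdot}(x_0)$ (e.g.\ the disjunction selector $\argmax_i \Vopt{\mathsf{r}_i}(x_0)$ and the Markovian $\hat{\pi}$ results), while the induction needs \eqref{eq:optimality_eq} from arbitrary histories. This matters because the Until policy calls $\pi_{[\mathsf{r}]}(x_{n_k:k})$ on a nontrivial history. My plan is to always carry the history-optimal variants, namely \Cref{def:until_optimal_policy} with \Cref{thm:pi_psi_optimal}. For disjunction, I would note that the selection index is determined by the first state of the relevant history, and that optimality from any later history follows because the chosen operand policy is history-optimal and $\Vhist{\varphi_1\lor\varphi_2}(x_{0:t}) = \max_i \Vhist{\varphi_i}(x_{0:t})$ once the argmax is taken over history values. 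If necessary, I would reinterpret the selector as $\argmax_i \Vhist{\mathsf{r}_i}(x_{0:k})$ restricted to the prefix. With this convention, every case yields a history-optimal policy, and \eqref{eq:optimality_eq} holds for all $\varphi\in\mathcal{S}$.
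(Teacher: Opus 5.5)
Your skeleton is the paper's proof: a structural induction that dispatches each clause of \Cref{def:solvable_fragment} to the cited theorem. Your clause-to-theorem matching is in fact cleaner than the paper's list, which is shifted by one and omits $\LTLGlobally p$. You are also right on three further points, none of which the paper's proof addresses:
\begin{itemize}
\item the induction needs optimality over histories;
\item only \Cref{def:until_optimal_policy} should be carried for Until;
\item \Cref{thm:Gq_greedy_optimal,thm:gu_optimal,thm:alternative_pi_psi_optimal} are proved only from a single initial state.
\end{itemize}
Your patch for $\LTLGlobally p$ is fine. The gap is in your other two patches.

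\textbf{The $\LTLGlobally\LTLUntil$ case.} The initial-state argument does not run verbatim from a history. The obligations $q_i \LTLUntil r_i$ opened inside the prefix are not fixed terms, because their robustness depends on the continuation. Nor can you lean on \Cref{thm:pi_psi_optimal} for $\phi_1$. Its hypothesis (\Cref{assmp:exists_optimal_policy_for_r}) for $\mathsf{g}_1$, where $\mathsf{g}_j \coloneqq \tilde{r}_j \land \LTLNext \phi_{j+1}$, demands history-optimal policies for $\phi_2,\dots,\phi_N$ and ultimately for $\psi$ itself. That is circular, and it is exactly why the paper's proof of \Cref{thm:gu_optimal} fixes $\bar{x}_0$ and unrolls to $\psi^{(M)}$.

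A route that works has two steps. First, get fresh-start optimality of every $\pi_{[\phi_j]}$ from \Cref{thm:gu_optimal} plus the single-Until argument applied layer by layer. Then induct on the prefix length, simultaneously over $j$ (indices cyclic, with $\phi_{N+1}$ read as $\psi$). For $x_{0:t}$, let $n^* = \optwitness{\phi_j}(x_{0:t})$ and $c_{n^*} = \min_{i<n^*} \tilde{q}_j(x_i)$.
\begin{itemize}
\item If $n^* \geq t$, the tail policy is restarted fresh at $x_{n^*}$, and Case~1 of \Cref{thm:pi_psi_optimal} applies.
\item If $n^* < t$, the value is $\min\{c_{n^*}, \Vhist{\mathsf{g}_j}(x_{n^*:t})\}$. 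The policy delegates to $\pi_{[\phi_{j+1}]}$ on $x_{n^*+1:t}$, which is strictly shorter thanks to the $\LTLNext$, so the induction hypothesis applies.
\end{itemize}
In both cases the witness-persistence step only needs optimality from that one history, since all later histories lie on the trajectory it generates.

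\textbf{The disjunction case.} Your first justification is false, and in fact the paper's \Cref{def:disjunction_policy} is not history-optimal. Take $\LTLFinally a \lor \LTLFinally b$ with both attainable from $x_0$ and $i^*=1$ selected. Take a prefix $x_{0:t}$ that avoids $a$ and ends where $a$ is unreachable but $b$ is reachable. Every action is then optimal for $\LTLFinally a$. So a history-optimal $\pi_{[\LTLFinally a]}$ may steer to where $b$ is unreachable, giving robustness below $\Vhist{\LTLFinally a \lor \LTLFinally b}(x_{0:t})$.

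Re-selecting $i^* \in \argmax_i \Vhist{\mathsf{r}_i}(x_{0:k})$ at every step is the right repair, though it proves the statement for a modified construction. With ``any maximizer'' it still fails. For $\LTLFinally a \lor \LTLFinally b$ on a path $a, m_1, m_2, b$, both values stay maximal. A tie-break that picks $b$ at $m_1$ and $a$ at $m_2$ oscillates forever, which is the deferral pathology of \Cref{ex:greedy_Q_suboptimal} again.

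You need a fixed tie-break, e.g., the smallest index. The selected $\mathsf{r}_i$ then keeps its history value: its policy is history-optimal, so \Cref{lem:Q_V_history} gives $\Vhist{\mathsf{r}_i}(x_{0:k+1}) = \Vhist{\mathsf{r}_i}(x_{0:k})$. Every other $\Vhist{\mathsf{r}_j}$ can only decrease as the history grows. Hence the selected index never changes, and the generated trajectory is exactly the one $\pi_{[\mathsf{r}_i]}$ produces from $x_{0:t}$.
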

\end{tcolorbox}

\begin{condproof}[thm:solvable_fragment]
    By structural induction on $\mathcal{S}$.
    \begin{enumerate}
        \item[\textbf{(S1)}] When $\varphi \in \mathrm{Prop}$, the robustness score depends only on the current state, so any policy is optimal (\Cref{rmk:optimal_policy_propositional}).

        \item[\textbf{(S2)}] Let $\varphi = p \LTLUntil \mathsf{r}$ with $p \in \mathrm{Prop}$ and $\mathsf{r} \in \mathcal{S}$.
        By the inductive hypothesis, there exists an optimal policy $\pi_{[\mathsf{r}]}$ for $\mathsf{r}$, so \Cref{assmp:exists_optimal_policy_for_r} is satisfied.
        The result follows from \Cref{thm:pi_psi_optimal} (or \Cref{thm:alternative_pi_psi_optimal}).

        \item[\textbf{(S3)}] Let $\varphi = \LTLNext \mathsf{r}$ with $\mathsf{r} \in \mathcal{S}$.
        By the inductive hypothesis, there exists an optimal policy $\pi_{[\mathsf{r}]}$ for $\mathsf{r}$.
        The result follows from \Cref{thm:policy_next_optimal}.

        \item[\textbf{(S4)}] Let $\varphi = \LTLGlobally\bigl(\bigwedge_{i=1}^N p_i \LTLUntil r_i\bigr)$ with $p_i, r_i \in \mathrm{Prop}$.
        The result follows from \Cref{thm:gu_optimal}.

        \item[\textbf{(S5)}] Let $\varphi = \varphi_1 \lor \varphi_2$ with $\varphi_1, \varphi_2 \in \mathcal{S}$.
        By the inductive hypothesis, there exist optimal policies $\pi_{[\varphi_1]}$ and $\pi_{[\varphi_2]}$.
        The result follows from \Cref{thm:disjunction_optimal}.

        \item[\textbf{(S6)}] Let $\varphi = p \land \mathsf{r}$ with $p \in \mathrm{Prop}$ and $\mathsf{r} \in \mathcal{S}$.
        By the inductive hypothesis, there exists an optimal policy $\pi_{[\mathsf{r}]}$ for $\mathsf{r}$.
        The result follows from \Cref{thm:simple_conjunction_optimal}. \qedhere
    \end{enumerate}
\end{condproof}

\subsection{Expanding $\mathcal{S}$ by Rewriting Conjunctions}
A notable gap in the fragment $\mathcal{S}$ is the absence of conjunctions of multiple temporal operators, e.g., $\LTLFinally r_1 \land \LTLFinally r_2$.
However,
we can rewrite many such conjunctions into an equivalent single $\LTLUntil$ formula, which is in $\mathcal{S}$ by \textbf{(S2)} \cite{sharpless2026bellman}.
Namely, \cite[Lemma 7]{sharpless2026bellman} implies that conjunctions of $\LTLGlobally\LTLUntil$, $\LTLUntil$, and $\LTLGlobally$ formulas with propositional operands are equivalent to a formula in $\mathcal{S}$.

\begin{figure*}[t]
\vspace{-2ex}
   \centering
\includegraphics[width=\linewidth]{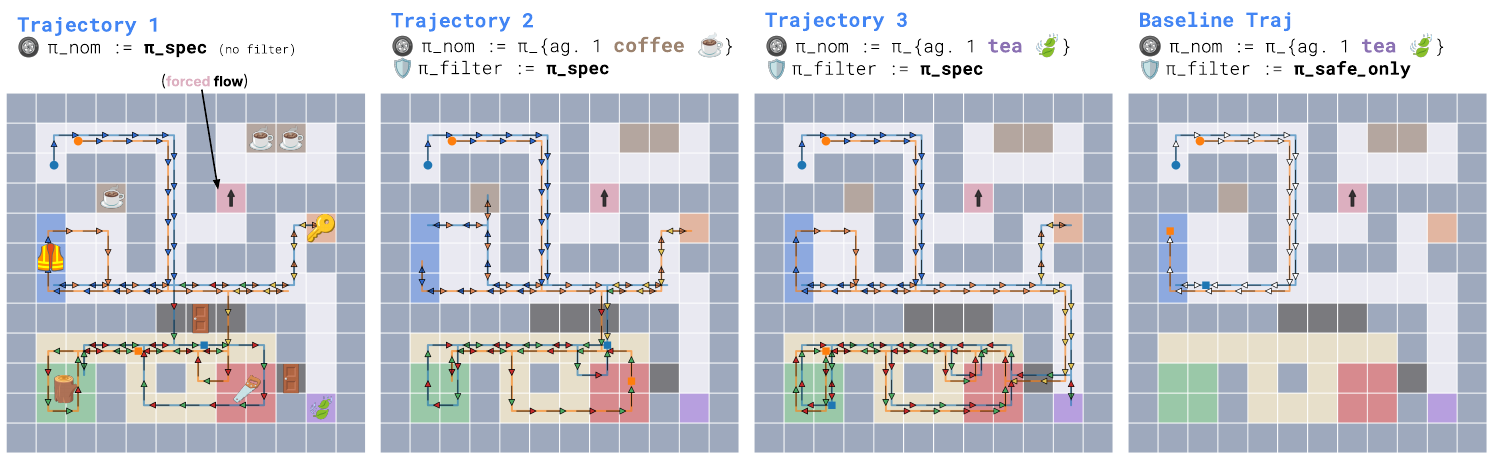}
   \caption{\textbf{Demonstration of optimal policy $\hat{\pi}$, independently and as a filter:} Here, we demonstrate the optimal policy for the value for a given specification (top) in a two-agent \texttt{GridWorld} system. The trajectory for the policy alone is given in the left plot, while the two center plots demonstrate its usage as a filter for two nominal policies to get \textbf{\textcolor{color_coffee}{coffee}} and \textbf{\textcolor{color_tea}{tea}} respectively. The plot on the far right demonstrates the problem with a "safety-only" filter here, which although safe is unable to yield a satisfactory trajectory.}
\vspace{-3ex}
   \label{fig:sim}
\end{figure*}

\begin{tcolorbox}[ThmFrame2]
\begin{coroll} \label{cor:master_formula_solvable}
    For finite sets $\IGUSet$ and $\IUSet$, and $q_i, \rgu_i, q_j, r_j, q \in \mathrm{Prop}$, the formula
    \begin{equation}
        \mathsf{p}_{\IGUSet, \IUSet} \coloneqq \bigwedge_{\smash{i \in \IGUSet}} \LTLGlobally(q_i\, \LTLUntil \, \rgu_i) \;\land\; \bigwedge_{\smash{j \in \IUSet}} (q_j\, \LTLUntil \, r_j) \;\land\; \LTLGlobally\, q
    \end{equation}
    is equivalent to a formula that is in $\mathcal{S}$.
\end{coroll}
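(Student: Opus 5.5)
We will prove \Cref{cor:master_formula_solvable} by rewriting $\mathsf{p}_{\IGUSet,\IUSet}$ into a nested Until formula whose innermost operand is either propositional or a single $\LTLGlobally\LTLUntil$ formula covered by \Cref{def:S:GU}, and then checking membership in $\mathcal{S}$ via \Cref{def:S:Until,def:S:Next,def:S:Conjunction}.

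\textbf{Reducing the $\LTLGlobally$ parts to one operand.} First we absorb the safety term. Since $\LTLGlobally q \equiv \LTLGlobally(q \LTLUntil \bot)$ under the robustness semantics, we have $\LTLGlobally(q_i \LTLUntil \rgu_i) \land \LTLGlobally q \equiv \LTLGlobally\bigl((q_i \land q)\LTLUntil(\rgu_i \land q)\bigr)$ when $\IGUSet \neq \emptyset$. Alternatively, $\LTLGlobally q$ can be appended as one more conjunct $q \LTLUntil \bot$ inside the $\LTLGlobally$. Conjunctions of $\LTLGlobally$ commute with $\min$, so $\bigwedge_{i}\LTLGlobally(\cdot) \equiv \LTLGlobally\bigl(\bigwedge_i \cdot\bigr)$. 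Together these give a single formula $\Psi \coloneqq \LTLGlobally\bigl(\bigwedge_{i\in\IGUSet'} \tilde q_i \LTLUntil \tilde r_i\bigr)$ with propositional operands, which lies in $\mathcal{S}$ by \textbf{(S5)} of \Cref{def:solvable_fragment}. If $\IGUSet=\emptyset$, the term $\Psi$ is just $\LTLGlobally q \in \mathcal{S}$ by \Cref{def:S:Globally}, or $\top$ if $q$ is absent.

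\textbf{Eliminating the finite Until conjuncts.} Next we peel off the $j\in\IUSet$ terms one at a time, using the identity of \cite[Lemma 7]{sharpless2026bellman}. For propositional $q_j,r_j$ and any formula $\theta$ that is ``suffix-closed'' (a $\LTLGlobally$-type formula), we use
\[
(q_j \LTLUntil r_j)\land \theta \equiv (q_j \land g)\LTLUntil\bigl(r_j \land \theta\bigr),
\]
where $g$ is the propositional one-step safety part of $\theta$; for instance $g$ is $q$ for $\LTLGlobally q$, and the $w$-type formula from \Cref{lem:gu_recursive_single} for a $\LTLGlobally\LTLUntil$. For several Until conjuncts we would argue by induction on $|\IUSet|$, splitting by which $r_j$ is achieved first. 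This yields a disjunction over orderings: \[(q_{j}\land\cdots)\LTLUntil\bigl(r_{j}\land \mathsf{p}_{\IGUSet,\IUSet\setminus\{j\}}\bigr),\] taken over $j\in\IUSet$. Here the remaining Untils are still active, so their $q$'s must be conjoined in the guard, and $r_j$ may be replaced by $r_j\land$ (remaining conjunction). The inner formula has strictly fewer Until conjuncts, so by induction it is equivalent to some $\mathsf{s}\in\mathcal{S}$.

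\textbf{Assembling membership and the main obstacle.} Each disjunct has the form $p\LTLUntil(r\land \mathsf{s})$ with $p,r$ propositional and $\mathsf{s}\in\mathcal{S}$. Then $r\land\mathsf{s}\in\mathcal{S}$ by \Cref{def:S:Conjunction}, the Until is in $\mathcal{S}$ by \Cref{def:S:Until}, and the disjunction is in $\mathcal{S}$ by \Cref{def:S:Disjunction}. The base case $\IUSet=\emptyset$ is the single $\Psi$ from the first step. The main obstacle is verifying the quantitative (not merely Boolean) equivalence of the peeling identity. We must check that the robustness scores agree, i.e., that the $\max$ over witness times $t$ and orderings commutes correctly with the $\min$ over the $\LTLGlobally$ tail. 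The key fact to establish is that for $\LTLGlobally$-type $\theta$, $\theta(x_{0:\infty}) = \min\{\smin{k<t} g(x_k),\theta(x_{t:\infty})\}$. We would try this first for $\theta=\LTLGlobally q$, where it is immediate, and then for $\LTLGlobally\LTLUntil$ via \Cref{lem:gu_recursive_single}, deferring the full details to \cite[Lemma 7]{sharpless2026bellman}.
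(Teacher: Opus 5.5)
Your overall route is sound: merge the $\LTLGlobally$ parts into one \textbf{(S5)} formula, peel off the Until conjuncts by a disjunction over which one is discharged first, then read off membership via \textbf{(S7)}, \textbf{(S2)}, \textbf{(S6)}. However, one justification is false.

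Under the paper's (strong) Until semantics, $q\LTLUntil\bot$ has robustness $\max_{t\ge 0}\min\{-\infty,\cdot\}=-\infty$. So $\LTLGlobally(q\LTLUntil\bot)\equiv\bot$, not $\LTLGlobally q$; the identity you quote holds only for weak Until. In particular, your alternative of appending $q\LTLUntil\bot$ inside the $\LTLGlobally$ turns the whole formula into $\bot$. The merged formula $\LTLGlobally\bigl((q_i\land q)\LTLUntil(\rgu_i\land q)\bigr)$ is still correct, but it needs a direct check. The $(k,t)$ term $\min\{\rgu_i(x_{k+t}),\min_{k\le s<k+t}q_i(x_s),\min_{k\le s\le k+t}q(x_s)\}$ is at most both $q(x_k)$ and the $(k,t)$ term $\min\{\rgu_i(x_{k+t}),\min_{k\le s<k+t}q_i(x_s)\}$ of $\LTLGlobally(q_i\LTLUntil\rgu_i)$. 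It is also at least the minimum of the latter with $\inf_s q(x_s)$. Taking $\max_t$ and then $\min_k$ gives both inequalities. Alternatively, use $q\LTLUntil q\equiv q$ (the running minimum $\min_{s\le t}q(x_s)$ is largest at $t=0$), so $\LTLGlobally q$ can be appended as the extra Until conjunct $q\LTLUntil q$.

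Second, in the multi-Until step, write the guard explicitly as $q_j\land\bigwedge_{l\ne j}q_l\land g$. Without $g$, a disjunct can exceed the conjunction, because $\theta(x_{t:\infty})$ may be strictly larger than $\theta(x_{0:\infty})$. With $g$, the quantitative check is short and needs no attainment. Each disjunct is at most the conjunction, by your key fact for $\theta$ together with $\rho_{[q_l\LTLUntil r_l]}(x_{0:\infty})\ge\min\{\min_{s<t}q_l(x_s),\rho_{[q_l\LTLUntil r_l]}(x_{t:\infty})\}$. Conversely, for any candidate witness times of the Until conjuncts, evaluate the disjunct of the earliest one at that time. It is already at least the minimum of the corresponding terms and $\theta(x_{0:\infty})$.

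Your key fact for the $\LTLGlobally\LTLUntil$ tail also follows in one line from \Cref{lem:gu_recursive_single}. Unrolling that Until once gives $\psi(x_{0:\infty})=\bigl((q_i\land w_i)\lor(r_i\land w_i)\bigr)(x_0)\land\psi(x_{1:\infty})$, and $(q_i\land w_i)\lor(r_i\land w_i)=\bigwedge_{j\in I}(q_j\lor r_j)$; iterating yields the key fact with this $g$.

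The paper itself gives no argument beyond invoking \cite[Lemma 7]{sharpless2026bellman}. With these repairs, your proposal is a self-contained reconstruction of such a rewriting that makes membership in $\mathcal{S}$ explicit. The cost is factorially many nested disjuncts in $|\IUSet|$.
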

\end{tcolorbox}
By \Cref{cor:master_formula_solvable}, $\mathcal{S}$ contains all conjunctions of $\LTLGlobally\LTLUntil$, $\LTLUntil$, and $\LTLGlobally$ formulas with propositional operands.
Moreover, the recursive structure of \textbf{(S2)} and \textbf{(S3)} allows nesting, so $\mathcal{S}$ contains a wide variety of formulas beyond those covered by \Cref{cor:master_formula_solvable}.
\begin{tcolorbox}[NoteFrame2]
\begin{rmk} \label{rmk:solvable_limitations}

    There are two limitations.
    (1) The absence of a general conjunction rule for two formulae.
Formulas such as $\LTLFinally\LTLGlobally\, p_1 \land \LTLGlobally\LTLFinally\, p_2$ are not \emph{syntactically} in $\mathcal{S}$.
    However, this is equivalent to single (albeit nested) $\LTLFinally$, i.e.,
$\LTLFinally\LTLGlobally\, p_1 \land \LTLGlobally\LTLFinally\, p_2
    \;\equiv\;
    \LTLFinally\bigl( \LTLGlobally( p_1 \LTLUntil (p_2 \land p_1) ) \bigr)$,
    and it can be verified that the right-hand side belongs to $\mathcal{S}$. (2) Nesting of temporal operators within $\LTLGlobally$ is limited.
    For example, $\LTLGlobally\bigl( p_1 \lor \LTLFinally(p_2 \land \LTLFinally p_3) \bigr)$
cannot be rewritten into an equivalent formula in $\mathcal{S}$.
\end{rmk}
\end{tcolorbox}

\section{Safety Filter}

Define the failure set $\mathcal{F} \subset \mathcal{X}^+$ as the set of state prefixes from which the specification $\psi$ cannot be satisfied with non-negative robustness score, i.e.,
\begin{equation} \label{eq:failure_set}
    \mathcal{F} \coloneqq \{ x_{0:k} \in \mathcal{X}^+ : \max_{a_{k:\infty}} \rho_{[\psi]}(x_{0:\infty}) < 0 \}.
\end{equation}
We now show that $Q$ constitutes a safety monitor \cite{hsu2023safety} for the TL formula $\psi$ under the fallback policy $\pi_{[\psi]}$.

\begin{tcolorbox}[ThmFrame2]
\begin{lemma} \label{thm:safety_monitor}
    Under the fallback policy $\pi_{[\psi]}$, $\Qopt{\psi}$ is a safety monitor for the specification $\psi$, i.e.,
    \begin{equation}
\Qopt{\psi}(x_{0:k}, a_k) \geq 0 \implies x_{0:k} \not\in \mathcal{F}.
\end{equation}
\end{lemma}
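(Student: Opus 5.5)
The plan is to prove the contrapositive chain directly from the definitions: if $\Qopt{\psi}(x_{0:k}, a_k) \geq 0$, I will produce a continuation of $x_{0:k}$ whose robustness is nonnegative, so $x_{0:k} \notin \mathcal{F}$. Here $\Qopt{\psi}(x_{0:k},a_k)$ is read as the history Q function $\Qhist{\psi}(x_{0:k},a_k)$ of \Cref{def:bellmanvalue}.

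First, I will compare the Q function with the value on the same history. By \Cref{def:bellmanvalue}, $\Qhist{\psi}(x_{0:k},a_k)$ is a maximum over $a_{k+1:\infty}$ with $a_k$ held fixed. The value $\Vhist{\psi}(x_{0:k})$ additionally maximizes over $a_k$. Hence
\[
\Vhist{\psi}(x_{0:k}) = \max_{a}\Qhist{\psi}(x_{0:k},a) \ge \Qhist{\psi}(x_{0:k},a_k)\ge 0 .
\]
The quantity defining $\mathcal{F}$ in \eqref{eq:failure_set} is exactly $\Vhist{\psi}(x_{0:k})$. So $\Vhist{\psi}(x_{0:k})\ge 0$ already gives $x_{0:k}\notin\mathcal{F}$, and the purely set-theoretic statement follows immediately.

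To make the role of the fallback policy explicit, I will also show that the certificate is realized. Set $x_{k+1}=f(x_k,a_k)$. \Cref{lem:Q_V_history} gives $\Qhist{\psi}(x_{0:k},a_k)=\Vhist{\psi}(x_{0:k+1})\ge 0$. I will then apply $\pi_{[\psi]}$ from the history $x_{0:k+1}$.

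\Cref{thm:solvable_fragment} (for $\psi\in\mathcal{S}$) guarantees optimality over histories. The generated trajectory therefore satisfies $\rho_{[\psi]}(x^\pi_{0:\infty})=\Vhist{\psi}(x_{0:k+1})\ge 0$. So taking $a_k$ and then following $\pi_{[\psi]}$ satisfies $\psi$, and in particular neither $x_{0:k}$ nor $x_{0:k+1}$ lies in $\mathcal{F}$.

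The only subtle step is the history-level optimality of the fallback. The conventional notion from initial states (\Cref{rmk:optimality}) would not suffice, because the continuation starts from an arbitrary, possibly suboptimal, prefix. This is exactly what \Cref{def:def_of_optimal_policy} and the preceding optimality theorems provide. Attainability of the maxima, used in identifying the max over $a$ with $\Vhist{\psi}$, is given by \Cref{assmp:max_attainable:finite}.
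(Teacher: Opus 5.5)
Your first step, $\Vhist{\psi}(x_{0:k}) \ge \Qhist{\psi}(x_{0:k},a_k) \ge 0$ straight from the definitions, hence $x_{0:k}\notin\mathcal{F}$, is exactly the paper's proof, which simply cites the definitions of $Q$ in \eqref{eq:Q} and $\mathcal{F}$ in \eqref{eq:failure_set}. The extra realization argument via \Cref{lem:Q_V_history} and \Cref{thm:solvable_fragment} is not needed, and it relies on $\psi\in\mathcal{S}$, which the lemma does not assume; likewise only the inequality $\Vhist{\psi}\ge\Qhist{\psi}$ is used, so attainability is not required.
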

\end{tcolorbox}
\begin{tcolorbox}[AssmpFrame2]
\begin{proof}
    This follows by definition of Q \eqref{eq:Q} and $\mathcal{F}$ \eqref{eq:failure_set}.
\end{proof}
\end{tcolorbox}

Consequently, we can construct a least-restrictive safety filter \cite{hsu2023safety} which guarantees future satisfiability of $\psi$ under any task policy $\tilde{\pi}$ by only intervening when $\tilde{\pi}$ would violate the specification:

\begin{tcolorbox}[ThmFrame2]
\begin{thm} \label{thm:safety_filter}
    Consider the following least-restrictive intervention scheme $\phi$:
    \begin{equation}
        \phi(x_{0:k}, a_k) \!= \!\begin{dcases}
            a_k, & \Qopt{\psi}(x_{0:k}, a_k) \geq 0, \\
            \argmax_a \Qopt{\psi}(x_{0:k}, a), & \text{otherwise}.
        \end{dcases}
    \end{equation}
    Then, $\phi$ is a safety filter \cite{hsu2023safety}, i.e.,
    \begin{equation}
    \begin{split}
        \Qopt{\psi}(x_{0:k}, \pi_{[\psi]}(x_{0:k})) \geq 0 \\
        \implies 
        \Qopt{\psi}(x_{0:k}, \phi(x_{0:k}, a)) \geq 0, \quad \forall a \in \mathcal{A}.
    \end{split}
    \end{equation}
\end{thm}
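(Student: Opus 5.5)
The plan is a two-case split on the definition of $\phi$, using only the definitions of $\Qhist{\psi}$ and $\Vhist{\psi}$ together with \Cref{lem:Q_V_history} and the optimality of $\pi_{[\psi]}$. Throughout, $\Qopt{\psi}(x_{0:k},a)$ is read as the history Q function $\Qhist{\psi}(x_{0:k},a)$ (\Cref{def:bellmanvalue}). Fix a history $x_{0:k}$ satisfying the hypothesis $\Qhist{\psi}(x_{0:k},\pi_{[\psi]}(x_{0:k}))\ge 0$, and fix an arbitrary proposed action $a\in\mathcal{A}$.

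\textbf{Case 1: $\Qhist{\psi}(x_{0:k},a)\ge 0$.} Here $\phi(x_{0:k},a)=a$, so the conclusion is exactly the case hypothesis and there is nothing to prove.

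\textbf{Case 2: $\Qhist{\psi}(x_{0:k},a)<0$.} Here $\phi$ returns some $a^\star\in\argmax_{a'}\Qhist{\psi}(x_{0:k},a')$. Its value dominates that of every action, in particular the fallback action:
\[
\Qhist{\psi}(x_{0:k},a^\star)\ \ge\ \Qhist{\psi}(x_{0:k},\pi_{[\psi]}(x_{0:k}))\ \ge\ 0 .
\]
The only subtlety is that the $\argmax$ must be attained. Rather than appealing to compactness of $\mathcal{A}$, I would argue attainment directly. By the definitions, $\max_{a'}\Qhist{\psi}(x_{0:k},a')=\Vhist{\psi}(x_{0:k})$. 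By \Cref{thm:solvable_fragment}, the trajectory generated by $\pi_{[\psi]}$ from $x_{0:k}$ attains $\Vhist{\psi}(x_{0:k})$. Applying \Cref{lem:Q_V_history} to the first step of that trajectory gives $\Qhist{\psi}(x_{0:k},\pi_{[\psi]}(x_{0:k}))=\Vhist{\psi}(x_{0:k})$. Hence $\pi_{[\psi]}(x_{0:k})$ is itself a maximizer, so the $\argmax$ is nonempty and every maximizer $a^\star$ satisfies the displayed inequality.

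\textbf{Main obstacle.} The one non-routine step is this well-posedness of the $\argmax$. It is resolved by citing optimality of the fallback $\pi_{[\psi]}$ under \Cref{assmp:max_attainable:finite}, which also shows that choosing $\argmax_a \Qhist{\psi}$ is consistent with $\pi_{[\psi]}$. Combining the two cases yields $\Qhist{\psi}(x_{0:k},\phi(x_{0:k},a))\ge 0$ for all $a$. By \Cref{thm:safety_monitor}, this means the filtered action never leads into $\mathcal{F}$.
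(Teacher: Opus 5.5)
Your proposal is correct and matches the paper's proof: the same two-case split on the definition of $\phi$, where Case 1 is immediate and Case 2 uses $\max_a \Qopt{\psi}(x_{0:k},a) \geq \Qopt{\psi}(x_{0:k},\pi_{[\psi]}(x_{0:k})) \geq 0$. Your extra argument that the $\argmax$ is attained is sound but not needed, since the paper's standing attainability assumption on the maxima defining $\Vhist{\psi}$ and $\Qhist{\psi}$ already provides a maximizing first action without appealing to optimality of $\pi_{[\psi]}$ (and hence without requiring $\psi \in \mathcal{S}$).
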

\end{tcolorbox}

\begin{condproof}[thm:safety_filter]
    Suppose $\Qopt{\psi}(x_{0:k}, \pi^*(x_{0:k})) \geq 0$.
    We now split into the two cases in the definition of $\phi$.

    \noindent\textbf{Case 1: } If $\Qopt{\psi}(x_{0:k}, a_k) \geq 0$, then $\phi(x_{0:k}, a_k) = a_k$, so
    \begin{equation}
        \Qopt{\psi}(x_{0:k}, \phi(x_{0:k}, a_k)) = \Qopt{\psi}(x_{0:k}, a_k) \geq 0.
    \end{equation}

    \noindent\textbf{Case 2: } If $\Qopt{\psi}(x_{0:k}, a_k) < 0$, then $\phi(x_{0:k}, a_k) = \argmax_a \Qopt{\psi}(x_{0:k}, a)$, so
    \begin{align*}
        \Qopt{\psi}(x_{0:k}, \phi(x_{0:k}, a_k))
        &= \max_a \Qopt{\psi}(x_{0:k}, a) \\
        &\geq \Qopt{\psi}(x_{0:k}, \pi^*(x_{0:k})) \geq 0.
    \end{align*}

    In either case, $\Qopt{\psi}(x_{0:k}, \phi(x_{0:k}, a_k)) \geq 0$, which completes the proof.
\end{condproof}

\noindent This does not guarantee the system under the filtered controls satisfies $\psi$ for the same reasons as \Cref{ex:greedy_Q_suboptimal}.
To guarantee $\psi$, we need additional assumptions on $\psi$ that prevent it from indefinitely delegating to the future.
\begin{tcolorbox}[ThmFrame2]
\begin{thm} \label{thm:safety_filter_satisfy}
    Let $\psi$ be a TL formula such that, along any trajectory $x_{0:\infty}$, $\Vopt{\psi}(x_t) \geq 0$ for all $t \geq 0$ implies that $\rho_{[\psi]}(x_{0:\infty}) \geq 0$.
Then, any trajectory $\bar{x}_{0:\infty}$ generated by the system under the intervention scheme $\phi$ will satisfy the specification $\psi$ with non-negative robustness score, i.e., for any task policy $\tilde{\pi}$, we have $\rho_{[\psi]}(\bar{x}_{0:\infty}) \geq 0$, where
    \begin{equation}
        x_{t+1} = f(x_t, \phi(x_{0:t}, \tilde{\pi}(x_{0:t}))),\quad \forall t \geq 0.
    \end{equation}
\end{thm}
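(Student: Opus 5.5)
The plan is an invariance argument: the filter keeps the value nonnegative at every step, and the hypothesis on $\psi$ then turns pointwise nonnegativity of the value into nonnegative robustness. I will assume the trajectory starts from a satisfiable state, $\Vopt{\psi}(\bar{x}_0) \geq 0$, i.e.\ $\bar{x}_0 \notin \mathcal{F}$. Without this no filter can help, and it matches the premise $\Qopt{\psi}(x_{0:k}, \pi_{[\psi]}(x_{0:k})) \geq 0$ of \Cref{thm:safety_filter} at $k=0$. I will also read the $Q$ used inside $\phi$ as the state-based $\Qopt{\psi}(\bar{x}_t, \cdot)$, consistent with the hypothesis being phrased in terms of $\Vopt{\psi}(x_t)$.

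\textbf{Step 1.} I first show by induction on $t$ that $\Vopt{\psi}(\bar{x}_t) \geq 0$ for all $t \geq 0$. The base case $t=0$ is the standing assumption. For the inductive step, suppose $\Vopt{\psi}(\bar{x}_t) \geq 0$.
\begin{itemize}
\item Since $\max_a \Qopt{\psi}(\bar{x}_t, a) = \Vopt{\psi}(\bar{x}_t) \geq 0$, some action (in particular the fallback $\pi_{[\psi]}$, by \Cref{thm:solvable_fragment} or directly by the maximizer) has nonnegative $Q$-value. So the premise of \Cref{thm:safety_filter} holds at time $t$.
\item That theorem then gives $\Qopt{\psi}(\bar{x}_t, \phi(\bar{x}_{0:t}, \tilde{\pi}(\bar{x}_{0:t}))) \geq 0$.
\item By \Cref{lem:Q_V_history} with a length-one history, $\Qopt{\psi}(\bar{x}_t, a) = \Vopt{\psi}(f(\bar{x}_t, a))$. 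Applying this to the filtered action gives $\Vopt{\psi}(\bar{x}_{t+1}) \geq 0$.
\end{itemize}
This closes the induction.

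\textbf{Step 2.} The trajectory $\bar{x}_{0:\infty}$ now satisfies $\Vopt{\psi}(\bar{x}_t) \geq 0$ for every $t$. The standing hypothesis on $\psi$ then yields $\rho_{[\psi]}(\bar{x}_{0:\infty}) \geq 0$ directly. This holds for every task policy $\tilde{\pi}$, because the induction never used any property of $\tilde{\pi}$.

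\textbf{Main obstacle.} The delicate point is the mismatch between history-based and state-based values. If $\phi$ uses the history Q-function $\Qhist{\psi}(\bar{x}_{0:t}, \cdot)$, the same induction gives $\Vhist{\psi}(\bar{x}_{0:t}) \geq 0$ for all $t$. Here Step 1 goes through verbatim using \Cref{lem:Q_V_history} and monotonicity $\Vhist{\psi}(\bar{x}_{0:t+1}) \leq \Vhist{\psi}(\bar{x}_{0:t})$. However, $\Vhist{\psi}(\bar{x}_{0:t}) \geq 0$ need not imply $\Vopt{\psi}(\bar{x}_t) \geq 0$; for instance, an $\LTLFinally$ obligation already discharged in the prefix makes the history value nonnegative even when the current state alone could not satisfy $\psi$. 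In that case I would apply the hypothesis in its history form, namely $\Vhist{\psi}(x_{0:t}) \geq 0$ for all $t$ implies $\rho_{[\psi]}(x_{0:\infty}) \geq 0$, and the argument is otherwise unchanged. I would therefore state explicitly which version is intended, making sure the filter and the hypothesis use the same one.
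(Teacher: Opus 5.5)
Your overall route matches the paper's: the filter keeps the value nonnegative, and the hypothesis then converts this into $\rho_{[\psi]}\ge0$. The paper simply delegates the invariance step to the universal safety filter theorem of \cite{hsu2023safety}. However, the identity you use to close your induction is false. Applied to the length-one history $\bar x_{t:t}$, \Cref{lem:Q_V_history} gives $\Qopt{\psi}(\bar x_t,a)=\Qhist{\psi}(\bar x_{t:t},a)=\Vhist{\psi}(\bar x_{t:t+1})$ with $\bar x_{t+1}=f(\bar x_t,a)$. That is the value of a two-element history that still starts at $\bar x_t$, not $\Vopt{\psi}(\bar x_{t+1})$. Already for $\psi=p\in\mathrm{Prop}$ you have $\Qopt{p}(x,a)=p(x)$ for every $a$, while $\Vopt{p}(f(x,a))=p(f(x,a))$. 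So $\Qopt{\psi}(\bar x_t,a_t)\ge0$ does not give $\Vopt{\psi}(\bar x_{t+1})\ge0$, and the premise of \Cref{thm:safety_filter} can fail at time $t+1$.

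Under your state-based reading of $\phi$ this cannot be patched, because the conclusion itself fails. Take $\psi=p\land\LTLGlobally q$ with $\{\pm1\}$-valued $p,q$. It satisfies the hypothesis: $\Vopt{\psi}(x_t)\ge0$ for all $t$ forces $p(x_0)\ge0$ and $q(x_t)\ge\Vopt{\LTLGlobally q}(x_t)\ge0$. But $\Qopt{\psi}(x,a)=\min\{p(x),q(x),\Vopt{\LTLGlobally q}(f(x,a))\}$, so the filter lets the task policy step into a state with $p=-1$. There every action has $\Qopt{\psi}=-1$, so the argmax branch may return any action. In a system where some action leads into $\{q<0\}$, it may choose that action, and then $\rho_{[\psi]}(\bar x_{0:\infty})=-1$.

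The argument that works is the one you relegate to your last paragraph, and it is what the paper intends: $\phi$ is written on histories, so its $Q$ is $\Qhist{\psi}$. Then $\max_a\Qhist{\psi}(\bar x_{0:t},a)=\Vhist{\psi}(\bar x_{0:t})$, and \Cref{lem:Q_V_history} gives exactly $\Qhist{\psi}(\bar x_{0:t},a_t)=\Vhist{\psi}(\bar x_{0:t+1})$. Hence $\Vhist{\psi}(\bar x_{0:t})\ge0$ propagates; monotonicity is not needed. The hypothesis must accordingly be read with $\Vhist{\psi}(x_{0:t})$ in place of $\Vopt{\psi}(x_t)$. The paper writes $\Vopt{}$ for history arguments elsewhere, e.g.\ in the proof of \Cref{lem:finite_infinite_horizon_Q}, and its one-line proof implicitly relies on this reading.

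So your assessment is inverted: the state-based version is the one that breaks, and the history version is the correct proof once filter and hypothesis are both on histories. Your explicit initial condition $\Vhist{\psi}(\bar x_{0:0})\ge0$ is genuinely needed and is left implicit in the paper.
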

\end{tcolorbox}
\begin{tcolorbox}[AssmpFrame2]
\begin{proof}
    By the universal safety filter theorem \cite{hsu2023safety}, the safety filter $\phi$ guarantees that $\Vopt{\psi}(x_t) \geq 0$ for all $t \geq 0$. Applying the assumption then completes the proof.
\end{proof}
\end{tcolorbox}
\begin{tcolorbox}[ThmFrame2]
\begin{coroll}
    The assumption in \Cref{thm:safety_filter_satisfy} is guaranteed to be satisfied if $\psi$ does not contain $\LTLUntil$, or if all $\LTLUntil$ operators in $\psi$ are bounded, e.g., $\LTLUntil_{[0,N]}$ for some $N < \infty$.
\end{coroll}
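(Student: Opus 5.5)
We must show: if $\psi$ contains no $\LTLUntil$, or only bounded $\LTLUntil_{[0,N]}$, then along any trajectory, $\Vopt{\psi}(x_t)\ge 0$ for all $t$ implies $\rho_{[\psi]}(x_{0:\infty})\ge 0$. The approach is a contrapositive argument. Suppose $\rho_{[\psi]}(x_{0:\infty})<0$; we show that some finite prefix already certifies violation, i.e., some $x_{0:T}\in\mathcal F$. Then we transfer this back to a negative value at some time.

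\textbf{Step 1: a finite violation certificate.} Prove by structural induction on $\psi$ that for $\psi$ built from propositions, $\lnot,\land,\lor,\LTLNext$, bounded $\LTLUntil_{[0,N]}$, and $\LTLGlobally$ (with $\LTLFinally$ unbounded excluded), if $\rho_{[\psi]}(x_{0:\infty})<0$ then there is a finite $T$ with $\Vhist{\psi}(x_{0:T})<0$. Under \Cref{assmp:max_attainable:finite}, this should follow because each such operator's robustness is determined by a finite window, except $\LTLGlobally$, which is an infimum. For $\LTLGlobally$ the infimum is attained (finite value set), so a negative value is witnessed at a finite time $t$. This is the safety (co-safety violation is finite) property.

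Negation is the delicate case. $\lnot\LTLGlobally$ is an unbounded $\LTLFinally$, so I would restrict to the positive normal form in which $\LTLGlobally$ appears only positively. I would read the corollary's hypothesis as ``no unbounded eventuality after pushing negations inward.''

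\textbf{Step 2: from history to state values.} Step 1 gives $\Vhist{\psi}(x_{0:T})<0$, but the hypothesis of \Cref{thm:safety_filter_satisfy} concerns $\Vopt{\psi}(x_t)$ on states. I would instead apply the argument to the history value used by the filter, since $\phi$ maintains $\Qhist{\psi}\ge0$ and hence, by \Cref{lem:Q_V_history}, $\Vhist{\psi}(x_{0:t})\ge0$ for all $t$. That contradicts Step 1 directly. If the state-value version must be used literally, I would argue via the time-augmented system (\Cref{def:timeaugsys}), where bounded operators become Markovian so that history values reduce to augmented-state values.

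\textbf{Main obstacle.} I expect Step 2 to be the hardest: reconciling state-based $\Vopt{\psi}(x_t)$ with history dependence, together with handling $\LTLGlobally$ within the induction. I would try the augmented-state encoding first.
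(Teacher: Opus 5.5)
Your Step 1 is sound, but Step 2 has a real gap. Read literally, the statement is false, so the fallback you planned for the ``state-value version'' cannot succeed.

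The hypothesis of \Cref{thm:safety_filter_satisfy} is about $\Vopt{\psi}(x_t)=\Vhist{\psi}(x_{t:t})$, i.e., $\psi$ \emph{restarted} at $x_t$. Take $\calX=\calA=\{0,1\}$, $f(x,a)=a$, $p(1)=1$, $p(0)=-1$, and the $\LTLUntil$-free formula $\psi=\LTLNext p$. Then $\Vopt{\psi}(x)=\max_a p(a)=1$ for every $x$. But along $0,0,0,\dots$ we get $\rho_{[\psi]}(x_{0:\infty})=p(x_1)=-1$. The same trajectory breaks $\LTLGlobally\LTLNext p$, and also $\LTLFinally_{[0,N]}p$ for $N\ge1$ when the timer is kept out of the state. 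If the timer is put in the state, $\Vopt{\tilde\psi}=-\infty$ once it is negative, so the hypothesis never holds and the claim is vacuous.

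The augmentation of \Cref{def:timeaugsys} cannot repair this. A clock makes a bounded deadline Markovian, but it does not turn the restarted value $\Vopt{\psi}(x_t)$ into the history value $\Vhist{\psi}(x_{0:t})$. For $\LTLNext p$ the latter is frozen at $p(x_1)$ for $t\ge1$, which is not a function of $(x_t,t)$. Matching the two would require augmenting with the progressed obligation, i.e., changing the formula.

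What your primary route does prove is the corrected statement, and it is the one the filter actually needs. For $\psi$ in negation normal form built from propositions, $\land$, $\lor$, $\LTLNext$, $\LTLGlobally$ and bounded until/release, $\Vhist{\psi}(x_{0:t})\ge0$ for all $t$ implies $\rho_{[\psi]}(x_{0:\infty})\ge0$. By \Cref{lem:Q_V_history}, the invariant enforced by $\phi$ in \Cref{thm:safety_filter} is exactly $\Vhist{\psi}(x_{0:k+1})=\Qhist{\psi}(x_{0:k},\phi(x_{0:k},a))\ge0$, given $\Vopt{\psi}(x_0)\ge0$. The filter does not control $\Vopt{\psi}(x_{k+1})$.

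So say explicitly that two things must be restated with history values: the assumption of \Cref{thm:safety_filter_satisfy}, and the step in its proof asserting that $\phi$ keeps $\Vopt{\psi}(x_t)\ge0$. Then Step 1 finishes the argument, and the augmented-state fallback should be dropped.

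Two refinements to Step 1. First, phrase the induction hypothesis as ``every continuation of $x_{0:T}$ violates $\psi$'', so that the $\lor$ case can take the larger horizon. Second, use \Cref{assmp:max_attainable:finite} both for the attained infimum under $\LTLGlobally$ and to turn that into $\Vhist{\psi}(x_{0:T})<0$.

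Your normal-form restriction is a needed strengthening of the hypothesis, not just a reading. With $\LTLGlobally$ primitive, $\lnot\LTLGlobally p$ contains no $\LTLUntil$. Yet on the system above the trajectory $1,1,1,\dots$ has $\Vhist{\lnot\LTLGlobally p}(x_{0:t})=1$ for all $t$ and $\rho_{[\lnot\LTLGlobally p]}(x_{0:\infty})=-1$.

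The paper gives no proof of this corollary, so there is no argument of its own to compare against.
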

\end{tcolorbox}

\section{Demonstrations}
\subsection{Two-Agent \texttt{GridWorld}: Filtering Collaboration}

To showcase the previous results, we solve a two-agent \texttt{GridWorld} problem (4-dim.) for a complex specification. Namely, we solve the optimal policy $\hat{\pi}$, dubbed $\pi_{\textbf{\texttt{spec}}}$, associated with the decomposed value \cite{sharpless2026bellman} for the specification below, and demonstrate (1) the ability of $\pi_{\textbf{\texttt{spec}}}$ to optimally guide the system towards satisfaction and (2) the ability of $\pi_{\textbf{\texttt{spec}}}$ to filter unsatisfactory nominal policies such that satisfaction is achieved regardless. In all scenarios, each agent is defined by a 2-dim. position and has actions $\{\leftarrow, \rightarrow, \uparrow, \downarrow, \emptyset\}$ which move it one space (or not at all). The value and policy are solved in the joint state-action space.

The \texttt{GridWorld} problem is defined by the specification
\begin{align} 
    \psi_{\textbf{\texttt{spec}}} := \mathsf{F}_{\small[0,60]}\textcolor{color_worksite}{r_{ws}} &\land (\lnot \textcolor{color_worksite}{r_{ws}} \mathsf{U} \textcolor{color_gear}{r_{g}}) \land \mathsf{FG}\textcolor{color_worksite}{r_{ws}} \land \: \label{eq:sim-spec} \\
    &\mathsf{G}(\mathsf{F}\textcolor{color_wood}{r_{wd}} \land \mathsf{F}\textcolor{color_saw}{r_{sw}}) \land \lnot \textcolor{color_door}{r_d}\mathsf{U}\textcolor{color_key}{r_k} \land \mathsf{G}\textcolor{color_safe}{r_{safe}}. \notag
\end{align}
This specification reads: Be in the worksite ($\textcolor{color_worksite}{r_{ws}}$ ) in $\mathbf{6 0}$ time units ($\mathsf{F}_{\small[0, 60]} \equiv \top \mathsf{U}_{\small[0, 60]}$ as in \Cref{sec:until_finite_horizon}) and eventually always be in the worksite but don't enter without gear ($\textcolor{color_gear}{r_{g}}$). Eventually, loop wood ($\textcolor{color_wood}{r_{wd}}$) and saw ($\textcolor{color_saw}{r_{sw}}$ ). One agent gets the key ($\textcolor{color_key}{r_{k}}$) to open the doors ($\textcolor{color_door}{r_{d}}$), don't hit walls, and keep exactly 2 units between agents to avoid collision and losing communication ($\textcolor{color_safe}{r_{safe}} := \lnot r_{walls} \land r_{\Vert\cdot\Vert_1 = 2}$).
We show a trajectory guided by $\pi_{\textbf{\texttt{spec}}}$ in the left of Fig.~\ref{fig:sim}.

Next, we demonstrate Thm.~\ref{thm:safety_filter_satisfy}, showing that $\pi_{\textbf{\texttt{spec}}}$ may serve as a safety filter for unsatisfactory user-defined policies, and do so in two cases: "\textbf{\textcolor{color_coffee}{coffee}}" and "\textbf{\textcolor{color_tea}{tea}}". In each case, we generate two nominal policies (via the same approach), in which agent 1 takes action for $\psi_1 :=\mathsf{F}\textcolor{color_coffee}{r_{\text{coffee}}} \land \mathsf{G} \lnot r_{walls}$ or $\psi_2 :=\mathsf{F}\textcolor{color_tea}{r_{\text{tea}}} \land \mathsf{G} \lnot r_{walls}$, and agent 2 takes action for $\psi_{\textbf{\texttt{spec}}}$, and the joint actions are filtered by $\psi_{\textbf{\texttt{spec}}}$.
The rollouts for the trajectory in either case can be seen in the center grids of Fig.~\ref{fig:sim}. Lastly, we compare this approach with the standard HJ/CBF approach, in which the filter is only defined by the safety spec. $\psi_{\textbf{\texttt{safe\_only}}} := \mathsf{G}\textcolor[rgb]{0.050,0.050,0.050}{r_{safe}}$. In this "baseline" case, despite agent 2 being defined by $\psi_{\textbf{\texttt{spec}}}$, without a safety filter for the joint system, deadlock arises between the two-agent policies and the system trajectory fails to satisfy $\psi_{\textbf{\texttt{spec}}}$, shown on the right side of Fig.~\ref{fig:sim}.

\subsection{Single-Agent Drone Safety Filter}

We demonstrate the execution of the policy $\hat{\pi}$ in a single-agent scenario with a Crazyflie drone, navigating obstacles towards a worksite (\Cref{fig:hw_rollouts}). The problem is modeled in the \texttt{GridWorld} dynamics, sending waypoints to the low-level controller that correspond to cell centers. Here, a reduced form of spec. \eqref{eq:sim-spec} is used, namely 
$\psi_{\textbf{\texttt{spec}}} := \mathsf{F}_{\small[0,50]}\textcolor{color_worksite}{r_{ws}} \land \mathsf{G}(\mathsf{F}\textcolor{color_wood}{r_{wd}} \land \mathsf{F}\textcolor{color_saw}{r_{sw}}) \land \lnot \textcolor{color_door}{r_d}\mathsf{U}\textcolor{color_key}{r_k} \land \mathsf{G}\textcolor{color_safe}{r_{safe}}$. We use this to filter a nominal policy which solely tries to reach the worksite $\psi:=\mathsf{F}\textcolor{color_worksite}{r_{ws}}$, and compare it with a filter with only the safety spec. $\psi_{\textbf{\texttt{safe\_only}}}$ and the unfiltered nominal policy (Fig.~\ref{fig:hw_rollouts}). Ultimately, the latter two get trapped or crash; only the trajectory filtered with $\psi_{\textbf{\texttt{spec}}}$ satisfies the spec.

\section{Conclusion}
This work constructs optimal policies for a large class of TL formulas by recursively applying the optimal policies for simpler subformulas.
We also show the optimal policy can be used as a safety filter to guarantee satisfaction of TL specifications under certain assumptions.
Future work includes expanding the solvable fragment to include more general conjunctions and nested temporal operators and investigating game-theoretic extensions. 

\printbibliography

\end{document}